%% file: Paper_v2.tex
\documentclass{article}
\usepackage{algorithm}
\usepackage{algorithmic}

\usepackage{array}
\usepackage{microtype}
\PassOptionsToPackage{final}{graphicx}
\usepackage{graphicx}
\usepackage{subfigure}
\usepackage{longtable,booktabs,array}

\usepackage{enumitem}

\usepackage{hyperref}

\usepackage{yk}
\usepackage{hyperref}
\usepackage{bbm}

\usepackage[accepted]{icml2026}

\usepackage{amsmath}
\usepackage{amssymb}
\usepackage{mathtools}
\usepackage{amsthm}

\usepackage[capitalize,noabbrev]{cleveref}

\usepackage[textsize=tiny]{todonotes}

\icmltitlerunning{Semi-parametric Heterogeneous Clustered
Multitask Learning via Neyman Orthogonality}

\begin{document}

\twocolumn[
\icmltitle{Adaptive Estimation and Inference in Semi-parametric Heterogeneous Clustered Multitask Learning via Neyman Orthogonality}




\begin{icmlauthorlist}
\icmlauthor{Hanxiao Chen}{bu}
\icmlauthor{Debarghya Mukherjee}{bu}
\end{icmlauthorlist}

\icmlaffiliation{bu}{Department of Mathematics and Statistics, Boston University}

\icmlcorrespondingauthor{Debarghya Mukherjee}{mdeb@bu.edu}

\icmlkeywords{Machine Learning, ICML}

\vskip 0.3in
]



\printAffiliationsAndNotice{}  

\begin{abstract}
We study clustered multitask learning in a semiparametric setting where tasks share a latent cluster structure in their target parameters but exhibit heterogeneous, potentially infinite-dimensional nuisance components. Such heterogeneity poses a major challenge for existing multitask learning methods, which typically rely on aligned feature spaces or homogeneous task structures. To address this challenge, we propose an \emph{adaptive fused orthogonal estimator} that integrates Neyman-orthogonal losses with data-driven pairwise fusion penalties. Our framework leverages task-specific pilot estimates to calibrate the fusion penalties and combines adaptive aggregation with orthogonalization to mitigate the impact of nuisance-parameter estimation error. Theoretically, we show that the proposed estimator achieves exact recovery of the latent clustering with high probability and attains pooled parametric convergence rates proportional to cluster size. Moreover, we establish asymptotic normality and show that, asymptotically, our estimator matches the performance of an oracle procedure that knows the true clustering in advance. Empirically, we show that the proposed method consistently outperforms strong baselines in various simulation setups. A real-world application to U.S. residential energy consumption demonstrates the effectiveness of our approach in uncovering meaningful regional clustering in electricity price elasticity, showcasing the efficacy of our method. 
\end{abstract}
\vspace{-2mm}
\input{Intro_New}
\input{Methods_New}
\input{Theoretical_analysis}
\input{Simulation_New}
\input{Real_data}

\section{Conclusion}
In this paper, we introduce an adaptive semiparametric multitask learning framework that integrates Neyman orthogonality with adaptive pairwise fusion. Our method enables efficient estimation of shared targets under heterogeneous nuisance structures, achieving exact cluster recovery, pooled-rate accuracy, and asymptotic normality at a pooled {\small $\sqrt{N_k}$} rate. Empirically, the adaptive estimator demonstrates superior performance relative to strong baselines across various models and yields interpretable, statistically significant clustering in real data, illustrating its utility as both a predictive and inferential tool. However, several open directions remain. Our theory characterizes performance in terms of sample size, but a finer analysis of the dependence on the cluster separation would be valuable. Extending the framework to high-dimensional regimes, where the target dimension grows with or exceeds the sample size, is another promising direction for future work.


\newpage
\bibliography{Refs}
\bibliographystyle{icml2026}

\newpage
\appendix
\onecolumn
\input{Appendix_icml_review}

\end{document}

%% file: Intro_New.tex
\vspace{-5mm}
\section{Introduction}
Multitask learning (MTL) aims to improve statistical efficiency and generalization by jointly learning multiple related tasks \citep{caruana1997multitask,zhang2011multi,duan2023adaptive,bhattacharya2025late}. By exploiting shared structure, MTL can reduce variance, mitigate data scarcity, and uncover latent relationships across tasks. However, in many modern applications, tasks are only \emph{partially related}: they may share a common target parameter while differing substantially in auxiliary features, data distributions, or other nuisance components. This heterogeneity poses a fundamental challenge for existing MTL methods, which often assume aligned feature spaces or homogeneous task structures \citep{zhang2011multi,evgeniou2004regularized}. As a concrete example, arising in causal and policy learning across heterogeneous environments \citep{imbens2015causal,pearl2009causal}, consider estimating the effect of a treatment across multiple hospitals/regions/platforms. While the causal effect itself may be shared across subsets of environments, each environment can involve distinct covariates, data-generating mechanisms, and high-dimensional nuisance functions. Naively pooling all data may lead to invalid inference due to potential model-mismatch across different environments, while estimating each task independently may sacrifice statistical efficiency.

Recent advances in \emph{double machine learning} (DML) address part of this challenge by enabling valid estimation of low-dimensional target parameters in the presence of high-dimensional/nonparametric nuisance components \citep{chernozhukov2018double,mackey2018orthogonal,foster2023orthogonal, chernozhukov2022locally,farrell2015robust,oprescu2019orthogonal}. 
By constructing \emph{Neyman-orthogonal} loss/score functions, DML ensures that the first-order error of nuisance estimation effectively does not contribute to the target estimation, yielding a $\sqrt{n}$-consistent and asymptotic normal (CAN) estimator of a finite-dimensional target parameter/smooth functional under mild conditions on the complexity of the nuisance parameter. It has since become a central tool in modern causal inference and semiparametric machine learning \citep{hays2025double, bach2022doubleml,fuhr2024estimating}. However, the DML approach is a \emph{single-task} procedure, as it neither leverages cross-task similarities directly nor discovers shared structure across multiple environments. Moreover, when per-task sample sizes are limited, DML estimators may suffer from high variance and instability, as observed empirically and theoretically \citep{fingerhut2022coordinated,fuhr2024estimating}.

At the same time, \emph{clustered multitask learning} \citep{jacob2008clustered,zhou2011clustered,okazaki2024multi,zhou2015flexible,murugesan2017co} has been widely studied as a way to capture latent group structure among tasks. 
Methods such as fusion penalty \citep{tibshirani2005sparsity,tibshirani2011solution,han2015learning, evgeniou2004regularized,zhu2025doubly} and centroid-based regularization \cite{duan2023adaptive} encourage tasks within the same cluster to share parameters, often achieving substantial gains in estimation accuracy by pooling data from related tasks.
A recent seminal work by \citep{duan2023adaptive} has developed statistically principled parametric multitask estimators with adaptive clustering guarantees, \emph{under known number of clusters}. Yet these approaches typically assume parametric models and do not accommodate complex or infinite-dimensional task-specific nuisance components. 
Therefore, when applied naively in semiparametric settings, they may invalidate the inference by entangling nuisance estimation errors with the estimator error of the parameter of interest across tasks. 
\vspace{-3mm}
\paragraph{Our Contribution.}
In this work, we bridge this gap by developing an adaptive semiparametric multitask learning framework that simultaneously (i) discovers and exploits shared structure among task-level targets, (ii) leverages Neyman orthogonality to mitigate the impact of nuisance estimation error, and (iii) establishes asymptotic normality of the estimators of target parameters at a minimax optimal pooled rate to reduce variance and enable valid statistical inference. 
We consider a multitask setting with $m$ tasks, where each task $j \in [m]$ has a finite-dimensional target parameter $\theta_j^\ast$ and a task-specific nuisance component $\eta_j^\ast$, potentially infinite-dimensional and heterogeneous across tasks (they may differ in dimension, smoothness, or other structural parameters). We assume that the target parameters {\small $\{\theta_j^\ast\}_{j \in [m]}$} admit an unknown clustering structure with $K$ clusters: tasks within the same cluster share similar target parameters. However, nuisances remain unrestricted and may vary from task to task. This formulation naturally captures heterogeneous feature spaces, distributional shift, and task-specific confounding \citep{zhang2011multi,chernozhukov2018double}.

Our method proceeds in two stages. In Stage 1, we obtain task-specific pilot estimates of the target parameters using any conventional (not necessarily orthogonal) loss functions (e.g., treatment effect estimation via outcome regression or IPW, provided the estimator is consistent). These pilot estimators are used solely to quantify task similarity. Inspired by adaptive lasso and data-driven weighting strategies \citep{zou2006adaptive}, we carefully construct \textit{adaptive pairwise fusion penalties} using those pilot estimates that encourage fusion of target parameters among similar tasks, thereby recovering latent cluster structure while mitigating negative transfer. In Stage 2, we solve a penalized estimation problem using task-specific orthogonal loss functions (with nuisance parameters estimated via sample splitting), together with the adaptive fusion penalties. We show that this procedure simultaneously recovers the latent clustering and yields target parameter estimators that are CAN at a pooled rate, where each task effectively pools data within its cluster. Crucially, nuisance estimation remains task-local throughout, allowing each task to use its own feature space and learning algorithm, while cross-task interaction occurs only through the target parameters, preserving causal and inferential validity. Our approach is a broad semiparametric modeling framework that covers many widely used statistical settings, including the partial linear model, average treatment effect estimation, causal mediation analysis, and difference-in-differences. We summarize our contributions below: 
\vspace{-2mm}
\begin{itemize}[leftmargin=*]
\setlength\itemsep{-0.3em}
    \item We introduce an adaptive multitask learning framework that combines Neyman-orthogonality with data-driven pairwise fusion, enabling principled information sharing across tasks with heterogeneous nuisance structures.
    \item We establish exact recovery of the latent clusters with high probability and show that the proposed estimator attains pooled parametric rates proportional to cluster size under the growing cluster setup (i.e., number of clusters and machines can grow with number of samples).
    \item Despite adaptive aggregation and data-dependent regularization, we prove asymptotic normality for each task-level estimator, matching the oracle estimator that knows the true clustering in advance.
    \item Through extensive simulations and a real-world application to U.S.\ residential electricity demand, we demonstrate improved estimation accuracy, stability, and interpretable task clustering compared to other multitask and single-task baselines.
\end{itemize}

\vspace{-2mm}
{\bf Notations: }For a $p$-dimensional vector {\small $\bx=(x_1,\ldots,x_p)^\top$}, its {\small $\ell_q$}-norm is {\small $\|\bx\|_q=\big(\sum_{i=1}^p |x_i|^q\big)^{1/q}$}, and its outer product is {\small $\bx^{\otimes 2} = \bx\bx^\top$}. For a matrix {\small $A$}, {\small $\|A\|_2$} denotes its spectral norm. For matrices {\small $A,B$}, we write {\small $A\succeq B$} when {\small $A-B$} is positive semidefinite. For nonzero sequences {\small $a_n$} and {\small $b_n$}, {\small $a_n\lesssim b_n$} means there exists {\small $C>0$} such that {\small $a_n\le C\,b_n$}; and {\small $a_n\asymp b_n$} means both {\small $a_n\lesssim b_n$} and {\small $a_n\gtrsim b_n$} hold. We use {\small $a_n=o(b_n)$} to denote {\small $|a_n/b_n|\to 0$} as {\small $n\to\infty$}, and {\small $a_n=O(b_n)$} to denote {\small $\sup_n |a_n/b_n|<\infty$}. For a sequence of random variables {\small $X_n$}, {\small $X_n=O_p(a_n)$} means {\small $X_n/a_n$} is stochastically bounded, {\small $X_n=o_p(a_n)$} means {\small $X_n/a_n\to 0$} in probability, and {\small $X_n=\omega_p(a_n)$} means {\small $a_n=o_p(X_n)$}. For an integer $m \in \bbN$, $[m]$ is used to denote the set $\{1, 2, \dots, m\}$. 

%% file: Methods_New.tex
\section{Method: Adaptive Orthogonal Multitask Learning}
\subsection{Problem Setup}
We consider $m$ tasks indexed by {\small $j \in [m]$}. For each task $j$, we observe a dataset {\small $\cD_j = \{Z_{ij}\}_{i \in [n_j]}$}, generated from an unknown distribution {\small $P_j$}. Each task is associated with a finite-dimensional \emph{target parameter} {\small $\theta_j^\ast \in \Theta\subseteq \mathbb R^d$} and a \emph{nuisance parameter} {\small $\eta_j^\ast \in \mathcal H_j$}. The target parameter {\small $\theta_j^\ast$} is defined through a population risk minimization problem,
{\small
\begin{equation*}
\textstyle
\theta_j^\ast
\!=\!
\argmin_{\theta \in \Theta}\! \bbE_{P_j}\big[\ell_j^{\NO}(\theta,\eta_j^\ast,Z_j)\big]\! :=\! \argmin_{\theta\in \Theta}\! \cR_j^{\NO}(\theta, \eta_j^*),
\end{equation*}}where {\small $\ell^{\NO}_j(\theta, \eta, Z)$} is a task-specific orthogonal loss function (to be defined later) and {\small $\cR_j^\dagger(\theta,  \eta)$} is the risk/expected loss where the expectation is taken over {\small $Z \sim P_j = P_{\theta^*_j, \eta^*_j}$}. We allow both the nuisance spaces $\mathcal H_j$ and the distributions $P_j$ to vary across tasks, accommodating heterogeneous feature spaces and covariate distributions.
\vspace{-2mm}
\paragraph{Clustering structure} We assume that the target parameters exhibit a latent cluster structure: there exists an \emph{unknown partition} {\small$\{S_k\}_{k=1}^K$} of $[m]$ such that {\small$\theta_j^\ast = \beta_k^\ast$} for all {\small$j \in S_k$}. The number of clusters $K$ and the cluster memberships are unknown. For cluster identifiability, it is assumed that the separation {\small $\|\beta_k^\ast- \beta_{k'}^\ast\|_2 \ge \delta$} for {\small $k \ne k'$}. Our objective is to adaptively recover this structure, efficiently estimate each $\theta_j^\ast$, and conduct valid statistical inference. Our method and theory are flexible enough to accommodate mild within-cluster heterogeneity; specifically, we allow
{\small$\|\theta_j^* - \beta^*_k\|\le \xi_k$} for {\small$j\in S_k$}; see Section~\ref{sec:theory} (Theorem \ref{thm:cluster-rate-extent} and \ref{thm:normal-extent}) for details. For clarity of presentation, however, we take {\small$\xi_k=0$} in the present discussion.
\vspace{-2mm}
\paragraph{Neyman-Orthogonal Loss Function.}
Neyman orthogonality plays a central role in semiparametric inference by ensuring that estimation of the target parameter is locally insensitive to unavoidable errors arising from high-dimensional or nonparametric nuisance estimation. The key idea is as follows: consider a loss function {\small $\ell^{\NO}(\theta,\eta,Z)$}, where $\theta$ denotes the finite-dimensional parameter of interest and $\eta$ represents a (potentially infinite-dimensional) nuisance parameter. Let {\small $D_\eta$} denote the G\^ateaux derivative operator, defined by  {\small $D_\eta f(\eta)[h] \coloneqq \frac{d}{dt} f(\eta + t h)|_{t=0}$}. More generally, {\small $D_\eta^2 f(\eta)[h_1,h_2]$} denotes the second-order derivative applied to directions $h_1$ and $h_2$. We say that the loss {\small $\ell_j^{\NO}$} is \emph{Neyman-orthogonal} over a set {\small $\cT_j \subseteq \cH_j$} if,
\begin{equation*}
\label{eq:neyman_ortho}
\textstyle
D_\eta \nabla_\theta \bbE_{Z \sim P_j}[\ell_j^{\NO}(\theta,\eta, Z)]\big|_{(\theta_j^\ast,\eta_j^\ast)}[h]= 0, \ \ h \in \cT_j\,.
\vspace{-2mm}
\end{equation*}
This ensures that first-order errors in estimating the nuisance parameter $\eta_j^\ast$ do not affect the estimation of the target parameter $\theta_j^\ast$. As a result, $\sqrt{n_j}$-CAN estimation of the target parameter is possible even when $\eta_j^\ast$ is learned using flexible, nonparametric methods.

\subsection{Two-Stage Adaptive Orthogonal Estimator}

Our estimator combines task-local learning with adaptive multitask aggregation and proceeds in two stages. The workflow is summarized in Algorithm \ref{alg:main}.

{\bf Stage 1: Task-local initialization (structure discovery).}
The goal of Stage~1 is \textit{not} efficient estimation, but rather to obtain a coarse and stable notion of similarity between tasks. For each task $j$, we compute an initial estimator
\begin{equation}\label{eq:init-loss}
\textstyle
\hat\theta_j^{\init}
=
\arg\min_{\theta \in \Theta}
\sum_{Z \in \cD_j}
\ell_j^{\init}(\theta,\hat\eta_j^{\init},Z),
\vspace{-1mm}
\end{equation}
where $\ell_j^{\init}$ is a possibly non-orthogonal loss and $\hat\eta_j^{\init}$ is a precomputed task-local nuisance estimator (may not be rate-optimal).  Orthogonality is \emph{not required} at this stage for two reasons: (i) non-orthogonal plug-in losses are often more stable in finite samples, and (ii) we only require consistency at some rate $r(n_j)$; the initial estimators are never used directly for inference, they are only used to construct the pairwise penalty as described below. 
\vspace{-2mm}
\paragraph{Stage 2: Aggregation via adaptive fusion.}
In the second stage, we perform adaptive multitask aggregation while enforcing Neyman orthogonality to preserve valid inference. For each task $j$, we split the sample {\small$\cD_j$} into two parts, {\small$\cD_{j,1}$} and {\small$\cD_{j,2}$}. We estimate $\eta_j$ using {\small$\cD_{j,1}$} and then estimate the target parameters using {\small$\cD_{j,2}$} by solving the following optimization problem: 
\vspace{-1mm}
\begin{equation}\label{eq:agg-loss}
\textstyle
\begin{aligned}
\hat\btheta
&=
\operatorname*{arg\,min}_{\theta_1,\ldots,\theta_m}
\sum_{j=1}^m
f_j^{\NO}(\theta_j,\hat\eta_j) + \hspace{-2mm}\sum_{1 \le j' < j \le m}
\lambda_{jj'} \|\theta_j - \theta_{j'}\|_2 .
\end{aligned}
\end{equation}
where {\small $f_j^{\NO}(\theta_j,\hat\eta_j) =\sum_{Z\in \cD_{j,2}} \ell_j^{\NO}(\theta_j,\hat\eta_j,Z)$} denotes the empirical Neyman-orthogonal loss for task $j$ and $\hat \btheta = \{\hat \theta_j\}_{j \in [m]}$. \emph{Importantly, only the target parameters $\theta_j$ are fused across tasks. All nuisance parameters remain task-specific throughout the procedure} and may be constructed using any suitable nonparametric, regularization, or machine-learning method.

\vspace{-2mm}
\paragraph{Choice of penalty parameters.} If the true cluster structure were known, an oracle estimator would enforce $\lambda_{jj'} = \infty$ for $j, j' \in S_k$ and $\lambda_{jj'} = 0$ otherwise. 
However, since cluster memberships are unknown, we approximate this oracle behavior using the pilot estimates from Stage~1. Specifically, we define 
\begin{equation}
\label{eq:adaptive-w}
\textstyle
w_{jj'} = c_w \|\hat\theta_j^{\init} - \hat\theta_{j'}^{\init}\|_2^{-\gamma},
\end{equation}
for some constant $c_w > 0$ and set 
\begin{equation}\label{eq:adaptive-lambda}
\textstyle
\lambda_{jj'}
=
\begin{cases}
\varepsilon_n, & w_{jj'} \le \tau, \\
w_{jj'}, & w_{jj'} > \tau,
\end{cases}
\end{equation}where $c_w,\gamma,\tau, \varepsilon_n\ge0$ are tuning parameters. Intuitively, tasks with nearly identical pilot estimates are strongly fused, while tasks with distinct pilot estimates receive negligible penalties. Under mild separation conditions, this adaptive weighting scheme recovers the oracle fusion pattern with high probability.  We further show that the desired results hold for a wide range of tuning parameters, ensuring stability with respect to hyperparameter tuning.

\begin{remark}[Cross-fitting]
    In Stage 2, we estimate the nuisances {\small $\{\hat\eta_j\}_{j\in[m]}$} using {\small $\{\cD_{j,1}\}_{j\in[m]}$} and then estimate $\hat\theta$ on {\small $\{\cD_{j,2}\}_{j\in[m]}$} as in~\eqref{eq:agg-loss}. 
    A cross-fitted version \citep{chernozhukov2018double} can be obtained by partitioning each $\cD_j$ into two folds, estimating the nuisance functions on one fold and evaluating the corresponding orthogonal losses on the held-out fold, and vice versa.
    The resulting losses are averaged and minimized to obtain $\hat\theta$. For simplicity, we focus on a single-split implementation; the cross-fitted procedure is described in Appendix~\ref{sec:cross-fit}.
\end{remark}

\begin{algorithm}
\small
\caption{Adaptive MTL via DML}
\label{alg:main}
\begin{algorithmic}[1]
\STATE \textbf{Input:} Datasets $\{\cD_j\}_{j=1}^m$; initial losses $\{\ell_j^{\init}\}_{j=1}^m$; orthogonal losses $\{\ell_j^{\NO}\}_{j=1}^m$; hyperparameters $c_w,\gamma, \tau, \varepsilon_n\ge0$.
\STATE \textbf{Output:} Final targets $\{\hat\theta_j\}_{j=1}^m$.
\FOR{$j=1$ \textbf{to} $m$} 
  \STATE Construct $\hat\eta_j^{\init}$ on $\cD_j$ and then find $\hat\theta_j^{\init}$ with \eqref{eq:init-loss}.
  \STATE Draw a random partition $\cD_j=\cD_{j,1} \,\dot\cup\, \cD_{j,2}$.
  \STATE Find estimate $\hat\eta_j$ on $\cD_{j,1}$.
\ENDFOR
\STATE Jointly estimate $\{\hat\theta_j\}_{j=1}^m$ on datasets $\{\cD_{j,2}\}_{j=1}^m$ by solving for \eqref{eq:agg-loss}.
\STATE \textbf{return} $\{\hat\theta_j\}_{j=1}^m$.
\end{algorithmic}
\end{algorithm}

%% file: Theoretical_analysis.tex
\section{Theoretical Analysis}
\label{sec:theory}
In this section, we establish theoretical guarantees for the proposed method, including cluster recovery, estimation accuracy, and inference. Under standard regularity conditions, we show that our procedure exactly recovers the latent clustering of task-level targets and attains oracle pooled rates within each cluster, while remaining robust to task-specific nuisance estimation errors. These results formalize the benefits of combining Neyman orthogonality with adaptive fusion in heterogeneous multitask settings.

Before stating the main results, we introduce the notation and assumptions used throughout this section. For $M>0$, let {\small $\cB_{\theta,M} = \{\theta' : \|\theta' - \theta\|_2 \le M\}$} denote an $\ell_2$-ball of radius $M$ centered at $\theta$. Similarly, let
{\small $\cB_{\eta,M} = \{\eta' : \|\eta - \eta'\|_{\cH} \le M\}$} denote a ball in the (possibly infinite-dimensional) nuisance space under an appropriate norm {\small $\|\cdot\|_\cH$} (e.g., {\small $L_2(P)$}, H\"older, Sobolev, or RKHS norm). Without loss of generality, we assume in this section that {\small $|\cD_{j, 1}| = |\cD_{j, 2}| = n_j$} for $j \in [m]$. We define the minimum task sample size as
{\small $n_{\min} \coloneqq \min_{j \in [m]} n_j$}. Also, let {\small $N_k \coloneqq \sum_{j \in S_k} n_j$} denote the pooled sample size across all tasks in cluster $k$, and {\small $N_{\min} = \min_{k \in [K]} N_k$}. The size of the $k^{th}$ cluster is $m_k = |S_k|$. Moreover, let {\small $\cT_{n_j}\subseteq \cH_j$} denote the nuisance realization set, i.e. {\small $\hat \eta_j \in \cT_{n_j}$} with probability going to $1$. 
As mentioned previously, we use {\small $(\theta^*_j, \eta^*_j)$} to denote the true target and nuisance parameter of {\small $j^{th}$} task, and we have {\small $\theta^*_j = \beta^*_k$} for all tasks in $k^{th}$ cluster. 
Our theoretical analysis is based on the following assumptions: 
\begin{assumption}[Loss regularity]\label{assump:loss}
For each {\small $j\in[m]$},  {\small $\ell^{\NO}_j(\theta, \eta, Z)$} is assumed to be convex in {\small $\theta$} for all {\small $\eta \in \cB_{\eta_j^*, M}$} and {\small $Z$}. Furthermore, there exist constants {\small $\rho, \kappa, (\sigma_i)_{i=1}^5 > 0$} and {\small $r_1, r_2 > 2$} such that the following holds: 
\vspace{-2mm}
\begin{enumerate}[leftmargin=*,ref=(\arabic*)]
    \item \label{assump:geom} \textbf{Curvature:} Let {\small $H_j(\theta) \coloneqq  \nabla_\theta^2 \cR_j^{\NO}(\theta,\eta_j^*)$} denote the population Hessian. We assume {\small $H_j(\theta)$} satisfies {\small $\rho I \preceq H_j(\theta) \preceq \kappa I$} for all {\small $\theta \in \cB_{\theta_j^*,M}$} and {\small $j \in [m]$}.
    \item \label{assump:score} \textbf{Score:} The score has mean {\small $0$} and finite {\small $r_1$}-moment, i.e., {\small $\nabla_\theta \cR_j^{\NO}(\theta_j^*,\eta_j^*)=0, \ \bbE_{P_j}[\|\nabla_\theta \ell_j^{\NO}(\theta_j^*,\eta_j^*,Z_j)\|_2^{\,r_1}]\le \sigma_1$}.
    \item \label{assump:hess-concentrate} \textbf{Hessian moment:} For any {\small$v \in \bbS^{d-1}$} and {\small$\theta \in \cB_{\theta_j^*,M}$}, we have {\small $\bbE_{P_j}[|\langle v, (\nabla_\theta^2 \ell_j^{\NO}(\theta,\eta_j^*,Z_j) - H_j(\theta)) v \rangle|^{r_2}] \le \sigma_2$}.
    \item \label{assumption:local-lip} \textbf{Uniform Lipschitzness of Hessian:} The Hessian of {\small $\ell_j^{\NO}$} is assumed to satisfy a uniform Lipschitz condition in a neighborhood around the true parameter: {\small 
    \begin{equation*}
    \textstyle
    \vspace{-1mm}
        \bbE \left[ \sup\frac{\|\nabla_\theta^2 \ell_j^{\NO}(\theta, \eta, Z_j) - \nabla_\theta^2 \ell_j^{\NO}(\theta', \eta', Z_j)\|_2}{\|\theta - \theta'\|_2 + \|\eta - \eta'\|_{\cH_j}} \right] \le \sigma_3
        \vspace{-2mm}
    \end{equation*}
    }where the supremum is over {\small $\theta \neq \theta' \in \cB_{\theta_j^*, M}$} and {\small $\eta \neq \eta' \in \cB_{\eta_j^*, M}$}.
    \item \label{assump:orth} \textbf{Orthogonality:}  For any {\small $\eta \in \cT_{n_j}$}, define {\small$h = \eta-\eta^*_j$} then the mixed G\^ateaux derivative has mean 0, {\small $D_\eta\nabla_\theta \cR_j^{\NO}(\theta_j^*,\eta_j^*)[h]=0$}. 
    \item \label{assm:bound_moment_eta} \textbf{Bounds on moments:}
    Let $\bar\eta_j$ be on the segment between $\eta_j^*$ and $\eta \in \cT_{n_j}$, then both {\small $\bbE_{P_j}\|D_\eta\nabla_\theta \ell_j^{\NO}(\theta_j^*,\bar\eta_j,Z_j)[h]\|_2^2$} and {\small$\bbE_{P_j} \|D_\eta^2\nabla_\theta \ell_j^{\NO}(\theta_j^*,\bar\eta_j,Z_j) [h, h]\|_2$} are upper bounded by {\small $\sigma_5\|h\|_{\cH_j}^2$}.
\end{enumerate}
\end{assumption}

\begin{assumption}[Nuisance regularity]
\label{assump:nuis}
There exists a rate function {\small $s(n)=\omega(n^{1/4})$} and {\small $s(n)\lesssim n^{1/2}$} such that for {\small $j \in [m], \ \bbE_{P_j}[\|\hat\eta_j-\eta_j^*\|_{\cH_j}^2] \le \sigma_4/ s^2(n_j) = o(n^{-1/2})$} for some $\sigma_4 > 0$. 
\end{assumption}

\begin{assumption}[Consistent initial estimators]
\label{assump:init}
The initial estimators {\small $\{\hat\theta_j^{\init}\}_{j \in [m]}$} are assumed to satisfy for any $\eps > 0$: 
{\small \begin{equation*}
    \textstyle
    \vspace{-1mm}
     \Pr\left(\forall \ j \in [m], \  n_j^{\alpha}\|\hat\theta_j^{\init} -\theta_j^*\|_2 < \epsilon\right) \geq 1- p_{\epsilon}(n_{\min})
    \vspace{-1mm}
\end{equation*}}
for some {\small $\alpha \in (0,1/2]$}, where {\small $p_{\epsilon}(n_{\min}) \downarrow 0$} as {\small $n_{\min} \uparrow \infty$}. 
\end{assumption}

\begin{remark}[Neyman Near-Orthogonality]
    Assumption~\ref{assump:loss}(5) requires the risk $\cR_j^{\NO}$ to be exactly orthogonal with respect to nuisance perturbations. However, this condition can be relaxed to the Neyman near-orthogonality condition (Definition~2.2 of \cite{chernozhukov2018double}), that requires {\small $\|D_\eta \nabla_\theta \cR_j^{\NO}(\theta_j^*,\eta_j^*)[\eta - \eta_j^*]\|_2 = o(N_k^{-1/2})$} for all $\eta \in \cT_{n_j}$. This weaker condition is sufficient to ensure that any first-order bias induced by nuisance estimation is asymptotically negligible. We impose exact orthogonality, as it simplifies the exposition without affecting the substance of the analysis or the resulting guarantees. Our proofs will work verbatim under this weaker assumption. 
\end{remark}
\vspace{-3mm}
\paragraph{Discussion of assumptions.} 
Assumption \ref{assump:loss} is standard in semiparametric $M$-estimation and is mild in a wide range of practical models. Item~\ref{assump:geom} imposes uniform strong convexity and smoothness on the population risk \textit{only in a neighborhood of the true parameter}, ensuring identifiability and stability of the target parameters; it holds for GLMs, likelihood-based losses, and squared-error objectives under mild design regularity. Items~\ref{assump:score} and~\ref{assump:hess-concentrate} require only finite moments of the score and Hessian fluctuations, consequently \textit{substantially weaker than uniform boundedness/sub-gaussian conditions often assumed in the literature, and can accommodate heavy-tailed behavior. }Item~\ref{assumption:local-lip} is a local smoothness condition controlling uniformly second-order variations of the loss with respect to both the target and nuisance parameters and is \textit{only required in a neighborhood of the truth} (see \citep{mei2018landscape, duan2023adaptive}). Item~\ref{assump:orth} is the standard Neyman orthogonality condition, guaranteeing first-order insensitivity to nuisance estimation error, and can be imposed by construction via orthogonal scores. Item \ref{assm:bound_moment_eta} additionally imposes a second-order smoothness condition on the $\theta$-score with respect to $\eta$, which is standard and closely mirrors the conditions in \cite{foster2023orthogonal}. Assumption~\ref{assump:nuis} requires the nuisance to be estimated at the usual {\small $o_p(n_j^{-1/4})$} rate. Finally, Assumption~\ref{assump:init} requires only consistency of the initial estimator {\small$\hat\theta_j^{\init}$} (as $\alpha$ can be very small), without the need for a {\small $\sqrt{n_j}$} convergence rate.
We now present our first main theorem, which shows that, with high probability, the estimators within the same cluster are fused and achieve an aggregated rate determined by the total sample size of the cluster:  
\begin{theorem}[Cluster Recovery]
\label{thm:cluster-rate}
Under Assumptions \ref{assump:loss}--\ref{assump:init}, for any choice of {\small $(\varepsilon_n, \gamma, \tau)$} in the definition of $\{\lambda_{jj'}\}$ that satisfies (i) {\small $n_{\min}^{\alpha\gamma} \ge c_0 \ n_{\max}$}, (ii) {\small $\varepsilon_n \le c_1m^{-2}N_{\min}^{\zeta}$} for any {\small $\zeta < 1/2$}, and (iii) {\small$\tau \in (c_2,c_2n_{\min}^{\alpha\gamma})$}, for some constants {\small $c_0,c_1,c_2 > 0$}, the estimators $\{\hat \theta_j\}_{j \in [m]}$ satisfy the following with probability {\small $1-t^{-1}-p_{\delta/4}(n_{\min})$}: 
\vspace{-2mm}
\begin{enumerate}[label=(\arabic*), leftmargin=*]
\item \textbf{Exact clustering:} {\small $\hat\theta_j=\hat\theta_{j'}$} for all {\small $j,j'\in S_k$} and {\small $\hat\theta_j\ne\hat\theta_{j'}$} whenever {\small $j\in S_k$, $j'\in S_{k'}$, $k\ne k'$}.
\item \textbf{Oracle rate:} For every {\small $j\in S_k$}, 
{\small \vspace{-2mm}
\begin{equation*}
\textstyle
\|\hat\theta_j-\theta_j^*\|_2 \ \le\ C(K^{1/r_1}+b_{k,n}) N_k^{-1/2}\,t\;+\;\tilde C\,N_k^{\zeta-1},
\vspace{-2mm}
\end{equation*}
}for some constants {\small $C,\tilde C>0$} for all {\small $1 < t < a(n_{\min})$} and for some function {\small $a(n_{\min}) \uparrow \infty$} as {\small $n_{\min} = \omega(m^{c_3})$} for some $c_3>0$ mentioned explicitly in the proof, with {\small $b_{k,n} = Km_k^{1/2} \max_{j\in S_k} n_j^{1/2}s(n_j)^{-2}$}. In particular, since {\small $\zeta<1/2$}, {\small $\|\hat\theta_j-\theta_j^*\|_2 = O_P(N_k^{-1/2})$} under a fixed $K$ and bounded $b_{k,n}$.
\end{enumerate}
\end{theorem}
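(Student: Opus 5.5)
The proof goes through an \emph{oracle} estimator, in the spirit of the primal--dual witness analysis for fused/adaptive lasso and of \cite{duan2023adaptive}. Set $\delta$ as the minimal separation, and consider the event $\mathcal E_0=\{\forall j,\ n_j^{\alpha}\|\hat\theta_j^{\init}-\theta_j^*\|_2<\delta/4\}$; by Assumption~\ref{assump:init} it has probability at least $1-p_{\delta/4}(n_{\min})$. On $\mathcal E_0$, same-cluster pairs satisfy $\|\hat\theta^{\init}_j-\hat\theta^{\init}_{j'}\|_2\le \delta n_{\min}^{-\alpha}/2$. Hence $w_{jj'}\gtrsim n_{\min}^{\alpha\gamma}>\tau$ and $\lambda_{jj'}=w_{jj'}\gtrsim c_0 n_{\max}$ by (i),(iii). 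Cross-cluster pairs satisfy $\|\hat\theta^{\init}_j-\hat\theta^{\init}_{j'}\|_2\ge\delta/2$, so $w_{jj'}\lesssim 1\le\tau$ once $c_2$ is chosen appropriately, and therefore $\lambda_{jj'}=\varepsilon_n$. The penalty is thus "large within, tiny across". For each $k$ we then define the oracle $\tilde\beta_k=\arg\min_\beta\sum_{j\in S_k}f_j^{\NO}(\beta,\hat\eta_j)$, and let $\check\beta_k$ be the minimizer of the fully fused problem restricted to the true partition, that is, with the $\varepsilon_n$ cross-cluster terms included and within-cluster parameters forced equal.

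\textbf{Step 1 (oracle rate).} For a single cluster, I will expand the pooled empirical score around $\beta_k^*$: $\sum_{j\in S_k}\nabla f_j^{\NO}(\beta_k^*,\hat\eta_j)=\sum_j\nabla f_j^{\NO}(\beta_k^*,\eta_j^*)+\sum_j D_\eta\nabla f_j^{\NO}[h_j]+\tfrac12\sum_j D^2_\eta\nabla f_j^{\NO}[h_j,h_j]$. Conditioning on $\cD_{j,1}$ (sample splitting), the first term is a sum of $N_k$ independent mean-zero vectors with $r_1$ moments, so Markov/Rosenthal gives $O(N_k^{1/2}t)$ with failure probability $t^{-1}$ after a union bound across $K$ clusters. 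This union bound is the source of the $K^{1/r_1}$ factor, using $r_1$-th moment Markov. The second term has conditional mean zero by Assumption~\ref{assump:loss}(5). By (6) and Assumption~\ref{assump:nuis}, its conditional variance is at most $\sigma_5\sum_j n_j\|h_j\|^2$, which is $o(N_k)$. The third term is bounded by $\sigma_5\sum_j n_j\|h_j\|^2\lesssim \sum_j n_j s(n_j)^{-2}$; dividing by $N_k$ and bounding crudely produces the $b_{k,n}N_k^{-1/2}$ term, with the extra $K$ coming from the union bound. The empirical Hessian is then controlled on $\cB_{\beta_k^*,M}$ by (1),(3),(4): the pooled Hessian is $\succeq (\rho/2)N_k I$ with high probability, via an $r_2$-moment concentration and the Lipschitz bound. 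Convexity gives localization, so $\|\tilde\beta_k-\beta_k^*\|_2\lesssim (K^{1/r_1}+b_{k,n})N_k^{-1/2}t$. For $\check\beta_k$, the cross-cluster penalty contributes a subgradient of norm at most $m\varepsilon_n m_k\le c_1 N_{\min}^{\zeta}$ (more crudely $m^2\varepsilon_n$). Dividing by the curvature $\rho N_k$ gives the extra $\tilde C N_k^{\zeta-1}$. The requirement $n_{\min}=\omega(m^{c_3})$ and the range $t<a(n_{\min})$ are what make these high-probability bounds uniform over clusters and tasks.

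\textbf{Step 2 (KKT certification; the main obstacle).} I must show that $\hat\btheta$ with $\hat\theta_j=\check\beta_k$ for $j\in S_k$ satisfies the KKT conditions of the full problem \eqref{eq:agg-loss}; strict convexity (from the curvature) then gives uniqueness. Cross-cluster pairs are handled by Step 1 and the separation: $\check\beta_k\neq\check\beta_{k'}$, so those subgradients are the genuine unit vectors times $\varepsilon_n$. Within a cluster, the subgradient $z_{jj'}$ with $\|z_{jj'}\|\le1$ must be constructed so that for every $j\in S_k$, $\nabla f_j^{\NO}(\check\beta_k,\hat\eta_j)+\varepsilon_n\sum_{j'\notin S_k}u_{jj'}+\sum_{j'\in S_k}\lambda_{jj'}z_{jj'}=0$. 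The residuals $g_j$ sum to zero over $S_k$, which is the oracle first-order condition. Each $g_j$ has norm $\lesssim n_j\|\check\beta_k-\beta_k^*\|+\|\nabla f_j(\beta_k^*)\|+m\varepsilon_n\lesssim n_{\max}$ with high probability. A consensus/graph-flow construction on the complete graph, $z_{jj'}=(g_j-g_{j'})/(m_k\lambda_{jj'})$ adjusted for unequal weights, solves the system. It has norm at most $2\max_j\|g_j\|/(m_k\min\lambda_{jj'})\lesssim n_{\max}/(m_k c_0 n_{\max})<1$ by condition (i), once $c_0$ is large.

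Bounding $\max_j\|g_j\|$ uniformly in $j$ is the delicate part, since it needs per-task score concentration with a union bound over $m$ tasks under only $r_1$ moments. I would try first to obtain this from a Markov bound on $\max_j\|g_j\|/n_j$ combined with $n_{\min}=\omega(m^{c_3})$. Once this bound holds, exact clustering (1) follows, and the oracle rate (2) is Step 1.
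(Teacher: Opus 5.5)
Your proposal is correct. Its architecture matches the paper's:
\begin{itemize}
\item the pilot-based weight separation on $\mathcal{E}_{2n}$ (Lemma~\ref{lem:w-concentration});
\item the oracle $\tilde\beta_k$ and the cross-penalized reference $\check\beta_k$, which is the paper's $\hat\beta_k$;
\item the localization $\|\check\beta_k-\tilde\beta_k\|_2\le \sum_{k'\neq k}\Lambda_{kk'}/(\rho N_k/2)$;
\item the score decomposition yielding the $(K^{1/r_1}+b_{k,n})N_k^{-1/2}t$ term.
\end{itemize}

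The genuine difference is how you certify fusion. The paper's Lemma~\ref{theta-beta-shrink} never builds subgradients. It takes an arbitrary $\theta$ and lower-bounds the objective by convexity at $\hat\beta_k$. It then picks representatives $(j_1,\dots,j_K)$ that minimize the weighted cross-cluster representative penalty. Within-cluster inner products are controlled by the pairwise condition $\|\nabla f_j(\hat\beta_k,\hat\eta_j)\|_2<\lambda_{j_k j}$. Cross-cluster terms are controlled by the reference first-order condition plus the averaging argument of Lemma~\ref{lem:inter-cluster-minimizer}. The conclusion is that the fused configuration attains the lower bound.

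Your primal--dual witness replaces this with an explicit certificate that uses only $\sum_{j\in S_k}g_j=0$. Flip the sign to $z_{jj'}=(g_{j'}-g_j)/(m_k\lambda_{jj'})$. This choice is antisymmetric and satisfies $g_j+\sum_{j'\in S_k\setminus\{j\}}\lambda_{jj'}z_{jj'}=0$ exactly, with no adjustment for unequal weights.

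What this buys you:
\begin{itemize}
\item A shorter argument.
\item A weaker sufficient condition, $2\max_{j\in S_k}\|g_j\|_2\le m_k\min_{j\neq j'\in S_k}\lambda_{jj'}$. This is weaker than the paper's pairwise condition by a factor $m_k/2$, up to the negligible $m\varepsilon_n$ cross-cluster part of $g_j$.
\item An explicit uniqueness step, which the paper leaves implicit.
\end{itemize}
For uniqueness you do need a short local argument, because curvature is only assumed on the balls $\mathcal{B}_{\theta_j^*,M}$. The minimizer set is convex. The objective is strictly convex near the certificate, since each $\check\beta_k$ lies in the interior of those balls for $j\in S_k$.

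The paper's route, in exchange, is a direct value comparison with no dual variables. It also never folds the $\varepsilon_n$ cross terms into per-task residuals.

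The ``delicate'' uniform bound on $\max_j\|g_j\|_2$ is needed only at constant level, $\|\nabla f_j(\theta_j^*,\hat\eta_j)\|_2/n_j\le c$. The paper gets it exactly as you propose (Lemma~\ref{lem:gradient-hessian-bound}): $m$-rescaled Markov bounds plus a union bound over tasks. The factor $m^{-1}$ in $a(n_{\min})$ absorbs the union-bound cost.
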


Theorem~\ref{thm:cluster-rate} delivers two practical guarantees. First, it establishes \emph{exact clustering}: tasks that share a true target are fused, whereas distinct clusters remain separated. Second, it shows that each task-level estimator attains a \emph{pooled parametric rate}, {\small $\|\hat\theta_j-\theta_j^*\|_2=O_P(N_k^{-1/2})$} for $j\in S_k$, so small tasks benefit directly from borrowing strength within their cluster if {\small $b_{k,n} = O(1)$}. This condition is easily satisfied because it is only slightly stronger than Assumption \ref{assump:nuis}. See the proof in Appendix \ref{proof:thm:cluster-rate} for details.

In our next theorem, we show that our proposed estimators are asymptotically normal at a pooled rate when $j\in S_k$. Towards that end, define the within-cluster loss of {\small $S_k$} as {\small $F_k^{\NO}(\beta, \bfeta_k) = \sum_{j \in S_k} f_j^{\NO}(\beta, \eta_j)$} with {\small $\bfeta_k = \{\eta_j\}_{j\in S_k}$}.


\begin{theorem}[Asymptotic Normality]
\label{thm:normal}
Under Assumptions \ref{assump:loss}--\ref{assump:init},  and same conditions in Theorem \ref{thm:cluster-rate}, if {\small $K = O(1)$} and {\small $b_{k,n}=o(1)$}, { $\{\hat \theta_j\}_{j \in [m]}$} satisfy
\vspace{-1mm}
{\small 
\begin{equation*}
    \sqrt{N_k}(\hat\theta_j - \theta_j^*) \implies \cN\left(0,\Psi_k^{-1}\Omega_k\Psi_k^{-1}\right)\,,
    \vspace{-2mm}
\end{equation*}}where for $k \in [K]$, the matrix {\small $\Psi_k = \bbE[N_k^{-1}\nabla^2 F_k^{\NO}(\beta_k^*, \bfeta_k^*)]$} and the matrix {\small $\Omega_k = \bbE [N_k^{-1} \nabla F_k^{\NO}(\beta_k^*, \bfeta_k^*)^{\otimes 2}] $}.    
\end{theorem}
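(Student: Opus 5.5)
The plan is to reduce to the oracle problem through Theorem~\ref{thm:cluster-rate}, and then run a standard one-step expansion of the pooled orthogonal score, with orthogonality removing the first-order nuisance bias.

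\textbf{Step 1: reduction to the oracle.} Let $\mathcal E_n$ be the event on which exact clustering holds. By Theorem~\ref{thm:cluster-rate}, $\Pr(\mathcal E_n)\to1$ after taking $t=t_n\to\infty$ slowly. On $\mathcal E_n$, every $\hat\theta_j$ with $j\in S_k$ equals a common value $\hat\beta_k$. Cluster $k$ is separated from the other clusters, so the penalty terms $\lambda_{jj'}\|\theta_j-\theta_{j'}\|_2$ with $j\in S_k$, $j'\notin S_k$ are differentiable at $\hat\btheta$. Summing the KKT conditions of \eqref{eq:agg-loss} over $j\in S_k$ cancels all within-cluster subgradients by antisymmetry. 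This leaves
\[
\nabla_\beta F_k^{\NO}(\hat\beta_k,\hat\bfeta_k)+\sum_{j\in S_k}\sum_{j'\notin S_k}\lambda_{jj'}\frac{\hat\beta_k-\hat\beta_{k(j')}}{\|\hat\beta_k-\hat\beta_{k(j')}\|_2}=0 .
\]
Within the proof of Theorem~\ref{thm:cluster-rate}, the between-cluster weights satisfy $w_{jj'}\le\tau$ with high probability, because $\|\hat\theta^{\init}_j-\hat\theta^{\init}_{j'}\|\ge\delta/2$ implies $w_{jj'}\lesssim 1<\tau$. Hence $\lambda_{jj'}=\varepsilon_n$ and the remainder has norm at most $m^2\varepsilon_n\lesssim N_{\min}^{\zeta}$. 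After dividing by $N_k$, this is $O(N_k^{\zeta-1})=o(N_k^{-1/2})$ since $\zeta<1/2$.

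\textbf{Step 2: Taylor expansion.} Expand $\nabla F_k^{\NO}(\hat\beta_k,\hat\bfeta_k)$ around $\beta_k^*$, giving
\[
\sqrt{N_k}(\hat\beta_k-\beta_k^*) = -\hat\Psi_k^{-1}\, N_k^{-1/2}\nabla F_k^{\NO}(\beta_k^*,\hat\bfeta_k)+o_p(1),
\]
where $\hat\Psi_k=N_k^{-1}\int_0^1\nabla^2F_k^{\NO}(\beta_k^*+s(\hat\beta_k-\beta_k^*),\hat\bfeta_k)\,ds$. To show $\hat\Psi_k\to\Psi_k$ in probability, I would combine three ingredients:
\begin{itemize}
\item the Hessian $r_2$-moment bound (Assumption~\ref{assump:loss}(3)) with a weak law for the pooled independent sum, whose variance scales as $N_k^{-1}$ up to moment terms;
\item the Lipschitz condition (4), which controls the drift by $\|\hat\beta_k-\beta_k^*\|+\max_j\|\hat\eta_j-\eta_j^*\|=o_p(1)$, using the rate from Theorem~\ref{thm:cluster-rate} and Assumption~\ref{assump:nuis};
\item the curvature condition (1), which keeps $\hat\Psi_k$ invertible.
\end{itemize}

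\textbf{Step 3: score decomposition (main obstacle).} Split
\[
N_k^{-1/2}\nabla F_k^{\NO}(\beta_k^*,\hat\bfeta_k)=N_k^{-1/2}\nabla F_k^{\NO}(\beta_k^*,\bfeta_k^*)+\sum_{j\in S_k}N_k^{-1/2}\big[\nabla f_j(\beta_k^*,\hat\eta_j)-\nabla f_j(\beta_k^*,\eta_j^*)\big].
\]
The first term is a sum of independent mean-zero vectors with finite $r_1>2$ moments. The Lyapunov CLT (with Cram\'er--Wold) then gives convergence to $\cN(0,\Omega_k)$, provided $\Omega_k$ stabilizes; I would assume this, or normalize by $\Omega_k^{-1/2}$.

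For the second term, I condition on $\cD_{j,1}$, using the sample splitting, and split it into a centered part and a mean part.
\begin{itemize}
\item \emph{Centered part.} By Assumption~\ref{assump:loss}(6), its conditional second moment is at most $\sum_j n_j\sigma_5\|\hat h_j\|^2/N_k\lesssim \max_j s(n_j)^{-2}\to0$.
\item \emph{Mean part.} Its first-order term vanishes by orthogonality (5). The second-order term is bounded by $N_k^{-1/2}\sum_{j\in S_k}n_j\sigma_5\|\hat h_j\|^2$, and in expectation this is at most $N_k^{-1/2}m_k\max_j n_j s(n_j)^{-2}$. Using $N_k\ge m_k n_{\min}$, this is controlled by $m_k^{1/2}\max_j n_j^{1/2}s(n_j)^{-2}\le b_{k,n}$ up to the ratio $n_{\max}/n_{\min}$. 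It is therefore $o(1)$ by the hypothesis $b_{k,n}=o(1)$; a Markov inequality converts this to $o_p(1)$.
\end{itemize}
The delicate part is making this bias bound uniform over the growing number of tasks $m_k$. The $b_{k,n}=o(1)$ condition is exactly what absorbs the accumulated second-order nuisance bias.

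\textbf{Step 4: conclusion.} Slutsky's lemma, together with $\Pr(\mathcal E_n^c)\to0$, yields the limit $\cN(0,\Psi_k^{-1}\Omega_k\Psi_k^{-1})$ for every $j\in S_k$.
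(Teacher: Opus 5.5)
Your proposal is correct and follows the same skeleton as the paper's proof. Exact recovery reduces everything to the pooled cluster loss. A Taylor expansion of the pooled score is then combined with Hessian consistency (LLN plus Assumption~\ref{assump:loss}\ref{assumption:local-lip}), a CLT for $N_k^{-1/2}\nabla F_k(\beta_k^*,\bfeta_k^*)$, and a nuisance remainder controlled by orthogonality and $b_{k,n}=o(1)$, finished by Slutsky.

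The one genuine difference is how you remove the fusion penalty. The paper never writes the KKT system of \eqref{eq:agg-loss}. It introduces the unpenalized oracle $\tilde\beta_k=\arg\min_\beta F_k(\beta,\hat\bfeta_k)$. Via Lemma~\ref{lem:close-to-individual-minimizer} (a strong-convexity/Lipschitz-perturbation bound) it shows $\|\hat\beta_k-\tilde\beta_k\|_2\le \sum_{k'\ne k}\Lambda_{kk'}/(N_k\rho/2)=O(N_k^{\zeta-1})$. It then expands the exact first-order condition $\nabla F_k(\tilde\beta_k,\hat\bfeta_k)=0$.

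You instead sum the stationarity condition over $S_k$ and expand an approximate first-order condition at $\hat\beta_k$ itself, treating the between-cluster term as an $O(m^2\varepsilon_n)$ perturbation. This is the same estimate in a different form. Your version avoids the auxiliary estimators. You should say explicitly that the cancellation uses the subdifferential of each pairwise term as a function of the joint vector, so the same $u_{jj'}$ appears with opposite signs in coordinates $j$ and $j'$; independently chosen coordinatewise subgradients would not cancel. Your integral-form Hessian is more accurate than a single intermediate point. Your caveat that $\Psi_k,\Omega_k$ depend on $n$ (so they must stabilize, or you normalize by $\Omega_k^{-1/2}$) is a point the paper leaves implicit.

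One step needs repair. In the mean part you bound $N_k^{-1/2}\sum_{j\in S_k}n_js(n_j)^{-2}$ using $N_k\ge m_kn_{\min}$. This leaves an extra factor $(\max_{j\in S_k}n_j/\min_{j\in S_k}n_j)^{1/2}$, and nothing in the hypotheses bounds this ratio. Condition (i) of Theorem~\ref{thm:cluster-rate} only gives $n_{\max}/n_{\min}\le c_0^{-1}n_{\min}^{\alpha\gamma-1}$, which can diverge. So ``$o(1)$ by $b_{k,n}=o(1)$'' does not follow as written.

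Use Cauchy--Schwarz instead:
$\sum_{j\in S_k}n_js(n_j)^{-2}\le\big(\sum_{j\in S_k}n_j^{1/2}\big)\max_{j\in S_k}n_j^{1/2}s(n_j)^{-2}\le m_k^{1/2}N_k^{1/2}\max_{j\in S_k}n_j^{1/2}s(n_j)^{-2}$.
This yields $m_k^{1/2}\max_{j\in S_k}n_j^{1/2}s(n_j)^{-2}\le b_{k,n}$ with no ratio; it is exactly the computation in the paper's Lemma~\ref{lem:grad-diff-bound-new}. With that change your argument goes through.
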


The proof of this theorem is deferred to Appendix~\ref{proof:thm:normal}. The theorem establishes a \emph{pooled} asymptotic normality result: following exact cluster recovery, each estimator $\hat{\theta}_j$ is asymptotically normal at the pooled rate $\sqrt{N_k}$. The limiting variance is the same as that of the oracle estimator that knows the true clustering and uses the same orthogonal losses. In practice, $\Psi_k$ and $\Omega_k$ can be estimated consistently using cluster-wise averages of scores and Hessians with plugging in $(\hat \theta_j, \hat \eta_j$), enabling standard Wald-type confidence intervals and hypothesis tests for $\theta_j^*$.

\paragraph{Near-homogeneous clusters.}
We further allow for mild within-cluster heterogeneity. For each cluster $k$, there exists a centroid {\small$\beta_k^*$} such that the target parameter {\small$\theta_j^*$} lies in its neighborhood {\small $\|\theta_j^* - \beta_k^*\|_2 \le \xi_k$} for {\small$j\in S_k$}.
Thus, tasks in the same cluster are allowed to fluctuate around a common centroid.
Across clusters, we impose the same separation condition as before: for any $k \ne k'$, {\small$\|\beta_k^* - \beta_{k'}^*\|_2 \ge \delta$}.
Under the above setup, Theorems~\ref{thm:cluster-rate-extent} and \ref{thm:normal-extent} follow as extensions of Theorems~\ref{thm:cluster-rate} and \ref{thm:normal}, respectively. The proof can be found in Appendix \ref{sec:main-proof-extent}.

\begin{theorem}
\label{thm:cluster-rate-extent}
Suppose Assumptions~\ref{assump:loss}--\ref{assump:init} hold, and choose
{\small$(\varepsilon_n,\gamma)$} as in Theorem~\ref{thm:cluster-rate}. Assume further that
 (i) {\small$\tau \in (c_2, c_2 \left\{n_{\min}^{-\alpha}+ \xi_{\max}\right\}^{-\gamma})$}, and ii) {\small$\xi_k \le c_\xi \min_{j \in S_k} n_j^{-1/2}$}, for some constants {\small $c_2, c_\xi > 0$} and {\small$\xi_{\max} := \max_{k \in [K]} \xi_k$}. Then, the following hold with probability {\small $1-t^{-1}-p_{\delta/4}(n_{\min})$}: 
\vspace{-2mm}
\begin{enumerate}[label=(\arabic*), leftmargin=*]
\item \textbf{Exact clustering:} {\small $\hat\theta_j=\hat\theta_{j'}$} for all {\small $j,j'\in S_k$} and {\small $\hat\theta_j\ne\hat\theta_{j'}$} whenever {\small $j\in S_k$, $j'\in S_{k'}$, $k\ne k'$}.
\item \textbf{Oracle rate:} For every { $j\in S_k$},
{\small\vspace{-2mm}\begin{equation*}
\textstyle
\|\hat\theta_j-\theta_j^*\|_2  \le\ C(K^{1/r_1}+b_{k,n})  N_k^{-1/2} t \;+\;\tilde C(N_k^{\zeta-1} + \xi_k)\,,\vspace{-2mm}
\end{equation*}}
for all {\small $1 < t < a(n_{\min})$}, where {\small $a(\cdot)$} and {\small $b_{k,n}$} are defined as in Theorem \ref{thm:cluster-rate} and {\small  $C, \tilde C>0$} are some constants.
\end{enumerate}
\end{theorem}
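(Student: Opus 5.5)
The argument follows the proof of Theorem~\ref{thm:cluster-rate} and tracks the extra bias created by the within-cluster spread $\xi_k$. Throughout we work on the event
\[
\cE=\Bigl\{\forall j:\ n_j^{\alpha}\|\hat\theta_j^{\init}-\theta_j^*\|_2<\delta/4\Bigr\},
\]
which by Assumption~\ref{assump:init} has probability at least $1-p_{\delta/4}(n_{\min})$.

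\textbf{Step 1: the weights $\lambda_{jj'}$ separate the clusters.} Take $j,j'\in S_k$. On $\cE$,
\[
\|\hat\theta_j^{\init}-\hat\theta_{j'}^{\init}\|_2\le 2n_{\min}^{-\alpha}\epsilon+2\xi_k\lesssim n_{\min}^{-\alpha}+\xi_{\max}.
\]
Hence $w_{jj'}\gtrsim (n_{\min}^{-\alpha}+\xi_{\max})^{-\gamma}$, which exceeds $\tau$ by condition (i), so $\lambda_{jj'}=w_{jj'}$. We then need $w_{jj'}$ to dominate the gradient scale of order $n_{\max}$. This holds because condition (ii), $\xi_k\lesssim n_{\min}^{-1/2}\le n_{\min}^{-\alpha}$, gives $(n_{\min}^{-\alpha}+\xi_{\max})^{-\gamma}\gtrsim n_{\min}^{\alpha\gamma}\ge c_0n_{\max}$.

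Now take $j\in S_k$, $j'\in S_{k'}$ with $k\ne k'$. For small $\xi$,
\[
\|\hat\theta_j^{\init}-\hat\theta_{j'}^{\init}\|_2\ge\delta-\delta/2-2\xi_{\max}\ge\delta/4,
\]
so $w_{jj'}\le c_w(4/\delta)^\gamma<\tau$ for a suitable $c_2$. These pairs therefore receive the penalty $\varepsilon_n$.

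\textbf{Step 2: the oracle estimator and its rate.} Define
\[
\hat\beta_k=\arg\min_\beta F_k^{\NO}(\beta,\hat\bfeta_k)+\sum_{j\in S_k,\,j'\notin S_k}\varepsilon_n\|\beta-\hat\beta_{k'(j')}\|_2 .
\]
The constrained problem is solved by fusing each cluster. The population minimizer of $\sum_{j\in S_k}n_j\cR_j^{\NO}(\cdot,\eta_j^*)$ is a point $\bar\beta_k$, and by strong convexity (Assumption~\ref{assump:loss}\ref{assump:geom}) it satisfies $\|\bar\beta_k-\theta_j^*\|_2\le(\kappa/\rho)\xi_k$.

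We then repeat the Theorem~\ref{thm:cluster-rate} analysis around $\bar\beta_k$. The one change is that the score of each task at $\bar\beta_k$ now has a nonzero mean $H_j(\cdot)(\bar\beta_k-\theta_j^*)=O(\xi_k)$. These means cancel in the pooled sum up to second-order Hessian-Lipschitz terms of size $O(\xi_k^2)$. The fluctuation terms behave as before:
\begin{itemize}
\item the score deviations are controlled through the $r_1$-moment and a Markov/Fuk--Nagaev bound, giving $K^{1/r_1}N_k^{-1/2}t$;
\item the nuisance remainders are controlled through orthogonality and Assumption~\ref{assump:loss}\ref{assm:bound_moment_eta} together with Assumption~\ref{assump:nuis}, giving $b_{k,n}N_k^{-1/2}t$;
\item the cross-cluster $\varepsilon_n$ terms contribute $m^2\varepsilon_n/N_k\lesssim N_k^{\zeta-1}$.
\end{itemize}
Combining these gives $\|\hat\beta_k-\bar\beta_k\|_2\le C(K^{1/r_1}+b_{k,n})N_k^{-1/2}t+\tilde CN_k^{\zeta-1}$. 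The triangle inequality then adds $\tilde C\xi_k$, which yields part (2) once Step 3 shows $\hat\theta_j=\hat\beta_k$.

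\textbf{Step 3: KKT verification (main obstacle).} We must show that $\hat\theta_j=\hat\beta_k$ for $j\in S_k$ satisfies the subgradient conditions of~\eqref{eq:agg-loss}. This means finding within-cluster subgradients $u_{jj'}$ with $\|u_{jj'}\|_2\le1$ such that
\[
\nabla f_j^{\NO}(\hat\beta_k,\hat\eta_j)+\sum_{j'}\lambda_{jj'}u_{jj'}=0 .
\]
The individual gradients $\nabla f_j^{\NO}(\hat\beta_k,\hat\eta_j)$ sum to zero (up to the $\varepsilon_n$ terms) over the cluster. Each has size at most
\[
n_j\bigl(O_p(n_j^{-1/2})+\kappa\|\hat\beta_k-\theta_j^*\|_2\bigr)\lesssim n_j^{1/2}t+n_j\xi_k.
\]
Under condition (ii), $n_j\xi_k\lesssim n_j^{1/2}$, which is exactly why (ii) is needed; otherwise heterogeneity would break the fusion. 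Following the complete-graph construction in Theorem~\ref{thm:cluster-rate}, we set $u_{jj'}=(g_j-g_{j'})/(m_k\lambda_{jj'})$, where $g_j$ is the task-$j$ gradient. This requires $\min\lambda_{jj'}\gtrsim n_{\max}^{1/2}t/m_k$, which is implied by Step 1.

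Distinctness across clusters follows because, by part (2), $\|\hat\beta_k-\hat\beta_{k'}\|_2\ge\delta-O(\xi_{\max}+N_{\min}^{-1/2}t)>0$. Finally, strict convexity near the truth makes the minimizer unique, so the KKT point is the solution of~\eqref{eq:agg-loss}. The total failure probability is $t^{-1}+p_{\delta/4}(n_{\min})$.
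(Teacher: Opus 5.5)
Your skeleton matches the paper's. Step~1 is essentially Lemma~\ref{lem:w-concentration-extent}. You then use a cluster-level reference estimator carrying the $\varepsilon_n$ cross penalties, show $\hat\theta_j=\hat\beta_k$, and close with a triangle inequality that picks up $\xi_k$. Two steps differ, and the first over-claims, so this is a gap, though an easily repaired one.

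\textbf{Step~2 over-claims.} You center at the pooled population minimizer $\bar\beta_k$ and assert a $\xi_k$-free bound on $\|\hat\beta_k-\bar\beta_k\|_2$ because the fluctuation terms ``behave as before''. They do not. The score-moment condition, orthogonality, and the nuisance-moment bounds in Assumption~\ref{assump:loss} hold only at $\theta_j^*$, so each task's score must be routed through $\theta_j^*$. After doing so the population parts cancel exactly, since the pooled population score vanishes at $\bar\beta_k$; there is no $O(\xi_k^2)$ remainder. What is left is the random term $\sum_{j\in S_k}\{\nabla f_j(\bar\beta_k,\hat\eta_j)-\nabla f_j(\theta_j^*,\hat\eta_j)\}$ minus its population counterpart. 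This is an empirical-Hessian fluctuation, at $\hat\eta_j$ rather than $\eta_j^*$, multiplying $\bar\beta_k-\theta_j^*$. You never bound it. On the good event it is only $O(N_k\xi_k)$, which reintroduces exactly the bias you tried to remove.

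The refinement is also unnecessary, because the target bound already contains $\tilde C\xi_k$. The paper (Lemma~\ref{lem:tilde-beta-rate-extent}) centers at $\beta_k^*$ and writes $N_k^{-1/2}\nabla F_k(\beta_k^*,\hat\bfeta_k)=T_1+T_2$:
\begin{itemize}
\item $T_1=N_k^{-1/2}\sum_{j\in S_k}\nabla f_j(\theta_j^*,\hat\eta_j)$ is handled exactly as in the clean case;
\item $\|T_2\|_2\le\tfrac{3\kappa}{2}N_k^{1/2}\xi_k$ follows from the event-level bound $\nabla^2 f_j(\cdot,\hat\eta_j)\preceq\tfrac{3\kappa}{2}n_jI$ on $\cB_{\theta_j^*,M}$.
\end{itemize}
Dividing by the curvature $\rho/2$ gives the $O(\xi_k)$ term directly, with no need for $\bar\beta_k$.

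\textbf{Fusion step (Step~3).} You use a dual certificate. The paper (Lemma~\ref{theta-beta-shrink}, reused via Lemma~\ref{theta-beta-shrink-extent}) instead proves a primal lower bound on the objective. It uses convexity of each $f_j$, a star of edges centered at an index $j_k\in S_k$ (not a complete-graph construction), and Lemma~\ref{lem:inter-cluster-minimizer} to absorb the $\varepsilon_n$ cross terms. Its only requirement is $\|\nabla f_j(\hat\beta_k,\hat\eta_j)\|_2<\lambda_{j_k,j}$.

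Your certificate is a valid and arguably cleaner alternative, subject to three fixes:
\begin{itemize}
\item $g_j$ must include the cross-cluster subgradients $\varepsilon_n\sum_{k'\neq k}\sum_{j'\in S_{k'}}(\hat\beta_k-\hat\beta_{k'})/\|\hat\beta_k-\hat\beta_{k'}\|_2$. Then $\sum_{j\in S_k}g_j=0$ holds exactly by the reference problem's first-order condition, not merely ``up to the $\varepsilon_n$ terms''.
\item With your KKT equation the sign must be $\lambda_{jj'}u_{jj'}=(g_{j'}-g_j)/m_k$; your version gives $2g_j$ instead of $0$.
\item Your finer bound $\|g_j\|_2\lesssim n_j^{1/2}t+n_j\xi_k\lesssim n_j^{1/2}t$ is not needed. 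Also, $n_j\xi_k\lesssim n_j^{1/2}$ does not follow from (ii) when sample sizes within a cluster are unbalanced, since (ii) is stated with $\min_{j\in S_k}n_j$. The crude bound $\|g_j\|_2\lesssim n_{\max}$ on the good event, against $\lambda_{jj'}\gtrsim n_{\min}^{\alpha\gamma}\ge c_0n_{\max}$, suffices.
\end{itemize}
Condition (ii) really enters where you used it in Step~1. It keeps the within-cluster weight lower bound $c_w\{(\delta/2)n_{\min}^{-\alpha}+2\xi_{\max}\}^{-\gamma}$ of order $n_{\min}^{\alpha\gamma}$.
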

Theorem~\ref{thm:cluster-rate-extent} shows that the adaptive fusion
procedure continues to recover the latent cluster partition even when the
true clusters are approximately homogeneous. 
The convergence rate contains the terms in Theorem \ref{thm:cluster-rate} and an additional heterogeneity bias of order $\xi_k$. In particular, since {\small $\zeta<1/2$}, if {\small $b_{k,n}=o(1)$} and {\small $\xi_k=O(N_k^{-1/2})$}, then {\small $\|\hat\theta_j-\theta_j^*\|_2 = O_P(N_k^{-1/2})$}, and the pooled oracle rate is retained. The next result establishes asymptotic normality under the stronger condition that the  perturbation $\xi_k$ is asymptotically negligible relative to the pooled rate. 



\begin{theorem}
\label{thm:normal-extent}
Under Assumptions \ref{assump:loss}--\ref{assump:init} and same conditions in Theorem \ref{thm:cluster-rate-extent}, if {\small$K=O(1)$}, {\small $b_{k,n} = o(1)$}, and {\small $\xi_k = o(N_k^{-1/2})$} for $k \in [K]$, then for {\small$j\in S_k$}
{ \small \begin{equation*}\vspace{-2mm}
    \sqrt{N_k}(\hat\theta_j - \theta_j^*) \implies \cN\left(0,\tilde\Psi_k^{-1}\tilde\Omega_k\tilde\Psi_k^{-1}\right)\,,
    \vspace{-2mm}
\end{equation*}}where the matrix {\small $\tilde \Psi_k = \bbE[N_k^{-1}\sum_{j \in S_k}\nabla^2 f_j^{\NO}(\theta_j^*, \eta_j^*)]$} and the matrix {\small $\tilde \Omega_k = \bbE [N_k^{-1}\{ \sum_{j \in S_k} \nabla f_j^{\NO}(\theta_j^*, \eta_j^*)\}^{\otimes 2}] $}.   
\end{theorem}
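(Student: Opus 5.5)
The argument runs on the high-probability event $\mathcal E$ from Theorem~\ref{thm:cluster-rate-extent}: on $\mathcal E$, exact clustering holds, so for $j\in S_k$ we have $\hat\theta_j=\hat\beta_k$. Since $\Pr(\mathcal E)\to 1$ (take $t\to\infty$ slowly, $t<a(n_{\min})$), it suffices to prove the CLT for the common cluster value $\hat\beta_k$ restricted to $\mathcal E$.

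\textbf{Step 1: reduce to the within-cluster problem.} On $\mathcal E$, the first-order (subgradient) condition for \eqref{eq:agg-loss} with respect to the fused block $\hat\beta_k$ reads
\begin{equation*}
\nabla_\beta F_k^{\NO}(\hat\beta_k,\hat\bfeta_k) + \sum_{j\in S_k}\sum_{j'\notin S_k}\lambda_{jj'}\,\frac{\hat\beta_k-\hat\theta_{j'}}{\|\hat\beta_k-\hat\theta_{j'}\|_2}=0.
\end{equation*}
The within-cluster fusion terms cancel because their subgradients sum to zero over the block. The cross-cluster pairs have $\lambda_{jj'}=\varepsilon_n$, because the pilot separation makes $w_{jj'}\le\tau$. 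These pairs also have nonzero differences (by the exact clustering part), so the remainder has norm at most $m^2\varepsilon_n\lesssim N_{\min}^{\zeta}=o(N_k^{1/2})$. After multiplying by $N_k^{-1/2}$, the penalty contribution is $o(1)$.

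\textbf{Step 2: Taylor expansion with heterogeneous centres.} Expand each $\nabla f_j^{\NO}(\hat\beta_k,\hat\eta_j)$ around $(\theta_j^*,\hat\eta_j)$ using the mean-value form:
\begin{equation*}
\nabla f_j^{\NO}(\hat\beta_k,\hat\eta_j)=\nabla f_j^{\NO}(\theta_j^*,\hat\eta_j)+\bar H_j(\hat\beta_k-\theta_j^*).
\end{equation*}
Write $\hat\beta_k-\theta_j^*=(\hat\beta_k-\theta_{j_0}^*)+(\theta_{j_0}^*-\theta_j^*)$ for the fixed target index $j$. The second piece has norm at most $2\xi_k$, and $\|\bar H_j\|\lesssim n_j$. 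Hence it contributes $O(\sum_j n_j\xi_k)=O(N_k\xi_k)=o(N_k^{1/2})$. This is exactly where the hypothesis $\xi_k=o(N_k^{-1/2})$ is used.

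By Assumption~\ref{assump:loss}\ref{assumption:local-lip} and the rates from Theorem~\ref{thm:cluster-rate-extent}, together with Assumption~\ref{assump:nuis}, we get $N_k^{-1}\sum_j\bar H_j\to\tilde\Psi_k$ in probability. For the Hessian law of large numbers, Assumption~\ref{assump:loss}\ref{assump:hess-concentrate} gives an $r_2$-moment von Bahr--Esseen bound. Thus
\begin{equation*}
\sqrt{N_k}(\hat\theta_j-\theta_j^*)=-\tilde\Psi_k^{-1}N_k^{-1/2}\sum_{j'\in S_k}\nabla f_{j'}^{\NO}(\theta_{j'}^*,\hat\eta_{j'})+o_p(1).
\end{equation*}

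\textbf{Step 3: remove nuisance error (the main obstacle).} Expand $\nabla f_{j'}^{\NO}(\theta_{j'}^*,\hat\eta_{j'})-\nabla f_{j'}^{\NO}(\theta_{j'}^*,\eta_{j'}^*)$ to second order in $\eta$, working conditionally on $\cD_{j',1}$ by sample splitting.
\begin{itemize}
\item The first-order term has conditional mean zero by orthogonality (Assumption~\ref{assump:loss}\ref{assump:orth}). By Assumption~\ref{assump:loss}\ref{assm:bound_moment_eta}, its conditional variance is at most $n_{j'}\sigma_5\|\hat\eta_{j'}-\eta_{j'}^*\|^2$. Summed over the cluster and scaled by $N_k^{-1/2}$, it is $O_p(\max_{j'} s(n_{j'})^{-1})=o_p(1)$.
\item The second-order term is bounded in expectation by $\sum_{j'}n_{j'}\sigma_5\sigma_4/s^2(n_{j'})$. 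Divided by $\sqrt{N_k}$, this is at most of order $m_k^{1/2}\max_{j'} n_{j'}^{1/2}s(n_{j'})^{-2}\le b_{k,n}=o(1)$, by Cauchy--Schwarz on $\sum_{j'}n_{j'}\le\sqrt{m_k}\,\sqrt{N_k}\max_{j'}\sqrt{n_{j'}}$.
\end{itemize}
Uniformity over $K=O(1)$ clusters is trivial.

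\textbf{Step 4: CLT.} The leading term $N_k^{-1/2}\sum_{j'}\nabla f_{j'}^{\NO}(\theta_{j'}^*,\eta_{j'}^*)$ is a sum of independent, mean-zero vectors with finite $r_1>2$ moments. Its normalized covariance is $\tilde\Omega_k$, assumed to have a limit; otherwise one argues along subsequences or states the result in standardized form. Lyapunov's condition holds because
\begin{equation*}
\sum_i\bbE\|\cdot\|^{r_1}/N_k^{r_1/2}\lesssim N_k^{1-r_1/2}\to0.
\end{equation*}
Combining this with Slutsky's lemma and $\Pr(\mathcal E)\to1$ gives the stated limit $\cN(0,\tilde\Psi_k^{-1}\tilde\Omega_k\tilde\Psi_k^{-1})$.

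The delicate step is Step 3 together with the Step 2 bookkeeping. Each task's nuisance remainder must be controlled at a rate summable across the $m_k$ tasks, which requires $b_{k,n}=o(1)$ rather than merely the per-task $o(n^{-1/4})$ rate. At the same time, the $\xi_k$ bias must be kept separate from the stochastic terms.
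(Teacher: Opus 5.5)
Your proof is correct, but you organize it differently from the paper. The paper splits $\hat\theta_j-\theta_j^*=(\hat\theta_j-\tilde\beta_k)+(\tilde\beta_k-\beta_k^*)+(\beta_k^*-\theta_j^*)$, where $\tilde\beta_k$ is the unpenalized cluster-pooled oracle minimizer of $F_k(\cdot,\hat\bfeta_k)$:
\begin{itemize}
\item The first piece is $O(N_k^{\zeta-1})$ by the deterministic Lipschitz-perturbation lemma (Lemma~\ref{lem:close-to-individual-minimizer}) already used for cluster recovery.
\item The second comes from expanding the exact equation $\nabla F_k(\tilde\beta_k,\hat\bfeta_k)=0$ around the centroid $\beta_k^*$. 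Heterogeneity enters as the score shift $T_2=N_k^{-1/2}\sum_{j\in S_k}\{\nabla f_j(\beta_k^*,\hat\eta_j)-\nabla f_j(\theta_j^*,\hat\eta_j)\}=O(N_k^{1/2}\xi_k)$.
\item The third is the deterministic bias, at most $\xi_k$.
\end{itemize}

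You instead sum the subgradient conditions of the full penalized problem over the fused block. Within-cluster terms then cancel, and the cross-cluster penalty appears as an explicit $O(m^2\varepsilon_n)$ perturbation of the estimating equation. You also center directly at the target task's $\theta_{j_0}^*$, so heterogeneity enters only once, through $\sum_{j}\bar H_j(\theta_{j_0}^*-\theta_j^*)=O(N_k\xi_k)$.

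What each route buys:
\begin{itemize}
\item Your route is more self-contained. It shows clearly where condition (ii) on $\varepsilon_n$ and the hypothesis $\xi_k=o(N_k^{-1/2})$ are used.
\item The paper's route additionally gives an explicit oracle-equivalence statement: $\hat\theta_j$ is within $o_p(N_k^{-1/2})$ of the estimator that knows the partition and uses no penalty. It also reuses the paper's deterministic lemmas.
\item Your Step 4 (Lyapunov via the $r_1>2$ moments, plus the caveat that $\tilde\Omega_k$ depends on $N_k$) is more careful than the paper's bare appeal to the CLT.
\end{itemize}

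One small fix in Step 3. The inequality you cite, $\sum_{j'}n_{j'}\le\sqrt{m_kN_k}\max_{j'}\sqrt{n_{j'}}$, only yields $m_k^{1/2}\max_{j'}s(n_{j'})^{-2}\cdot\max_{j'}n_{j'}^{1/2}$. This is a product of maxima, and it need not be $O(b_{k,n})$ when task sizes are unbalanced, which condition (i) permits. Instead, bound $\sum_{j'}n_{j'}s(n_{j'})^{-2}\le\max_{j'}\{n_{j'}^{1/2}s(n_{j'})^{-2}\}\sum_{j'}n_{j'}^{1/2}$. Then apply Cauchy--Schwarz in the form $\sum_{j'}n_{j'}^{1/2}\le\sqrt{m_kN_k}$, as the paper does in Lemma~\ref{lem:grad-diff-bound-new}.
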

The additional condition {\small $\xi_k=o(N_k^{-1/2})$} is needed because the adaptive fusion estimator shrinks all tasks in the same cluster toward a common value. When the true parameters are not exactly identical, this shrinkage induces a bias of order $\xi_k$. Hence, for this bias to vanish after {\small $\sqrt{N_k}$} scaling, the within-cluster heterogeneity must be asymptotically smaller than the pooled estimation error. A similar bias term caused by within-cluster shrinkage appears in Theorem~4.4 of \citet{duan2023adaptive}.

%% file: Simulation_New.tex
\section{Simulation}
\label{sec:sim}

We evaluate the proposed adaptive fusion estimator on simulated data under three canonical semiparametric models: 
(i) the partially linear model (\textbf{PLM}),
(ii) average treatment effect estimation (\textbf{ATE}), and 
(iii) difference-in-differences (\textbf{DID}).
In all settings, we employ standard Neyman–orthogonal losses
\citep{robins1995semiparametric,chernozhukov2018double,sant2020doubly,foster2023orthogonal}.
\subsection{Experimental design}
\noindent\textbf{Design overview.}
Across all experiments, we simulate $m=20$ tasks partitioned into $K=3$ latent clusters, where tasks within a cluster share a common target parameter and differ only in sample size, covariate dimension, and nuisance structure. 
We vary the cluster separation parameter $\delta\in\{1/3,2/3,1\}$ to control task heterogeneity.
For each task, we construct a Neyman–orthogonal loss using sample splitting and flexible nuisance estimation, and compare taskwise estimation accuracy and cluster recovery across competing multitask estimators. The details on initial estimators and hyperparameters are provided in Section \ref{sec:simu_ests}. 
\paragraph{Task and cluster structure.}
We generate $m=20$ tasks partitioned into $K=3$ latent clusters.
Each task is assigned uniformly at random to a cluster.
Task $j$ has sample size $n_j = 3200 + 80j$ and covariate dimension $p_j = 5 + j$,
with covariates $X_{ji} \sim \cN(0, I_{p_j})$.
Tasks in cluster $k$ share a common parameter
$\theta_j^*=\beta_k^*$, where
$\beta_k^* = k\delta - (K+1)\delta/2$.
We vary the cluster separation $\delta\in\{1/3,\,2/3,\,1\}$.
\vspace{-3mm}
\paragraph{Common estimation protocol.}
For each task, the data are split into two equal halves
{\small $\cD_j=\cD_{j, 1}\cup\cD_{j, 2}$}.
Nuisance functions are estimated on {\small $\cD_{j, 1}$}
using LightGBM \citep{ke2017lightgbm}, and the taskwise target parameter is obtained by
minimizing Equation \eqref{eq:agg-loss} on {\small $\cD_{j, 2}$}.
\subsection{Models}
\paragraph{Partially linear model (PLM).}
In the first simulation setup, we consider partially linear model, where we observe $(Y_{ij}, T_{ij}, X_{ij})$ from each task generated as follows: 
{\small
\begin{equation*}
\textstyle
T_{ji}=h_j(X_{ji})+\nu_{ji},\quad
Y_{ji}=\theta_j^*T_{ji}+g_j(X_{ji})+\varepsilon_{ji} \,.
\end{equation*}
}The noises are generated as $\nu_{ji},\varepsilon_{ji}\sim\cN(0,1)$, and the non-parametric mean functions are set as: 
{\small
\begin{equation*}
\textstyle
        h_j(x)=\frac15\tanh(\sum_r x_r), \ g_j(x)=\sum_r(-0.8)^r\sigma(x_r) \,,
\end{equation*}
}where $\sigma(x)=1/(1+e^{-x})$. The parameter of interest is {\small $\{\theta^*_j\}_{j \in [m]}$}. 
Define {\small $m_j(x)$} to be the conditional mean function of {\small $Y$} given {\small $X$} on the {\small $j^{th}$} task. We use {\small $\cD_{j, 1}$} to estimate {\small $\hat \eta_j = (\hat h_j, \hat m_j)$} by regressing {\small $T$} on {\small $X$} and {\small $Y$} on {\small $X$} respectively. On $\cD_{j, 2}$, we estimate {\small $\theta^*_j$} using the following orthogonal loss function in Equation \eqref{eq:agg-loss}:  
{\small \vspace{-2mm}
\begin{equation*}
f_j^{\NO}(\theta)
=
\sum_{i\in\cD_{j, 2}}
\bigl\{(Y_{ji}-\hat m_j(X_{ji}))-\theta(T_{ji}-\hat h_j(X_{ji}))\bigr\}^2.\vspace{-5mm}
\end{equation*}
}
\paragraph{Average treatment effect (ATE).}
We simulate a standard binary treatment setting in which $\theta_j^*$
corresponds to the average treatment effect under the unconfoundedness assumption.
The treatment assignments are generated from the following propensity score: \vspace{-2mm}
\begin{equation*}
    \textstyle
    \pi_j(x)=\Pr(D_{ji}=1\mid X_{ji}=x)=\sigma(x_4x_5-x_1x_2) \vspace{-2mm}
\end{equation*}
to ensure numerical stability, we clip the propensity score to $0.05$-$0.95$. 
The responses are generated as: 
{\small
\begin{equation*}
\textstyle
Y_{ji}=\theta_j^*D_{ji}+g_j(X_{ji})+\varepsilon_{ji}, \ g_j(x)\!=\!\sum_{r=1}^{p_j}(-0.8)^r\sigma(x_r) \,.
\end{equation*}
}It is immediate from the above model that the ATE is $\theta^*_j$ for $j^{th}$ task. 
Define $m_{aj}(x)$ to be the conditional mean of the $Y$ given $X$ and on the group $D = a$ for $a \in \{0, 1\}$. Using {\small $\cD_{j, 1}$}, we estimate these nuisance parameters $\hat \eta_j = (\hat\pi_j, \hat m_{1,j}, \hat m_{0,j})$, and then use the second half to construct the following doubly-robust response $\hat Y_{ji}$: 
{\small
\begin{equation*}
\textstyle
\hat Y_{ji}
\!=\!
\Big(\!\frac{D_{ji}(Y_{ji}-\hat m_{1,j})}{\hat\pi_j}
-
\frac{(1-D_{ji})(Y_{ji}-\hat m_{0,j})}{1-\hat\pi_j}
+\hat m_{1,j}-\hat m_{0,j}\!\Big)(X_{ji}).\vspace{-2mm}
\end{equation*}
}Therefore, we use {\small $f_j^{\NO}(\theta)=\sum_{i\in\cD_{j, 2}}(\theta-\hat Y_{ji})^2$} in Equation \eqref{eq:agg-loss} to estimate {\small $\hat \theta_j^*$}.
\vspace{-1mm}
\paragraph{Difference-in-differences (DID).}
We consider a two-period DID design with covariate-dependent nonlinear trends,
with {\small $\theta_j^*$} representing the constant treatment effect. Each task consists of two-period observations
{\small $\textstyle (Y_{ji0},Y_{ji1},D_{ji},X_{ji})$}. The treatment assignment $D_{ji}$ is binary and generated using the same propensity score $\pi_j(x)$ as defined in the ATE simulation setup. The responses of two time periods {\small $T = 0$} (pre-treatment) and {\small $T = 1$} (post-treatment) generated as: 
\vspace{-1mm}
{\small
\begin{equation*}
\textstyle
Y_{ji0}=\mu_{j0}(X_{ji})+\varepsilon_{ji0},\quad
Y_{ji1}=\theta_j^*D_{ji}+\mu_{j1}(X_{ji})+\varepsilon_{ji1},
\end{equation*}
}
with the mean functions being: 
\vspace{-2mm}
\begin{equation*}
    \textstyle
    \mu_{j0}(x)=\sum_r0.7^r\sigma(x_r), \ \ \mu_{j1}(x)=\sum_r(-0.7)^r\sigma(x_r) \,.
    \vspace{-1mm}
\end{equation*}
To estimate {\small $\hat \theta^*_j$}, we implement the doubly robust DID estimator of \cite{sant2020doubly}, which requires the estimation of the nuisance parameters {\small $\hat \eta_j = (\hat \pi_j, \hat m_j)$}, where {\small $\hat m_j$} is an estimator of {\small $m_j(x)$}, the conditional mean function of {\small $\Delta Y_j = Y_{j1} - Y_{j0}$} given {\small $X$}. The orthogonal loss function for estimating {\small $\theta_j^*$} takes the form  {\small $f_j^{\NO}(\theta_j)=a_j(\theta_j-b_j)^2$} for some appropriately defined {\small $(a_j, b_j)$} depending on {\small $\hat \eta_j$} (see Appendix \ref{sec:did-orth-loss} for details).
\vspace{-2mm}
\subsection{Estimators}
\label{sec:simu_ests}
We compare the performance of our proposed method with six different types of estimators of the parameter of interest: 

\textbf{(i) Task–individual/Personalized:} We estimate {\small $\hat \theta_j$} separately for task/machine, i.e. {\small $\hat\btheta^{\ind}=\argmin_{\btheta}\sum_j f_j^{\NO}(\theta_j)$}. 

\textbf{(ii) ARMUL:} \citep{duan2023adaptive} can also solve the clustered multi-task learning problem, although \textit{it requires the user to specify the number of clusters {\small $\hat K$} as opposed to our method}. Given {\small $\hat K$}, ARMUL solves the following: {\small
\begin{equation*}\vspace{-2mm}
\textstyle
\begin{aligned}
    (\hat\btheta^{\armul}_{\hat K},\hat\bgamma,\hat\bc) = \argmin_{\btheta,\bgamma\in\bbR^{\hat K},\bc\in[\hat K]^m}F_{\hat K}^{\armul}(\btheta, \bgamma, \bc)\\
F_{\hat K}^{\armul}(\btheta, \bgamma, \bc) = \sum_{j=1}^m f_j^{\NO}(\theta_j)
+
\sum_{j=1}^m \lambda_j\,
\bigl\|\theta_j-\gamma_{c_j}\bigr\|_2\,.
\end{aligned}
\vspace{-1mm}
\end{equation*}
}Here {\small $c_j\in[\hat K]$} assigns task $j$ to a cluster and all $\theta_j$ in a cluster are shrunk towards a common {\small $\gamma_{c_j}$}. Following their paper, we set {\small $\lambda_j=C_{\lambda}\,n_j^{-1/2}$} and choose {\small $C_{\lambda} \in \{1,10,100\}$} that yields best performance. To assess its sensitivity, we report three versions with {\small $\hat K \in \{K-1, K, K+1\}$}, where {\small $K$} is the oracle value used in the data–generating process.

(iii) \textbf{Cluster norm (CN):} 
\cite{jacob2008clustered} proposes a convex relaxation for clustered multi-task learning via covariance regularization. The estimator is defined as
\vspace{-2mm}
{\small\[
\hat\btheta^{\rm cn}, \hat\Sigma = \min_{\btheta, \Sigma} \;\sum_{j=1}^m f_j^{\NO}(\theta_j)
+ \lambda\,\mathrm{tr}(\tilde\theta \Sigma^{-1} \tilde\theta^\top),
\vspace{-3mm}\]}subject to {\small $\tilde\theta = \theta \Pi$}, {\small $\alpha I \preceq \Sigma \preceq \beta I$}, and {\small $\mathrm{tr}(\Sigma)=\gamma$}, where $\Pi$ is the centering matrix. We set $(\lambda,\alpha,\beta,\gamma)=(0.1,0.1,2,3)$, chosen by grid search over admissible values satisfying the required constraints.

(iv) \textbf{Flexible clustering (FC):} 
\cite{zhou2015flexible} introduces a representative-based clustering approach that learns an assignment matrix. In our setting, it reduces to
{\small\vspace{-2mm}\[
\hat\btheta^{\rm fc}, \hat Z\! =\! \min_{\btheta, Z} \sum_{j=1}^m f_j^{\NO}(\theta_j)
\!+\! \frac{\lambda}{2} \sum_{j,j'} Z_{jj'} (\theta_j\! -\! \theta_{j'})^2
\!+\! \frac{\mu}{2} \|Z\|_{1,2},
\vspace{-2mm}\]}subject to {\small $Z \ge 0$} and {\small $Z^\top \mathbf{1} = \mathbf{1}$}. We tune $\lambda, \mu \in \{0.1, 1, 10\}$ and report the best-performing configuration.

(v) \textbf{MeTaG:} 
\cite{han2015learning} proposes a pairwise fusion estimator of the form
{\small\vspace{-2mm}\[
\hat\btheta^{\rm metag}
=
\arg\min_{\btheta}\;
\sum_{j=1}^m f_j^{\NO}(\theta_j)
+
\lambda \sum_{j,j'} \|\theta_j - \theta_{j'}\|_2.
\vspace{-3mm}\]}This estimator can be viewed as a special case of \eqref{eq:agg-loss} without adaptive weighting, i.e., using a uniform fusion penalty across all task pairs. In our experiments, we set $\lambda = 0.01$; the effect of this hyperparameter is discussed further below.

\textbf{(vi) Adaptive fusion (our proposed method):} Last but not least, we implement our proposed method. In Stage 1, the initial estimator is set to be: {\small $\hat\btheta^{\init} = \hat\btheta^{\ind}$}. Then we minimize the loss as defined in Equation \eqref{eq:agg-loss}.  For all the experiments, we fix the same set of hyperparameters in \eqref{eq:adaptive-lambda} as follows: {\small $\gamma=2, c_w = 0.1, \varepsilon_n = 10^{-12}$, and $\tau = 10$}.
\vspace{-2mm}
\begin{figure*}[ht]
  \centering
\includegraphics[width=\textwidth]{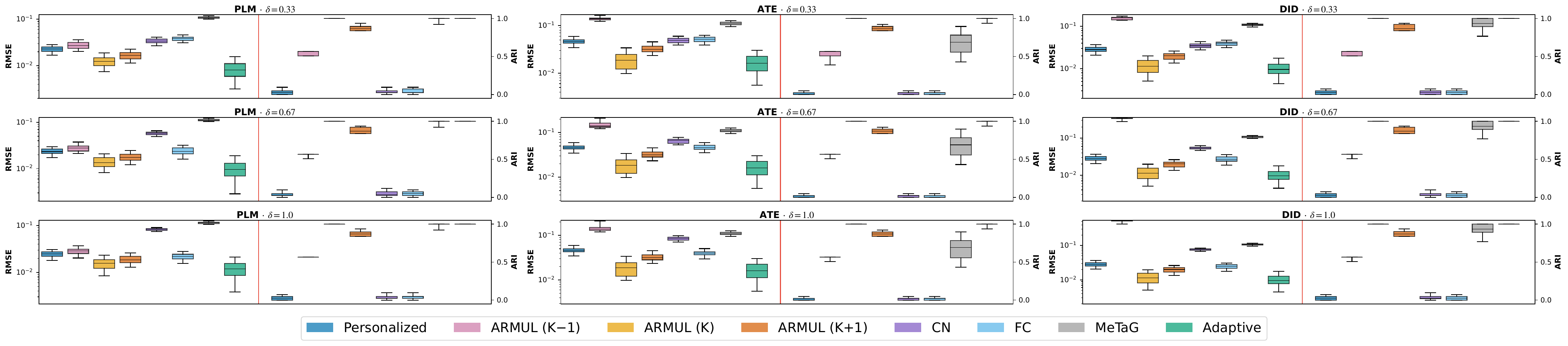}
  \caption{
Comparison of estimation accuracy (RMSE on log scale, left panels) and cluster recovery (ARI, right panels) across three models, PLM, ATE, DID, under increasing cluster separation levels $\delta \in \{1/3,2/3,1\}$.
Each panel shows boxplots over 100 simulations for estimators: Personalized, ARMUL with $K\!-\!1$, $K$, and $K\!+\!1$ clusters, CN, FC, MeTaG, and the adaptive estimator. Adaptive method achieves the lowest RMSE and near-perfect ARI, demonstrating accurate estimation and robust recovery of the latent clusters.
}
\label{fig:rmse-ari-grid}
\end{figure*}
\subsection{Results}
We now present our simulation results. 
We repeat each experiment over 100 Monte Carlo runs. Estimation quality is measured by
{\small $\mathrm{RMSE}=(m^{-1}\sum_j(\hat\theta_j-\theta_j^*)^2)^{1/2}$}
while Adjusted Rand index (ARI) \citep{rand1971objective, hubert1985comparing} assesses the agreement between estimated {\small $\{\hat S_k\}_{k\in \hat K}$} and true cluster {\small$\{S_k\}_{k\in [K]}$} (1 = perfect clustering). See Appendix \ref{sec:add-sim-result} for the exact definition of ARI.
\vspace{-3mm}
\paragraph{Estimation accuracy.}
Figure~\ref{fig:rmse-ari-grid} (left panels of each block) shows that the adaptive estimator
{\small $\hat\btheta^{\ada}$} consistently attains the lowest median RMSE and exhibits a favorable left-skew, indicating both accuracy and stability across all settings.
The oracle ARMUL estimator {\small $\hat\btheta_K^{\armul}$} performs
competitively when the number of clusters is correctly specified
({\small $\hat K = K$}) for the ATE and DID models; however, its performance
degrades when (i) applied to the PLM setting, (ii) the cluster count is
mis-specified ({\small $K-1$} or {\small $K+1$}), or (iii) the clusters are
weakly separated ({\small $\delta = 1/3$}).
In contrast, personalized baseline {\small $\hat\btheta^{\ind}$}
is uniformly suboptimal across all regimes due to the absence of
cross-task information sharing. {\small $\hat\btheta^{\rm cn}$} and {\small $\hat\btheta^{\rm fc}$} perform comparably to {\small $\hat\btheta^{\ind}$}. As {\small $\delta$} increases, the performance of {\small $\hat\btheta^{\rm cn}$} deteriorates, while {\small $\hat\btheta^{\rm fc}$} shows some improvement. In contrast, {\small $\hat\btheta^{\rm metag}$} fails to accurately estimate the underlying parameters.
\vspace{-3mm}
\paragraph{Cluster recovery.}
The right panels of Figure~\ref{fig:rmse-ari-grid} report the ARI over
100 simulations.
The personalized estimator {\small $\hat\btheta^{\ind}$} exhibits
essentially no clustering signal (ARI $\approx 0$), and same conclusion holds also for {\small$\hat\btheta^{\rm cn}$} and {\small$\hat\btheta^{\rm fc}$}.
In contrast, the adaptive fusion {\small $\hat\btheta^{\ada}$} consistently
recovers the true partition, achieving ARI $\approx 1$ across all settings
without requiring knowledge of the oracle $K$.
ARMUL is competitive only when the cluster count is correctly specified:
{\small $\hat\btheta_K^{\armul}$} attains near-perfect ARI, whereas
{\small $\hat\btheta_{K-1}^{\armul}$} under-clusters by merging true groups
and {\small $\hat\btheta_{K+1}^{\armul}$} over-splits them.
Although all methods improve as the separation $\delta$ increases, the strong dependence of ARMUL on $K$ highlights the robustness advantage of our proposed approach. Finally, {\small $\hat\btheta^{\rm metag}$} is able to recover the latent clusters only in the PLM setting.
\vspace{-3.5mm}
\begin{figure*}[ht]
  \centering
  \includegraphics[width=\textwidth]{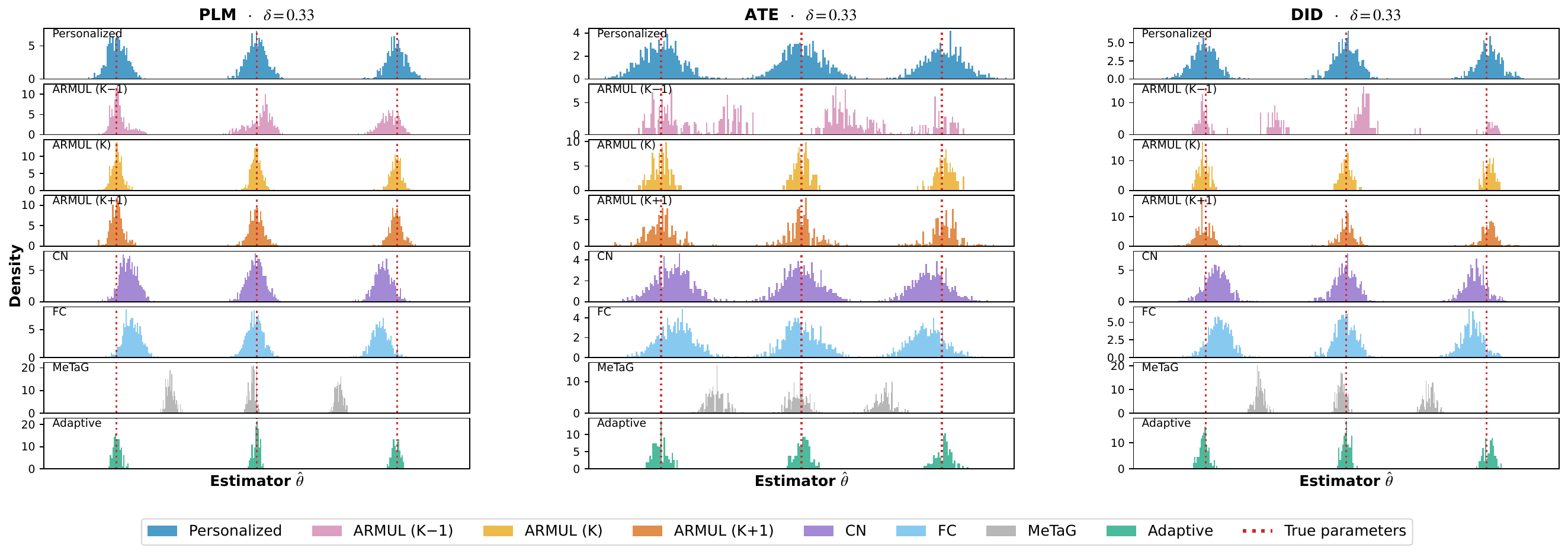}
  \caption{Distribution of task-specific estimators $\hat{\theta}_j$ across three models PLM, ATE, DID, at separation level $\delta=1/3$.
Each column corresponds to one model type, and each row to an estimator: Personalized, ARMUL with $K\!-\!1$, $K$, and $K\!+\!1$ clusters, CN, FC, MeTaG, and the adaptive estimator.
Red dotted lines mark the true parameters $\beta_1^{*}=-1/3$, $\beta_2^{*}=0$, and $\beta_3^{*}=1/3$.
}
  \label{fig:hist-theta-grid}
\end{figure*}
\vspace{-2mm}
\paragraph{Distributional behavior.}
Figure~\ref{fig:hist-theta-grid} displays the empirical distributions of the
task-level estimators under weak separation ({\small $\delta=1/3$}) for each
model.
For each method, we concatenate the estimates {\small $\hat\theta_j$} from all
{\small $m=20$} tasks across {\small $100$} simulation runs and plot the histograms of resulting $2000$ estimators. 
Vertical red dotted lines mark the true cluster parameters
{\small $(\beta_1^*,\beta_2^*,\beta_3^*)=(-1/3,0,1/3)$}.
The adaptive estimator {\small $\hat\btheta^{\ada}$} exhibits substantial
variance reduction, with tightly concentrated and approximately normal
distributions centered at the true cluster means.
In contrast, the misspecified ARMUL estimators
{\small $\hat\btheta_{K-1}^{\armul}$} and
{\small $\hat\btheta_{K+1}^{\armul}$} display asymmetric, multimodal, or
highly dispersed shapes, indicating bias and instability.
The oracle ARMUL estimator {\small $\hat\btheta_{K}^{\armul}$} performs
competitively but remains slightly more variable than the adaptive approach,
while the Personalized estimator {\small $\hat\btheta^{\ind}$} exhibits
substantial dispersion due to the lack of inter-task information sharing. All of {\small $\hat\btheta^{\rm cn}$}, {\small $\hat\btheta^{\rm fc}$}, and {\small $\hat\btheta^{\rm metag}$} exhibit bias toward a global mean. In particular, {\small $\hat\btheta^{\rm metag}$} effectively reduces variance, but at the cost of increased bias.
We further empirically verify the asymptotic normality of our estimators and provide the details in Appendix~\ref{sec:sim-normality}.
\vspace{-3mm}
\paragraph{Fixed vs adaptive penalty.}
To isolate the effect of adaptive weighting, we compare the adaptive estimator
with a \emph{fixed} baseline that employs a constant pairwise penalty
{\small $\lambda_{jj'}\equiv\lambda$} in Equation \eqref{eq:agg-loss}. 
Through extensive simulation (see Appendix~\ref{sec:ablation} for details), we show that for large $\lambda$ there is a significant bias as it pulls all the estimators to a common centroid, whereas for small $\lambda$, the estimators exhibit significant variability. On the contrary, our adaptive fusion simultaneously achieves accurate estimation and
reliable cluster recovery by a careful bias-variance tradeoff.

%% file: Real_data.tex
\section{Real Data Analysis}
\paragraph{Data and Tasks.}
We apply our method on the 2020 Residential Energy Consumption Survey (RECS), a nationally representative dataset (\url{eia.gov/consumption/residential}) of U.S. housing units that records household demographics, building characteristics, appliance usage, and energy behaviors, to estimate state-level price \textit{elasticities of electricity demand}, which is the percentage change in electricity consumption resulting from a one-percent change in its price.
We partition the sample by state and treat each of the 50 U.S.\ states plus Washington, D.C., as a separate task, yielding $m=51$ tasks. The raw data contains $18{,}496$ observations and $799$ variables. We first remove 
quantities that are downstream of consumption rather than causal predictors. After filtering variables with substantial missingness and applying a LightGBM-based feature-importance screening step, we retain $50$ predictors. These predictors form the feature vector in our empirical analysis. The details of these pre-processing steps can be found in Appendix~\ref{sec:real-data-preprocess-appendix}.
\vspace{-4mm}
\paragraph{PLM specification.}
We estimate the (task-specific) elasticity of electricity consumption using a partially linear model. For each task {\small $j\in[m]$} and unit $i\in[n_j]$ we observe {\small $(Y_{ji}, T_{ji}, X_{ji})$} where
\begin{equation*}
\textstyle
\begin{aligned}
    & Y_{ji} \;=\; \log(\texttt{KWH}) \quad\text{(log annual electricity use)}, \\
    & T_{ji} \;=\; \log\!\big(\tfrac{\texttt{DOLLAREL}}{\texttt{KWH}}+1\big)
\quad\text{(log avg. electricity price)},
\end{aligned}
\end{equation*}
and $X_{ji}$ collects the remaining 50 covariates, where \texttt{DOLLAREL} is a code used in the RECS to denote total electricity cost in dollars. We model the data-generating process via the following PLM: 
{\small
\begin{equation}
\label{eq:plm-real-data}
\begin{aligned}
  T_{ji} &= h_j(X_{ji}) + \nu_{ji}, 
  && \bbE[\nu_{ji}\mid X_{ji}] = 0,\\
  Y_{ji} &= \theta_j^*\, T_{ji} + g_j(X_{ji}) + \varepsilon_{ji}, 
  && \bbE[\varepsilon_{ji}\mid T_{ji}, X_{ji}] = 0,
\end{aligned}
\end{equation}
}where $\theta_j^*$ is the task-specific slope parameter of interest, while $h_j(x)$ and {\small $m_j(x)=\bbE[Y_j|X_j=x]$} are nuisance parameters, estimated using LightGBM.

\vspace{-2mm}
\paragraph{Results.}
We apply our proposed adaptive fusion method to the dataset to investigate regional heterogeneity in electricity price elasticity. Throughout this analysis, hyperparameters are fixed at $\gamma = 2$, $c_w = 0.1$, $\varepsilon_n = 10^{-12}$, and $\tau = 5$. The results are summarized in Table~\ref{tab:real-data-clusters}, with a corresponding geographic visualization shown in Figure~\ref{fig:real-data-map}. As shown by the results, the method identifies three distinct clusters of state-level price elasticities. All estimated elasticities are negative, consistent with standard demand theory. Cluster $0$ isolates Virginia, which exhibits a highly elastic response ($-1.138 \pm 0.189$). Cluster $1$ groups four neighboring Southern states, Kentucky, Alabama, Oklahoma, and Tennessee, with moderately large elasticities around $-0.788 \pm 0.051$. States in Clusters 0 and 1 are hotter and more cooling-intensive, so households spend more on electricity and can adjust usage in the short run (e.g., switching between air conditioning and fans), leading to stronger reductions in consumption when prices rise. The remaining 46 states are assigned to Cluster $2$, with a substantially smaller elasticity estimate of $-0.221 \pm 0.009$, indicating comparatively inelastic electricity demand across most of the U.S. The resulting spatial pattern (Figure~\ref{fig:real-data-map}) reveals that heightened price sensitivity is concentrated in warmer Southern regions, while demand responses elsewhere are considerably flatter, broadly aligning with the geographic distribution of climate zones. Overall, this application illustrates that adaptive fusion effectively pools structurally similar tasks while preserving meaningful regional heterogeneity in electricity price responsiveness.

\begin{table}[t]
\caption{Clusters: estimated electricity--price elasticities (mean $\pm$ SE) and member tasks.}
\label{tab:real-data-clusters}
\vskip 0.1in
\begin{center}
\begin{small}
\begin{tabular}{l l p{0.42\columnwidth}}
\toprule
Group & Estimate $\pm$ SE & Member tasks \\
\midrule
Cluster 0 & $-1.138 \pm 0.189$ & VA \\[0.5ex]

Cluster 1 & $-0.788 \pm 0.051$ & KY, AL, OK, TN \\[0.5ex]

Cluster 2 & $-0.221 \pm 0.009$ & All the other states \\
\bottomrule
\end{tabular}
\end{small}
\end{center}
\vskip -0.05in
\end{table}

\begin{figure}[ht]
\begin{center}
\centerline{\includegraphics[width=0.95\columnwidth]{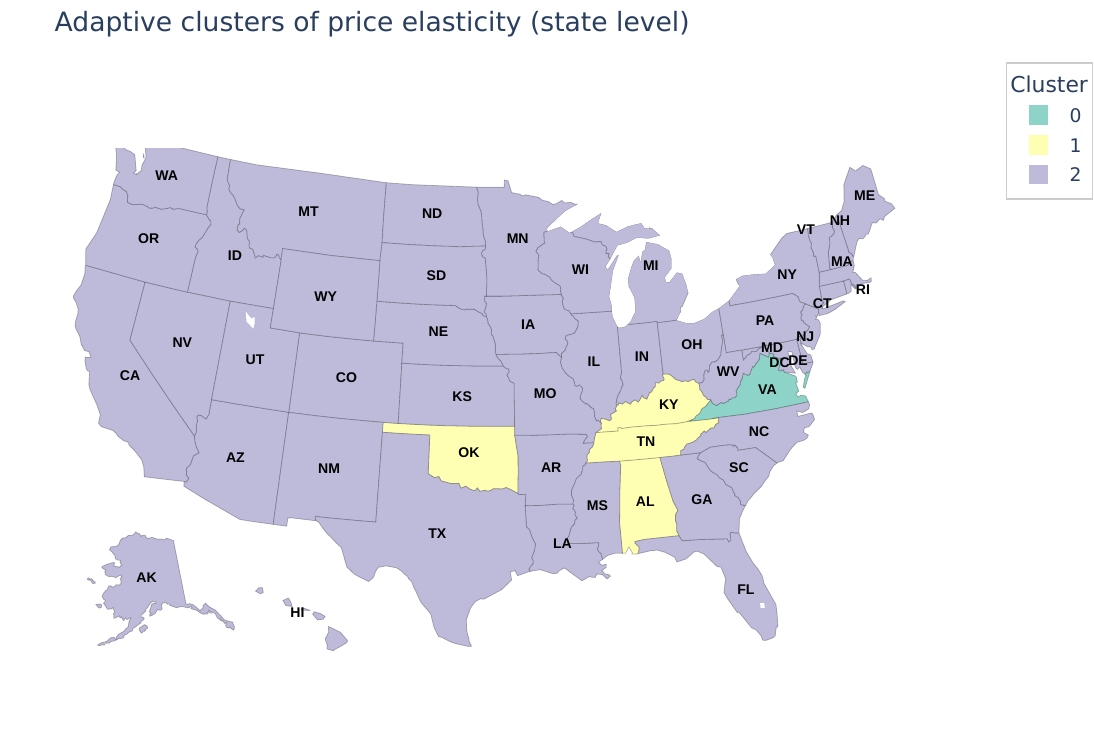}}
\caption{Geographic visualization of state-level elasticity clusters identified by the proposed adaptive fusion method.}
  \label{fig:real-data-map}
\end{center}
\vskip -0.25in
\end{figure}

%% file: Appendix_icml_review.tex
\section{Roadmap on the proof}\label{sec:roadmap-proof}
Since all the losses mentioned in the proof are Neyman orthogonal, we drop their superscript in Equation \eqref{eq:agg-loss} for readability and use shorthand notation $f_j := f_j^{\NO}$. 
Define $F_k(\beta, \bfeta_k) = \sum_{j \in S_k} f_j(\beta, \eta_j)$ with $\bfeta_k = \{\eta_j\}_{j\in S_k}$,  $N_k = \sum_{j \in S_k} n_j$, and $\Lambda_{kk'} = \sum_{j\in S_k}\sum_{j'\in S_{k'}}\lambda_{jj'}$.  Recall that the proposed adaptive fusion estimator is
    $$
    (\hat \theta_1, \ldots, \hat \theta_m) = \argmin_{\theta_1, \ldots,  \theta_m} \sum_{j \in [m]} \left\{ f_j(\theta_j, \hat\eta_j) + 
\sum_{j' \neq j,j'\in [m]} \lambda_{jj'} \norm{\theta_j - \theta_{j'}}_2/2 \right\} \,;
    $$
To facilitate our analysis, we define two other sets of oracle-type estimators; the first one is a collection of reference estimators, defined as:  
$$
(\hat \beta_1, \ldots, \hat \beta_K) = \argmin_{\beta_1, \ldots,  \beta_K} \sum_{k \in [K]} \left\{F_k(\beta_k,\hat\bfeta_k) + 
\sum_{k' \neq k,k'\in [K]} \Lambda_{kk'} \norm{\beta_k - \beta_{k'}}_2/2 \right\} \,, 
$$
and the second one is the oracle estimators defined as: 
$$
(\tilde \beta_1, \ldots, \tilde \beta_K) = \argmin_{\beta_1, \ldots,  \beta_K} \sum_{k \in [K]}F_k(\beta_k,\hat\bfeta_k) \,.
$$
Note that both estimators can be computed only if we know the clusters beforehand. In our proof, we show that with high probability, $\hat\theta_j=\hat\beta_k$ for all $j \in S_k$ (i.e., we have exact cluster recovery) and $\|\hat \beta_k - \tilde \beta_k\|_2 = o_p(N_k^{-1/2})$, implies that $\hat \theta_j = \hat \beta_k$ is very close to the oracle, which, in turn, would ensure asymptotic normality of the estimators.

We split the rest of the analysis into a \emph{deterministic} part and a \emph{probabilistic} part. Section~\ref{sec:determ} works under high-probability events $\cE_{1n}$ and $\cE_{2n}$, as defined in Definitions~\ref{def:gradient-hessian-bound} and~\ref{def:init-bound}.  On the event $\cE_{1n} \cap \cE_{2n}$, we show: (i) \emph{exact shrinkage}: for every $j\in S_k$, the estimator $\hat\theta_j$ fuses to the reference $\hat\beta_k$; and (ii) \emph{oracle approximation}: the reference $\hat\beta_k$ converges to the oracle $\tilde\beta_k$ at a rate {\small $o(N_k^{-1/2})$}. In Section~\ref{sec:prob-cond}, we prove that $\bbP(\cE_{1n} \cap \cE_{2n}) \to 1$ as $n_{\min} \uparrow \infty$. Therefore, the previous conclusion holds for a set whose probability goes to $1$. 
Finally, Section~\ref{sec:main-proof} assembles these ingredients to prove the main theorems, with auxiliary technical lemmas collected in Section~\ref{sec:tech-lem}.

For readability, we adopt the following notational conventions. First, unless otherwise stated, for a loss function of the form $f(\theta,\eta)$, we use $\nabla f$ and $\nabla^2 f$ to denote the gradient and Hessian with respect to the first argument $\theta$. Gateaux derivatives with respect to the nuisance argument $\eta$ are denoted by $D_\eta$ and $D_\eta^2$. Second, the data splits used in \eqref{eq:agg-loss} are written as
\[
\cD_{j,2}=\{Z_{ji}\}_{i=1}^{n_j}, \qquad j\in[m].
\]
Since the nuisance estimators are assumed to be precomputed and to satisfy Assumption~\ref{assump:nuis}, the subsequent analysis does not require an explicit construction of the first split $\cD_{j,1}$. It only relies on the sample-splitting condition that the nuisance estimator is independent of the evaluation sample:
\[
\hat\eta_j \perp\!\!\!\perp \cD_{j,2}.
\]

\section{Deterministic analysis}\label{sec:determ}

In this section, we study the deterministic behavior of the adaptive estimator under the ``good'' events $\cE_{1n}$ and $\cE_{2n}$, as defined in Definitions~\ref{def:gradient-hessian-bound} and~\ref{def:init-bound}. Throughout this section, we work conditionally on these events. Our goal is to show that, for sufficiently small~$\varepsilon_n$ and properly chosen~$\tau$, the minimizer $\hat\theta_j$ fuses exactly within each true cluster $S_k$ and remains separated across different clusters. The main statement is given in Lemmas~\ref{lem:shrink-condition} and \ref{lem:tilde-beta-rate}.

The proof proceeds in several steps. We first establish in Lemma~\ref{lem:w-concentration} that the adaptive weights $\lambda_{jj'}$ separate the clusters at the initial stage: intra-cluster weights are large and inter-cluster weights are small. In Lemma~\ref{lem:beta-hat-bound} we show that the reference estimates $(\hat\beta_1,\ldots,\hat\beta_K)$ are well-defined, and satisfies $\hat\beta_k - \tilde\beta_k = o(N_k^{-1/2})$, where $\tilde\beta_k$ is the oracle minimizer. 
Next, Lemma~\ref{theta-beta-shrink} shows that under certain conditions/bounds on the penalties, the adaptive/proposed estimates $\{\hat \theta_{j}\}_{j \in [m]}$ collapse to the cluster minimizers, i.e.\ $\hat\theta_j = \hat\beta_k$ for all $j\in S_k$. Finally, Lemma~\ref{lem:shrink-condition} verifies that those bounds on the penalty parameters indeed hold under~$\cE_{1n}$ and $\cE_{2n}$ and Lemma \ref{lem:tilde-beta-rate} builds on it to find the rate of $\tilde\beta_k$.
These arguments rely on several technical lemmas, collected in Section~\ref{sec:tech-lem}.

\begin{definition}
\label{def:gradient-hessian-bound}
Define the event $\cE_{1n}$ as the collection of inequalities
\begin{align*}
\Big\|\tfrac{1}{n_j}\nabla f_j(\theta_j^*,\hat\eta_j)\Big\|_2 
    &< \tfrac{\rho}{8}\{M\wedge (\delta/2)\}, && \forall j \in [m],\\
\rho_j I \coloneqq \tfrac{\rho n_j}{2} I \ \preceq\ \nabla^2 f_j(\theta,\hat\eta_j) 
    &\preceq\ \tfrac{3\kappa n_j}{2} I \eqqcolon \kappa_j I, && \forall \theta \in \cB_{\theta_j^*,M},\ \forall j \in [m],\\
\Big\|\tfrac{1}{N_k}\nabla F_k(\beta_k^*,\hat\bfeta_k)\Big\|_2
    &< \tfrac{\rho}{8}\{M\wedge (\delta/2)\}, && \forall k \in [K],\\
     \norm{\hat\eta_j - \eta_j^*}_{\cH_j} &\le M, && j \in [m],\\
     N_k^{-1/2}\left\|
    \sum_{j\in S_k}
    D_\eta \nabla f_j(\theta_j^*,\eta_j^*)[\Delta\eta_j]
    \right\|_2
    &\le
    C \sqrt{K} \max_{j\in S_k} s(n_j)^{-1} t  && k \in [K];\\
    N_k^{-1/2}
    \left\|
    \sum_{j\in S_k}
    D_\eta^2 \nabla f_j(\theta_j^*,\bar\eta_j)[\Delta\eta_j,\Delta\eta_j]
    \right\|_2
    & \le
    CK\, m_k^{1/2}\max_{j\in S_k} n_j^{1/2}s(n_j)^{-2} t && k \in [K];\\
    N_k^{-1/2}\norm{\sum_{j \in S_k}\nabla f_j(\theta_j^*,\eta_j^*)}_2 &\leq C K^{1/r_1} t && k \in [K],
\end{align*}
where $\Delta\eta_j = \hat\eta_j - \eta_j^*$ and the quantity $t$ satisfies
\begin{equation}
\label{eq:t_bound_def}
 1 \ <\ t \ <\ c_2\, m^{-1}\big\{ n_{\min}^{\frac{r_2}{2(r_2+d)}} \wedge s(n_{\min}) \big\} := a(n_{\min}),
\qquad c_2 >0.
\end{equation}
\end{definition}

\begin{definition}\label{def:init-bound}
Define the event $\cE_{2n}$ as the requirement on the initial estimators
\[
n_j^{\alpha}\big\|\hat\theta_j^{\mathrm{init}} - \theta_j^*\big\|_2 < \delta/4
\qquad \forall j \in [m],
\]
for some $\alpha \in (0,1/2]$.
\end{definition}

\begin{lemma}\label{lem:w-concentration}

   Under event $\cE_{2n}$, for $\tau \in \Big(c_w\big\{\tfrac{\delta}{2}\big\}^{-\gamma}, c_w\big\{\tfrac{\delta}{2}\big\}^{-\gamma}n_{\min}^{\alpha\gamma}\Big)$, the following hold:
    \begin{align*}
    &  \lambda_{jj'} = \varepsilon_n&\forall j\in S_k ,\forall j' \in S_{k'}, k \neq k' \in [K];\\
    &  \lambda_{jj'}  =  w_{jj'} > c_w \left\{\frac{\delta}{2}\,n_{\min}^{-\alpha}\right\}^{-\gamma} &  \forall j\neq j' \in S_k, \forall k \in [K]\,,
    \end{align*}
    where $\lambda_{jj'}, w_{jj'}$ are defined as in Equation \eqref{eq:adaptive-lambda},
\end{lemma}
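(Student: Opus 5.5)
The plan is a direct computation from the triangle inequality on $\cE_{2n}$, followed by the monotonicity of $x\mapsto c_w x^{-\gamma}$ to convert distance bounds into weight bounds. On $\cE_{2n}$ we have $\|\hat\theta_j^{\init}-\theta_j^*\|_2<\tfrac{\delta}{4}n_j^{-\alpha}\le \tfrac{\delta}{4}n_{\min}^{-\alpha}$ for every $j\in[m]$. The whole argument rests on two bounds on the pilot differences, one for each type of pair.

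\emph{Within a cluster.} Take $j\neq j'\in S_k$, so that $\theta_j^*=\theta_{j'}^*=\beta_k^*$. The triangle inequality gives
\[
\|\hat\theta_j^{\init}-\hat\theta_{j'}^{\init}\|_2\le \|\hat\theta_j^{\init}-\beta_k^*\|_2+\|\hat\theta_{j'}^{\init}-\beta_k^*\|_2<\tfrac{\delta}{2}n_{\min}^{-\alpha}.
\]
Since $\gamma>0$, the map $x\mapsto c_w x^{-\gamma}$ is strictly decreasing on $(0,\infty)$. Hence $w_{jj'}>c_w\{\tfrac{\delta}{2}n_{\min}^{-\alpha}\}^{-\gamma}=c_w\{\tfrac{\delta}{2}\}^{-\gamma}n_{\min}^{\alpha\gamma}$. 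This exceeds the upper endpoint of the admissible interval for $\tau$, so $w_{jj'}>\tau$, and \eqref{eq:adaptive-lambda} gives $\lambda_{jj'}=w_{jj'}$ together with the stated lower bound. If the pilot difference is exactly zero, we interpret $w_{jj'}=\infty$; the conclusion is unchanged, or we simply note that equality has probability zero.

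\emph{Across clusters.} Take $j\in S_k$ and $j'\in S_{k'}$ with $k\neq k'$. The reverse triangle inequality and the separation $\|\beta_k^*-\beta_{k'}^*\|_2\ge\delta$ give
\[
\|\hat\theta_j^{\init}-\hat\theta_{j'}^{\init}\|_2\ge \delta-\tfrac{\delta}{2}n_{\min}^{-\alpha}\ge \tfrac{\delta}{2},
\]
using $n_{\min}\ge1$. By monotonicity, $w_{jj'}\le c_w\{\tfrac{\delta}{2}\}^{-\gamma}<\tau$, because $\tau$ lies strictly above the lower endpoint of its interval. Therefore $\lambda_{jj'}=\varepsilon_n$.

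The only delicate point is checking that the interval for $\tau$ is nonempty and that the strict and non-strict inequalities line up. The interval is nonempty when $n_{\min}^{\alpha\gamma}>1$, which holds for $n_{\min}\ge2$ since $\alpha,\gamma>0$. For the strictness, the within-cluster bound is strict because $\cE_{2n}$ is a strict inequality, and the across-cluster conclusion uses the strict lower endpoint of the open interval for $\tau$. I do not expect any genuine obstacle beyond this bookkeeping.
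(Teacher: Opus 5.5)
Your proof is correct and follows the paper's argument essentially line for line. On $\cE_{2n}$ the triangle inequality bounds the pilot distances by $\tfrac{\delta}{2}n_{\min}^{-\alpha}$ within a cluster and from below by $\tfrac{\delta}{2}$ across clusters, and monotonicity of $x\mapsto c_w x^{-\gamma}$ then places $w_{jj'}$ on the correct side of $\tau$; your remarks on the zero-distance case and on nonemptiness of the $\tau$-interval are harmless bookkeeping that the paper leaves implicit.
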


\begin{proof}
Recall definitions $w_{jj'} = c_w\|\hat\theta_j^{\init}-\hat\theta_{j'}^{\init}\|_2^{-\gamma}$, and 
\begin{equation}
    \lambda_{jj'} = 
    \begin{cases}
\varepsilon_n, & \text{if } w_{jj'}\le \tau,\\
w_{jj'}, & \text{if } w_{jj'}>\tau,
\end{cases}
\end{equation}
\underline{\textbf{$j,j'$ belong to different clusters. }}
    It follows directly from definition of $\cE_{2n}$ that
     \begin{align*}
        n_j^{\alpha}\norm{\hat\theta_j^{\init}-\theta_j^*}_2 \vee n_{j'}^{\alpha}\norm{\hat\theta_{j'}^{\init}-\theta_{j'}^*}_2 < \frac{\delta}{4}\,.
    \end{align*}
    Recall that $\norm{\theta_j^*-\theta_{j'}^*}_2 \geq \delta$ since they belong to different clusters. We apply triangle inequality and obtain
    \begin{align*}
        \norm{\hat\theta_j^{\init}-\hat\theta_{j'}^{\init}}_2 \geq \norm{\theta_j^*-\theta_{j'}^*}_2 -  \norm{\hat\theta_j^{\init}-\theta_j^*}_2 -  \norm{\hat\theta_{j'}^{\init}-\theta_{j'}^*}_2 > \delta/2\,.
    \end{align*}
    The definition $w_{jj'} = c_w\|\hat\theta_j^{\init}-\hat\theta_{j'}^{\init}\|_2^{-\gamma}$ yields the bound $w_{jj'} < c_w(\delta/2)^{-\gamma}$, which further implies that $w_{jj'} < c_w(\delta/2)^{-\gamma} < \tau$. Thus $\lambda_{jj'} = \varepsilon_n$ by definition.
    
\underline{\textbf{$j,j'$ belong to the same cluster. }}
    Secondly, consider case when $j,j'$ belong to the same cluster. Triangle inequality and definition of $\cE_{2n}$ yield
    \begin{align*}
        \norm{\hat\theta_j^{\init}-\hat\theta_{j'}^{\init}}_2 \leq \norm{\hat\theta_j^{\init}-\theta_j^*}_2 + \norm{\hat\theta_{j'}^{\init}-\theta_{j'}^*}_2 < \frac{\delta}{4}\big(n_j^{-\alpha}+n_{j'}^{-\alpha}\big)\,.
    \end{align*}
    and thus
    \[ w_{jj'} > c_w \left\{\frac{\delta}{4}\big(n_j^{-\alpha}+n_{j'}^{-\alpha}\big)\right\}^{-\gamma} \geq c_w \left\{\frac{\delta}{2}n_{\min}^{-\alpha}\right\}^{-\gamma}\,.\]
    Again, this implies that $w_{jj'} > \tau$ in this case, and the resulting $\lambda_{jj'} = w_{jj'}$.
\end{proof}

\begin{lemma}\label{lem:beta-hat-bound}
By definition of $\cE_{1n}$, for any $k \in [K]$,
$$
\nabla^2 F_k(\beta, \hat\bfeta_k) \succeq \sum_{j \in S_k}\rho_j I, \ \quad \forall \beta \in \cB_{\beta_k^*, M}  \,.
$$

If $\{\Lambda_{kk'}\}_{k \neq k'}$ additionally satisfy 
\begin{equation}
\label{eqn:gradient-condition}
\frac{ 2\norm{\nabla F_k(\beta_k^*, \hat\bfeta_k)}_2 + \sum_{k' \neq k} \Lambda_{kk'}}{ \sum_{j \in S_k}\rho_j } < \min\{M, \delta/2\} \,,\quad\forall k \in [K] \,.
\end{equation}
Then we have  
\begin{align*}
    \norm{\tilde\beta_k - \beta_k^*}_2 \leq \frac{ \norm{\nabla F_k(\beta_k^*, \hat\bfeta_k)}_2}{ \sum_{j \in S_k}\rho_j },&\quad \norm{\hat\beta_k - \tilde\beta_k}_2 \leq \frac{\sum_{k' \neq k,k'\in [K]} \Lambda_{kk'}}{ \sum_{j \in S_k}\rho_j } \quad \forall k \in [K],
\end{align*}
and
\[\hat\beta_k \neq \hat\beta_{k'}\quad \forall k\neq k' \in [K].\]
\end{lemma}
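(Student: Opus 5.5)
The Hessian lower bound is immediate. On $\cE_{1n}$ we have $\nabla^2 f_j(\beta,\hat\eta_j)\succeq \rho_j I$ for every $\beta\in\cB_{\theta_j^*,M}$. Since $\theta_j^*=\beta_k^*$ for $j\in S_k$, this ball is $\cB_{\beta_k^*,M}$, and summing over $j\in S_k$ gives $\nabla^2F_k(\beta,\hat\bfeta_k)\succeq \sum_{j\in S_k}\rho_j I$ on that ball. Write $P_k:=\sum_{j\in S_k}\rho_j$.

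For the oracle $\tilde\beta_k$, I will use convexity plus strong convexity on the ball, via a standard localization argument. For $\beta$ on the sphere $\|\beta-\beta_k^*\|_2=r$ with $r\le M$, Taylor's theorem gives
\[
F_k(\beta)-F_k(\beta_k^*)\ge -\|\nabla F_k(\beta_k^*,\hat\bfeta_k)\|_2\, r+\tfrac{P_k}{2}r^2 .
\]
This is positive once $r>2\|\nabla F_k(\beta_k^*,\hat\bfeta_k)\|_2/P_k$, and that threshold is $<M$ by \eqref{eqn:gradient-condition}. Convexity of $F_k$ in $\beta$ (Assumption~\ref{assump:loss}, valid since $\|\hat\eta_j-\eta_j^*\|\le M$ on $\cE_{1n}$) then forces the global minimizer to lie inside this ball. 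Inside the ball, the first-order condition $\nabla F_k(\tilde\beta_k)=0$ combined with strong monotonicity,
\[
\langle \nabla F_k(\tilde\beta_k)-\nabla F_k(\beta_k^*),\tilde\beta_k-\beta_k^*\rangle\ge P_k\|\tilde\beta_k-\beta_k^*\|_2^2 ,
\]
gives the sharper bound $\|\tilde\beta_k-\beta_k^*\|_2\le \|\nabla F_k(\beta_k^*,\hat\bfeta_k)\|_2/P_k$.

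For $\hat\beta_k$, the main obstacle is that the $K$ blocks are coupled through the nonsmooth fusion penalty. I would localize using subgradients. Each term $\Lambda_{kk'}\|\beta_k-\beta_{k'}\|_2$ has a subgradient in $\beta_k$ of norm at most $\Lambda_{kk'}$. The block optimality condition for $\hat\beta_k$ therefore reads $\nabla F_k(\hat\beta_k)=-g_k$ with $\|g_k\|_2\le\sum_{k'\ne k}\Lambda_{kk'}$.

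First I need $\hat\beta_k\in\cB_{\beta_k^*,M}$, so that strong convexity applies. For this I would argue blockwise. Fix the other blocks at $\hat\beta_{-k}$ and consider $\beta_k\mapsto F_k(\beta_k)+\sum_{k'}\Lambda_{kk'}\|\beta_k-\hat\beta_{k'}\|_2$. On a sphere of radius $r$ around $\beta_k^*$, the penalty changes by at most $r\sum_{k'}\Lambda_{kk'}$ by the triangle inequality. Hence the same localization gives $\|\hat\beta_k-\beta_k^*\|_2<(2\|\nabla F_k(\beta_k^*)\|_2+2\sum_{k'}\Lambda_{kk'})/P_k$. If this loses a factor 2 relative to \eqref{eqn:gradient-condition}, I would first take $r=\min\{M,\delta/2\}$ directly and check positivity using the stated hypothesis. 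Alternatively, I would compare $\hat\beta_k$ with $\tilde\beta_k$ directly: the block objective minus $F_k$ is Lipschitz with constant $\Lambda_k:=\sum_{k'}\Lambda_{kk'}$, so the minimizer shifts from $\tilde\beta_k$ by at most $2\Lambda_k/P_k$. That places $\hat\beta_k$ within distance $(\|\nabla F_k\|_2+2\Lambda_k)/P_k$ of $\beta_k^*$, which is less than $\min\{M,\delta/2\}$ up to constants.

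Once $\hat\beta_k$ and $\tilde\beta_k$ both lie in the ball, the monotonicity bound finishes this step:
\[
P_k\|\hat\beta_k-\tilde\beta_k\|_2^2\le\langle\nabla F_k(\hat\beta_k)-\nabla F_k(\tilde\beta_k),\hat\beta_k-\tilde\beta_k\rangle=\langle -g_k,\hat\beta_k-\tilde\beta_k\rangle .
\]
This yields $\|\hat\beta_k-\tilde\beta_k\|_2\le\Lambda_k/P_k$.

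Separation then follows from the triangle inequality. We have
\[
\|\hat\beta_k-\beta_k^*\|_2\le(\|\nabla F_k(\beta_k^*)\|_2+\Lambda_k)/P_k<\delta/2\cdot\tfrac12 ,
\]
using \eqref{eqn:gradient-condition} together with $\|\nabla F_k\|_2\le 2\|\nabla F_k\|_2$. So each $\hat\beta_k$ lies strictly within $\delta/2$ of $\beta_k^*$, while $\|\beta_k^*-\beta_{k'}^*\|_2\ge\delta$. Therefore $\hat\beta_k\neq\hat\beta_{k'}$.
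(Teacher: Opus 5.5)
Your Hessian bound and your bound on $\tilde\beta_k$ are fine; the latter is essentially a reproof of Lemma~\ref{lem:F1}. The final monotonicity step giving $\|\hat\beta_k-\tilde\beta_k\|_2\le\Lambda_k/P_k$ is also correct, \emph{once} $\hat\beta_k$ is known to lie in the region of strong convexity. The gap is exactly that localization step. Every variant you propose compares function values on a sphere, and that comparison loses a factor $2$ on the penalty term. Around $\beta_k^*$ you need $P_k r/2>\|\nabla F_k(\beta_k^*,\hat\bfeta_k)\|_2+\Lambda_k$, i.e.\ $P_k r>2\|\nabla F_k(\beta_k^*,\hat\bfeta_k)\|_2+2\Lambda_k$. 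Around $\tilde\beta_k$ you need $\|\nabla F_k(\beta_k^*,\hat\bfeta_k)\|_2+2\Lambda_k<P_kM$. Hypothesis \eqref{eqn:gradient-condition} only gives $2\|\nabla F_k(\beta_k^*,\hat\bfeta_k)\|_2+\Lambda_k<P_k\min\{M,\delta/2\}$, which implies neither condition once $\Lambda_k$ is comparable to or larger than the gradient term. Taking $r=\min\{M,\delta/2\}$ does not help, because positivity on that sphere is precisely the doubled condition. So ``less than $\min\{M,\delta/2\}$ up to constants'' proves the lemma only under a strengthened hypothesis. That weaker version would happen to suffice downstream in Lemma~\ref{lem:shrink-condition}, where $\Lambda_k/P_k=O(N_k^{\zeta-1})$, but it is not the stated lemma. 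Your separation step also has a slip: \eqref{eqn:gradient-condition} gives $(\|\nabla F_k(\beta_k^*,\hat\bfeta_k)\|_2+\Lambda_k)/P_k<\delta/2$, not $<\delta/4$; fortunately $\delta/2$ is all you need.

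The missing idea, which the paper packages as Lemma~\ref{lem:close-to-individual-minimizer}, is to localize with gradients rather than function values, using monotonicity along rays. This loses no factor.

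\begin{itemize}
\item Choose $r$ with $\Lambda_k/P_k<r\le M-\|\nabla F_k(\beta_k^*,\hat\bfeta_k)\|_2/P_k$. This is possible by \eqref{eqn:gradient-condition}, and it ensures $\cB_{\tilde\beta_k,r}\subseteq\cB_{\beta_k^*,M}$.
\item For any $x$ with $\|x-\tilde\beta_k\|_2>r$, set $u=(x-\tilde\beta_k)/\|x-\tilde\beta_k\|_2$ and $y=\tilde\beta_k+ru$.
\item Strong convexity on the segment $[\tilde\beta_k,y]$, together with $\nabla F_k(\tilde\beta_k,\hat\bfeta_k)=0$, gives $\langle\nabla F_k(y,\hat\bfeta_k),u\rangle\ge P_kr$.
\item $F_k$ is convex, so its gradient is monotone, and $x-y$ is a positive multiple of $u$. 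Hence $\|\nabla F_k(x,\hat\bfeta_k)\|_2\ge\langle\nabla F_k(x,\hat\bfeta_k),u\rangle\ge P_kr>\Lambda_k$.
\item The block optimality condition gives $\|\nabla F_k(\hat\beta_k,\hat\bfeta_k)\|_2\le\Lambda_k$, so $\hat\beta_k$ must lie in $\cB_{\tilde\beta_k,r}$.
\item Your monotonicity inequality then yields $\|\hat\beta_k-\tilde\beta_k\|_2\le\Lambda_k/P_k$.
\item The triangle inequality with your $\tilde\beta_k$ bound gives $\|\hat\beta_k-\beta_k^*\|_2\le(\Lambda_k+\|\nabla F_k(\beta_k^*,\hat\bfeta_k)\|_2)/P_k<\delta/2$, and separation follows.
\end{itemize}
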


\begin{proof}
We use the shorthand notation $$\tilde \rho_k := \sum_{j \in S_k}\rho_j\,.$$ As $F_k(\cdot, \hat\bfeta_k)$ is $\tilde \rho_k$-strongly convex on $\cB_{\beta_k^*, M}$, by Lemma \ref{lem:F1}, we conclude that the oracle minimizer $\tilde \beta_k$ satisfies: 
$$
\left\|\tilde \beta_k - \beta^*_k\right\|_2 \le \frac{\|\nabla F_k(\beta^*_k, \hat\bfeta_k)\|_2}{\tilde \rho_k} \,.
$$
Let $r_k$ be such that 
$$
\sum_{k'\in [K]\setminus \{k\}}\frac{\Lambda_{kk'}}{\tilde \rho_k} < r_k < M - \frac{2\|\nabla F_k(\beta_k^*, \hat\bfeta_k)\|_2}{\tilde \rho_k}
$$ 
and let $x \in\cB_{\tilde\beta_k,r_k}$. Then 
$$
\|x-\beta_k^*\|_2 \leq \|x-\tilde\beta_k\|_2 + \|\tilde\beta_k - \beta^*_k\|_2 \leq r_k + \frac{\|\nabla F_k(\beta_k^*, \hat\bfeta_k)\|_2}{ \tilde \rho_k} < M- \frac{\|\nabla F_k(\beta_k^*, \hat\bfeta_k)\|_2}{ \tilde \rho_k} \leq M.
$$
This means $\cB_{\tilde\beta_k,r_k} \subseteq \cB_{\beta_k^*, M}$ and thus $F_k$ is strongly convex in $\cB_{\tilde\beta_k,r_k}$. 

Because the penalty part $\sum_{k\in [K]}\sum_{k' \neq k,k'\in [K]} \Lambda_{kk'} \|\beta_k - \beta_{k'}\|_2/2$ is $\sum_{k' \neq k,k'\in [K]} \Lambda_{kk'}$-Lipschitz in $\beta_k$,  we can apply Lemma \ref{lem:close-to-individual-minimizer} to conclude that 
$$
\|\hat\beta_k - \tilde\beta_k\|_2 \leq \sum_{k' \neq k,k'\in [K]} \frac{\Lambda_{kk'}}{\tilde \rho_k}\,.
$$ 
Define
    \[R_k = \frac{ \norm{\nabla F_k(\beta_k^*, \hat\bfeta_k)}_2 + \sum_{k' \neq k,k'\in [K]} \Lambda_{kk'}}{ \tilde \rho_k }\] 
    then it holds for any $k$ that $\hat\beta_k \in \cB_{\beta_k^*, R_k}$. Condition \eqref{eqn:gradient-condition} implies that $R_k < \delta/2$ and this means  $\hat\beta_k \in {\rm int\ } \cB_{\beta_k^*,\delta/2}$. Moreover, for any $k,k' \in [K]$,  because $ \norm{\beta_k^* - \beta_{k'}^*}_2 \geq \delta$, it follows that ${\rm int\ } \cB_{\beta_k^*,\delta/2} \cap {\rm int\ } \cB_{\beta_{k'}^*,\delta/2} = \varnothing$. Therefore, $\hat\beta_k \neq \hat\beta_{k'}$.

\end{proof}

\begin{lemma}\label{theta-beta-shrink}
   Assume events $\cE_{1n}$ and $\cE_{2n}$ hold and let $\{\lambda_{jj'}\}_{j\neq j'\in [m]}$ satisfy the property in Lemma \ref{lem:w-concentration}. If for any $k \in [K]$ and any index $j_k \in S_k$ with $|S_k| > 1$, it holds that
    \begin{equation}\label{eqn:gradient-condition3}
        \frac{ 2\norm{\nabla F_k(\beta_k^*,\hat \bfeta_k)}_2 + \sum_{k' \neq k,k'\in [K]} \Lambda_{kk'}}{\sum_{j \in S_k}\rho_j } < \min\left\{M, \frac{\delta}{2},\frac{\lambda_{j_k,j} - \|\nabla f_j(\beta_k^*,\hat\eta_j) \|_2}{\kappa_j}\right\},\quad \forall j \in S_k\setminus\{j_k\},
    \end{equation}
   then the following hold:
    \begin{align*}
        & \hat\beta_k \neq \hat\beta_{k'}\quad \forall k, k' \in [K],k \neq k';\\
        & \hat\theta_j = \hat \beta_k,\quad j \in S_k\,.
    \end{align*}
\end{lemma}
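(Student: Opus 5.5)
The plan is to certify, through the KKT (subgradient) conditions of the convex problem \eqref{eq:agg-loss}, that the candidate $\bar\theta_j := \hat\beta_k$ for $j\in S_k$ is a global minimizer, and then invoke uniqueness. The first conclusion, $\hat\beta_k\neq\hat\beta_{k'}$, needs nothing new. Condition \eqref{eqn:gradient-condition3} implies \eqref{eqn:gradient-condition}, so Lemma~\ref{lem:beta-hat-bound} applies and gives $\hat\beta_k\in\mathrm{int}\,\cB_{\beta_k^*,\delta/2}$, $\|\hat\beta_k-\beta_k^*\|_2\le R_k$, and distinctness across clusters.

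The first step is the optimality condition of the reference problem. Since $\hat\beta_k\neq\hat\beta_{k'}$, the cross-cluster penalty terms are differentiable at the solution, so
\[
\nabla F_k(\hat\beta_k,\hat\bfeta_k)+\sum_{k'\neq k}\Lambda_{kk'}u_{kk'}=0,\qquad u_{kk'}=\frac{\hat\beta_k-\hat\beta_{k'}}{\|\hat\beta_k-\hat\beta_{k'}\|_2}.
\]
For the full problem, I must find subgradients $z_{jj'}$ with $\|z_{jj'}\|_2\le 1$ and $z_{jj'}=-z_{j'j}$ for within-cluster pairs, such that
\[
\nabla f_j(\hat\beta_k,\hat\eta_j)+\sum_{k'\neq k}\sum_{j'\in S_{k'}}\lambda_{jj'}u_{kk'}+\sum_{j'\in S_k\setminus\{j\}}\lambda_{jj'}z_{jj'}=0 \quad\text{for all } j\in S_k.
\]
The cross-cluster terms again use $u_{kk'}$, because $\bar\theta_j\neq\bar\theta_{j'}$ there. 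Write $g_j:=\nabla f_j(\hat\beta_k,\hat\eta_j)+\sum_{k'\neq k}\sum_{j'\in S_{k'}}\varepsilon_n u_{kk'}$, which uses $\lambda_{jj'}=\varepsilon_n$ from Lemma~\ref{lem:w-concentration}. Summing over $j\in S_k$ gives $\sum_{j\in S_k}g_j=0$ by the reference KKT condition. This is exactly the compatibility condition that antisymmetric $z$'s need.

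The second step constructs the $z$'s using a star centered at $j_k$. Set $z_{jj'}=0$ for pairs not involving $j_k$, and for $j\neq j_k$ set $z_{j,j_k}=-g_j/\lambda_{j_k j}$ and $z_{j_k,j}=-z_{j,j_k}$. Then the equation for each $j\neq j_k$ holds by construction, and the equation for $j_k$ reduces to $g_{j_k}+\sum_{j\neq j_k}g_j=0$, which is the sum identity. It remains to verify $\|g_j\|_2\le\lambda_{j_k j}$. By the Hessian bound in $\cE_{1n}$ (valid since $\hat\beta_k\in\cB_{\beta_k^*,M}$),
\[
\|\nabla f_j(\hat\beta_k,\hat\eta_j)\|_2\le\|\nabla f_j(\beta_k^*,\hat\eta_j)\|_2+\kappa_j\|\hat\beta_k-\beta_k^*\|_2 .
\]
Combined with $\|\hat\beta_k-\beta_k^*\|_2\le R_k$ and the third bound in \eqref{eqn:gradient-condition3}, this gives $\|\nabla f_j(\hat\beta_k,\hat\eta_j)\|_2<\lambda_{j_k j}$ strictly. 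The extra $\varepsilon_n$ terms have norm at most $\varepsilon_n m$. I expect this to be the main obstacle: it has to be absorbed either by the strict inequality, or by noting that $R_k$ already contains $\sum_{k'}\Lambda_{kk'}/\tilde\rho_k$ and $\kappa_j\ge\rho_j$, so that $\kappa_j R_k$ dominates the $\varepsilon_n$ contribution. If needed, I would fold these terms into the bound through that slack. Singleton clusters require no $z$'s, since their condition is just the reference KKT condition.

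The last step is uniqueness. Convexity of each $f_j$ near the truth makes the KKT point a minimizer of \eqref{eq:agg-loss}, at least locally in $\prod_j\cB_{\theta_j^*,M}$ and hence globally by convexity. Strong convexity of $\sum_j f_j$ on these balls, which holds under $\cE_{1n}$, gives uniqueness, so $\hat\theta_j=\hat\beta_k$ for all $j\in S_k$.
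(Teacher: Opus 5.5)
Your certificate route (KKT at $\bar\theta_j=\hat\beta_k$, then uniqueness from strict convexity near $\bar\theta$) is a legitimate alternative to the paper's. However, the step you flag as the main obstacle is a genuine gap, and neither remedy you suggest closes it. The star construction needs $\|g_j\|_2\le\lambda_{j_kj}$ for $g_j=\nabla f_j(\hat\beta_k,\hat\eta_j)+v_k$, where $v_k:=\varepsilon_n\sum_{k'\neq k}m_{k'}u_{kk'}$. The hypothesis only gives $\|\nabla f_j(\hat\beta_k,\hat\eta_j)\|_2<\lambda_{j_kj}$. The only quantified margin it leaves is $\kappa_j\|\nabla F_k(\beta_k^*,\hat\bfeta_k)\|_2/\tilde\rho_k$, coming from the factor $2$. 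That margin can be arbitrarily small and has nothing to do with $\|v_k\|_2\le\varepsilon_n(m-m_k)$.

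Neither remedy supplies the missing slack:
\begin{itemize}
\item Strictness of the inequality gives no quantified slack.
\item The term $\sum_{k'\neq k}\Lambda_{kk'}/\tilde\rho_k$ inside $R_k$ has already been spent bounding $\kappa_j\|\hat\beta_k-\beta_k^*\|_2$, so it cannot absorb $v_k$ a second time. Even if it could, this would need $\kappa_j m_k\ge\tilde\rho_k$, which fails for tasks much smaller than the cluster average; $\kappa_j\ge\rho_j$ is not enough.
\end{itemize}
Nothing in the hypothesis rules out a cluster with $m_k\ge 3$ containing two small tasks whose gradients sit just below their thresholds and point along $v_k$. Then $\|g_j\|_2$ exceeds the threshold for both, and no choice of star center works.

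The missing observation is that $v_k$ is common to all $j\in S_k$ and equals $-\nabla F_k(\hat\beta_k,\hat\bfeta_k)/m_k$, since $m_kv_k=\sum_{k'\neq k}\Lambda_{kk'}u_{kk'}$. So it cancels in differences: $g_j-g_{j'}=\nabla f_j(\hat\beta_k,\hat\eta_j)-\nabla f_{j'}(\hat\beta_k,\hat\eta_{j'})$. Instead of the star, use the complete-graph certificate $\lambda_{jj'}z_{jj'}:=-(g_j-g_{j'})/m_k$ for $j\neq j'\in S_k$:
\begin{itemize}
\item It is antisymmetric, and $\sum_{j'\in S_k}g_{j'}=0$ gives $\sum_{j'\in S_k\setminus\{j\}}\lambda_{jj'}z_{jj'}=-g_j$.
\item The hypothesis holds for \emph{every} choice of $j_k$, so your Lipschitz estimate yields $\|\nabla f_j(\hat\beta_k,\hat\eta_j)\|_2<\lambda_{jj'}$ for all $j'\neq j$ in $S_k$. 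Hence $\|z_{jj'}\|_2<2/m_k\le1$.
\end{itemize}
With this change the rest of your argument goes through, and it settles uniqueness more explicitly than the paper does.

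The paper never builds a certificate. It lower-bounds the objective at an arbitrary $\theta$ by convexity and keeps the cross-cluster $\varepsilon_n$ penalties on the $\theta$ side. It then controls them via a $\theta$-dependent choice of representatives $(j_1,\dots,j_K)$ and the averaging Lemma~\ref{lem:inter-cluster-minimizer}. That is exactly why the hypothesis is imposed for every $j_k$, a quantifier your star construction never uses.
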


\begin{proof}
By condition \eqref{eqn:gradient-condition3} and Lemma~\ref{lem:beta-hat-bound}, we know that
\begin{equation}\label{eqn:different-beta-hat}
    \hat\beta_k \neq \hat\beta_{k'} \quad \forall k\neq k'\in[K].
\end{equation}
For each ordered pair $(k,k')$ with $k\neq k'$, define
\[
g_{kk'} \in \partial_{\beta_k}\|\beta_k - \hat\beta_{k'}\|\Big|_{\beta_k = \hat\beta_k}.
\]
Then \eqref{eqn:different-beta-hat} implies $g_{kk'}$ is the unit vector
\begin{equation}\label{g-unit-vector}
    g_{kk'}
    = \frac{\hat\beta_k - \hat\beta_{k'}}{\|\hat\beta_k - \hat\beta_{k'}\|_2}.
\end{equation}
The first-order condition for the reference objective in $(\beta_1,\ldots,\beta_K)$ gives
\begin{equation}\label{eq:first-order-condition-k}
    \nabla F_k(\hat\beta_k,\hat \bfeta_k) + \sum_{k' \neq k} \Lambda_{kk'} g_{kk'} = 0, 
    \qquad \forall k \in [K].
\end{equation}
Summing \eqref{eq:first-order-condition-k} over $k$ and using the symmetry
$\Lambda_{kk'} = \Lambda_{k'k}$ and $g_{kk'} = - g_{k'k}$, we obtain
\begin{equation}\label{eqn:gradient-sum-0}
    \sum_{k\in [K]}\nabla F_k(\hat\beta_k,\hat \bfeta_k) 
    = - \sum_{k\in [K]}\sum_{k' \neq k} \Lambda_{kk'} g_{kk'}
    = 0.
\end{equation}

\medskip

\noindent\textbf{1) Lower bound on the loss.}
For each $k$ and $j\in S_k$, by convexity of $f_j(\cdot,\hat\eta_j)$,
\[
    f_j(\theta_j,\hat\eta_j)  
    \;\ge\; 
    f_j(\hat\beta_k,\hat\eta_j) 
    + \big\langle \theta_j - \hat\beta_k, \nabla f_j(\hat\beta_k,\hat\eta_j) \big\rangle.
\]
Summing over $k$ and $j\in S_k$ gives
\begin{equation}\label{eq:convex-lb}
    \sum_{k \in [K]} \sum_{j \in S_k} f_j(\theta_j,\hat\eta_j) 
    \;\ge\; 
    \sum_{k \in [K]} \sum_{j \in S_k} f_j(\hat\beta_k,\hat\eta_j) 
    + \sum_{k \in [K]} \sum_{j \in S_k}
    \big\langle \theta_j - \hat\beta_k, \nabla f_j(\hat\beta_k,\hat\eta_j) \big\rangle.
\end{equation}

Choose indices
\begin{equation}\label{eq:index-minimizer}
    ( j_1,\ldots, j_K) 
    \in \argmin_{z_k \in S_k,\, k \in [K]} 
    \sum_{k \in [K]} \sum_{k'\neq k} \Lambda_{kk'}\|\theta_{z_k} - \theta_{z_{k'}}\|_2\,,
\end{equation}
and decompose the inner-product term as
\begin{equation}\label{eq:inner-product-split}
    \begin{split}
        & \sum_{k \in [K]} \sum_{j \in S_k}
        \big\langle \theta_j - \hat\beta_k, \nabla f_j(\hat\beta_k,\hat\eta_j) \big\rangle \\
        & = \sum_{k\in [K]} \Bigg[
            \big\langle \theta_{j_k} - \hat\beta_k, \nabla f_{j_k}(\hat\beta_k,\hat\eta_{j_k}) \big\rangle 
            + \sum_{j \in S_k\setminus \{j_k\}} 
            \big\langle \theta_j - \hat\beta_k, \nabla f_j(\hat\beta_k,\hat\eta_j) \big\rangle
        \Bigg].
    \end{split}
\end{equation}
From \eqref{eqn:gradient-sum-0},
\[
\nabla f_{j_k}(\hat\beta_k,\hat\eta_{j_k})
= \nabla F_k(\hat\beta_k,\hat\bfeta_k) 
  - \sum_{j \in S_k\setminus \{j_k\}} \nabla f_j(\hat\beta_k,\hat\eta_j)
= -\sum_{k'\neq k} \nabla F_{k'}(\hat\beta_{k'},\hat\bfeta_{k'})
  - \sum_{j \in S_k\setminus \{j_k\}} \nabla f_j(\hat\beta_k,\hat\eta_j).
\]
Substituting into \eqref{eq:inner-product-split} and rearranging,
\begin{equation*}
    \begin{split}
        & \sum_{k \in [K]} \sum_{j \in S_k}
        \big\langle \theta_j - \hat\beta_k, \nabla f_j(\hat\beta_k,\hat\eta_j) \big\rangle \\
        & = \sum_{k\in [K]} \Bigg[
            \sum_{j \in S_k\setminus \{j_k\}} 
            \big\langle \theta_j - \theta_{j_k}, \nabla f_j(\hat\beta_k,\hat\eta_j) \big\rangle
            + \sum_{k'\neq k}
            \big\langle \hat\beta_k - \theta_{j_k}, \nabla F_{k'}(\hat\beta_{k'},\hat \bfeta_{k'})\big\rangle
        \Bigg] \\
        & =: A_1 + A_2\,.
    \end{split}
\end{equation*}

\medskip

\noindent\textbf{2) Bounds on $A_1$ and $A_2$.}
By Lemma~\ref{lem:beta-hat-bound} and condition \eqref{eqn:gradient-condition3},
\[
\hat\beta_k \in 
\cB\!\left(
    \beta_k^*,\;
    \frac{\|\nabla F_k(\beta_k^*,\hat \bfeta_k)\|_2 
          + \sum_{k' \neq k} \Lambda_{kk'}}{\sum_{j\in S_k}\rho_j}
\right) 
\subseteq \cB_{\beta_k^*, M}.
\]
Since $\nabla f_j(\cdot,\hat\eta_j)$ is $\kappa_j$–Lipschitz on $\cB_{\beta_k^*,M}$ due to event $\cE_{1n}$, and using again Lemma~\ref{lem:beta-hat-bound}, for any $j\in S_k$,
\begin{align*}
    \|\nabla f_j(\hat\beta_k, \hat\eta_j) \|_2
    &\leq \|\nabla f_j(\hat\beta_k, \hat\eta_j) -\nabla f_j(\beta_k^*, \hat\eta_j)\|_2 
          + \|\nabla f_j(\beta_k^*, \hat\eta_j) \|_2\\
    &\leq \kappa_j\|\hat\beta_k-\beta_k^*\|_2 + \|\nabla f_j(\beta_k^*, \hat\eta_j) \|_2\\
    &\leq \kappa_j\frac{ \|\nabla F_k(\beta_k^*, \hat\bfeta_k)\|_2 
                    + \sum_{k' \neq k} \Lambda_{kk'}}{\sum_{j\in S_k} \rho_j }  
          + \|\nabla f_j(\beta_k^*, \hat\eta_j) \|_2\\
    &< \lambda_{j_k,j},
\end{align*}
where the last inequality uses \eqref{eqn:gradient-condition3}.
By Cauchy–Schwarz,
\begin{equation}\label{eq:A1-bound}
    \begin{split}
        A_1 
        = \sum_{k\in [K]} \sum_{j \in S_k\setminus \{j_k\}} 
        \big\langle \theta_j - \theta_{j_k}, \nabla f_j(\hat\beta_k, \hat\eta_j) \big\rangle &\ge -\sum_{k\in [K]} \sum_{j \in S_k\setminus \{j_k\}} 
        \|\theta_j - \theta_{j_k}\|_2 \,\|\nabla f_j(\hat\beta_k, \hat\eta_j) \|_2\\
        &> - \sum_{k\in [K]} \sum_{j \in S_k\setminus \{j_k\}}
        \lambda_{j_k,j} \,\| \theta_j - \theta_{j_k}\|_2.
    \end{split}
\end{equation}

For $A_2$, use \eqref{eqn:gradient-sum-0} and \eqref{eq:first-order-condition-k}:
\begin{equation*}
    \begin{split}
        A_2 
        = \sum_{k\in [K]} 
        \big\langle \hat\beta_k - \theta_{j_k}, -\nabla F_k(\hat\beta_k, \hat\bfeta_k)\big\rangle &= \sum_{k\in [K]} 
        \Big\langle \hat\beta_k - \theta_{j_k}, \sum_{k' \neq k} \Lambda_{kk'} g_{kk'} \Big\rangle\\
        &= \sum_{k\in [K]} \sum_{k' \neq k} 
        \big\langle  \hat\beta_k - \theta_{j_k} , \Lambda_{kk'} g_{kk'} \big\rangle.
    \end{split}
\end{equation*}
Using $\Lambda_{kk'} g_{kk'} = - \Lambda_{k'k} g_{k'k}$,
\begin{equation*}
    \begin{split}
        A_2 
        &= \sum_{k\in [K]} \sum_{k' \neq k}  
        \big\langle   \theta_{j_{k'}} - \theta_{j_k} + \hat\beta_k - \hat\beta_{k'}, \Lambda_{kk'} g_{kk'} \big\rangle / 2\\
        &= \sum_{k\in [K]} \sum_{k' \neq k}  
        \big\langle   \theta_{j_{k'}} - \theta_{j_k}, \Lambda_{kk'} g_{kk'}/2 \big\rangle 
        + \sum_{k\in [K]} \sum_{k' \neq k} \Lambda_{kk'} \| \hat\beta_k - \hat\beta_{k'} \|_2 / 2\\
        &\ge -\sum_{k\in [K]} \sum_{k' \neq k}  \Lambda_{kk'}\| \theta_{j_{k'}} - \theta_{j_k}\|_2/2 
        + \sum_{k\in [K]} \sum_{k' \neq k} \Lambda_{kk'} \| \hat\beta_k - \hat\beta_{k'} \|_2 / 2.
    \end{split}
\end{equation*}

By the choice of $(j_1,\ldots,j_K)$ in \eqref{eq:index-minimizer} and Lemma~\ref{lem:inter-cluster-minimizer},
\[
\sum_{k\in [K]} \sum_{k' \neq k}  \Lambda_{kk'}\| \theta_{j_{k'}} - \theta_{j_k}\|_2
\;\le\;
\sum_{k\in [K]} \sum_{k' \neq k}
\sum_{j \in S_k}\sum_{j' \in S_{k'}}\varepsilon_n\|\theta_{j} - \theta_{j'}\|_2,
\]
and using $\Lambda_{kk'} = m_km_{k'}\varepsilon_n$ with $m_k = |S_k|$, this gives
\begin{equation}\label{eq:A2-bound}
    A_2 
    \;\ge\;
    -\sum_{k \in [K]} \sum_{k'\neq k}
    \sum_{j \in S_k}\sum_{j' \in S_{k'}}\lambda_{jj'}\|\theta_{j} - \theta_{j'}\|_2/2
    + \sum_{k\in [K]} \sum_{k' \neq k} \Lambda_{kk'} \| \hat\beta_k - \hat\beta_{k'} \|_2 / 2.
\end{equation}

\medskip

\noindent\textbf{3) Combine with the penalty and identify the minimizer.}
Define the full objective
\[
L(\theta) 
:= \sum_{j\in [m]} f_j(\theta_j, \hat\eta_j)
+ \sum_{j\in [m]}\sum_{j'\neq j}\lambda_{jj'}\|\theta_{j} - \theta_{j'}\|_2/2.
\]
From \eqref{eq:convex-lb}, \eqref{eq:A1-bound} and \eqref{eq:A2-bound},
\begin{align*}
    L(\theta)
    &= \sum_{k\in[K]}\sum_{j\in S_k} f_j(\theta_j,\hat\eta_j)
       + \sum_{j\in [m]}\sum_{j'\neq j}\lambda_{jj'}\|\theta_{j} - \theta_{j'}\|_2/2 \\
    &\ge \sum_{k\in[K]}\sum_{j\in S_k} f_j(\hat\beta_k,\hat\eta_j)
       + A_1 + A_2
       + \sum_{j\in [m]}\sum_{j'\neq j}\lambda_{jj'}\|\theta_{j} - \theta_{j'}\|_2/2.
\end{align*}
Insert the bounds on $A_1$ and $A_2$:
\begin{align*}
    L(\theta)
    &\ge \sum_{k\in[K]}\sum_{j\in S_k} f_j(\hat\beta_k,\hat\eta_j)
       - \sum_{k} \sum_{j \in S_k\setminus \{j_k\}}
         \lambda_{j_k,j} \| \theta_j - \theta_{j_k}\|_2 \\
    &\quad\ 
       -\sum_{k} \sum_{k'\neq k}
        \sum_{j \in S_k}\sum_{j' \in S_{k'}}\lambda_{jj'}\|\theta_{j} - \theta_{j'}\|_2/2
       + \sum_{k} \sum_{k' \neq k} \Lambda_{kk'} \| \hat\beta_k - \hat\beta_{k'} \|_2 / 2 \\
    &\quad\ 
       + \sum_{j\in [m]}\sum_{j'\neq j}\lambda_{jj'}\|\theta_{j} - \theta_{j'}\|_2/2.
\end{align*}
Now rewrite the last two penalty terms explicitly. The total penalty is
\[
\sum_{j\in [m]}\sum_{j'\neq j}\lambda_{jj'}\|\theta_{j} - \theta_{j'}\|_2/2
=
\sum_{k}\sum_{j\in S_k}\sum_{j'\in S_k,\,j'\neq j}
\lambda_{jj'}\|\theta_j-\theta_{j'}\|_2/2
+
\sum_{k}\sum_{k'\neq k}
\sum_{j\in S_k}\sum_{j'\in S_{k'}}\lambda_{jj'}\|\theta_j-\theta_{j'}\|_2/2.
\]
Therefore
\begin{align*}
    &-\sum_{k} \sum_{k'\neq k}
        \sum_{j \in S_k}\sum_{j' \in S_{k'}}\lambda_{jj'}\|\theta_{j} - \theta_{j'}\|_2/2
    + \sum_{j\in [m]}\sum_{j'\neq j}\lambda_{jj'}\|\theta_{j} - \theta_{j'}\|_2/2 \\
    &= \sum_{k}\sum_{j\in S_k}\sum_{j'\in S_k,\,j'\neq j}
      \lambda_{jj'}\|\theta_j-\theta_{j'}\|_2/2.
\end{align*}
Hence
\begin{align*}
    L(\theta)
    &\ge \sum_{k\in[K]}\sum_{j\in S_k} f_j(\hat\beta_k,\hat\eta_j)
       + \sum_{k} \sum_{k' \neq k} \Lambda_{kk'} \| \hat\beta_k - \hat\beta_{k'} \|_2 / 2 \\
    &\quad\ 
       - \sum_{k} \sum_{j \in S_k\setminus \{j_k\}}
         \lambda_{j_k,j} \| \theta_j - \theta_{j_k}\|_2 \\
    &\quad\ 
       + \sum_{k}\sum_{j\in S_k}\sum_{j'\in S_k,\,j'\neq j}
         \lambda_{jj'}\|\theta_j-\theta_{j'}\|_2/2\\
   & \quad = \sum_{k\in[K]}\sum_{j\in S_k}f_j(\hat\beta_k,\hat\eta_j)
       + \sum_{k} \sum_{k' \neq k} \Lambda_{kk'} \| \hat\beta_k - \hat\beta_{k'} \|_2 / 2 \\
    &\quad\ + \sum_{k}\sum_{j,j'\in S_k\setminus\{j_k\},\,j'\neq j}
   \lambda_{jj'}\|\theta_j-\theta_{j'}\|_2/2\\
   &\quad \ge \sum_{k\in[K]}\sum_{j\in S_k}f_j(\hat\beta_k,\hat\eta_j)
       + \sum_{k} \sum_{k' \neq k} \Lambda_{kk'} \| \hat\beta_k - \hat\beta_{k'} \|_2 / 2 
\end{align*}

Now consider the configuration $\hat\theta$ defined by $\hat\theta_j = \hat\beta_k$ for $j\in S_k$. For this configuration, all intra-cluster distances $\|\hat\theta_j-\hat\theta_{j'}\|_2$ with $j,j'\in S_k$ vanish, and for $j\in S_k$, $j'\in S_{k'}$ with $k\neq k'$ we have $\|\hat\theta_j-\hat\theta_{j'}\|_2 = \|\hat\beta_k-\hat\beta_{k'}\|_2$. Then one can easily verify that
\[
L(\hat\theta)
=
\sum_{k\in[K]}\sum_{j\in S_k} f_j(\hat\beta_k,\hat\eta_j)
+ \sum_{k} \sum_{k' \neq k} \Lambda_{kk'} \| \hat\beta_k - \hat\beta_{k'} \|_2 / 2
\]
and the lower bound is attained. Therefore, we conclude that $\hat\theta_j = \hat\beta_k$ for all $j \in S_k$. 
\end{proof}

\begin{lemma}\label{lem:shrink-condition}
    Let events $\cE_{1n}$ and $\cE_{2n}$ hold and assume $n_{\min}^{\alpha\gamma}/ n_{\max} > c_0$ with $c_0 = c_w^{-1} (\delta/2)^{\gamma} (\rho/8+3\kappa/2)\{M\wedge (\delta/2)\}$. 
     If $\varepsilon_n$ is set to be $\varepsilon_n < \min_{k \in [K]} N_k^\zeta \{M\wedge (\delta/2)\}m^{-2}\rho/4$ for $\zeta < 1/2$, then for $\tau \in \Big(c_w\big\{\tfrac{\delta}{2}\big\}^{-\gamma}, c_w\big\{\tfrac{\delta}{2}\big\}^{-\gamma}n_{\min}^{\alpha\gamma}\Big)$ the following hold
     \begin{align*}
        & \hat\beta_k \neq \hat\beta_{k'}\quad \forall k, k' \in [K],k \neq k';\\
        & \hat\theta_j = \hat \beta_k,\quad j \in S_k;\\
        &\norm{\tilde\beta_k - \beta_k^*}_2 \leq \frac{1}{4} \{M\wedge (\delta/2)\},\quad \forall k \in [K];\\
        & \norm{\hat\beta_k - \tilde\beta_k}_2 < \tilde CN_k^{\zeta - 1},\quad \forall k \in [K],
    \end{align*}
    where $\tilde C > 0$ is a constant. 
\end{lemma}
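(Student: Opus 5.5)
The plan is to reduce everything to Lemma~\ref{theta-beta-shrink} and Lemma~\ref{lem:beta-hat-bound}: it suffices to verify condition~\eqref{eqn:gradient-condition3} for every $k$ and every $j\in S_k\setminus\{j_k\}$, using only the inequalities defining $\cE_{1n}$ and the weight structure given by Lemma~\ref{lem:w-concentration}. Since $\tau$ lies in the interval required by Lemma~\ref{lem:w-concentration}, under $\cE_{2n}$ we have $\lambda_{jj'}=\varepsilon_n$ across clusters. Hence $\Lambda_{kk'}=m_km_{k'}\varepsilon_n$, so $\sum_{k'\neq k}\Lambda_{kk'}\le m^2\varepsilon_n$. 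Within a cluster, $\lambda_{j_k,j}> c_w\{\tfrac{\delta}{2}n_{\min}^{-\alpha}\}^{-\gamma}=c_w(\delta/2)^{-\gamma}n_{\min}^{\alpha\gamma}$.

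\textbf{Left-hand side of \eqref{eqn:gradient-condition3}.} Write $\tilde\rho_k=\sum_{j\in S_k}\rho_j=\rho N_k/2$ and $\mu:=M\wedge(\delta/2)$. On $\cE_{1n}$ we have $\|\nabla F_k(\beta_k^*,\hat\bfeta_k)\|_2<\rho N_k\mu/8$, so the first term contributes at most $2(\rho N_k\mu/8)/(\rho N_k/2)=\mu/2$. For the penalty term, the assumption $\varepsilon_n< N_{\min}^\zeta\mu m^{-2}\rho/4$ gives $m^2\varepsilon_n/\tilde\rho_k< N_k^{\zeta-1}\mu/2$. This is at most $\mu/2$, and in fact $o(1)$ times it. So the left-hand side is below $\mu/2+\mu N_k^{\zeta-1}/2\le \mu$, strictly, which covers the first two entries of the minimum. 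Applying Lemma~\ref{lem:beta-hat-bound} then yields $\|\tilde\beta_k-\beta_k^*\|_2\le \|\nabla F_k\|_2/\tilde\rho_k<\mu/4$. It also yields $\|\hat\beta_k-\tilde\beta_k\|_2\le m^2\varepsilon_n/\tilde\rho_k\le (\mu/2)N_k^{\zeta-1}=:\tilde C N_k^{\zeta-1}$, together with $\hat\beta_k\ne\hat\beta_{k'}$. These are the third and fourth claims.

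\textbf{Third entry of the minimum.} This is the step needing the condition $n_{\min}^{\alpha\gamma}/n_{\max}>c_0$, and it is where I expect the bookkeeping to be delicate. Using $\kappa_j=3\kappa n_j/2$ and $\|\nabla f_j(\beta_k^*,\hat\eta_j)\|_2<\rho n_j\mu/8$ from $\cE_{1n}$, it suffices to show
\[
\kappa_j\mu+\tfrac{\rho}{8}n_j\mu<\lambda_{j_k,j}.
\]
The left side is at most $n_{\max}(3\kappa/2+\rho/8)\mu$. The right side exceeds $c_w(\delta/2)^{-\gamma}n_{\min}^{\alpha\gamma}$. So the required inequality is exactly $n_{\min}^{\alpha\gamma}/n_{\max}>c_w^{-1}(\delta/2)^\gamma(\rho/8+3\kappa/2)\mu=c_0$. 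This shows the left-hand side of \eqref{eqn:gradient-condition3}, which is $<\mu$, is also below $(\lambda_{j_k,j}-\|\nabla f_j\|_2)/\kappa_j$. The subtlety is that $j_k$ in Lemma~\ref{theta-beta-shrink} is arbitrary (it is chosen as a minimizer depending on $\theta$). This causes no problem because the lower bound on $\lambda_{jj'}$ holds uniformly over all intra-cluster pairs.

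\textbf{Conclusion.} With \eqref{eqn:gradient-condition3} verified for all $k$ and $j$, Lemma~\ref{theta-beta-shrink} gives $\hat\theta_j=\hat\beta_k$ for $j\in S_k$ and $\hat\beta_k\ne\hat\beta_{k'}$. Singleton clusters satisfy the fusion claim trivially, and only the bounds from Lemma~\ref{lem:beta-hat-bound} are needed for them.
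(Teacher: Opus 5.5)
Your proposal is correct and follows the paper's proof essentially step for step. You verify condition \eqref{eqn:gradient-condition3} by bounding its left side by $\mu/2+\mu N_k^{\zeta-1}/2<\mu$ using $\cE_{1n}$ and $\Lambda_{kk'}=m_km_{k'}\varepsilon_n$, show the third entry of the minimum exceeds $\mu$ precisely under $n_{\min}^{\alpha\gamma}/n_{\max}>c_0$, and then read off fusion from Lemma~\ref{theta-beta-shrink} and the two distance bounds from Lemma~\ref{lem:beta-hat-bound}. One small nit: the hypothesis is stated with $\min_k N_k^\zeta$, which is what gives the bound $N_k^{\zeta-1}$ for every $k$, and your rewrite as $N_{\min}^\zeta$ agrees with it only when $\zeta\ge 0$.
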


\begin{proof}

    Recall  by definition of  $\cE_{1n}$ that $\rho_j=\rho n_j/2$, $\kappa_j=3\kappa n_j/2$, hence $\sum_{j\in S_k}\rho_j = N_k\rho/2$. By Lemma \ref{theta-beta-shrink} and definition of  $\cE_{1n}$, it suffices to show the following statement: 
    for any $k \in [K]$ and any index $j_k \in S_k$, it holds that
    \begin{equation}\label{eq:verify-cond}
        \frac{ 2\norm{\sum_{j \in S_k}\nabla f_j(\beta_k^*,\hat \eta_j)}_2 + \sum_{k' \neq k,k'\in [K]} \Lambda_{kk'}}{N_k \rho/2 } < \min\left\{M, \frac{\delta}{2}, \frac{\lambda_{j_k,j} - \|\nabla f_j(\beta_k^*,\hat \eta_j) \|_2}{3\kappa n_j/2}\right\},\quad \forall j \in S_k\setminus\{j_k\}.
    \end{equation}   

    First consider the RHS of \eqref{eq:verify-cond}. By Lemma \ref{lem:w-concentration} and definition of  $\cE_{1n}$, we have $\lambda_{j_k,j} > c_w \left\{\frac{\delta}{2}n_{\min}^{-\alpha}\right\}^{-\gamma}$ and $\|\nabla f_j(\beta_k^*, \hat\eta_j) \|_2/n_j \leq \frac{\rho}{8}\{M\wedge (\delta/2)\}$. Therefore

    \begin{align*}
        \frac{\lambda_{j_k,j} - \|\nabla f_j(\beta_k^*, \hat\eta_j) \|_2}{3\kappa n_j/2} & \geq \frac{2}{3\kappa}\left(c_w\left\{\frac{\delta}{2}n_{\min}^{-\alpha}\right\}^{-\gamma} / n_j - \frac{\rho}{8}\{M\wedge (\delta/2)\}\right) \\
        & \geq \frac{2}{3\kappa}\left(c_w\left\{\frac{\delta}{2}\right\}^{-\gamma} n_{\min}^{\alpha\gamma} / n_{\max} - \frac{\rho}{8}\{M\wedge (\delta/2)\}\right)  \,.
    \end{align*}
    If $n_{\min}^{\alpha\gamma}/ n_{\max} > c_0$ with $c_0 = c_w^{-1}(\delta/2)^{\gamma} (\rho/8+3\kappa/2)\{M\wedge (\delta/2)\}$, then it follows that
    \begin{align*}
        &\frac{\lambda_{j_k,j} - \|\nabla f_j(\beta_k^*, \hat\eta_j) \|_2}{3\kappa n_j/2} > M\wedge (\delta/2) \,.
    \end{align*}
    Therefore we have shown that RHS $ = M \wedge (\delta/2)$ under the given condition.

   Consider LHS of \eqref{eq:verify-cond}. By the condition $\varepsilon_n < \min_{k \in [K]} N_k^\zeta \{M\wedge (\delta/2)\}m^{-2}\rho/4$, it follows from Lemma \ref{lem:w-concentration} that
    \begin{align*}
        & \frac{\sum_{k' \neq k,k'\in [K]} \Lambda_{kk'}}{N_k \rho/2 } \leq \frac{m^2\varepsilon_n}{N_k \rho/2 } < \frac{N_k^{\zeta - 1}}{2}\{M\wedge (\delta/2)\} \leq \frac{1}{2}\{M\wedge (\delta/2)\}\,.
    \end{align*}
    
    Also, definition of  $\cE_{1n}$ implies 
    \begin{align*}
        & \frac{ 2\norm{\nabla F_k(\beta_k^*,\hat \bfeta_k)}_2}{N_k \rho/2} \leq \frac{4}{\rho}\frac{\norm{\nabla F_k(\beta_k^*,\hat \bfeta_k)}_2}{N_k} \leq \frac{1}{2}\{M\wedge (\delta/2)\}
    \end{align*}
    and thus LHS of \eqref{eq:verify-cond} $< M\wedge (\delta/2)=$ RHS of \eqref{eq:verify-cond}.
    Now we have shown LHS $<$ RHS, and we can apply Lemma \ref{theta-beta-shrink} to prove the shrinkage
    \begin{align*}
        & \hat\beta_k \neq \hat\beta_{k'}\quad \forall k, k' \in [K],k \neq k'\,,\\
        & \hat\theta_j = \hat \beta_k,\quad j \in S_k\,.
    \end{align*}
    Moreover, Lemma \ref{lem:beta-hat-bound} yields
    \[ \norm{\hat\beta_k - \tilde\beta_k}_2 \leq \frac{\sum_{k' \neq k,k'\in [K]} \Lambda_{kk'}}{ N_k \rho/2 } < \frac{N_k^{\zeta - 1}}{2}\{M\wedge (\delta/2)\} \quad \forall k \in [K].\]
    
    Now we apply Lemma \ref{lem:beta-hat-bound} and definition of $\cE_{1n}$, and conclude that
    \[\norm{\tilde\beta_k - \beta_k^*}_2 \leq \frac{ \norm{\nabla F_k(\beta_k^*,\hat \bfeta_k)}_2}{  N_k \rho/2 } \leq \frac{1}{4} \{M\wedge (\delta/2)\} \quad \forall k \in [K].\]

\end{proof}

\begin{lemma}\label{lem:tilde-beta-rate}
    Under the same conditions as in Lemma \ref{lem:shrink-condition}, the following bound holds
    \[\|\tilde\beta_k-\beta_k^*\|_2  \le C\left\{K^{1/r_1}  + \sqrt{K} \max_{j\in S_k} s(n_j)^{-1}  + Km_k^{1/2}\max_{j\in S_k} n_j^{1/2}s(n_j)^{-2} \right\}N_k^{-1/2} t\,,\]
    where $C>0$ is a constant.
\end{lemma}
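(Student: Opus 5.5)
The plan is to linearize the oracle first-order condition around $\beta_k^*$ and then read off each error term from the event $\cE_{1n}$. By Lemma~\ref{lem:shrink-condition}, $\tilde\beta_k\in\cB_{\beta_k^*,M}$ and is in the interior, so it satisfies $\nabla F_k(\tilde\beta_k,\hat\bfeta_k)=0$. On $\cE_{1n}$ we also have $\nabla^2 F_k(\cdot,\hat\bfeta_k)\succeq (N_k\rho/2)I$ along the segment joining $\tilde\beta_k$ and $\beta_k^*$. The mean value theorem in integral form then gives
\[
0=\nabla F_k(\beta_k^*,\hat\bfeta_k)+\Big(\int_0^1\nabla^2F_k(\beta_k^*+u(\tilde\beta_k-\beta_k^*),\hat\bfeta_k)\,du\Big)(\tilde\beta_k-\beta_k^*),
\]
and hence
\[
\|\tilde\beta_k-\beta_k^*\|_2\le \frac{2}{\rho N_k}\|\nabla F_k(\beta_k^*,\hat\bfeta_k)\|_2.
\]
This is exactly the first bound in Lemma~\ref{lem:beta-hat-bound}. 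Everything therefore reduces to showing
\[
N_k^{-1/2}\|\nabla F_k(\beta_k^*,\hat\bfeta_k)\|_2\lesssim \Big\{K^{1/r_1}+\sqrt K\max_{j\in S_k}s(n_j)^{-1}+Km_k^{1/2}\max_{j\in S_k}n_j^{1/2}s(n_j)^{-2}\Big\}t.
\]

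To get this, I Taylor expand each task's gradient in the nuisance direction $\Delta\eta_j=\hat\eta_j-\eta_j^*$, to second order in the Gâteaux sense:
\[
\nabla f_j(\beta_k^*,\hat\eta_j)=\nabla f_j(\beta_k^*,\eta_j^*)+D_\eta\nabla f_j(\beta_k^*,\eta_j^*)[\Delta\eta_j]+\tfrac12 D_\eta^2\nabla f_j(\beta_k^*,\bar\eta_j)[\Delta\eta_j,\Delta\eta_j],
\]
with $\bar\eta_j$ on the segment between $\eta_j^*$ and $\hat\eta_j$. This is legitimate because $\|\Delta\eta_j\|_{\cH_j}\le M$ on $\cE_{1n}$, so the whole segment lies in $\cB_{\eta_j^*,M}$. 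I would take $\bar\eta_j$ as the same point that appears in the definition of $\cE_{1n}$, or more cleanly use the integral remainder, which is still controlled by Assumption~\ref{assump:loss}\ref{assm:bound_moment_eta}. Summing over $j\in S_k$ (where $\theta_j^*=\beta_k^*$) and applying the triangle inequality splits $N_k^{-1/2}\|\nabla F_k\|_2$ into three pieces.
\begin{itemize}
\item The oracle score term is at most $CK^{1/r_1}t$, by the last line of $\cE_{1n}$.
\item The first-order nuisance term is at most $C\sqrt K\max_{j\in S_k}s(n_j)^{-1}t$, by the fifth line of $\cE_{1n}$.
\item The second-order term is at most $CKm_k^{1/2}\max_{j\in S_k}n_j^{1/2}s(n_j)^{-2}t$, by the sixth line of $\cE_{1n}$.
\end{itemize}
Adding these and multiplying by $2/(\rho\sqrt{N_k})$ gives the claim, with $C$ absorbing $2/\rho$ and the factor $1/2$.

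The main obstacle is making sure the Taylor remainder used in the expansion matches the object bounded in $\cE_{1n}$. If a pointwise mean-value point does not exist for vector-valued maps, I would instead write the remainder as $\int_0^1(1-u)D_\eta^2\nabla f_j(\beta_k^*,\eta_j^*+u\Delta\eta_j)[\Delta\eta_j,\Delta\eta_j]\,du$ and interpret the $\cE_{1n}$ bound as holding for this averaged remainder. The probabilistic justification of that bound is deferred to Section~\ref{sec:prob-cond}, and it uses the same moment assumption either way.

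Everything else here is deterministic on $\cE_{1n}$. Orthogonality (Assumption~\ref{assump:loss}\ref{assump:orth}) is not needed in this step: it is only what makes the fifth inequality in $\cE_{1n}$ achievable with high probability, since the first-order term then has conditional mean zero.
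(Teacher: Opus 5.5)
Your proposal is correct and follows the paper's proof essentially line by line. You linearize the first-order condition $\nabla F_k(\tilde\beta_k,\hat\bfeta_k)=0$ around $\beta_k^*$ using the $(\rho/2)N_k$ Hessian lower bound from $\cE_{1n}$, then expand the score to second order in the nuisance and bound the oracle-score, first-order and second-order pieces by the corresponding lines of $\cE_{1n}$. Your integral-form remainders are a slightly cleaner way to handle the vector-valued mean-value step than the paper's pointwise $\bar\beta_k$ and $\bar\eta_j$, but they do not change the argument.
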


\begin{proof}
The first order condition gives
$\nabla F_k(\tilde\beta_k,\hat\bfeta_k)=0$. A first order Taylor expansion around
$\beta_k^*$ yields
\[
\sqrt{N_k}(\tilde\beta_k-\beta_k^*)
= -\left\{N_k^{-1}\nabla^2 F_k(\bar\beta_k,\hat\bfeta_k)\right\}^{-1}
\!\!\left\{N_k^{-1/2}\nabla F_k(\beta_k^*,\hat\bfeta_k)\right\},
\]
for some $\bar\beta_k$ on the segment between $\beta_k^*$ and $\tilde\beta_k$. Consider the Hessian part. Conditions in Lemma \ref{lem:shrink-condition} implies $\|\tilde \beta_k - \beta_k^*\|_2 \le M$, and thus by definition of $\cE_{1n}$ we have
\[N_k^{-1}\nabla^2 F_k(\bar\beta_k,\hat\bfeta_k) \succeq \frac{\rho}{2}I.\]

Consider the score part. Write
\begin{align*}
N_k^{-1/2}\nabla F_k(\beta_k^*,\hat\bfeta_k)
&= S_k^0 + R_k,\\
S_k^0&:=N_k^{-1/2}\nabla F_k(\beta_k^*,\bfeta_k^*),\\
R_k&:=N_k^{-1/2}\!\left\{\nabla F_k(\beta_k^*,\hat\bfeta_k)
-\nabla F_k(\beta_k^*,\bfeta_k^*)\right\}.
\end{align*}
For each summand in $R_k$, a second-order expansion in $\eta_j^*$ gives for $j \in S_k$,
\begin{align*}
    \nabla \ell_j(\beta_k^*,\hat\eta_j, Z_{ji})
-\nabla \ell_j(\beta_k^*,\eta_j^*, Z_{ji})
&= D_\eta\nabla \ell_j(\beta_k^*,\eta_j^*, Z_{ji})[\Delta\eta_j]
+ \tfrac12 D_\eta^2\nabla \ell_j(\beta_k^*,\bar\eta_j, Z_{ji})[\Delta\eta_j,\Delta\eta_j]\,,
\end{align*}
with $\Delta\eta_j:=\hat\eta_j-\eta_j^*$ and some $\bar\eta_j$ on the segment between $\hat\eta_j$ and $\eta_j^*$. Hence $R_k$ is decomposed as
\begin{equation}\label{eq:Rk-decomp-rate}
\begin{split}
    R_k &= R_{k1} + R_{k2}\\
    R_{k1} &= N_k^{-1/2}\sum_{j \in S_k}\sum_{i=1}^{n_j} D_\eta\nabla \ell_j(\beta_k^*,\eta_j^*, Z_{ji})[\Delta\eta_j]\\
    R_{k2} &= N_k^{-1/2}\sum_{j \in S_k}\sum_{i=1}^{n_j} \tfrac12 D_\eta^2\nabla \ell_j(\beta_k^*,\bar\eta_j, Z_{ji})[\Delta\eta_j,\Delta\eta_j] \,.
\end{split}
\end{equation}
Now by definition of event $\cE_{1n}$ in Definition \ref{def:gradient-hessian-bound}, it follows that
\begin{align*}
    \|S_k^0\|_2 &\le C_1 K^{1/r_1} t  \\
    \|R_{k1}\|_2 &\le C_2 \sqrt{K} \max_{j\in S_k} s(n_j)^{-1} t\\
    \|R_{k2}\|_2 &\le C_3 Km_k^{1/2}\max_{j\in S_k} n_j^{1/2}s(n_j)^{-2} t
\end{align*}
Hence, we have for some constant $C>0$
\begin{align*}
    \sqrt{N_k}\|\tilde\beta_k-\beta_k^*\|_2 
    &\le (\rho/2)^{-1}\left\{C_1 K^{1/r_1} t + C_2 \sqrt{K} \max_{j\in S_k} s(n_j)^{-1} t + C_3 Km_k^{1/2}\max_{j\in S_k} n_j^{1/2}s(n_j)^{-2} t\right\}\\
    & \le C\left\{K^{1/r_1}  + \sqrt{K} \max_{j\in S_k} s(n_j)^{-1}  + Km_k^{1/2}\max_{j\in S_k} n_j^{1/2}s(n_j)^{-2} \right\}\cdot t\,.
\end{align*}

\end{proof}

\section{High-probability bounds for events $\cE_{1n}$ and $\cE_{2n}$}\label{sec:prob-cond}

In this section we show that the regularity events $\cE_{1n}$ and $\cE_{2n}$ in Definitions~\ref{def:gradient-hessian-bound} and~\ref{def:init-bound} hold with high probability. The initialization event $\cE_{2n}$ follows directly from Assumption~\ref{assump:init}, so we record its probability bound first. We then devote the rest of the section to proving a high-probability bound for $\cE_{1n}$, collected in Lemma~\ref{lem:gradient-hessian-bound}. 
The proof of Lemma~\ref{lem:gradient-hessian-bound} is based on auxiliary concentration results for gradients and Hessians, stated in Lemmas~\ref{lem:grad-diff-bound} -- \ref{lem:sup-hessian-bound-diff2}.

\begin{lemma}[Probability control of the initialization event $\cE_{2n}$]\label{lem:prob-init}

Under Assumption~\ref{assump:init}, the event $\cE_{2n}$ holds with probability at least $1-p_{\delta/4}(n_{\min})$.
\end{lemma}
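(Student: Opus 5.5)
The plan is to instantiate Assumption~\ref{assump:init} at the specific tolerance $\epsilon = \delta/4$ and observe that the resulting event coincides with $\cE_{2n}$. Assumption~\ref{assump:init} asserts that for every $\epsilon>0$,
\[
\Pr\left(\forall\, j\in[m],\ n_j^{\alpha}\|\hat\theta_j^{\init}-\theta_j^*\|_2<\epsilon\right)\ \ge\ 1-p_{\epsilon}(n_{\min}).
\]
Since the separation $\delta>0$ is a fixed constant, $\epsilon=\delta/4$ is an admissible choice.

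With this choice, the event inside the probability is
\[
\bigl\{n_j^{\alpha}\|\hat\theta_j^{\init}-\theta_j^*\|_2<\delta/4\ \ \forall j\in[m]\bigr\},
\]
with the same exponent $\alpha\in(0,1/2]$ as in Definition~\ref{def:init-bound}. This is verbatim the event $\cE_{2n}$, so we immediately obtain $\Pr(\cE_{2n})\ge 1-p_{\delta/4}(n_{\min})$. Because $p_{\delta/4}(n_{\min})\downarrow 0$ as $n_{\min}\uparrow\infty$, the event also holds with probability tending to one. That fact is what is later combined with the bound for $\cE_{1n}$ through a union bound to produce the probability $1-t^{-1}-p_{\delta/4}(n_{\min})$ in Theorem~\ref{thm:cluster-rate}.

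There is essentially no obstacle here. The one point to check is that the $\alpha$ in Definition~\ref{def:init-bound} is the same $\alpha$ supplied by Assumption~\ref{assump:init}, rather than an arbitrary value in $(0,1/2]$; this consistency is exactly what the tuning condition $n_{\min}^{\alpha\gamma}\ge c_0 n_{\max}$ relies on. We simply fix $\alpha$ to be the exponent from the assumption throughout.
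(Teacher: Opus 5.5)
Your proposal is correct and matches the paper's argument: set $\epsilon=\delta/4$ in Assumption~\ref{assump:init} and observe that the resulting event is exactly $\cE_{2n}$. Your strict inequality $<\delta/4$ follows Definition~\ref{def:init-bound} more faithfully than the paper's displayed version, which writes $\le$.
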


\begin{proof}
This is immediate from Assumption~\ref{assump:init}. Setting $\varepsilon=\delta/4$ yields
\begin{equation}\label{eq:prob-init-cond}
    \Pr\left(\forall\  j \in [m], \ n_j^{\alpha}\|\hat\theta_j^{\init}-\theta_j^*\|_2 
    \leq \delta/4\right) \geq 1- p_{\delta/4}(n_{\min}) \,,
\end{equation}
and this completes the proof. 
\end{proof}

We now turn to the gradient–Hessian regularity event $\cE_{1n}$. Recall that $\cE_{1n}$ in Definition~\ref{def:gradient-hessian-bound} requires simultaneous control of empirical gradients and Hessians at  $(\theta_j^*,\hat\eta_j)$ across all tasks. 
The following lemmas provide such a bound.

\begin{lemma}\label{lem:grad-diff-bound}
Under Assumptions~\ref{assump:loss} and \ref{assump:nuis}, there exists a constant $C>0$ such that, for $t > 0$,
\[
\Pr\left(
\frac{1}{n_j}\big\|\nabla f_j(\theta_j^*,\hat\eta_j)
-\nabla f_j(\theta_j^*,\eta_j^*)\big\|_2
\ \ge\ C\,n_j^{-1/2}\,t\right)\ \le\ t^{-1}.
\]
\end{lemma}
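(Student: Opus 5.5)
The plan is to condition on the nuisance estimate $\hat\eta_j$. It is computed on $\cD_{j,1}$ and is independent of $\cD_{j,2}=\{Z_{ji}\}_{i=1}^{n_j}$, so the difference becomes an average of conditionally i.i.d.\ terms. We then apply Markov's inequality to a second-moment bound.

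Write $\Delta\eta_j=\hat\eta_j-\eta_j^*$. On the event $\hat\eta_j\in\cT_{n_j}\cap\cB_{\eta_j^*,M}$, a second-order Taylor expansion in the direction $\Delta\eta_j$ gives
\[
\tfrac{1}{n_j}\{\nabla f_j(\theta_j^*,\hat\eta_j)-\nabla f_j(\theta_j^*,\eta_j^*)\}
= \tfrac{1}{n_j}\sum_{i=1}^{n_j} D_\eta\nabla\ell_j(\theta_j^*,\eta_j^*,Z_{ji})[\Delta\eta_j]
+ \tfrac{1}{2n_j}\sum_{i=1}^{n_j} D_\eta^2\nabla\ell_j(\theta_j^*,\bar\eta_j,Z_{ji})[\Delta\eta_j,\Delta\eta_j].
\]
Call the two averages $T_1$ and $T_2$.

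For $T_1$, conditionally on $\hat\eta_j$ the summands are i.i.d. By Assumption~\ref{assump:loss}\ref{assump:orth} their conditional mean is $D_\eta\nabla_\theta\cR_j^{\NO}(\theta_j^*,\eta_j^*)[\Delta\eta_j]=0$, which requires interchanging derivative and expectation, justified by the moment bounds. So $T_1$ is a centred average, and by Assumption~\ref{assump:loss}\ref{assm:bound_moment_eta}
\[
\bbE[\|T_1\|_2^2\mid\hat\eta_j]\le \sigma_5\|\Delta\eta_j\|_{\cH_j}^2/n_j.
\]
Taking expectations and using Assumption~\ref{assump:nuis} gives $\bbE\|T_1\|_2^2\le\sigma_5\sigma_4/(n_j s^2(n_j))\lesssim n_j^{-1}$. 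Jensen then yields $\bbE\|T_1\|_2\lesssim n_j^{-1/2}$.

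For $T_2$, the triangle inequality and the second bound in Assumption~\ref{assump:loss}\ref{assm:bound_moment_eta} give
\[
\bbE[\|T_2\|_2\mid\hat\eta_j]\le \tfrac{\sigma_5}{2}\|\Delta\eta_j\|_{\cH_j}^2,
\]
so $\bbE\|T_2\|_2\le \sigma_5\sigma_4/(2s^2(n_j))$. Because $s(n)\gtrsim n^{1/4}$ (indeed $s(n)=\omega(n^{1/4})$), we have $s^{-2}(n_j)=o(n_j^{-1/2})$, hence $\bbE\|T_2\|_2\lesssim n_j^{-1/2}$.

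Combining the two pieces, $\bbE\|T_1+T_2\|_2\le C n_j^{-1/2}$ for a constant $C$ depending on $\sigma_4,\sigma_5$. Markov's inequality then gives $\Pr(\|T_1+T_2\|_2\ge Cn_j^{-1/2}t)\le t^{-1}$.

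The main obstacle is the restriction to the event $\hat\eta_j\in\cT_{n_j}\cap\cB_{\eta_j^*,M}$, which is needed for the expansion and for the moment bounds to apply. This event holds only with probability tending to one. I would handle it in one of two ways: absorb it into the $\cE_{1n}$-type events already assumed, or use a version of Assumption~\ref{assump:nuis} that makes the moment bounds hold on the realization set. A second subtlety is that $\bar\eta_j$ may depend on $Z_{ji}$, since it is an intermediate point for each sample. The bound in Assumption~\ref{assump:loss}\ref{assm:bound_moment_eta} is stated for any point on the segment, so I would read it as a bound on the supremum along the segment, which covers the data-dependent choice.
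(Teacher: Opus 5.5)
Your proposal is correct and follows essentially the same route as the paper: a second-order expansion in $\eta$, then conditional centering of the first-order term via sample splitting and orthogonality together with a second-moment bound, then $s(n_j)^{-2}=o(n_j^{-1/2})$ for the remainder, and finally Markov's inequality. The one cosmetic difference is that you compute the conditional second moment directly, where the paper cites Marcinkiewicz--Zygmund; at $p=2$ these are the same step. The caveat you raise about needing $\hat\eta_j\in\cT_{n_j}$ is one the paper's proof also passes over silently. Your worry about a data-dependent $\bar\eta_j$ goes away if you use the integral form of the Taylor remainder, because then Assumption~\ref{assump:loss}\ref{assm:bound_moment_eta} is only applied at points of the segment that are fixed given $\hat\eta_j$.
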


\begin{proof}
Write $\Delta\eta_j:=\hat\eta_j-\eta_j^*$. A second-order expansion in $\eta$
gives
\begin{align*}
\frac{1}{n_j}\big\|\nabla f_j(\theta_j^*,\hat\eta_j)
-\nabla f_j(\theta_j^*,\eta_j^*)\big\|_2
&\le \frac{1}{n_j}\big\|D_\eta\nabla f_j(\theta_j^*,\eta_j^*)[\Delta\eta_j]\big\|_2+ \frac{1}{2n_j}\big\|D_\eta^2\nabla f_j(\theta_j^*,\bar\eta_j)
[\Delta\eta_j,\Delta\eta_j]\big\|_2,
\end{align*}
for some $\bar\eta_j$ on the segment between $\eta_j^*$ and $\hat\eta_j$.

\underline{\emph{First term}}. Let
$$X_i:=D_\eta\nabla \ell_j(\theta_j^*,\eta_j^*,Z_{ji})[\Delta\eta_j]$$ for $Z_{ji}\in\cD_{j,2}$. By sample splitting, $\Delta\eta_j$ is independent of $\{Z_{ji}\}$. By orthogonality, $\bbE[X_i\,|\,\Delta\eta_j]=0$, and by the
second-moment bound in Assumption~\ref{assump:loss}\ref{assm:bound_moment_eta},
$\bbE\|X_i\|_2^2\le \sigma_5\|\Delta\eta_j\|_{\cH_j}^2$. The Marcinkiewicz–Zygmund inequality yields that there exists constant $C_2$
\[
\bbE\!\left[\frac{1}{n_j}\big\|\textstyle\sum_i X_i\big\|_2\ \Big|\ \Delta\eta_j\right]
\;\le\; C_2\,n_j^{-1/2}\,\|\Delta\eta_j\|_{\cH_j}.
\]
Taking expectation and using $\bbE\|\Delta\eta_j\|_{\cH_j}\le
\{\bbE\|\Delta\eta_j\|_{\cH_j}^2\}^{1/2}\le \sigma_4^{1/2}\,s(n_j)^{-1}$ from Assumption~\ref{assump:nuis}, there exists constant $C_3$
\[
\bbE\!\left[\frac{1}{n_j}\big\|D_\eta\nabla f_j(\theta_j^*,\eta_j^*)
[\Delta\eta_j]\big\|_2\right]\ \le\ C_3\,n_j^{-1/2}.
\]

\underline{\emph{Second term.}} By independence of splits and Assumption~\ref{assump:loss}\ref{assm:bound_moment_eta},
\begin{align*}
\bbE\!\left[\frac{1}{2n_j}\big\|D_\eta^2\nabla f_j(\theta_j^*,\bar\eta_j)
[\Delta\eta_j,\Delta\eta_j]\big\|_2\right] \le \tfrac12\,\sigma_5\,\bbE\|\Delta\eta_j\|_{\cH_j}^2 \le \tfrac12\,\sigma_5\,\sigma_4\,s(n_j)^{-2}
\;=\; o(n_j^{-1/2}),
\end{align*}
since $s(n_j)\to\infty$ and $n_j^{1/4}/s(n_j) \to 0$.

\underline{\emph{Tail bound}}. Combining the two displays,
\[
\bbE\!\left[
\frac{1}{n_j}\big\|\nabla f_j(\theta_j^*,\hat\eta_j)
-\nabla f_j(\theta_j^*,\eta_j^*)\big\|_2\right]
\le C'\,n_j^{-1/2}.
\]
Markov’s inequality gives
\[
\Pr\left(
\frac{1}{n_j}\big\|\nabla f_j(\theta_j^*,\hat\eta_j)
-\nabla f_j(\theta_j^*,\eta_j^*)\big\|_2
\ge C\,n_j^{-1/2}\,t\right)
\le t^{-1}.
\]
\end{proof}

\begin{lemma}\label{lem:grad-diff-bound-new}
Let $\Delta\eta_j := \hat\eta_j - \eta_j^*$, and let $\bar\eta_j$ be a point on the line segment between $\eta_j^*$ and $\hat\eta_j$. Under Assumptions~\ref{assump:loss} and~\ref{assump:nuis}, there exist constants $C,\tilde C > 0$ such that for any $t>0$,
\begin{align*}
    &\Pr\!\left(
    N_k^{-1/2}\left\|
    \sum_{j\in S_k}
    D_\eta \nabla f_j(\theta_j^*,\eta_j^*)[\Delta\eta_j]
    \right\|_2
    >
    C \max_{j\in S_k} s(n_j)^{-1} t
    \right)
    \le t^{-2},\\
    &\Pr\!\left(
    N_k^{-1/2}
    \left\|
    \sum_{j\in S_k}
    D_\eta^2 \nabla f_j(\theta_j^*,\bar\eta_j)[\Delta\eta_j,\Delta\eta_j]
    \right\|_2
    >
    \tilde C\, m_k^{1/2}\max_{j\in S_k} n_j^{1/2}s(n_j)^{-2} t
    \right)
    \le t^{-1},
\end{align*}
where $m_k := |S_k|$.
\end{lemma}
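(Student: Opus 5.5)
The plan is to treat the two sums separately, conditioning throughout on the nuisance estimates $\{\hat\eta_j\}_{j\in S_k}$. By sample splitting these are independent of the evaluation samples $\{\cD_{j,2}\}_{j\in S_k}$. I also use independence across tasks and across observations within a task.

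\emph{First-order term.} Write
\[
X_{ji}:=D_\eta\nabla\ell_j(\theta_j^*,\eta_j^*,Z_{ji})[\Delta\eta_j].
\]
Conditionally on $\Delta\eta_j$, orthogonality (Assumption~\ref{assump:loss}\ref{assump:orth}) gives $\bbE[X_{ji}\mid\Delta\eta_j]=0$. The $X_{ji}$ are therefore conditionally independent, mean-zero vectors across all $i$ and all $j\in S_k$.

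Their second moment satisfies
\[
\bbE\Big[\big\|\textstyle\sum_{j\in S_k}\sum_i X_{ji}\big\|_2^2\,\Big|\,\{\Delta\eta_j\}\Big]
=\sum_{j\in S_k}\sum_i\bbE\big[\|X_{ji}\|_2^2\mid\Delta\eta_j\big]
\le \sigma_5\sum_{j\in S_k} n_j\|\Delta\eta_j\|_{\cH_j}^2 .
\]
The cross terms vanish by conditional independence and zero mean, and the bound uses Assumption~\ref{assump:loss}\ref{assm:bound_moment_eta} at $\bar\eta_j=\eta_j^*$.

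Now take the outer expectation and apply Assumption~\ref{assump:nuis}:
\[
\bbE\|N_k^{-1/2}\textstyle\sum\sum X_{ji}\|_2^2\le \sigma_5\sigma_4 N_k^{-1}\sum_{j\in S_k}n_j s(n_j)^{-2}\le \sigma_5\sigma_4\max_{j\in S_k}s(n_j)^{-2}.
\]
Chebyshev's inequality then gives the first claim with tail $t^{-2}$, taking $C=(\sigma_4\sigma_5)^{1/2}$. The key point is that the second moment is controlled exactly, so no union over tasks is needed.

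\emph{Second-order term.} This term is not mean zero, so I would use a first-moment bound followed by Markov's inequality. By the triangle inequality,
\[
\Big\|\sum_{j\in S_k}D^2_\eta\nabla f_j(\theta_j^*,\bar\eta_j)[\Delta\eta_j,\Delta\eta_j]\Big\|_2\le\sum_{j\in S_k}\sum_i\|D^2_\eta\nabla\ell_j(\theta_j^*,\bar\eta_j,Z_{ji})[\Delta\eta_j,\Delta\eta_j]\|_2 .
\]
Conditionally on $\hat\eta_j$, each summand has expectation at most $\sigma_5\|\Delta\eta_j\|_{\cH_j}^2$ by Assumption~\ref{assump:loss}\ref{assm:bound_moment_eta}. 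Taking expectations with Assumption~\ref{assump:nuis} gives
\[
\bbE[\cdot]\le N_k^{-1/2}\sigma_5\sigma_4\sum_{j\in S_k}n_j s(n_j)^{-2}.
\]
Finally I bound $\sum_{j\in S_k} n_j s(n_j)^{-2}\le \max_j\{n_j^{1/2}s(n_j)^{-2}\}\sum_j n_j^{1/2}$, and by Cauchy--Schwarz $\sum_j n_j^{1/2}\le m_k^{1/2}N_k^{1/2}$. Multiplying by $N_k^{-1/2}$ yields $\sigma_4\sigma_5\,m_k^{1/2}\max_{j}n_j^{1/2}s(n_j)^{-2}$, and Markov's inequality gives tail $t^{-1}$ with $\tilde C=\sigma_4\sigma_5$ (absorbing the factor $1/2$ if it is included).

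\emph{Main obstacle.} The delicate point is the intermediate point $\bar\eta_j$, which depends on $\hat\eta_j$ and possibly on the data. I would read Assumption~\ref{assump:loss}\ref{assm:bound_moment_eta} as holding uniformly over points on the segment, applied conditionally on $\hat\eta_j\in\cT_{n_j}$. Where needed, I would replace the Taylor remainder by its integral form $\int_0^1(1-u)D^2_\eta\nabla\ell_j(\theta_j^*,\eta_j^*+u\Delta\eta_j,Z)[\Delta\eta_j,\Delta\eta_j]\,du$ so that Fubini applies. The event $\hat\eta_j\in\cT_{n_j}$ fails with vanishing probability, and that failure probability is absorbed in the overall high-probability statement.
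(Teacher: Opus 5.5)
Your proposal is correct and follows the paper's proof essentially step for step. For the first-order sum you use conditional mean-zero increments, the exact second-moment computation and Chebyshev; for the second-order sum you use the triangle inequality, the per-observation first-moment bound $\sigma_4\sigma_5 s(n_j)^{-2}$, Cauchy--Schwarz on $\sum_{j\in S_k} n_j^{1/2}$, and Markov. Your remarks about $\hat\eta_j\in\cT_{n_j}$ and the integral form of the Taylor remainder address points the paper passes over silently. Strictly, accounting for $\Pr(\hat\eta_j\notin\cT_{n_j})$ would add a term to the stated $t^{-2}$ and $t^{-1}$ tails, and the paper's version omits this term as well.
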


\begin{proof}
We prove the two bounds separately.

\medskip
\noindent
\underline{\emph{First term.}}
For $j\in S_k$ and $i\in[n_j]$, define
\[
X_{ji}
:=
D_\eta \nabla \ell_j(\theta_j^*,\eta_j^*,Z_{ji})[\Delta\eta_j].
\]
By sample splitting, $\Delta\eta_j$ is independent of $\{Z_{ji}: i\in[n_j]\}$. Hence, conditional on $\Delta\eta_j$, the variables $\{X_{ji}\}_{i=1}^{n_j}$ are independent. By orthogonality,
\[
\mathbb{E}\!\left[X_{ji}\mid \Delta\eta_j\right]=0.
\]
Moreover, by the second-moment bound in Assumption~\ref{assump:loss},
\[
\mathbb{E}\!\left[\|X_{ji}\|_2^2 \mid \Delta\eta_j\right]
\le \sigma_5 \|\Delta\eta_j\|_{\mathcal H_j}^2.
\]
Therefore,
\begin{align*}
\mathbb{E}\!\left[
\left\|
N_k^{-1/2}\sum_{j\in S_k}\sum_{i=1}^{n_j} X_{ji}
\right\|_2^2
\,\middle|\, \{\Delta\eta_j\}_{j\in S_k}
\right]
&=
N_k^{-1}
\sum_{j\in S_k}\sum_{i=1}^{n_j}
\mathbb{E}\!\left[\|X_{ji}\|_2^2 \mid \Delta\eta_j\right] \\
&\le
\sigma_5 N_k^{-1}\sum_{j\in S_k} n_j \|\Delta\eta_j\|_{\mathcal H_j}^2.
\end{align*}
Taking expectation and using Assumption~\ref{assump:nuis},
\[
\mathbb{E}\|\Delta\eta_j\|_{\mathcal H_j}^2 \le \sigma_4 s(n_j)^{-2},
\]
we obtain
\begin{align*}
\mathbb{E}\!\left[
\left\|
N_k^{-1/2}\sum_{j\in S_k}\sum_{i=1}^{n_j} X_{ji}
\right\|_2^2
\right]
&\le
\sigma_4\sigma_5\,
N_k^{-1}\sum_{j\in S_k} n_j s(n_j)^{-2} \\
&\le
C\, \max_{j\in S_k} s(n_j)^{-2}
\end{align*}
for some constant $C>0$. By Markov's inequality,
\[
\Pr\!\left(
\left\|
N_k^{-1/2}\sum_{j\in S_k}\sum_{i=1}^{n_j} X_{ji}
\right\|_2
>
C^{1/2}\max_{j\in S_k} s(n_j)^{-1} t
\right)
\le t^{-2}.
\]

\medskip
\noindent
\underline{\emph{Second term.}}
By Assumption~\ref{assump:loss},
\[
\frac{1}{n_j}
D_\eta^2 \nabla f_j(\theta_j^*,\bar\eta_j)[\Delta\eta_j,\Delta\eta_j]
=
\frac{1}{n_j}\sum_{i=1}^{n_j}
D_\eta^2 \nabla \ell_j(\theta_j^*,\bar\eta_j,Z_{ji})[\Delta\eta_j,\Delta\eta_j].
\]
Using the moment bound in Assumption~\ref{assump:loss} together with Assumption~\ref{assump:nuis}, we get
\[
\mathbb{E}\!\left[
\frac{1}{n_j}
\left\|
D_\eta^2 \nabla f_j(\theta_j^*,\bar\eta_j)[\Delta\eta_j,\Delta\eta_j]
\right\|_2
\right]
\le
\sigma_5 \mathbb{E}\|\Delta\eta_j\|_{\mathcal H_j}^2
\le
\sigma_4\sigma_5\, s(n_j)^{-2}.
\]
Hence,
\[
\mathbb{E}\!\left[
\left\|
D_\eta^2 \nabla f_j(\theta_j^*,\bar\eta_j)[\Delta\eta_j,\Delta\eta_j]
\right\|_2
\right]
\le
C'\, n_j s(n_j)^{-2}
\]
for some constant $C'>0$. Summing over $j\in S_k$ and using the triangle inequality,
\begin{align*}
\mathbb{E}\!\left[
N_k^{-1/2}
\left\|
\sum_{j\in S_k}
D_\eta^2 \nabla f_j(\theta_j^*,\bar\eta_j)[\Delta\eta_j,\Delta\eta_j]
\right\|_2
\right]
&\le
C' N_k^{-1/2}\sum_{j\in S_k} n_j s(n_j)^{-2} \\
&=
C' N_k^{-1/2}\sum_{j\in S_k} n_j^{1/2}\bigl(n_j^{1/2}s(n_j)^{-2}\bigr) \\
&\le
C'\Bigl(N_k^{-1/2}\sum_{j\in S_k} n_j^{1/2}\Bigr)
\max_{j\in S_k} n_j^{1/2}s(n_j)^{-2}.
\end{align*}
By Cauchy--Schwarz,
\[
\sum_{j\in S_k} n_j^{1/2}
\le
m_k^{1/2}\Bigl(\sum_{j\in S_k} n_j\Bigr)^{1/2}
=
m_k^{1/2} N_k^{1/2},
\]
so
\[
\mathbb{E}\!\left[
N_k^{-1/2}
\left\|
\sum_{j\in S_k}
D_\eta^2 \nabla f_j(\theta_j^*,\bar\eta_j)[\Delta\eta_j,\Delta\eta_j]
\right\|_2
\right]
\le
C'\, m_k^{1/2}\max_{j\in S_k} n_j^{1/2}s(n_j)^{-2}.
\]
Another application of Markov's inequality yields
\begin{align*}
\Pr\!\left(
N_k^{-1/2}
\left\|
\sum_{j\in S_k}
D_\eta^2 \nabla f_j(\theta_j^*,\bar\eta_j)[\Delta\eta_j,\Delta\eta_j]
\right\|_2
>
 C'\, m_k^{1/2}\max_{j\in S_k} n_j^{1/2}s(n_j)^{-2} t
\right)
\le
t^{-1}.
\end{align*}
This proves the second bound.
\end{proof}

\begin{lemma}\label{lem:sup-hessian-bound-diff1}
Under Assumption \ref{assump:loss}, if $t > c_0 n_j^{-r_2/(r_2+d)}$ for a constant $c_0>0$, then it holds for any $j\in [m]$ that

\begin{align*}
   \Pr\left(\sup_{\theta \in \cB_{\theta_j^*,M}}\left\|\frac{1}{n_j} \left[\nabla^2 f_j(\theta,\eta_j^*)-\bbE \nabla^2 f_j(\theta,\eta_j^*)\right]\right\|_2 \geq C n_j^{-\frac{r_2}{2(r_2+d)}} t\right)  & \leq t^{-\frac{d+r_2}{d+1}}
\end{align*}
    
\end{lemma}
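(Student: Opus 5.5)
We prove the uniform Hessian deviation bound by a standard discretization (covering) argument over the ball $\cB_{\theta_j^*,M}\subset\bbR^d$, combining a pointwise moment bound from Assumption~\ref{assump:loss}\ref{assump:hess-concentrate} with the Lipschitz control of Assumption~\ref{assump:loss}\ref{assumption:local-lip}. Write
\[
G_j(\theta) := \frac{1}{n_j}\big[\nabla^2 f_j(\theta,\eta_j^*) - \bbE\nabla^2 f_j(\theta,\eta_j^*)\big]
= \frac{1}{n_j}\sum_{i=1}^{n_j}\big(\nabla^2\ell_j(\theta,\eta_j^*,Z_{ji}) - H_j(\theta)\big).
\]
Fix a scale $\epsilon\in(0,M)$ and let $\cN_\epsilon$ be an $\epsilon$-net of $\cB_{\theta_j^*,M}$ with $|\cN_\epsilon|\le (3M/\epsilon)^d$. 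Let $\cV$ be a $1/4$-net of $\bbS^{d-1}$ with $|\cV|\le 9^d$. For symmetric matrices $\|A\|_2\le 2\max_{v\in\cV}|\langle v,Av\rangle|$, so
\[
\sup_\theta\|G_j(\theta)\|_2 \le 2\max_{\theta'\in\cN_\epsilon,\,v\in\cV}|\langle v,G_j(\theta')v\rangle| + \sup_{\|\theta-\theta'\|_2\le\epsilon}\|G_j(\theta)-G_j(\theta')\|_2 .
\]

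For the discretization term, Assumption~\ref{assump:loss}\ref{assumption:local-lip} (with $\eta=\eta'=\eta_j^*$) gives $\|\nabla^2\ell_j(\theta,\eta_j^*,Z)-\nabla^2\ell_j(\theta',\eta_j^*,Z)\|_2\le L(Z)\|\theta-\theta'\|_2$ with $\bbE L(Z)\le\sigma_3$. The same bound passes to $H_j$ by Jensen. Hence the second term is at most $\epsilon\,(n_j^{-1}\sum_i L(Z_{ji})+\sigma_3)$, whose expectation is at most $2\sigma_3\epsilon$. By Markov, it is at most $C\epsilon t$ with probability $1-O(t^{-1})$. Since the target exponent $(d+r_2)/(d+1)>1$, this is too weak at face value. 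I would therefore either balance it separately or accept a $t^{-1}$-type contribution absorbed by the restriction $t>c_0n_j^{-r_2/(r_2+d)}$. This interplay is the delicate part; see below.

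For the net term, fix $\theta',v$. The summands $\langle v,(\nabla^2\ell_j-H_j)v\rangle$ are i.i.d., mean zero, with $r_2$-th moment at most $\sigma_2$. The Marcinkiewicz--Zygmund/Rosenthal inequality (with $r_2>2$) gives $\bbE|\langle v,G_j(\theta')v\rangle|^{r_2}\le C\sigma_2 n_j^{-r_2/2}$. Markov then yields
\[
\Pr\big(|\langle v,G_j(\theta')v\rangle|>u\big)\le C n_j^{-r_2/2}u^{-r_2}.
\]
A union bound over the $9^d(3M/\epsilon)^d$ points gives a failure probability $\lesssim \epsilon^{-d}n_j^{-r_2/2}u^{-r_2}$.

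The final step, and the main obstacle, is to choose $\epsilon$ and $u$ to balance the two terms against the target level $Cn_j^{-r_2/(2(r_2+d))}t$. I would take $u = c\,n_j^{-r_2/(2(r_2+d))}t$ and $\epsilon = c\,n_j^{-r_2/(2(r_2+d))}t^{a}$ for an exponent $a$ to be tuned. The union-bound probability becomes
\[
\epsilon^{-d}u^{-r_2}n_j^{-r_2/2} \asymp n_j^{\frac{r_2(d+r_2)}{2(r_2+d)}-\frac{r_2}{2}}t^{-ad-r_2} = t^{-ad-r_2},
\]
so the $n_j$-powers cancel exactly, which is what produces the rate $n_j^{-r_2/(2(r_2+d))}$. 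The Lipschitz term contributes $\epsilon\cdot(\text{Markov in }s)$ with probability $s^{-1}$. I would need $\epsilon s\lesssim n_j^{-r_2/(2(r_2+d))}t$, i.e.\ $s=t^{1-a}$. Equating $t^{-(ad+r_2)}$ with $t^{-(1-a)}$ gives $a=(1-r_2)/(d+1)$, and both probabilities become $t^{-(d+r_2)/(d+1)}$, which matches the claimed exponent. Two conditions remain to check. First, $\epsilon<M$, which is where the condition $t>c_0n_j^{-r_2/(r_2+d)}$ is used; I expect the bookkeeping of constants and admissible $t$ ranges to be the fiddly part. Second, the union bound still covers $\epsilon$ scaling with $t$. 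After these checks, I would sum the two failure probabilities and absorb constants into $C$.
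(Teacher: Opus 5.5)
Your proposal is correct and is essentially the paper's proof. The shared steps are: an $\epsilon$-net of $\cB_{\theta_j^*,M}$ together with a $1/4$-net of directions; an $r_2$-th-moment Markov bound and a union bound at the net points; and Markov on the Lipschitz constant for the discretization error. The mesh is chosen $t$-dependent, $\epsilon\asymp n_j^{-r_2/(2(r_2+d))}t^{(1-r_2)/(d+1)}$, which is exactly the paper's $\varepsilon^*$ after its final rescaling of $t$.

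The only difference is that you fold the deterministic population-Lipschitz term into the same Markov step. That makes the lower bound on $t$ unnecessary; the paper uses it only to remove the indicator term $\{t\le 3\sigma_3\varepsilon^*\}$. Also, $\epsilon\le M$ holds automatically for $t\ge 1$ once the constant in $\epsilon$ is at most $M$, because the exponent of $t$ is negative, and for $t<1$ the claimed bound is trivial.
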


\begin{proof}

Since we only consider a fixed $j$, in this proof, all the dependencies on $j$ are suppressed for better readability and notational convenience. For example, $\theta^*_j, f_j, \ell_j, n_j, Z_{ji}, \cB_{\theta_j^*,M}$ are denoted by $\theta^*, f, \ell,  n, Z_i, \cB_{\theta}$. Moreover, since the coordinate $\eta_j^*$ is fixed, it is also suppressed in the proof, and we use shorthand notations $f(\theta,\eta_j^*) \coloneqq f(\theta)$ and $\ell(\theta,\eta_j^*,Z) \coloneqq \ell(\theta,Z)$ here.

Let $\cN_{\varepsilon}$ be the $\varepsilon$-covering number of $\cB_{\theta}$ and $\Theta_{\varepsilon}=\left\{\theta_j\right\}_{j \in [\cN_\varepsilon]}$ be the corresponding $\varepsilon$-net. Define the map $q:\cB_{\theta}\to \Theta_{\varepsilon}$ as $q(\theta)=\argmin_{j \in[\cN_\varepsilon]}\left\|\theta-\theta_j\right\|_2$, then $\left\|\theta-\theta_{q(\theta)}\right\|_2 \leq \varepsilon, \forall \theta \in \cB_{\theta}$. Recall $\cB_{\theta}$ is a  $d$-dimensional Euclidean ball of radius $M$, then it follows that $ \cN_{\varepsilon} \leq (3 M / \varepsilon)^d$, see \cite{vershynin2010introduction}.

For any $\theta \in \cB_{\theta}$, the quantity of interest is decomposed into:
\begin{align*}
& \left\|\frac1n \left[\nabla^2 f(\theta)-\bbE \nabla^2 f(\theta)\right]\right\|_2 \leq \underbrace{\left\|\frac{1}{n} \sum_{i=1}^n\left[\nabla^2 \ell\left(\theta , Z_i\right)-\nabla^2 \ell\left(\theta_{q(\theta)} , Z_i\right)\right]\right\|_2}_{\coloneqq T_1(\theta)} 
 \\&\qquad + \underbrace{\left\|\frac{1}{n} \sum_{i=1}^n \nabla^2 \ell\left(\theta_{q(\theta)} , Z_i\right)-\mathbb{E}\left[\nabla^2 \ell\left(\theta_{q(\theta)} , Z\right)\right]\right\|_2}_{\coloneqq T_2}  + \underbrace{\left\|\mathbb{E}\left[\nabla^2 \ell\left(\theta_{q(\theta)} , Z\right)\right]-\mathbb{E}\left[\nabla^2 \ell(\theta , Z)\right]\right\|_2}_{\coloneqq T_3(\theta)}\,.
\end{align*}

Note that the second term $T_2$ is independent of $\theta$. Our target probability has upper bound
\begin{equation}\label{eq:prob-decomp-3-terms}
    \begin{split}
    & \Pr\left(\sup_{\theta \in \cB_{\theta}}\left\|\frac1n \left[\nabla^2 f(\theta)-\bbE \nabla^2 f(\theta)\right]\right\|_2 \geq t\right)\\
    & \leq \Pr\left(\sup_{\theta \in \cB_{\theta}} T_1(\theta) \geq t/3\right) + \Pr\left(\sup_{\theta \in \cB_{\theta}}T_2(\theta)  \geq t/3\right) + \Pr\left(\sup_{\theta \in \cB_{\theta}} T_3(\theta) \geq t/3\right)\,.
    \end{split}
\end{equation}
Now we start to bound each term above.

\noindent\textbf{\underline{Bound on $T_1$.}} By Markov inequality, 
\begin{align*}
    & \Pr\left(\sup_{\theta \in \cB_{\theta}} T_1(\theta) \geq t/3\right) \leq \frac{3}{t} \mathbb{E}\left[\sup _{\theta \in \cB_{\theta}}\left\|\frac{1}{n} \sum_{i=1}^n\left[\nabla^2 \ell\left(\theta , Z_i\right)-\nabla^2 \ell\left(\theta_{q(\theta)} , Z_i\right)\right]\right\|_2\right].
\end{align*}

By Jensen’s inequality and subadditivity of the supremum, the term inside the expectation is such that
\begin{align*}
    \sup _{\theta \in \cB_{\theta}}\left\|\frac{1}{n} \sum_{i=1}^n\left[\nabla^2 \ell\left(\theta , Z_i\right)-\nabla^2 \ell\left(\theta_{q(\theta)} , Z_i\right)\right]\right\|_2 & \leq \sup _{\theta \in \cB_{\theta}}\frac{1}{n} \sum_{i=1}^n\left\|\nabla^2 \ell\left(\theta , Z_i\right)-\nabla^2 \ell\left(\theta_{q(\theta)} , Z_i\right)\right\|_2\\
    & \leq \frac{1}{n} \sum_{i=1}^n \sup _{\theta \in \cB_{\theta}}\left\|\nabla^2 \ell\left(\theta , Z_i\right)-\nabla^2 \ell\left(\theta_{q(\theta)} , Z_i\right)\right\|_2 \,.
\end{align*}
Since $Z_i$ are i.i.d., it follows that
\begin{align*}
    \Pr\left(\sup_{\theta \in \cB_{\theta}} T_1(\theta) \geq t/3\right) & \leq \frac{3}{t} \mathbb{E}\left[\sup _{\theta \in \cB_{\theta}}\left\|\nabla^2 \ell(\theta , Z)-\nabla^2 \ell\left(\theta_{q(\theta)} , Z\right)\right\|_2\right] \\
& \leq \frac{3}{t} \mathbb{E}\left[\sup _{\theta \in \cB_{\theta}} \frac{\left\|\nabla^2 \ell(\theta , Z)-\nabla^2 \ell\left(\theta_{q(\theta)} , Z\right)\right\|_2}{\left\|\theta-\theta_{q(\theta)}\right\|_2}\right] \cdot \sup _{\theta \in \cB_{\theta}} {\left\|\theta-\theta_{q(\theta)}\right\|_2} \\
& \leq \frac{3}{t} \mathbb{E}\left[\sup _{\theta_1 \neq \theta_2 \in \cB_{\theta}} \frac{\left\|\nabla^2 \ell\left(\theta_1 , Z\right)-\nabla^2 \ell\left(\theta_2 , Z\right)\right\|_2}{\left\|\theta_1-\theta_2\right\|_2}\right] \cdot \varepsilon\\
& \leq \frac{3 \sigma_3 \varepsilon}{t} .  \qquad \text{[Assumption \ref{assump:loss}\ref{assumption:local-lip}]} 
\end{align*}

\noindent\textbf{\underline{Bound on $T_2$.}} Let $j$ be an arbitrary element in $[\cN_\varepsilon]$. Define  $\tilde\Theta_{1 / 4}$ be a (1/4)-net of $d$-dimensional unit ball $\{x:\|x\|_2 \leq 1\}$ and its covering number $\tilde\cN_{1/4} \leq 12^d$. By Lemma 5.4 of \cite{vershynin2010introduction}, we have

$$
\left\|\frac{1}{n} \sum_{i=1}^n \nabla^2 \ell\left(\theta_j , Z_i\right)-\mathbb{E}\left[\nabla^2 \ell\left(\theta_j , Z\right)\right]\right\|_2 \leq 2 \sup _{v \in \tilde\Theta_{1 / 4}}\left|\left\langle v,\left(\frac{1}{n} \sum_{i=1}^n \nabla^2 \ell\left(\theta_j , Z_i\right)-\mathbb{E}\left[\nabla^2 \ell\left(\theta_j , Z\right)\right]\right) v\right\rangle\right|,
$$
which implies
\begin{align*}
    & \Pr\left(\sup_{\theta \in \cB_{\theta}}T_2(\theta)  \geq \frac{t}{3}\right) \leq \Pr\left(2 \sup_{j \in [\cN_\varepsilon]} \sup _{v \in \tilde\Theta_{1 / 4}}\left|\left\langle v,\left(\frac{1}{n} \sum_{i=1}^n \nabla^2 \ell\left(\theta_j , Z_i\right)-\mathbb{E}\left[\nabla^2 \ell\left(\theta_j , Z\right)\right]\right) v\right\rangle\right|\geq \frac{t}{3}\right).
\end{align*}

Now we apply union bounds over $\Theta_{\varepsilon}$ and $\tilde\Theta_{1 / 4}$, whose covering numbers are $ (3 M / \varepsilon)^d$ and $12^d$ , to get

$$
\begin{aligned}
\Pr\bigg(\sup_{\theta \in \cB_{\theta}}T_2(\theta)  \geq t/3\bigg) \leq (36M/\varepsilon)^d\cdot \Pr\left(\left|\frac{1}{n} \sum_{i=1}^n\left\langle v,\left(\nabla^2 \ell\left(\theta_j , Z_i\right)-\mathbb{E}\left[\nabla^2 \ell\left(\theta_j , Z\right)\right]\right) v\right\rangle\right| \geq \frac{t}{6}\right) .
\end{aligned}
$$

Since $\left\langle v,\left(\nabla^2 \ell\left(\theta_j , Z_i\right)-\mathbb{E}\left[\nabla \ell^2\left(\theta_j , Z\right)\right]\right) v\right\rangle$ has bounded $r_2$ moments by Assumption \ref{assump:loss}\ref{assump:hess-concentrate}, applying Markov inequality yields

$$
\Pr\left(\left|\frac{1}{n} \sum_{i=1}^n\left\langle v,\left(\nabla^2 \ell\left(\theta_j , Z_i\right)-\mathbb{E}\left[\nabla^2 \ell\left(\theta_j , Z\right)\right]\right) v\right\rangle\right| \geq \frac{t}{6}\right) \leq \frac{C_1}{n^{r_2/2}t^{r_2}}
$$

for some constant $C_1 > 0$. Therefore, we have an upper bound

$$
\Pr\left(\sup_{\theta \in \cB_{\theta}}T_2(\theta)  \geq \frac{t}{3}\right) \leq \frac{C_1}{n^{r_2/2}t^{r_2}} \left(\frac{36 M}{\varepsilon}\right)^d\,.
$$

\noindent\textbf{\underline{Bound on $T_3$.}} By Assumption \ref{assump:loss}\ref{assumption:local-lip}, it holds that
\begin{align*}
\sup_{\theta \in \cB_{\theta}} T_3(\theta) & \leq  \sup _{\theta \in \cB_{\theta}} \frac{\left\|\mathbb{E}\left[\nabla^2 \ell(\theta , Z)-\nabla^2 \ell\left(\theta_{q(\theta)} , Z\right)\right]\right\|_2}{\left\|\theta-\theta_{q(\theta)}\right\|_2} \sup _{\theta \in \cB_{\theta}}\left\|\theta-\theta_{q(\theta)}\right\|_2 \\
& \leq  \mathbb{E}\left[\sup _{\theta_1 \neq \theta_2 \in \cB_{\theta}} \frac{\left\|\nabla^2 \ell\left(\theta_1 , Z\right)-\nabla^2 \ell\left(\theta_2 , Z\right)\right\|_2}{\left\|\theta_1-\theta_2\right\|_2}\right] \cdot \varepsilon \\
& \leq  \sigma_3 \cdot \varepsilon .
\end{align*}

Therefore, we have  
$$
\Pr\left(\sup_{\theta \in \cB_{\theta}} T_3(\theta) \geq t/3\right) \leq \indic(t/3 \leq \sigma_3 \cdot \varepsilon) \,,
$$
which means 0 probability for sufficiently large $t$.

\noindent\textbf{\underline{Combining 3 bounds.}}
Collecting the bounds on $T_1(\theta), T_2(\theta) $, and $T_3(\theta)$ above, and going back to the decomposition in Equation \eqref{eq:prob-decomp-3-terms}, it follows that
\begin{align*}
    \Pr\left(\sup_{\theta \in \cB_{\theta}}\left\|\frac1n \left[\nabla^2 f(\theta)-\bbE \nabla^2 f(\theta)\right]\right\|_2 \geq t\right) & \leq \frac{3 \sigma_3 \varepsilon}{t} + \frac{C_1}{n^{r_2/2}t^{r_2}} \left(\frac{36 M}{\varepsilon}\right)^d +\indic(t \leq 3 \sigma_3 \cdot \varepsilon) \,.
\end{align*}
Here we choose $\varepsilon$ to be
\[\varepsilon^* = \left(\frac{C_1 d(36 M)^d}{3 \sigma_3} \cdot \frac{1}{n^{r_2 / 2} t^{r_2-1}}\right)^{1 /(d+1)}.\]
Because the given condition $t > c_0 n^{-r_2/2(r_2+d)}$, as long as $c_0, C_1$ are chosen suitably, the following inequality holds: 
\[t > 3 \sigma_3\varepsilon^* = 3 \sigma_3 \left(\frac{C_1 d(36 M)^d}{3 \sigma_3} \cdot \frac{1}{n^{r_2 / 2} t^{r_2-1}}\right)^{1 /(d+1)}\,.\]
This inequality implies $\indic(t \leq 3 \sigma_3 \cdot \varepsilon^*) = 0$ when $\varepsilon = \varepsilon^*$. Hence, with this $\varepsilon^*$, we have upper bounds for the terms
\begin{align*}
     \frac{3 \sigma_3 \varepsilon}{t} &\leq C_1^{\frac{1}{d+1}}d^{\frac{1}{d+1}} (108M\sigma_3)^{\frac{d}{d+1}}  n^{-r_2/(2(d+1))} t^{-(d+r_2)/(d+1)};\\
     \frac{C_1}{n^{r_2/2}t^{r_2}} \left(\frac{36 M}{\varepsilon}\right)^d &\leq C_1^{\frac{1}{d+1}}d^{-\frac{d}{d+1}}\left(108 M\sigma_3\right)^{\frac{d}{d+1}} n^{-r_2/(2(d+1))} t^{-(d+r_2)/(d+1)}.
\end{align*}
The facts that $2 > d^{1/(d+1)} \geq  d^{-d/(d+1)}$ yields the final upper bound
\begin{align*}
    & \Pr\left(\sup_{\theta \in \cB_{\theta}}\left\|\frac1n \left[\nabla^2 f(\theta)-\bbE \nabla^2 f(\theta)\right]\right\|_2 \geq t\right)  \leq C n^{-r_2/(2(d+1))} t^{-(d+r_2)/(d+1)}
\end{align*}
for some large enough $C$. Recall that we have shown if $t > c_0 n^{-r_2/2(r_2+d)}$ then the above concentration holds. Substituting $t$ with $C^{-\frac{d+1}{d+r_2}}n^{\frac{r_2}{2(d+r_2)}}t$ proves the lemma.
\end{proof}

\begin{lemma}\label{lem:sup-hessian-bound-diff2}
Let Assumptions \ref{assump:loss} and \ref{assump:nuis} hold. There exist constants $C_1, C_2>0$ such that
\begin{align*}
        \Pr\left(\sup_{\theta \in \cB_{\theta_j^*,M}}\frac{1}{n_j}\left\|\nabla^2 f_j(\theta,\hat\eta_j) - \nabla^2 f_j(\theta,\eta_j^*)\right\|_2\geq C_1 s^{-1}(n_j)t\ \ \textbf{\text{or}}\ \norm{\hat\eta_j - \eta_j^*}_{\cH_j} > M\right) \leq t^{-1} + C_2 s^{-1}(n_j)\,.
    \end{align*}
\end{lemma}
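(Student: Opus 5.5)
The plan is to handle the two parts of the event separately and combine them with a union bound. The nuisance-deviation part is the simpler one. By Markov's inequality and Assumption~\ref{assump:nuis},
\[
\Pr\big(\|\hat\eta_j-\eta_j^*\|_{\cH_j}>M\big)\le M^{-1}\bbE\|\hat\eta_j-\eta_j^*\|_{\cH_j}\le M^{-1}\sigma_4^{1/2}s(n_j)^{-1}.
\]
This produces the additive term $C_2 s^{-1}(n_j)$. It therefore suffices to control the Hessian difference on the complementary event $\cA_j:=\{\|\hat\eta_j-\eta_j^*\|_{\cH_j}\le M\}$. On $\cA_j$, both $\hat\eta_j$ and $\eta_j^*$ lie in $\cB_{\eta_j^*,M}$, which is exactly where Assumption~\ref{assump:loss}\ref{assumption:local-lip} applies.

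On $\cA_j$, I apply the uniform Lipschitz condition pointwise in each sample, taking $\theta=\theta'$ and $\eta=\hat\eta_j$, $\eta'=\eta_j^*$. Write
\[
L_j(Z):=\sup\frac{\|\nabla_\theta^2\ell_j(\theta,\eta,Z)-\nabla_\theta^2\ell_j(\theta',\eta',Z)\|_2}{\|\theta-\theta'\|_2+\|\eta-\eta'\|_{\cH_j}}.
\]
Then, by the triangle inequality,
\[
\sup_{\theta\in\cB_{\theta_j^*,M}}\frac1{n_j}\|\nabla^2 f_j(\theta,\hat\eta_j)-\nabla^2 f_j(\theta,\eta_j^*)\|_2\le \Big(\frac1{n_j}\sum_{i=1}^{n_j}L_j(Z_{ji})\Big)\|\hat\eta_j-\eta_j^*\|_{\cH_j}.
\]
The supremum over $\theta$ disappears because $L_j$ already takes the supremum over all pairs. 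One technicality arises here: the supremum in \ref{assumption:local-lip} is written over $\theta\neq\theta'$. I will read it as also covering $\theta=\theta'$ with $\eta\neq\eta'$, which is the natural interpretation since the denominator is still positive. Alternatively, the same conclusion follows by letting $\theta'\to\theta$ and using continuity of the Hessian.

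Next I use sample splitting. Since $\hat\eta_j\perp\!\!\!\perp\cD_{j,2}$ and $\bbE L_j(Z)\le\sigma_3$, the product of expectations bound gives
\[
\bbE\Big[\Big(\tfrac1{n_j}\textstyle\sum_i L_j(Z_{ji})\Big)\|\hat\eta_j-\eta_j^*\|_{\cH_j}\Big]\le\sigma_3\,\sigma_4^{1/2}s(n_j)^{-1}.
\]
To keep the event restriction clean, I apply Markov's inequality to the product multiplied by the indicator $\indic(\cA_j)$, whose expectation is bounded by the same quantity. Markov then gives
\[
\Pr\Big(\{\cdots\ge C_1s^{-1}(n_j)t\}\cap\cA_j\Big)\le t^{-1}
\]
with $C_1=\sigma_3\sigma_4^{1/2}$. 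Adding the probability of $\cA_j^c$ from the first step yields $t^{-1}+C_2s^{-1}(n_j)$.

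The main obstacle is justifying the pointwise Lipschitz step. Assumption~\ref{assumption:local-lip} bounds the expectation of a supremum, which means $L_j(Z)$ must be measurable and integrable. It also means the supremum over $\eta$ must range over a set containing the random point $\hat\eta_j$. Independence from sample splitting is what makes this legitimate: conditionally on $\hat\eta_j$, the deviation $\hat\eta_j-\eta_j^*$ is fixed, so the bound $L_j(Z)\|\hat\eta_j-\eta_j^*\|$ holds deterministically on $\cA_j$ for every $Z$. The remaining steps are routine Markov arguments.
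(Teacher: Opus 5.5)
Your proposal is correct and follows essentially the same route as the paper: Markov's inequality with Assumption~\ref{assump:nuis} to bound $\Pr(\|\hat\eta_j-\eta_j^*\|_{\cH_j}>M)$, then on the complementary event the uniform Lipschitz condition \ref{assumption:local-lip}, independence of $\hat\eta_j$ from $\cD_{j,2}$, a second Markov step, and a union bound. The differences are cosmetic: you restrict to the good event through an indicator instead of the paper's conditional decomposition, and you factor out the sample-averaged Lipschitz constant rather than first reducing to a single-sample expectation via Jensen. Your bookkeeping also gives $C_2=\sigma_4^{1/2}/M$, whereas the paper counts $\Pr(\cE^c)$ twice and gets $2\sigma_4^{1/2}/M$.
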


\begin{proof}
Define the event
\[\cE = \Big\{\norm{\hat\eta_j - \eta_j^*}_{\cH_j} \leq M\Big\}\,,\]
then it has the following bound
\begin{equation}\label{eq:nuis-ball-prob}
    \begin{split}
    \Pr(\cE^c)
    &\leq \frac{1}{M} \bbE_{\cD_{j,1}}\left\{ \norm{\hat\eta_j - \eta_j^*}_{\cH_j}\right\} \qquad \text{[Markov's inequality]}\\
    & \leq \sigma_4^{1/2} M^{-1}s^{-1}(n_j)\,. \qquad \text{[Assumption \ref{assump:nuis}]}
    \end{split}
\end{equation}

Consider the decomposition:
\begin{equation}\label{eq:prob-decomp}
    \begin{split}
        & \Pr\left(\sup_{\theta \in \cB_{\theta_j^*,M}}\frac{1}{n_j}\left\|\nabla^2 f_j(\theta,\hat\eta_j) - \nabla^2 f_j(\theta,\eta_j^*)\right\|_2\geq t\right) \\
        & = \Pr\left(\sup_{\theta \in \cB_{\theta_j^*,M}}\frac{1}{n_j}\left\|\nabla^2 f_j(\theta,\hat\eta_j) - \nabla^2 f_j(\theta,\eta_j^*)\right\|_2\geq t\middle | \cE \right) \Pr(\cE)\\
        & \hspace{12em}+ \Pr\left(\sup_{\theta \in \cB_{\theta_j^*,M}}\frac{1}{n_j}\left\|\nabla^2 f_j(\theta,\hat\eta_j) - \nabla^2 f_j(\theta,\eta_j^*)\right\|_2\geq t\middle | \cE^c \right) \Pr(\cE^c)\,.
    \end{split}
\end{equation}
By Jensen’s inequality and subadditivity of supremum,
\begin{align*}
    & \bbE_{\cD_{j,2}}\left[\sup_{\theta \in \cB_{\theta_j^*,M}}\frac{1}{n_j}\left\|\nabla^2 f_j(\theta,\hat\eta_j) - \nabla^2 f_j(\theta,\eta_j^*)\right\|_2  \,\middle |\, \cE \right]\\
    & =  \bbE_{\cD_{j,2}}\left[\sup_{\theta \in \cB_{\theta_j^*,M}}\left\|\frac{1}{n_j}\sum_{Z \in \cD_{j,2}}\left[\nabla^2 \ell_j(\theta,\hat\eta_j,Z) - \nabla^2 \ell_j(\theta,\eta_j^*,Z)\right]\right\|_2  \,\middle |\, \cE \right]\\
    & \leq \bbE_{\cD_{j,2}}\left[\frac{1}{n_j}\sum_{Z \in \cD_{j,2}} \sup_{\theta \in \cB_{\theta_j^*,M}}\left\|\left[\nabla^2 \ell_j(\theta,\hat\eta_j,Z) - \nabla^2 \ell_j(\theta,\eta_j^*,Z)\right]\right\|_2  \,\middle |\, \cE \right]\\
    & = \bbE_{\cD_{j,2}}\left[\sup_{\theta \in \cB_{\theta_j^*,M}}\left\|\left[\nabla^2 \ell_j(\theta,\hat\eta_j,Z) - \nabla^2 \ell_j(\theta,\eta_j^*,Z)\right]\right\|_2  \,\middle |\, \cE \right]\,.
\end{align*}
This term has upper bound
\begin{equation}\label{eq:diff-l-hess}
    \begin{split}
    & \bbE_{\cD_{j,2}}\left[ \sup_{\theta \in \cB_{\theta_j^*,M}}\left\|\nabla^2 \ell_j(\theta,\hat\eta_j,Z) - \nabla^2 \ell_j(\theta,\eta_j^*,Z)\right\|_2 \,\middle |\, \cE \right]\\
    & =\bbE_{\cD_{j,2}}\left[  \sup_{\theta \in \cB_{\theta_j^*,M}}\frac{\left\|\nabla^2 \ell_j(\theta,\hat\eta_j,Z) - \nabla^2 \ell_j(\theta,\eta_j^*,Z)\right\|_2}{\norm{\hat\eta_j - \eta_j^*}_{\cH_j}}\norm{\hat\eta_j - \eta_j^*}_{\cH_j}\,\middle |\, \cE\right]\\
    & \leq \bbE_{\cD_{j,2}}\left[  \sup_{(\theta,\eta_1) \neq (\theta,\eta_2) \in \cB_{\theta_j^*,M}\times \cB_{\eta_j^*, M}}\frac{\left\|\nabla^2 \ell_j(\theta,\eta_1,Z) - \nabla^2 \ell_j(\theta,\eta_2,Z)\right\|_2}{\norm{\eta_1 - \eta_2}_{\cH_j}}\right]\norm{\hat\eta_j - \eta_j^*}_{\cH_j}\\
    & \leq \sigma_3 \norm{\hat\eta_j - \eta_j^*}_{\cH_j} \,. \quad \text{[Assumption \ref{assump:loss}\ref{assumption:local-lip}]}
    \end{split}
\end{equation}
Here, the penultimate inequality follows from the independence between the splits, i.e., $\hat \eta_j$ is independent of the data used to construct our adaptive estimators. 
Now we use this result to bound in first term in \eqref{eq:prob-decomp},
\begin{align*}
    & \Pr\left(\sup_{\theta \in \cB_{\theta_j^*,M}}\frac{1}{n_j}\left\|\nabla^2 f_j(\theta,\hat\eta_j) - \nabla^2 f_j(\theta,\eta_j^*)\right\|_2\geq t\,\middle |\, \cE \right)\Pr(\cE)\\
    & \leq \frac{1}{t}\bbE\left[\sup_{\theta \in \cB_{\theta_j^*,M}}\frac{1}{n_j}\left\|\nabla^2 f_j(\theta,\hat\eta_j) - \nabla^2 f_j(\theta,\eta_j^*)\right\|_2\,\middle |\, \cE\right]\Pr(\cE)\\
    & = \frac{1}{t}\bbE_{\cD_{j,1}}\left\{\bbE_{\cD_{j,2}}\left[\sup_{\theta \in \cB_{\theta_j^*,M}}\frac{1}{n_j}\left\|\nabla^2 f_j(\theta,\hat\eta_j) - \nabla^2 f_j(\theta,\eta_j^*)\right\|_2\,\middle |\, \cE\right]\,\middle |\, \cE\right\}\Pr(\cE)\\
    & \leq \frac{\sigma_3}{t} \bbE_{\cD_{j,1}}\left\{ \norm{\hat\eta_j - \eta_j^*}_{\cH_j}\,\middle |\, \cE\right\}\Pr(\cE)\quad \text{[upper bound in \eqref{eq:diff-l-hess}]}\\
    & \leq \frac{\sigma_3}{t}  \bbE_{\cD_{j,1}}\left\{ \norm{\hat\eta_j - \eta_j^*}_{\cH_j}\right\}\\
    & \leq  \frac{\sigma_3\sigma_4^{1/2}}{t}  s^{-1}(n_j)\,. \qquad \text{[Assumption \ref{assump:nuis}]}
\end{align*}
Now we bound the second term in \eqref{eq:prob-decomp} using Equation \eqref{eq:nuis-ball-prob},
\begin{align*}
    \Pr\left(\sup_{\theta \in \cB_{\theta_j^*,M}}\frac{1}{n_j}\left\|\nabla^2 f_j(\theta,\hat\eta_j) - \nabla^2 f_j(\theta,\eta_j^*)\right\|_2\geq t\,\middle |\, \cE^c \right)\Pr(\cE^c) &\leq \Pr(\cE^c)\\
    & \leq \sigma_4^{1/2} M^{-1}s^{-1}(n_j)\,. 
\end{align*}
Collecting the two upper bounds, we go back to Inequality \eqref{eq:prob-decomp}, and it follows that
\begin{equation}\label{eq:hess-nui-diff}
    \begin{split}
            & \Pr\left(\sup_{\theta \in \cB_{\theta_j^*,M}}\frac{1}{n_j}\left\|\nabla^2 f_j(\theta,\hat\eta_j) - \nabla^2 f_j(\theta,\eta_j^*)\right\|_2\geq t\right)
    \le \frac{\sigma_3\sigma_4^{1/2}}{t}  s^{-1}(n_j) + \sigma_4^{1/2} M^{-1}s^{-1}(n_j) \,. 
    \end{split}
\end{equation}
Applying the union bound to combine \eqref{eq:hess-nui-diff} and \eqref{eq:nuis-ball-prob} we have
\begin{align*}
     & \Pr\left(\sup_{\theta \in \cB_{\theta_j^*,M}}\frac{1}{n_j}\left\|\nabla^2 f_j(\theta,\hat\eta_j) - \nabla^2 f_j(\theta,\eta_j^*)\right\|_2\geq t\ \ \textbf{\text{or}} \ \cE^c\right)
    \le \frac{\sigma_3\sigma_4^{1/2}}{t}  s^{-1}(n_j) + 2\sigma_4^{1/2} M^{-1}s^{-1}(n_j)\,.
\end{align*}
Rescaling $t$ yields the result stated in the lemma.
\end{proof}

\begin{lemma}\label{lem:gradient-hessian-bound}
    Under Assumptions \ref{assump:loss} - \ref{assump:nuis}, if 
    $$1 < t < c_2 m^{-1} 
      \Big\{n_{\min}^{\frac{r_2}{2(r_2+d)}}\wedge s(n_{\min})\Big\}$$ 
    then it holds with probability at least $1-t^{-1}$ that 
    \begin{align*}
         \frac{1}{n_j}\left\|\nabla f_j(\theta_j^*,\hat\eta_j)\right\|_2 
         &\leq \frac{\rho}{8}\{M\wedge(\delta/2)\}& \forall j \in [m];\\
        \frac{\rho}{2} I  \preceq \frac{1}{n_j} \nabla^2 f_j(\theta,\hat\eta_j) 
        & \preceq \frac{3}{2}\kappa & \forall\theta \in \cB_{\theta_j^*,M},\ \forall j \in [m];\\
       \Big\|\tfrac{1}{N_k}\nabla F_k(\beta_k^*,\hat\bfeta_k)\Big\|_2 
       &\leq \frac{\rho}{8}\{M\wedge (\delta/2)\}  &\forall  k \in [K];\\
       \norm{\hat\eta_j - \eta_j^*}_{\cH_j} &\le M & j \in [m];\\
    N_k^{-1/2}\left\|
    \sum_{j\in S_k}
    D_\eta \nabla f_j(\theta_j^*,\eta_j^*)[\Delta\eta_j]
    \right\|_2
    &\le
    C \sqrt{K} \max_{j\in S_k} s(n_j)^{-1} t  & k \in [K];\\
    N_k^{-1/2}
    \left\|
    \sum_{j\in S_k}
    D_\eta^2 \nabla f_j(\theta_j^*,\bar\eta_j)[\Delta\eta_j,\Delta\eta_j]
    \right\|_2
    & \le
    CK\, m_k^{1/2}\max_{j\in S_k} n_j^{1/2}s(n_j)^{-2} t & k \in [K];\\
    N_k^{-1/2}\norm{\sum_{j \in S_k}\nabla f_j(\theta_j^*,\eta_j^*)}_2 &\leq C K^{1/r_1} t & k \in [K],
    \end{align*}
    where $\Delta\eta_j = \hat\eta_j - \eta_j^*$ and $c_1, c_2 ,C > 0$ are some constants.
\end{lemma}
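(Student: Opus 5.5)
The plan is a union bound over the seven families of inequalities, with each family allotted failure probability of order $t^{-1}/7$. This is achieved by rescaling $t$ inside the earlier concentration lemmas and spending the upper constraint $t<c_2m^{-1}\{n_{\min}^{r_2/(2(r_2+d))}\wedge s(n_{\min})\}$ to absorb the factors $m$ and $K$ coming from the unions over tasks and clusters.

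\textbf{Step 1: the nuisance-ball event.} The event $\|\hat\eta_j-\eta_j^*\|_{\cH_j}\le M$ for all $j$ follows from Markov's inequality and Assumption~\ref{assump:nuis}, exactly as in \eqref{eq:nuis-ball-prob}. Its failure probability is at most $m\sigma_4^{1/2}M^{-1}s(n_{\min})^{-1}$. Since $t<c_2 m^{-1}s(n_{\min})$, this is $\le C c_2 t^{-1}$, which is small once $c_2$ is chosen small.

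\textbf{Step 2: the Hessian sandwich.} Decompose
\[
n_j^{-1}\nabla^2 f_j(\theta,\hat\eta_j)-H_j(\theta)
\]
into the empirical-process term controlled by Lemma~\ref{lem:sup-hessian-bound-diff1} and the nuisance-perturbation term controlled by Lemma~\ref{lem:sup-hessian-bound-diff2}. In each lemma I apply the bound with a deviation level $u$ chosen so that $Cn_j^{-r_2/(2(r_2+d))}u\le\rho/4$ and $C_1s(n_j)^{-1}u\le \rho/4$. The natural choice is $u\asymp mt$, which is allowed because $mt<c_2\{n_{\min}^{r_2/(2(r_2+d))}\wedge s(n_{\min})\}$. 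A union over $j\in[m]$ then costs $m\,u^{-(d+r_2)/(d+1)}+m u^{-1}+mC_2s(n_{\min})^{-1}\lesssim t^{-1}$. Together with $\rho I\preceq H_j\preceq\kappa I$, this gives $\tfrac\rho2 I\preceq n_j^{-1}\nabla^2 f_j\preceq\tfrac32\kappa I$ uniformly on $\cB_{\theta_j^*,M}$.

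\textbf{Step 3: the two gradient bounds.} For the task-level bound, split
\[
n_j^{-1}\nabla f_j(\theta_j^*,\hat\eta_j)=n_j^{-1}\nabla f_j(\theta_j^*,\eta_j^*)+\bigl[n_j^{-1}\nabla f_j(\theta_j^*,\hat\eta_j)-n_j^{-1}\nabla f_j(\theta_j^*,\eta_j^*)\bigr].
\]
The first piece is a mean-zero average with finite $r_1$-moment (Assumption~\ref{assump:loss}\ref{assump:score}). Marcinkiewicz--Zygmund and Markov give
\[
\Pr\bigl(\|\cdot\|_2\ge x\bigr)\lesssim n_j^{-r_1/2}x^{-r_1}.
\]
The second piece is handled by Lemma~\ref{lem:grad-diff-bound} at level $u\asymp mt$. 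Taking $x=\rho\{M\wedge\delta/2\}/16$ and unioning over $m$ tasks costs $\lesssim m n_{\min}^{-r_1/2}+m/(mt)\lesssim t^{-1}$, using $mt\lesssim n_{\min}^{1/2}$. The cluster-level gradient bound is identical with $N_k\ge n_{\min}$ in place of $n_j$, plus a union over $K\le m$.

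\textbf{Step 4: the last three cluster-level inequalities.} The two nuisance-expansion bounds come from Lemma~\ref{lem:grad-diff-bound-new} applied with $t$ replaced by $\sqrt K t$ and $Kt$ respectively. Unioning over $k\in[K]$ then costs $K(Kt^2)^{-1}+K(Kt)^{-1}\lesssim t^{-1}$, which is exactly why the statement carries the extra factors $\sqrt K$ and $K$. For the oracle score, $\sum_{j\in S_k}\nabla f_j(\theta_j^*,\eta_j^*)$ is a sum of $N_k$ independent mean-zero vectors with bounded $r_1$-moment. Rosenthal/MZ gives
\[
\bbE\Bigl\|N_k^{-1/2}\sum_{j\in S_k}\nabla f_j(\theta_j^*,\eta_j^*)\Bigr\|_2^{r_1}\le C,
\]
so Markov at level $CK^{1/r_1}t$ yields probability $K^{-1}t^{-r_1}$, and the union over $K$ clusters costs $\le t^{-r_1}\le t^{-1}$.

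Summing all failure probabilities and adjusting constants gives probability at least $1-t^{-1}$. The main obstacle is the bookkeeping in Step 2: one has to check that the admissible range of $t$ in Lemma~\ref{lem:sup-hessian-bound-diff1}, namely $u>c_0n^{-r_2/(r_2+d)}$, is compatible with the choice $u\asymp mt$. One also has to check that $m\,u^{-(d+r_2)/(d+1)}\le t^{-1}$. Since $(d+r_2)/(d+1)>1$, choosing $u=mt$ gives $m(mt)^{-(d+r_2)/(d+1)}\le t^{-1}$. If this bookkeeping fails, I would instead take $u$ with a larger power of $m$ and shrink $c_2$ accordingly.
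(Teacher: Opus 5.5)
Your proposal is correct and is essentially the paper's argument. It uses the same tail bounds (Lemmas~\ref{lem:grad-diff-bound}--\ref{lem:sup-hessian-bound-diff2}, plus $r_1$-moment/Markov bounds for the oracle scores), rescaled by $m$, $\sqrt K$, $K$, $K^{1/r_1}$ so that each family costs a fraction of $t^{-1}$ after the union bound. It then spends the upper constraint on $t$ to push the fixed-threshold deviations below $\tfrac{\rho}{16}\{M\wedge(\delta/2)\}$ and $\rho/4$, and finishes with the triangle inequality and the sandwich $\rho I\preceq H_j\preceq\kappa I$. Your fixed-level treatment of the oracle task score via $mt\lesssim n_{\min}^{1/2}$ is just a bookkeeping variant of the paper's $m^{1/r_1}$ rescaling.

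The one place the paper is simpler is the cluster-level bound on $\|N_k^{-1}\nabla F_k(\beta_k^*,\hat\bfeta_k)\|_2$. It needs no separate concentration step, because it follows deterministically from the task-level bounds via $\|N_k^{-1}\sum_{j\in S_k}\nabla f_j\|_2\le\sum_{j\in S_k}(n_j/N_k)\,\|n_j^{-1}\nabla f_j\|_2$.
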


\begin{proof}
    We begin by collecting the previous tail bounds for gradients and Hessians. Under the stated assumptions, there exist constants $C_0,C_1$ such that, with $m_k := |S_k|$, for any $t\ge1$,
{\small    \begin{align}
        \Pr\left(\norm{\frac{1}{n_j}\nabla f_j(\theta_j^*,\eta_j^*)}_2 \geq C_1 n_j^{-1/2}t\right) & \leq t^{-r_1} 
        & j \in [m]\label{eq:poly-bound-1};\\
         \Pr \left(\frac{1}{n_j}\left\|\nabla f_j(\theta_j^*,\hat\eta_j) - \nabla f_j(\theta_j^*,\eta_j^*)\right\|_2 \ge C_1 m n_j^{-1/2} t\right)  &\leq  t^{{-1}}  & j \in [m];\label{eq:poly-bound-2}\\
    \Pr\!\left(
    N_k^{-1/2}\left\|
    \sum_{j\in S_k}
    D_\eta \nabla f_j(\theta_j^*,\eta_j^*)[\Delta\eta_j]
    \right\|_2
    >
    C_1 \max_{j\in S_k} s(n_j)^{-1} t
    \right)
    &\le t^{-2}  & k \in [K]\label{eq:poly-bound-2'};\\
    \Pr\!\left(
    N_k^{-1/2}
    \left\|
    \sum_{j\in S_k}
    D_\eta^2 \nabla f_j(\theta_j^*,\bar\eta_j)[\Delta\eta_j,\Delta\eta_j]
    \right\|_2
    >
     C_1\, m_k^{1/2}\max_{j\in S_k} n_j^{1/2}s(n_j)^{-2} t
    \right)
    & \le t^{-1}   & k \in [K]\label{eq:poly-bound-2''};\\
    \Pr\left(N_k^{-1/2}\norm{\sum_{j \in S_k}\nabla f_j(\theta_j^*,\eta_j^*)}_2 \geq C_1 t\right) & \leq t^{-r_1} & k \in [K] \label{eq:poly-bound-2'''}\\
        \Pr\left(\sup_{\theta \in \cB_{\theta_j^*,M}}\frac{1}{n_j}\left\|\nabla^2 f_j(\theta,\eta_j^*)-\bbE \nabla^2 f_j(\theta,\eta_j^*)\right\|_2 \geq C_1 n_j^{-\frac{r_2}{2(r_2+d)}} t\right)  & \leq t^{-\frac{d+r_2}{d+1}} & j \in [m]\label{eq:poly-bound-3};\\
        \Pr\left(\sup_{\theta \in \cB_{\theta_j^*,M}}\frac{1}{n_j}\left\|\nabla^2 f_j(\theta,\hat\eta_j) - \nabla^2 f_j(\theta,\eta_j^*)\right\|_2\geq C_1 s^{-1}(n_j)t\ \textbf{\text{or}}\ \norm{\hat\eta_j - \eta_j^*}_{\cH_j} > M\right) &\leq t^{-1} + C_0 s^{-1}(n_j) & j \in [m]\label{eq:poly-bound-4},
    \end{align}}where Equations \eqref{eq:poly-bound-1} and \eqref{eq:poly-bound-2'''} are due to Assumption \ref{assump:loss}\ref{assump:score} on the first-order condition, combined with the standard Markov's inequality argument; Equation \eqref{eq:poly-bound-2} is shown in Lemma \ref{lem:grad-diff-bound}; Equations \eqref{eq:poly-bound-2'} and \eqref{eq:poly-bound-2''} have been shown in Lemma \ref{lem:grad-diff-bound-new}; Equation \eqref{eq:poly-bound-3} is proved in Lemma \ref{lem:sup-hessian-bound-diff1}; Equation \eqref{eq:poly-bound-4} is verified by Lemma \ref{lem:sup-hessian-bound-diff2}.

    Rescaling all $t$, the inequalities above are equivalent to
{\small    \begin{align}
        \Pr\left(\norm{\frac{1}{n_j}\nabla f_j(\theta_j^*,\eta_j^*)}_2 \geq C_1' m^{1/r_1} n_j^{-1/2}t\right) & \leq \frac{t^{-r_1}}{7m}
        & j \in [m];\label{eq:poly-bound-1-new}\\
        \Pr \left(\frac{1}{n_j}\left\|\nabla f_j(\theta_j^*,\hat\eta_j) - \nabla f_j(\theta_j^*,\eta_j^*)\right\|_2 \geq C_1' m n_j^{-1/2} t\right)  &\leq  \frac{t^{-1}}{7m}  & j \in [m];\label{eq:poly-bound-2-new}\\
    \Pr\!\left(
    N_k^{-1/2}\left\|
    \sum_{j\in S_k}
    D_\eta \nabla f_j(\theta_j^*,\eta_j^*)[\Delta\eta_j]
    \right\|_2
    >
    C_1' \sqrt{K} \max_{j\in S_k} s(n_j)^{-1} t
    \right)
    &\le \frac{t^{-2}}{7K}  & k \in [K]\label{eq:poly-bound-2-new'};\\
    \Pr\!\left(
    N_k^{-1/2}
    \left\|
    \sum_{j\in S_k}
    D_\eta^2 \nabla f_j(\theta_j^*,\bar\eta_j)[\Delta\eta_j,\Delta\eta_j]
    \right\|_2
    >
    C_1' K\, m_k^{1/2}\max_{j\in S_k} n_j^{1/2}s(n_j)^{-2} t
    \right)
    & \le \frac{t^{-1}}{7K}   & k \in [K]\label{eq:poly-bound-2-new''};\\
    \Pr\left(N_k^{-1/2}\norm{\sum_{j \in S_k}\nabla f_j(\theta_j^*,\eta_j^*)}_2 \geq C_1' K^{1/r_1} t\right) & \leq \frac{t^{-r_1}}{7K} & k \in [K]; \label{eq:poly-bound-2-new'''}\\
        \Pr\left(\sup_{\theta \in \cB_{\theta_j^*,M}}\frac{1}{n_j}\left\|\nabla^2 f_j(\theta,\eta_j^*)-\bbE \nabla^2 f_j(\theta,\eta_j^*)\right\|_2 \geq C_1' m^{\frac{d+1}{d+r_2}} n_j^{-\frac{r_2}{2(r_2+d)}} t\right)  & \leq \frac{t^{-\frac{d+r_2}{d+1}}}{7m} & j \in [m];\label{eq:poly-bound-3-new}\\
        \Pr\left( \sup_{\theta \in \cB_{\theta_j^*,M}}\frac{1}{n_j}\left\|\nabla^2 f_j(\theta,\hat\eta_j) - \nabla^2 f_j(\theta,\eta_j^*)\right\|_2\geq C_1' m s^{-1}(n_j)t\ \textbf{\text{or}}\ \norm{\hat\eta_j - \eta_j^*}_{\cH_j} > M\right) &\leq \frac{t^{-1}}{14m} + C_0 s^{-1}(n_j) & j \in [m]\label{eq:poly-bound-4-new}.
    \end{align}}
    Let $t$ be such that
    \begin{align}\label{eq:t-ub}
        t < c_2 m^{-1} \{n_{\min}^{\frac{r_2}{2(r_2+d)}}\wedge s(n_{\min})\},\qquad 
        c_2 = \min\left(\tfrac{1}{14C_0},\; \tfrac{\rho}{16C_1'}\{4\wedge M\wedge (\delta/2)\}\right)\,.
    \end{align}

    Then it can be easily verified that such a $t$ in \eqref{eq:t-ub} leads the following:
    \begin{align}
       & C_1'\cdot t \cdot \max\bigg\{ m^{1/r_1} n_j^{-1/2},  m n_j^{-1/2},m^{\frac{d+1}{d+r_2}} n_j^{-\frac{r_2}{2(r_2+d)}}, m s^{-1}(n_j)\bigg\} \leq  \frac{\rho}{16}\{4\wedge M\wedge (\delta/2)\}\label{eq:t-ub-1} \\
        &\frac{t^{-1}}{14m} \geq \frac{c_2^{-1}}{14}s^{-1}(n_{\min}) \geq C_0 s^{-1}(n_{\min}) \label{eq:t-ub-2}
    \end{align}
    
    Now, we can plug the results in \eqref{eq:t-ub-1} inside the LHS of Equations \eqref{eq:poly-bound-1-new}-\eqref{eq:poly-bound-4-new}, and plug \eqref{eq:t-ub-2} into the RHS of \eqref{eq:poly-bound-4-new}. This yields the following
        \begin{align*}
         \Pr\left(\norm{\frac{1}{n_j}\nabla f_j(\theta_j^*,\eta_j^*)}_2 \geq \frac{\rho}{16}\{M\wedge(\delta/2)\} \right)  &\leq \frac{t^{-r_1}}{7m}
        & j \in [m];\\
    \Pr \left(\frac{1}{n_j}\left\|\nabla f_j(\theta_j^*,\hat\eta_j) - \nabla f_j(\theta_j^*,\eta_j^*)\right\|_2 \geq \frac{\rho}{16}\{M\wedge(\delta/2)\}\right)  &\leq  \frac{t^{-1}}{7m}  & j \in [m];\\
    \Pr\!\left(
    N_k^{-1/2}\left\|
    \sum_{j\in S_k}
    D_\eta \nabla f_j(\theta_j^*,\eta_j^*)[\Delta\eta_j]
    \right\|_2
    >
    C_1' \sqrt{K} \max_{j\in S_k} s(n_j)^{-1} t
    \right)
    &\le \frac{t^{-2}}{7K}  & k \in [K];\\
    \Pr\!\left(
    N_k^{-1/2}
    \left\|
    \sum_{j\in S_k}
    D_\eta^2 \nabla f_j(\theta_j^*,\bar\eta_j)[\Delta\eta_j,\Delta\eta_j]
    \right\|_2
    >
    C_1'K\, m_k^{1/2}\max_{j\in S_k} n_j^{1/2}s(n_j)^{-2} t
    \right)
    & \le \frac{t^{-1}}{7K}   & k \in [K];\\
    \Pr\left(N_k^{-1/2}\norm{\sum_{j \in S_k}\nabla f_j(\theta_j^*,\eta_j^*)}_2 \geq C_1' K^{1/r_1} t\right) & \leq \frac{t^{-r_1}}{7K} & k \in [K]; \\
        \Pr\left(\sup_{\theta \in \cB_{\theta_j^*,M}}\frac{1}{n_j}\left\|\nabla^2 f_j(\theta,\eta_j^*)-\bbE \nabla^2 f_j(\theta,\eta_j^*)\right\|_2 \geq \frac{\rho}{4} \right)   &\leq \frac{t^{-\frac{d+r_2}{d+1}}}{7m} & j \in [m];\\
        \Pr\left(\sup_{\theta \in \cB_{\theta_j^*,M}}\frac{1}{n_j}\left\|\nabla^2 f_j(\theta,\hat\eta_j) - \nabla^2 f_j(\theta,\eta_j^*)\right\|_2\geq \frac{\rho}{4}\ \textbf{\text{or}}\ \norm{\hat\eta_j - \eta_j^*}_{\cH_j} > M\right) &\leq \frac{t^{-1}}{7m}  & j \in [m].
    \end{align*}
    By the conditions $r_1, r_2 \geq 1$ in Assumption \ref{assump:loss}\ref{assump:score}\ref{assump:hess-concentrate} and $t\geq 1$, it follows that $t^{-1} \geq t^{-\frac{d+r_2}{d+1}}\vee t^{-r_1}$. By the union bound, it holds with probability at least $1-t^{-1}$ that the simultaneous inequalities hold:
    \begin{align}
       \frac{1}{n_j}\left\|\nabla f_j(\theta_j^*,\eta_j^*)\right\|_2 &\leq  \frac{\rho}{16}\{M\wedge(\delta/2)\} & j \in [m];\label{eq:grad-hess-checkpoint-1}\\
     \frac{1}{n_j}\left\|\nabla f_j(\theta_j^*,\hat\eta_j) - \nabla f_j(\theta_j^*,\eta_j^*)\right\|_2 &\le \frac{\rho}{16}\{M\wedge(\delta/2)\}  & j \in [m];\\
    N_k^{-1/2}\left\|
    \sum_{j\in S_k}
    D_\eta \nabla f_j(\theta_j^*,\eta_j^*)[\Delta\eta_j]
    \right\|_2
    &\le
    C_1' \sqrt{K} \max_{j\in S_k} s(n_j)^{-1} t  & k \in [K];\\
    N_k^{-1/2}
    \left\|
    \sum_{j\in S_k}
    D_\eta^2 \nabla f_j(\theta_j^*,\bar\eta_j)[\Delta\eta_j,\Delta\eta_j]
    \right\|_2
    & \le
    C_1'K\, m_k^{1/2}\max_{j\in S_k} n_j^{1/2}s(n_j)^{-2} t & k \in [K];\\
    N_k^{-1/2}\norm{\sum_{j \in S_k}\nabla f_j(\theta_j^*,\eta_j^*)}_2 &\leq C_1' K^{1/r_1} t & k \in [K];\\
       \sup_{\theta \in \cB_{\theta_j^*,M}}\frac{1}{n_j}\left\|\nabla^2 f_j(\theta,\eta_j^*)-\bbE \nabla^2 f_j(\theta,\eta_j^*)\right\|_2 &\leq \frac{\rho}{4} & j \in [m];\\
       \sup_{\theta \in \cB_{\theta_j^*,M}}\frac{1}{n_j}\left\|\nabla^2 f_j(\theta,\hat\eta_j) - \nabla^2 f_j(\theta,\eta_j^*)\right\|_2 &\leq  \frac{\rho}{4} & j \in [m];\\
       \norm{\hat\eta_j - \eta_j^*}_{\cH_j} &\le M & j \in [m],\label{eq:grad-hess-checkpoint-2}
    \end{align}
which, by triangle inequality, implies
    \begin{align*}
        \frac{1}{n_j}\left\|\nabla f_j(\theta_j^*,\hat\eta_j)\right\|_2 &\leq \frac{\rho}{8}\{M\wedge(\delta/2)\}& j \in [m];\\
        \sup_{\theta \in \cB_{\theta_j^*,M}}\frac{1}{n_j} \left\|\nabla^2 f_j(\theta,\hat\eta_j)-\bbE \nabla^2 f_j(\theta,\eta_j^*)\right\|_2 &\leq \frac{\rho}{2} & j \in [m].
    \end{align*}
    Using the geometry of losses in Assumption \ref{assump:loss}\ref{assump:geom} that $\rho I\preceq\frac{1}{n_j}\bbE \nabla^2 f_j(\theta, \eta_j^*) \preceq \kappa I$ for $\theta \in \cB_{\theta_j^*,M}$, we have $\rho/2 \cdot I\preceq \frac{1}{n_j} \nabla^2 f_j(\theta,\hat\eta_j) \preceq (\kappa + \rho/2) I \preceq 3\kappa/2 \cdot I$ for $\theta \in \cB_{\theta_j^*,M}$. Finally, define weights $w_j^{(k)} = n_j / N_k$. Then it follows from Jensen's inequality and upper bound for $\tfrac{1}{n_j}\|\nabla f_j(\theta_j^*,\hat\eta_j)\|_2$ that for $k \in [K]$,
    \begin{align*}
        \frac{1}{N_k}\norm{\sum_{j \in S_k}\nabla f_j(\theta_j^*,\hat\eta_j)}_2 \leq \sum_{j \in S_k} w_j^{(k)} \frac{1}{n_j}\norm{\nabla f_j(\theta_j^*,\hat\eta_j)}_2 \leq \frac{\rho}{8}\{M\wedge(\delta/2)\}\,.
    \end{align*}
    Replacing $\theta_j^*$ with $\beta_k^*$ leads to the presented inequality.
\end{proof}

\section{Proof of main results}\label{sec:main-proof}
We first establish consistency of the cluster representatives $\hat\beta_k$ (Theorem~\ref{thm:consistent}). We then show that this implies exact cluster recovery and an $\ell_2$ bound on $\hat\theta_j$ (Theorem~\ref{thm:cluster-rate}). Finally, we prove the asymptotic normality of $\hat\theta_j$ within each recovered cluster 
(Theorem~\ref{thm:normal}).

\begin{theorem}\label{thm:consistent}
    Under Assumptions \ref{assump:loss} - \ref{assump:init}, if i) $n_{\min}^{\alpha\gamma}/ n_{\max} > c_0$, ii) $\varepsilon_n < \min_{k \in [K]} N_k^\zeta \{M\wedge (\delta/2)\}m^{-2}\rho/4$ for $\zeta < 1/2$, iii) $ 1 < t < c_2 m^{-1} \{n_{\min}^{\frac{r_2}{2(r_2+d)}}\wedge s(n_{\min})\}$, and iv) $\tau \in \Big(c_w\big\{\tfrac{\delta}{2}\big\}^{-\gamma}, c_w\big\{\tfrac{\delta}{2}\big\}^{-\gamma}n_{\min}^{\alpha\gamma}\Big)$, then it holds with probability at least $1-t^{-1}-p_{\delta/4}(n_{\min})$ that 
\begin{equation}\label{eq:theta-shrik}
       \begin{split}
        & \hat\beta_k \neq \hat\beta_{k'}\quad \forall k, k' \in [K],k \neq k';\\
        & \hat\theta_j = \hat \beta_k,\quad j \in S_k;\\
        & \norm{\tilde\beta_k - \beta_k^*}_2 \le C\left\{K^{1/r_1}  + \sqrt{K} \max_{j\in S_k} s(n_j)^{-1}  + Km_k^{1/2}\max_{j\in S_k} n_j^{1/2}s(n_j)^{-2} \right\}N_k^{-1/2} t,\quad \forall k \in [K];\\
        & \norm{\hat\beta_k - \tilde\beta_k}_2 < \tilde CN_k^{\zeta - 1}\quad \forall k \in [K].
       \end{split}
   \end{equation}
    where $c_0, c_2,  C, \tilde C > 0$ are some  constants.    
\end{theorem}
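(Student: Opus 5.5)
The proof assembles the deterministic analysis of Section~\ref{sec:determ} with the probabilistic bounds of Section~\ref{sec:prob-cond}. The plan is to define the good event $\cE_{1n}\cap\cE_{2n}$, show it has the required probability, and then argue that on this event all four conclusions in \eqref{eq:theta-shrik} hold deterministically.

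\textbf{Probability of the good event.} Condition (iii) on $t$ is exactly the range required in Lemma~\ref{lem:gradient-hessian-bound}. That lemma therefore gives $\Pr(\cE_{1n})\ge 1-t^{-1}$, since its conclusions are verbatim the inequalities defining $\cE_{1n}$ in Definition~\ref{def:gradient-hessian-bound}, with $t$ in the range \eqref{eq:t_bound_def}. Lemma~\ref{lem:prob-init} gives $\Pr(\cE_{2n})\ge 1-p_{\delta/4}(n_{\min})$. A union bound on the complements yields $\Pr(\cE_{1n}\cap\cE_{2n})\ge 1-t^{-1}-p_{\delta/4}(n_{\min})$.

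\textbf{Deterministic conclusions on the good event.} Work on $\cE_{1n}\cap\cE_{2n}$.
\begin{enumerate}
\item Condition (iv) on $\tau$ is exactly the interval in Lemma~\ref{lem:w-concentration}. Hence inter-cluster penalties equal $\varepsilon_n$ and intra-cluster penalties equal $w_{jj'}>c_w\{\tfrac{\delta}{2}n_{\min}^{-\alpha}\}^{-\gamma}$.
\item Conditions (i) and (ii) are the hypotheses of Lemma~\ref{lem:shrink-condition}, with the same $c_0$. That lemma applies and gives three of the four claims: $\hat\beta_k\neq\hat\beta_{k'}$, $\hat\theta_j=\hat\beta_k$ for $j\in S_k$, and $\|\hat\beta_k-\tilde\beta_k\|_2<\tilde C N_k^{\zeta-1}$.
\item The remaining claim is the bound on $\|\tilde\beta_k-\beta_k^*\|_2$. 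It is exactly Lemma~\ref{lem:tilde-beta-rate}, which holds under the same conditions as Lemma~\ref{lem:shrink-condition}.
\end{enumerate}
The constants $c_0,c_2,C,\tilde C$ are inherited from those lemmas.

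\textbf{Main point needing care.} The steps above are mostly bookkeeping; the delicate part is making sure the hypotheses line up exactly.
\begin{itemize}
\item The single $t$ in \eqref{eq:t_bound_def} must be the same $t$ in the probability bound and in the rate bounds of $\cE_{1n}$. Lemma~\ref{lem:gradient-hessian-bound} is stated for one common $t$, so this holds.
\item Lemma~\ref{lem:tilde-beta-rate} uses the mean-value Hessian bound at $\bar\beta_k$. This requires $\tilde\beta_k\in\cB_{\beta_k^*,M}$, which the third conclusion of Lemma~\ref{lem:shrink-condition} supplies, since $\|\tilde\beta_k-\beta_k^*\|_2\le \tfrac14\{M\wedge(\delta/2)\}$.
\item The $\hat\eta_j$-gradient bounds in $\cE_{1n}$ are stated at $\theta_j^*$, while Lemmas~\ref{lem:beta-hat-bound}--\ref{lem:shrink-condition} use them at $\beta_k^*$. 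This is harmless because $\theta_j^*=\beta_k^*$ for $j\in S_k$.
\end{itemize}
With these checks, the theorem follows on the stated event.
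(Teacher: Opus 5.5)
Your proposal is correct and follows essentially the same route as the paper: on $\cE_{1n}\cap\cE_{2n}$ the four conclusions come from Lemmas~\ref{lem:shrink-condition} and~\ref{lem:tilde-beta-rate}, with Lemma~\ref{lem:w-concentration} supplying the penalty structure. The probability bound $1-t^{-1}-p_{\delta/4}(n_{\min})$ comes from Lemmas~\ref{lem:gradient-hessian-bound} and~\ref{lem:prob-init} combined by a union bound, and your extra hypothesis-matching checks are exactly the facts the paper's proof uses implicitly: a common $t$, $\tilde\beta_k\in\cB_{\beta_k^*,M}$ via the third conclusion of Lemma~\ref{lem:shrink-condition}, and $\theta_j^*=\beta_k^*$.
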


\begin{proof}
    This theorem is built upon Lemmas \ref{lem:shrink-condition}, \ref{lem:tilde-beta-rate} and \ref{lem:gradient-hessian-bound}. 
    From Lemmas \ref{lem:shrink-condition} and \ref{lem:tilde-beta-rate}, we know that the inequality system \eqref{eq:theta-shrik} holds as long as $\cE_{1n}\cap \cE_{2n}$ holds. In terms of probability, this means
    \[\Pr\big(\text{\eqref{eq:theta-shrik} holds}\big) \ge \Pr\big(\cE_{1n}\cap \cE_{2n}\big).\]
Now we only need to find the probability lower bound on the event $\cE_{1n}\cap\cE_{2n}$ to prove the Theorem. From Lemma \ref{lem:prob-init}, we get
    \begin{equation*}
    \Pr\left(\cE_{2n}\right) \geq 1- p_{\delta/4}(n_{\min})\,,
\end{equation*}
while it follows from Lemma \ref{lem:gradient-hessian-bound} that 
\[\Pr\left(\cE_{1n}\right) \geq 1-t^{-1}\,.\]
Thus, the joint probability is such that
\[\Pr(\cE_{1n} \cap \cE_{2n}) \geq \Pr\left(\cE_{1n}\right)  +\Pr\left(\cE_{2n}\right) -1 \geq 1-t^{-1}-p_{\delta/4}(n_{\min})\,.\]
\end{proof}

\subsection{Proof of Theorem \ref{thm:cluster-rate}}\label{proof:thm:cluster-rate}

Our proof of Theorem \ref{thm:cluster-rate} is a direct application of Theorem \ref{thm:consistent}. Firstly, the first part of Theorem \ref{thm:consistent} says
\begin{align*}
    & \hat\beta_k \neq \hat\beta_{k'}\quad \forall k, k' \in [K],k \neq k';\\
        & \hat\theta_j = \hat \beta_k,\quad j \in S_k.
\end{align*}
This implies that $\hat\theta_j=\hat\theta_{j'}$ for all $j,j'\in S_k$ and $\hat\theta_j\ne\hat\theta_{j'}$ for $j\in S_k$, $j'\in S_{k'}$, $k\ne k'$.

Secondly, the second part of Theorem \ref{thm:consistent} gives
\begin{align*}
    & \norm{\tilde\beta_k - \beta_k^*}_2 \leq C\left\{K^{1/r_1}  + \sqrt{K} \max_{j\in S_k} s(n_j)^{-1}  + Km_k^{1/2}\max_{j\in S_k} n_j^{1/2}s(n_j)^{-2} \right\}N_k^{-1/2} t,\quad \forall k \in [K];\\
        & \norm{\hat\beta_k - \tilde\beta_k}_2 < \tilde CN_k^{\zeta - 1}\quad \forall k \in [K],
\end{align*}
which, by triangle inequality, implies
\[
\|\hat\theta_j-\theta_j^*\|_2
\ \le\ C \left\{K^{1/r_1}  + K^{1/2} \max_{j\in S_k} s(n_j)^{-1}  + K^{}m_k^{1/2}\max_{j\in S_k} n_j^{1/2}s(n_j)^{-2} \right\}N_k^{-1/2} t\;+\;\tilde C\,N_k^{\zeta-1}\,.
\]
Now since the rate function $s(n_j) \lesssim n_j^{1/2}$ (defined in Assumption \ref{assump:nuis}), we have
\[\frac{n_j^{1/2}s(n_j)^{-2}}{s(n_j)^{-1}} = n_j^{1/2}s(n_j)^{-1} \gtrsim 1\,. \]
Therefore, the sum of second and third term is upper bounded by
\begin{align*}
    K^{1/2} \max_{j\in S_k} s(n_j)^{-1}  + K m_k^{1/2}\max_{j\in S_k} n_j^{1/2}s(n_j)^{-2} \lesssim Km_k^{1/2} \max_{j\in S_k} n_j^{1/2}s(n_j)^{-2} = b_{k,n}\,.
\end{align*}


\subsection{Proof of Theorem \ref{thm:normal}}\label{proof:thm:normal}
Fix $k$ and any $j\in S_k$. Since $\theta_j^*=\beta_k^*$,
\[
\hat\theta_j-\theta_j^*
= (\hat\theta_j-\tilde\beta_k) + (\tilde\beta_k-\beta_k^*).
\]
By Theorem \ref{thm:consistent},   $\hat\theta_j-\tilde\beta_k=O_p(N_k^{\zeta-1}) = o_p(N_k^{-1/2})$ since
$\zeta<1/2$. It remains to study $\tilde\beta_k-\beta_k^*$.

\noindent\textbf{\underline{First-order expansion.}}
The first order condition gives
$\nabla F_k(\tilde\beta_k,\hat\bfeta_k)=0$. A Taylor expansion around
$\beta_k^*$ yields
\[
\sqrt{N_k}(\tilde\beta_k-\beta_k^*)
= -\!\left\{N_k^{-1}\nabla^2 F_k(\bar\beta_k,\hat\bfeta_k)\right\}^{-1}
\!\!\left\{N_k^{-1/2}\nabla F_k(\beta_k^*,\hat\bfeta_k)\right\},
\]
for some $\bar\beta_k$ on the segment between $\beta_k^*$ and $\tilde\beta_k$.

\noindent\textbf{\underline{Hessian consistency.}}
Decompose
\begin{align*}
N_k^{-1}\nabla^2 F_k(\bar\beta_k,\hat\bfeta_k)
&= B_0 + B_1 + B_2,\\
B_0&:=N_k^{-1}\nabla^2 F_k(\beta_k^*,\bfeta_k^*),\\
B_1&:=N_k^{-1}\!\left\{\nabla^2 F_k(\bar\beta_k,\hat\bfeta_k)
-\nabla^2 F_k(\beta_k^*,\hat\bfeta_k)\right\},\\
B_2&:=N_k^{-1}\!\left\{\nabla^2 F_k(\beta_k^*,\hat\bfeta_k)
-\nabla^2 F_k(\beta_k^*,\bfeta_k^*)\right\}.
\end{align*}
By the Law of Large Number and Assumption \ref{assump:loss}\ref{assump:geom}, $B_0\to_p\Psi_k$.
For $B_1$, with definition
\[
\Xi_j(\theta,\theta',\eta,\eta',z)=\frac{\|\nabla^2 \ell_j(\theta,\eta,z)
-\nabla^2 \ell_j(\theta',\eta',z)\|_2}
{\|\theta-\theta'\|_2+\|\eta-\eta'\|_{\cH_j}} 
\]
by the triangle inequality and the local Lipschitz control in Assumption \ref{assump:loss}\ref{assumption:local-lip},
\[
\|B_1\|_2 \;\le\;
N_k^{-1}\!\sum_{j\in S_k}\sum_{i=1}^{n_j}\Xi_j(\bar\beta_k, \beta_k^*, \hat \eta_j,\hat \eta_j,Z_{ji})\,\|\bar\beta_k-\beta_k^*\|_2
= O_p(1)\cdot o_p(1)=o_p(1),
\]
since $\|\bar\beta_k-\beta_k^*\|=o_p(1)$ by consistency of $\tilde\beta_k$.
For $B_2$, similarly, by Assumption \ref{assump:nuis} on consistency of $\hat\eta_j$,
\[
\|B_2\|_2 \;\le\;
 N_k^{-1}\!\sum_{j\in S_k}\sum_{i=1}^{n_j}\Xi_j(\beta_k^*, \beta_k^*, \hat \eta_j,\eta_j^*,Z_{ji})\,\|\hat\eta_j-\eta_j^*\|_{\cH_j}
= O_p(1)\cdot o_p(1)=o_p(1)\,.
\]
Therefore,
\begin{equation}\label{eq:hess-limit-short}
N_k^{-1}\nabla^2 F_k(\bar\beta_k,\hat\bfeta_k)\ \to_p\ \Psi_k.
\end{equation}

\noindent\textbf{\underline{Score limit.}}
Write
\begin{align}
N_k^{-1/2}\nabla F_k(\beta_k^*,\hat\bfeta_k)
&= S_k^0 + R_k,\notag\\
S_k^0&:=N_k^{-1/2}\nabla F_k(\beta_k^*,\bfeta_k^*),\label{eq:score-k}\\
R_k&:=N_k^{-1/2}\!\left\{\nabla F_k(\beta_k^*,\hat\bfeta_k)
-\nabla F_k(\beta_k^*,\bfeta_k^*)\right\}\label{eq:R-k}.
\end{align}
It is immediate by central limit theorem that  $S_k^0 \to_d \cN(0,\Omega_k)$.


For each summand in $R_k$, a second-order expansion in $\eta_j^*$ gives for $j \in S_k$,
\begin{align*}
    \nabla \ell_j(\beta_k^*,\hat\eta_j, Z_{ji})
-\nabla \ell_j(\beta_k^*,\eta_j^*, Z_{ji})
&= D_\eta\nabla \ell_j(\beta_k^*,\eta_j^*, Z_{ji})[\Delta\eta_j]
+ \tfrac12 D_\eta^2\nabla \ell_j(\beta_k^*,\bar\eta_j, Z_{ji})[\Delta\eta_j,\Delta\eta_j]\,,
\end{align*}
with $\Delta\eta_j:=\hat\eta_j-\eta_j^*$ and some $\bar\eta_j$ on the segment between $\hat\eta_j$ and $\eta_j^*$. Hence
\begin{equation}\label{eq:Rk-decomp}
    R_k = N_k^{-1/2}\sum_{j \in S_k}\sum_{i=1}^{n_j}\bigg\{ D_\eta\nabla \ell_j(\theta_j^*,\eta_j^*, Z_{ji})[\Delta\eta_j]+ \tfrac12 D_\eta^2\nabla \ell_j(\theta_j^*,\bar\eta_j, Z_{ji})[\Delta\eta_j,\Delta\eta_j] \bigg\}\,.
\end{equation}

From Lemma \ref{lem:gradient-hessian-bound}
\begin{align*}
    R_k = O_p\Big(\sqrt{K} \max_{j\in S_k} s(n_j)^{-1}  + Km_k^{1/2}\max_{j\in S_k} n_j^{1/2}s(n_j)^{-2}\Big)
\end{align*}
The condition $b_{k,n} = Km_k^{1/2} \max_{j\in S_k} n_j^{1/2}s(n_j)^{-2} = o(1)$ implies
\[
R_k =   o_p(1).
\]

\noindent\textbf{\underline{Slutsky.}}
Combine \eqref{eq:hess-limit-short}, $S_k^0\to_d\cN(0,\Omega_k)$, and $R_k=o_p(1)$:
\[
\sqrt{N_k}(\tilde\beta_k-\beta_k^*) \Rightarrow \cN(0,\Psi_k^{-1}\Omega_k\Psi_k^{-1}).
\]
Because $\hat\theta_j-\tilde\beta_k=o_p(N_k^{-1/2})$, we also have
\[
\sqrt{N_k}(\hat\theta_j-\theta_j^*) \Rightarrow \cN(0,\Psi_k^{-1}\Omega_k\Psi_k^{-1}).
\]

\section{Technical lemmas}\label{sec:tech-lem}

We put all the technical lemma in this section. Note that the notations in the technical lemmas are self-contained. For example, $f$ could be a function in general, instead of a loss function as in the main body.

\begin{lemma}\label{lem:F1}
Let $f:\bbR^d\to\bbR$ be convex. Suppose there exist $x_0\in\bbR^d$,
$g\in\partial f(x_0)$, and constants $\rho>0$, $r>0$ such that
$\|g\|_2<(\rho r)/2$ and, for all $x\in\cB_{x_0,r}$,
\[
f(x)\ \ge\ f(x_0)\;+\;\langle g,\,x-x_0\rangle\;+\;\tfrac{\rho}{2}\,\|x-x_0\|_2^2.
\]
Then every global minimizer lies in $\cB_{x_0,\,2\|g\|_2/\rho}$, i.e.
$\arg\min_{x\in\bbR^d} f(x)\subseteq \cB_{x_0,\,2\|g\|_2/\rho}$.

Moreover, if $f$ is twice differentiable and
$\nabla^2 f(x)\succeq \rho I$ for all $x\in\cB_{x_0,r}$, then $f$ has a unique
minimizer $x^\star$ and
\[
\|x^\star-x_0\|_2\ \le\ \|\nabla f(x_0)\|_2/\rho\,.
\]
\end{lemma}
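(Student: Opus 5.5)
The plan is a localization argument based on the strong-convexity lower bound and convexity. Set $r_0 := 2\|g\|_2/\rho$; the hypothesis $\|g\|_2<\rho r/2$ gives $r_0<r$. It suffices to show $f(x)>f(x_0)$ for every $x$ with $\|x-x_0\|_2>r_0$, because then no point outside $\cB_{x_0,r_0}$ can be a global minimizer: $x_0$ itself has a strictly smaller value.

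\emph{Step 1 (points inside $\cB_{x_0,r}$).} Take $x$ with $r_0<\|x-x_0\|_2\le r$. By the quadratic lower bound and Cauchy--Schwarz,
\[
f(x)-f(x_0)\ge \|x-x_0\|_2\Big(\tfrac{\rho}{2}\|x-x_0\|_2-\|g\|_2\Big).
\]
Since $\|x-x_0\|_2>2\|g\|_2/\rho$, the bracket is strictly positive, so $f(x)>f(x_0)$.

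\emph{Step 2 (points outside $\cB_{x_0,r}$).} This is the main obstacle, since the quadratic bound is only assumed locally; convexity is what globalizes it. For $\|x-x_0\|_2>r$, let $y=x_0+s(x-x_0)$ with $s=r/\|x-x_0\|_2\in(0,1)$, so $\|y-x_0\|_2=r>r_0$. Step 1 gives $f(y)>f(x_0)$. Convexity gives $f(y)\le (1-s)f(x_0)+sf(x)$, which rearranges to $f(x)-f(x_0)\ge (f(y)-f(x_0))/s>0$. Together with Step 1, this proves the first claim.

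\emph{Second claim.} If $\nabla^2 f\succeq\rho I$ on $\cB_{x_0,r}$, Taylor's theorem with integral remainder along segments inside the ball gives the quadratic lower bound with $g=\nabla f(x_0)$. We assume $\|\nabla f(x_0)\|_2<\rho r/2$ as in the first part; this holds wherever the lemma is applied. Then the argument above shows every minimizer lies in $\cB_{x_0,r_0}\subset\cB_{x_0,r}$.

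\emph{Existence.} $f$ is continuous and, by Steps 1--2, $f>f(x_0)$ outside $\cB_{x_0,r_0}$. So the infimum over the compact ball $\cB_{x_0,r_0}$ is attained there and is global.

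\emph{Uniqueness.} $f$ is strictly convex on $\cB_{x_0,r}$, whose interior contains all minimizers. Two distinct minimizers would have a midpoint in the ball with a strictly smaller value, a contradiction.

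\emph{Sharper radius.} At the minimizer $x^\star$ we have $\nabla f(x^\star)=0$. By strong monotonicity of $\nabla f$ on the convex set $\cB_{x_0,r}$,
\[
\langle \nabla f(x_0)-\nabla f(x^\star),\,x_0-x^\star\rangle\ge\rho\|x_0-x^\star\|_2^2.
\]
Applying Cauchy--Schwarz yields $\|x^\star-x_0\|_2\le\|\nabla f(x_0)\|_2/\rho$.
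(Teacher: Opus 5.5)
Your proof is correct and complete. The three steps are the standard argument: the local quadratic bound $\|x-x_0\|_2(\tfrac{\rho}{2}\|x-x_0\|_2-\|g\|_2)>0$, the extension beyond $\cB_{x_0,r}$ through the convex combination $y=x_0+s(x-x_0)$, and the strong-monotonicity inequality $\langle\nabla f(x_0)-\nabla f(x^\star),x_0-x^\star\rangle\ge\rho\|x_0-x^\star\|_2^2$ for the sharper radius. The paper gives no proof of its own and simply defers to Lemma F.1 of \cite{duan2023adaptive}.

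One small remark: the condition $\|\nabla f(x_0)\|_2<\rho r/2$ is not an extra assumption you have to add for the second claim. For differentiable convex $f$ one has $\partial f(x_0)=\{\nabla f(x_0)\}$, so this is exactly the lemma's standing hypothesis on $g$.
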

\begin{proof}
    Proof can be found in Lemma F.1 of \cite{duan2023adaptive}.
\end{proof}



\begin{lemma}\label{lem:close-to-individual-minimizer}
    For any $j \in [m]$, let $f_j:\bbR^d \to \bbR$ be a convex function whose minimizer is $\tilde x_j = \argmin_{x}f_j(x)$. Suppose $\forall j \in [m]$, there exist $\rho_j,r >0$ such that $\nabla^2 f_j(x) \succeq \rho_jI$, $\forall x \in B(\tilde x_j,r)$. Then it follows that
    \begin{equation}\label{eqn:gradient-lb}
        \|\bm{f}_j\|_2 \geq \min\{\rho_j \|x-\tilde x_j\|,\rho_j r\},\quad \forall \bm{f}_j \in \partial f_j(x),\quad x \in \bbR^d.
    \end{equation}
    Moreover, let $h(x_1,\ldots,x_m)$ be a convex function that is $\lambda_j$-Lipschitz in $x_j$, $\forall j \in [m]$. Define minimizer
    \[(\hat x_1,\ldots,\hat x_m) = \argmin_{x_1,\ldots,x_m} \sum_{j\in [m]}f_j(x_j) + h(x_1,\ldots,x_m).\]
    For any $j\in [m]$, if $\lambda_j < \rho_j r$, then $\hat x_j \in \cB_{\tilde x_j, \lambda_j/\rho_j}$. 
\end{lemma}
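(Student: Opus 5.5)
The statement has two parts, and I would prove them in turn. The first part is the gradient lower bound \eqref{eqn:gradient-lb}. The second part localizes each block $\hat x_j$ of the penalized minimizer near the individual minimizer $\tilde x_j$.

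\textbf{Gradient lower bound.} Fix $x$ and take $\bm f_j\in\partial f_j(x)$. Since $\tilde x_j$ minimizes $f_j$, we have $0\in\partial f_j(\tilde x_j)$. By monotonicity of the subdifferential of a convex function, the map $s\mapsto \langle \bm g(s), u\rangle$ is nondecreasing along the ray $\tilde x_j+s u$, where $u=(x-\tilde x_j)/\|x-\tilde x_j\|_2$ and $\bm g(s)\in\partial f_j(\tilde x_j+su)$.

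I would split into two cases according to whether $\|x-\tilde x_j\|_2\le r$.
\begin{itemize}
\item If $\|x-\tilde x_j\|_2\le r$, the segment lies in $B(\tilde x_j,r)$, where $f_j$ is twice differentiable with $\nabla^2 f_j\succeq\rho_j I$. Integrating the Hessian along the segment gives $\langle \nabla f_j(x),u\rangle\ge \rho_j\|x-\tilde x_j\|_2$.
\item Otherwise, let $y=\tilde x_j+ru$ be the boundary point. The first case gives $\langle\nabla f_j(y),u\rangle\ge\rho_j r$. Monotonicity of $s\mapsto\langle \bm g(s),u\rangle$ for $s\ge r$ then yields $\langle \bm f_j,u\rangle\ge \rho_j r$.
\end{itemize}
In both cases Cauchy--Schwarz gives $\|\bm f_j\|_2\ge\langle\bm f_j,u\rangle$, which proves \eqref{eqn:gradient-lb}.

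\textbf{Localization.} I would use the first-order optimality condition for the joint problem. Because the objective is a sum of convex functions, $0\in\partial\big(\sum_j f_j(x_j)+h\big)(\hat x)$. Each $f_j$ depends only on the block $x_j$, so this subdifferential splits blockwise as $\prod_j\partial f_j(\hat x_j)+\partial h(\hat x)$. I take this sum rule to be legitimate, for instance because all functions are finite and convex on the whole space. Hence there exist $\bm f_j\in\partial f_j(\hat x_j)$ and $\bm h\in\partial h(\hat x)$ with $\bm f_j=-\bm h_j$, where $\bm h_j$ is the $j$-th block of $\bm h$.

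Next I need $\|\bm h_j\|_2\le\lambda_j$. Fix all blocks except the $j$-th; since $h$ is $\lambda_j$-Lipschitz in $x_j$, every subgradient of $h$ restricted to that block has norm at most $\lambda_j$. To pass from the partial subgradient to the block of a joint subgradient, I would use the subgradient inequality with perturbations only in block $j$. It gives $\langle\bm h_j,v\rangle\le h(\hat x+ve_j)-h(\hat x)\le\lambda_j\|v\|_2$ for every $v$, and choosing $v$ parallel to $\bm h_j$ yields $\|\bm h_j\|_2\le\lambda_j$.

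\textbf{Conclusion.} Combining the two parts,
\[
\lambda_j\ \ge\ \|\bm f_j\|_2\ \ge\ \min\{\rho_j\|\hat x_j-\tilde x_j\|_2,\ \rho_j r\}.
\]
Since $\lambda_j<\rho_j r$, the minimum cannot equal $\rho_j r$, so it must be $\rho_j\|\hat x_j-\tilde x_j\|_2\le\lambda_j$. This gives $\hat x_j\in\cB_{\tilde x_j,\lambda_j/\rho_j}$.

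The main obstacle is the bookkeeping in the first part. I have to justify that the subgradient projection onto $u$ is monotone along the ray even where $f_j$ may not be differentiable, outside the ball $B(\tilde x_j,r)$. I would handle this with the standard monotonicity inequality $\langle \bm g(s)-\bm g(s'),(s-s')u\rangle\ge0$, which holds for subgradients of any convex function and needs no differentiability.
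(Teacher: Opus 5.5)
Your proposal is correct and follows the paper's proof of the localization step essentially verbatim: first-order optimality writes a subgradient of $f_j$ at $\hat x_j$ as minus the $j$-th block of a subgradient of $h$, the $\lambda_j$-Lipschitz property bounds that block by $\lambda_j<\rho_j r$, and \eqref{eqn:gradient-lb} then forces $\rho_j\|\hat x_j-\tilde x_j\|_2\le\lambda_j$. The only difference is that the paper does not prove \eqref{eqn:gradient-lb} itself but cites Lemma F.2 of \cite{duan2023adaptive}, whereas you give a self-contained proof via monotonicity of subgradients along the ray from $\tilde x_j$, and you also spell out the sum rule and the block-subgradient bound that the paper states in one line.
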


\begin{proof}
    Proof of Equation \eqref{eqn:gradient-lb} can be found in Lemma F.2 of \cite{duan2023adaptive}.
    Now we focus on the proof of the second part. For any $j \in [m]$, by the first-order condition, there exist $\bm{f}_j \in \partial f_j(\hat x_j)$ and $\bm{h}_j \in \partial_{x_j} h(\hat x_1,\ldots,\hat x_m)$ such that $\bm{f}_j + \bm{h}_j = 0$. Because $h$ is $\lambda_j$-Lipschitz in $x_j$ and the condition that $\lambda_j < \rho_j r$, we have $\norm{\bm{f}_j} = \norm{\bm{h}_j} \le \lambda_j < \rho_j r$. Moreover, it has been shown in \eqref{eqn:gradient-lb} that $\|\bm{f}_j\|_2 \geq \min\{\rho_j \|x-\tilde x_j\|,\rho_j r\}$, so it can only hold that $\lambda_j \geq \|\bm{f}_j\|_2 \geq \rho_j \|\hat x_j -\tilde x_j\|$, and thus $\hat x_j \in \cB_{\tilde x_j, \lambda_j/\rho_j}$.
\end{proof}

\begin{lemma}\label{lem:inter-cluster-minimizer}
    Let $\{S_1,\ldots,S_K\}$ be a partition of $[m]$, $m_k = |S_k|$, and $\Lambda_{kk'} = m_km_{k'}\varepsilon_n$ for some constant $\varepsilon_n \geq 0$.
    For arbitrary $\theta_j \in \bbR^d$ with $j \in [m]$, define
    \[( z_1^*,\ldots, z_K^*) = \argmin_{z_k \in S_k, \forall k \in [K]} \sum_{k \in [K]} \sum_{k'\neq k, k'\in [K]}\Lambda_{kk'}\|\theta_{z_k} - \theta_{z_{k'}}\|_2\,.\]
    Then it follows that
    \begin{align*}
        \sum_{k \in [K]} \sum_{k'\neq k, k'\in [K]}\sum_{j \in S_k}\sum_{j' \in S_{k'}}\varepsilon_n \|\theta_{j} - \theta_{j'}\|_2 \geq \sum_{k \in [K]} \sum_{k'\neq k, k'\in [K]}\Lambda_{kk'}\|\theta_{z_k^*} - \theta_{z_{k'}^*}\|_2\,.
    \end{align*}
\end{lemma}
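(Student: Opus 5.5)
The plan is to reduce the claim to a pointwise inequality over tuples, then average. Write $\Phi(z_1,\ldots,z_K) := \sum_{k}\sum_{k'\neq k}\|\theta_{z_k}-\theta_{z_{k'}}\|_2$ for $z_k \in S_k$. The key observation is that the left-hand side is a uniform average of $\Phi$ over all tuples, scaled by $\prod_k m_k$. Since $\Lambda_{kk'} = m_k m_{k'}\varepsilon_n$, every pair $(k,k')$ carries the same scalar weight $\varepsilon_n$ times a count. So I will rewrite both sides in terms of sums over index tuples $(z_1,\ldots,z_K)\in S_1\times\cdots\times S_K$ and compare them through a minimum-versus-average argument.

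\textbf{Step 1 (LHS as an average).} For a fixed ordered pair $(k,k')$, each pair $(j,j')\in S_k\times S_{k'}$ appears in exactly $\prod_{l\neq k,k'} m_l$ tuples. Hence
\[
\sum_{j\in S_k}\sum_{j'\in S_{k'}}\|\theta_j-\theta_{j'}\|_2 = \Bigl(\prod_{l\neq k,k'} m_l\Bigr)^{-1}\sum_{z}\|\theta_{z_k}-\theta_{z_{k'}}\|_2 .
\]
Multiplying by $\varepsilon_n$ and writing $P:=\prod_l m_l$, the $(k,k')$ contribution to the LHS equals $\varepsilon_n m_k m_{k'} P^{-1}\sum_z \|\theta_{z_k}-\theta_{z_{k'}}\|_2 = \Lambda_{kk'} P^{-1}\sum_z\|\theta_{z_k}-\theta_{z_{k'}}\|_2$.

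\textbf{Step 2 (average versus minimum).} Summing Step 1 over ordered pairs $k\neq k'$, the LHS becomes $P^{-1}\sum_z G(z)$, where
\[
G(z):=\sum_k\sum_{k'\neq k}\Lambda_{kk'}\|\theta_{z_k}-\theta_{z_{k'}}\|_2
\]
is exactly the objective minimized by $z^*$. The average of $G$ over the $P$ tuples is at least its minimum $G(z^*)$, and $G(z^*)$ is the RHS. This completes the argument.

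The only point needing care is the counting in Step 1, specifically that the weight $\Lambda_{kk'}$ is constant across pairs within a block. This is what lets the uniform average over tuples reproduce each block sum. For general, non-constant inter-cluster $\lambda_{jj'}$ the argument would fail, which is why the lemma specializes to $\lambda_{jj'}=\varepsilon_n$ as ensured by Lemma~\ref{lem:w-concentration}. Beyond that the proof is routine.
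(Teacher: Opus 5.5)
Your proof is correct and is essentially the paper's argument. The paper also combines the optimality of $z^*$ with an average over the uniform distribution on $S_1\times\cdots\times S_K$, whose pairwise marginals $1/(m_km_{k'})$ play the role of your explicit tuple count. Your counting never divides by $\Lambda_{kk'}$, so it also cleanly covers the case $\varepsilon_n=0$.
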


\begin{proof}
    Due to the optimality of $z_k^*$, it holds for arbitrary $z_k \in S_k$ for $k \in [K]$ that
    \begin{equation}\label{eq:minimizer-bound}
        \sum_{k \in [K]} \sum_{k'\neq k, k'\in [K]}\Lambda_{kk'}\|\theta_{z_k^*} - \theta_{z_{k'}^*}\|_2 \leq \sum_{k \in [K]} \sum_{k'\neq k, k'\in [K]}\Lambda_{kk'}\|\theta_{ z_k} - \theta_{ z_{k'}}\|_2.
    \end{equation}

    Now, we consider a uniform probability distribution $P$ over $(z_1,\ldots, z_{K})$  such that 
    $$P(z_1=j_1,\ldots, z_{K}=j_K) = 1/\prod_{k \in [K]}m_k\,.$$ 
    Note that this distribution has pairwise marginal 
    $$P(z_k=j_1, z_{k'}=j_{k'}) = 1/(m_km_{k'})\,.$$ 
    Then it follows that
    \begin{align*}
        & \bbE_P \|\theta_{ z_k} - \theta_{ z_{k'}}\|_2 = \Lambda_{kk'}^{-1}  \sum_{j \in S_k}\sum_{j' \in S_{k'}}\varepsilon_n\|\theta_{j} - \theta_{j'}\|_2\,.
    \end{align*}

    Hence, applying $\bbE_P$ to both sides of Equation \eqref{eq:minimizer-bound}, whose LHS is independent of $z_k$, yields
     \begin{align*}
        \sum_{k \in [K]} \sum_{k'\neq k, k'\in [K]}\Lambda_{kk'}\|\theta_{z_k^*} - \theta_{z_{k'}^*}\|_2 \leq \sum_{k \in [K]} \sum_{k'\neq k, k'\in [K]}\sum_{j \in S_k}\sum_{j' \in S_{k'}}\varepsilon_n\|\theta_{j} - \theta_{j'}\|_2 \,.
    \end{align*}
\end{proof}

\section{Construction of Orthogonal Loss in DID Model}\label{sec:did-orth-loss}

We estimate the task-specific treatment effect under a Difference-in-Differences (DID) design using the doubly robust orthogonal score of \cite{sant2020doubly}. For unit $i$ in task $j$, let us define the difference between the post-treatment and pre-treatment individual-level outcomes as 
\[
\Delta Y_{ji} = Y_{ji1} - Y_{ji0}.
\]
First, using the first subsample $\cD_{j,1}$, we estimate the nuisance functions using LightGBM to obtain $\hat \pi_j(x)$ and $\hat m_j(x)$, where $\hat \pi_j(x)$ estimates $\Pr(D_j=1 \mid X_j=x)$ and $\hat m_j(x)$ estimates $ \bbE[\Delta Y_j \mid D=0, X=x]$. 
Using these estimates, define the following: 
\[
\widehat{\bar D}_j = \frac{1}{|\mathcal{D}_j^{(1)}|} \sum_{i \in \mathcal{D}_j^{(1)}} D_{ji},
\qquad
\widehat{\bar v}_j = \frac{1}{|\mathcal{D}_j^{(1)}|} \sum_{i \in \mathcal{D}_j^{(1)}} 
\frac{\hat \pi_j(X_{ji})(1 - D_{ji})}{1 - \hat \pi_j(X_{ji})}.
\]
Then, on the second subsample $\cD_{j,2}$, we construct the normalized weights
\[
\hat w_{1,ji} = \frac{D_{ji}}{\widehat{\bar D}_j},
\qquad
\hat w_{0,ji} =
\frac{1}{\widehat{\bar v}_j}\left(
\frac{\hat \pi_j(X_{ji})(1 - D_{ji})}{1 - \hat \pi_j(X_{ji})}
\right), 
\]
and set
\[
\hat A_{ji}
=
(\hat w_{1,ji} - \hat w_{0,ji})\big( \Delta Y_{ji} - \hat m_j(X_{ji}) \big).
\]
The orthogonal loss for task $j$ takes the quadratic form
\[
f_j^{\NO}(\theta_j) = a_j (\theta_j - b_j)^2,
\]
where
\[
a_j = \frac{1}{|\mathcal{D}_j^{(2)}|}\sum_{i \in \mathcal{D}_j^{(2)}} \hat w_{1,ji},
\qquad
b_j =
\frac{\sum_{i \in \mathcal{D}_j^{(2)}} \hat A_{ji}}
{\sum_{i \in \mathcal{D}_j^{(2)}} \hat w_{1,ji}}.
\]
This is the loss we use for the estimation of DID model in Section \ref{sec:sim}.

\section{Normality Check}\label{sec:sim-normality}

To assess the distributional behavior of the proposed estimator, we examine quantile–quantile (QQ) plots of the standardized estimators
$Z_j = (\hat{\theta}_j - \theta_j^*) / \widehat{\mathrm{SE}}(\hat{\theta}_j)$.
Figure~\ref{fig:qq-plot} displays $3\times3$ QQ-plots corresponding to three simulation models (PLM, ATE, DID) and three signal strengths $\delta\in\{1/3,2/3,1\}$. Each panel compares the empirical quantiles of $\{Z_j\}$ to the theoretical quantiles of the $\cN(0,1)$ and along with 99\% confidence bands. 

Across all models and values of $\delta$, the QQ-plots show that the standardized estimators align closely with the $\cN(0,1)$ reference line and lie within the confidence bands. This indicates that the asymptotic normal approximation holds well. Overall, the QQ-plots provide strong empirical support for the claim that the adaptive fusion estimators are approximately normal in finite samples, corroborating the theoretical result in Theorem~\ref{thm:normal}.

\begin{figure}[t]
    \centering
    \includegraphics[width=\linewidth]{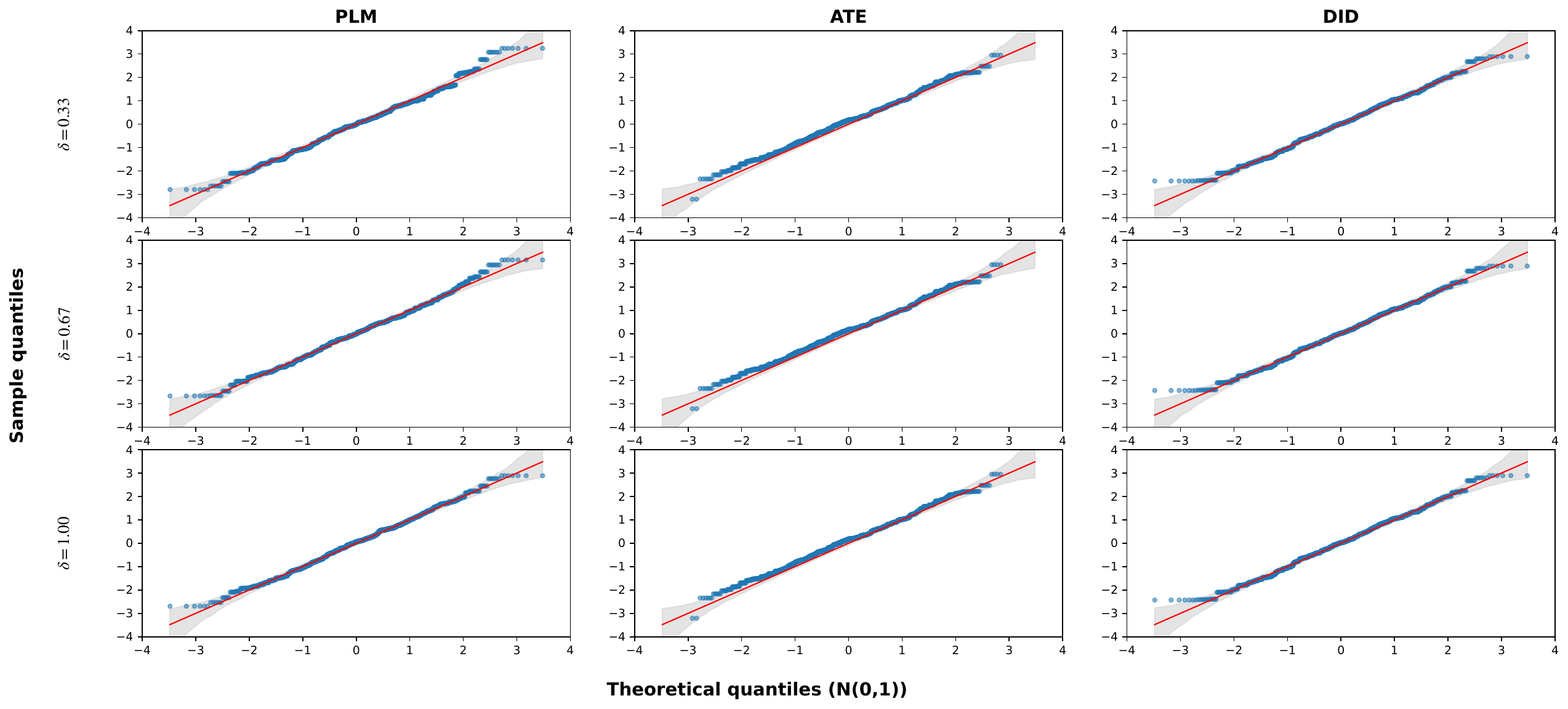}
    \caption{Normality diagnostics via QQ-plots with 99\% confidence bands for standardized adaptive fusion estimators $Z_j = (\hat{\theta}_j - \theta_j^*) / \widehat{\mathrm{SE}}(\hat{\theta}_j)$.}

    \label{fig:qq-plot}
\end{figure}

\section{Additional Simulation Results}\label{sec:add-sim-result}

Using the same simulation setup as in Section~\ref{sec:sim}, we evaluate estimation accuracy using the \emph{cluster-size weighted RMSE}
\[
\mathrm{wRMSE}
= \sqrt{\frac{1}{B}\sum_{j=1}^m N_{q(j)}\,(\hat{\theta}_{j}-\theta_j^\ast)^2},
\]
where $q: [m] \to [K]$ maps each task $j$ to its true cluster index $k$ and the normalizing constant $B = \sum_{j=1}^m N_{q(j)}$. Intuitively, clusters with more data contribute proportionally more to the error.

To assess recovery of the latent task groups, we use the \emph{Adjusted Rand Index (ARI)}. Let $\{S_k\}_{k=1}^K$ and $\{\hat{S}_\ell\}_{\ell=1}^{\hat K}$ denote the true and estimated task partitions, respectively. The ARI between the two partitions is defined as
\[
\mathrm{ARI}
=
\frac{
\sum_{k,\ell} \binom{|S_k\cap \hat{S}_\ell|}{2}
-
\frac{
\sum_k \binom{|S_k|}{2}
\sum_\ell \binom{|\hat{S}_\ell|}{2}
}{
\binom{m}{2}
}
}{
\frac{1}{2}
\Bigg[
\sum_k \binom{|S_k|}{2}
+
\sum_\ell \binom{|\hat{S}_\ell|}{2}
\Bigg]
-
\frac{
\sum_k \binom{|S_k|}{2}
\sum_\ell \binom{|\hat{S}_\ell|}{2}
}{
\binom{m}{2}
}
},
\]
which satisfies $\mathrm{ARI}\in[-1,1]$, with $1$ indicating perfect agreement and values near $0$ corresponding to random cluster assignments.

Table \ref{tab:rmse-ari} reports cluster-size weighted RMSE and ARI for the PLM, ATE, and DID settings across $\delta \in \{1/3, 2/3, 1\}$. As in Section \ref{sec:sim}, we compare the estimators: Personalized, ARMUL with $K-1$, $K$, and $K+1$ many clusters, CN,  FC, MeTaG, and the proposed Adaptive Fusion method. 
It is immediate from Table \ref{tab:rmse-ari} that our proposed estimators achieve uniformly smaller RMSE compared to all other estimators, along with achieving near-perfect clustering. This indicates that our method simultaneously recovers the true task clusters and effectively pools information within them.  
\emph{ARMUL} with the oracle number of clusters ($K$) attains a competitive ARI but has a higher RMSE than ours. Furthermore, when $K$ is misspecified in ARMUL, it exhibits substantial degradation: using $K-1$ underestimates the true cluster count as expected and forces heterogeneous tasks to merge, producing large pooling bias, high RMSE, and lower ARI; using $K+1$ tolerates over-clustering somewhat better, but underperforms in terms of RMSE compared to the adaptive estimator. \emph{MeTaG} effectively identifies cluster structure, but exhibits substantially larger RMSE, likely due to bias (as discussed later). Finally, the \emph{Personalized}, \emph{CN}, and \emph{FC} estimators do not borrow strength across tasks, yielding the largest RMSE and near-zero ARI throughout. Overall, the proposed adaptive estimator achieves the best trade-off between bias and variance.

\begin{table}[t]
\centering
\caption{Weighted RMSE and ARI under PLM, ATE, and DID.}
\label{tab:rmse-ari}
\vspace{0.5em}
\small
\begin{tabular}{l c c c c c c c c c}
\toprule
& $\delta$ & \textbf{Per} & \textbf{ARMUL(K-1)} & \textbf{ARMUL(K)} & \textbf{ARMUL(K+1)} & \textbf{CN} & \textbf{FC} & \textbf{MeTaG} & \textbf{Ada} \\
\midrule
\multicolumn{10}{l}{\textbf{(a) Weighted RMSE}} \\
\midrule
\textbf{PLM} 
& 1/3 & 11.75 & 14.13 & 6.60 & 8.64 & 17.41 & 19.59 & 55.44 & \textbf{4.46} \\
& 2/3 & 12.15 & 14.13 & 7.21 & 9.11 & 29.23 & 12.40 & 57.01 & \textbf{5.27} \\
& 1   & 12.67 & 14.52 & 8.01 & 9.71 & 41.83 & 11.22 & 58.35 & \textbf{6.25} \\
\midrule
\textbf{ATE} 
& 1/3 & 23.86 & 70.88 & 10.02 & 16.83 & 25.07 & 26.25 & 56.41 & \textbf{8.82} \\
& 2/3 & 23.86 & 76.66 & 9.80 & 16.83 & 33.02 & 24.03 & 56.41 & \textbf{8.82} \\
& 1   & 23.86 & 74.09 & 9.80 & 16.83 & 43.13 & 20.51 & 56.41 & \textbf{8.82} \\
\midrule
\textbf{DID} 
& 1/3 & 14.75 & 77.88 & 6.14 & 10.19 & 17.85 & 19.77 & 54.84 & \textbf{5.27} \\
& 2/3 & 14.75 & 164.66 & 6.14 & 10.22 & 27.62 & 14.11 & 54.84 & \textbf{5.27} \\
& 1   & 14.75 & 254.27 & 6.14 & 10.23 & 38.93 & 12.68 & 54.84 & \textbf{5.27} \\
\midrule
\multicolumn{10}{l}{\textbf{(b) ARI}} \\
\midrule
\textbf{PLM} 
& 1/3 & 0.03 & 0.55 & \textbf{1.00} & 0.88 & 0.04 & 0.04 & 0.98 & \textbf{1.00} \\
& 2/3 & 0.04 & 0.56 & \textbf{1.00} & 0.88 & 0.04 & 0.04 & 0.98 & \textbf{1.00} \\
& 1   & 0.03 & 0.56 & \textbf{1.00} & 0.88 & 0.04 & 0.04 & 0.98 & \textbf{1.00} \\
\midrule
\textbf{ATE} 
& 1/3 & 0.02 & 0.52 & \textbf{1.00} & 0.87 & 0.02 & 0.02 & 0.67 & \textbf{0.99} \\
& 2/3 & 0.02 & 0.55 & \textbf{1.00} & 0.87 & 0.02 & 0.02 & 0.67 & \textbf{0.99} \\
& 1   & 0.02 & 0.55 & \textbf{1.00} & 0.87 & 0.02 & 0.02 & 0.67 & \textbf{0.99} \\
\midrule
\textbf{DID} 
& 1/3 & 0.03 & 0.54 & \textbf{1.00} & 0.88 & 0.03 & 0.03 & 0.93 & \textbf{1.00} \\
& 2/3 & 0.03 & 0.56 & \textbf{1.00} & 0.88 & 0.03 & 0.03 & 0.93 & \textbf{1.00} \\
& 1   & 0.03 & 0.56 & \textbf{1.00} & 0.88 & 0.03 & 0.03 & 0.93 & \textbf{1.00} \\
\bottomrule
\end{tabular}
\end{table}

\section{Fixed vs.\ Adaptive Fusion}
\label{sec:ablation}
We conduct an additional study comparing the proposed \emph{Adaptive} fusion penalty to a \emph{Fixed} baseline under the same experimental setup as the PLM model in Section~\ref{sec:sim}. The fixed estimator replaces the adaptive weights $\lambda_{j j'}$ in Equation~\eqref{eq:adaptive-lambda} with a constant $\lambda>0$, while the adaptive scheme learns heterogeneous pairwise penalties from the data.

Figure~\ref{fig:rmse_ari_ablation} reports RMSE and ARI at $\delta=1/3$ for three configurations: adaptive $\lambda_{j j'}$, fixed
$\lambda=10^{-3}$, and fixed $\lambda=10^{-2}$. The adaptive method achieves both the lowest RMSE and the highest ARI, indicating that it simultaneously improves estimation accuracy and recovers the latent cluster structure. In contrast, the fixed baseline exhibits a clear bias--variance tradeoff. A moderate penalty ($\lambda=10^{-3}$) yields the smallest RMSE among the two fixed choices but suffers from near-zero ARI, implying accurate point estimation but almost no clustering. As the penalty increases ($\lambda=10^{-2}$), the ARI improves substantially due to stronger pooling, but at the cost of larger RMSE from over-shrinkage.

To further diagnose this effect, Figure~\ref{fig:hist_theta_ablation} plots the empirical distribution of all the $\hat\theta_j$ at $\delta=0.33$.
The adaptive estimator produces three tight, approximately Gaussian clusters centered at the true values. The fixed baseline again reveals the same bias--variance tradeoff. When $\lambda=10^{-3}$, the
distributions remain wide and separated, reflecting low bias but weak pooling. When $\lambda=10^{-2}$, the variance contracts to a level similar to the adaptive method, resulting in better clustering; however, the estimated centers are shifted toward the global mean, producing a visible bias relative to the true cluster centers.

\begin{figure}[t]
    \centering
    \includegraphics[width=\linewidth]{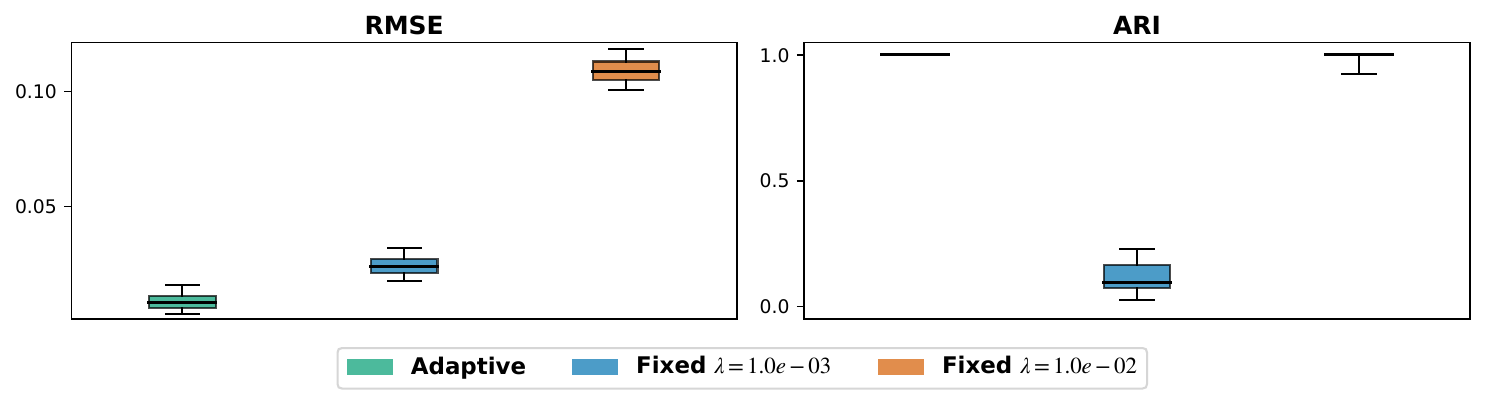}
    \caption{RMSE and ARI at $\delta = 1/3$ for adaptive versus fixed fusion under PLM model.
    Adaptive achieves both the lowest RMSE and highest ARI, whereas fixed penalties
    exhibit a tradeoff: $\lambda = 10^{-3}$ gives lower RMSE but
    near-zero ARI, while $\lambda = 10^{-2}$ improves ARI at the cost of higher RMSE.}
    \label{fig:rmse_ari_ablation}
\end{figure}

\begin{figure}[t]
    \centering
    \includegraphics[width=\linewidth]{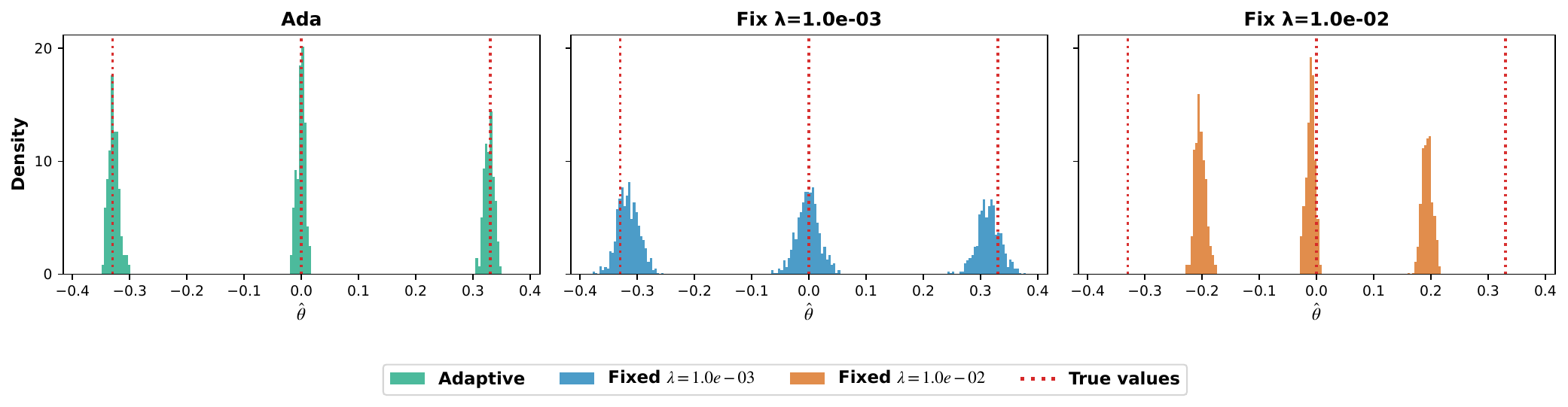}
    \caption{Distribution of $\hat{\theta}_j$ at $\delta = 1/3$ under PLM model.
    Adaptive produces three tight clusters aligned with truth (red lines). Fixed
    $\lambda = 10^{-3}$ shows weak pooling (low bias, high variance), while
    $\lambda = 10^{-2}$ increases pooling (lower variance) but introduces bias
    by shifting cluster centers toward the global mean.}
    \label{fig:hist_theta_ablation}
\end{figure}

\section{General $R$-fold Cross-Fitting}
\label{sec:cross-fit}
Our main analysis is based on simple sample splitting: for each task $j$,
we estimate the nuisance functions on one half $\cD_{j,1}$ and evaluate
the orthogonal loss on the other half $\cD_{j,2}$.
A natural refinement is to use $R$-fold cross-fitting at the loss level.

Specifically, split task $\cD_j$ equally into $R$ folds $\cD_{j,1},\dots,\cD_{j,R}$.
For each fold $r$, estimate the nuisance functions on the complement
$\cD_j^{(-r)} = \bigcup_{s\neq r} \cD_{j,s}$ to obtain $\hat\eta^{(-r)}_j$,
and form the fold-wise orthogonal loss $f^{\NO,(r)}_j(\theta_j;\hat\eta^{(-r)}_j)$.
We then define the cross-fitted loss
\[
\bar f^{\NO}_j(\theta_j)
=
\frac{1}{R}\sum_{r=1}^R
f^{\NO,(r)}_j(\theta_j;\hat\eta^{(-r)}_j),
\]
and the adaptive fusion objective:
\[
\hat\theta
=
\arg\min_{\theta}
\sum_{j=1}^m \bar f^{\NO}_j(\theta_j)
+
\sum_{1\le j'<j\le m}
\lambda_{jj'}\|\theta_j - \theta_{j'}\|_2,
\]
where $\lambda_{jj'}$ is same as Equation \eqref{eq:adaptive-lambda}. Using standard analysis for cross-fitted estimators, along with the fact that our estimators are asymptotically linear, it is immediate that the theoretical guarantees will continue to hold for $R$-fold cross-fitted estimators.

\section{Data Preprocessing of RECS 2020}\label{sec:real-data-preprocess-appendix}


\paragraph{Control of data leakage.}
Using the RECS 2020 codebook     \url{eia.gov/consumption/residential/data/2020}, we retain a single geographic key, \texttt{State}, as the task identifier and construct a set of predictors by \emph{excluding} variables that could leak the outcome $Y$ or the main regressor $T$:
(i) variables directly related to outcome/regressor (\texttt{DOLLAR}, \texttt{COST}, etc.); 
(ii) direct fuel-volume measures (\texttt{CUFEETNG}, \texttt{GALLONLP}, \texttt{GALLONFO}); 
(iii) redundant geography (task identifiers like \texttt{REGIONC}, \texttt{STATE\_FIPS}, \texttt{state\_postal}, \texttt{state\_name}).

\paragraph{Cleaning and encoding.}
Starting from the constructed dataset, we clean the dataset by:
(i) dropping columns with $>40\%$ missing values; 
(ii) removing near-constant numeric features (variance $<10^{-8}$); 
(iii) one-hot encoding categorical variables. 
(iv) normalize all the numerical variables.

\section{Extension to within-cluster heterogeneity}\label{sec:extent-perturb}

We consider an extended setup allowing for within-cluster heterogeneity. Let $\{S_k\}_{k=1}^K$ be an unknown partition of $[m]$. For each cluster $k \in [K]$, there exists a centroid $\beta_k^*$ such that
the target parameter $\theta_j^*$ lies in a $\xi_k$-neighborhood of $\beta_k^*$, i.e.,
\begin{equation}\label{eq:def-perturb}
\|\theta_j^* - \beta_k^*\|_2 \le \xi_k\,.
\end{equation}
Define $$\xi_{\max} := \max_{k \in [K]} \xi_k\,.$$ Throughout this section, we assume that $n_{\min}$ is sufficiently large and the within-cluster perturbation is bounded by a constant,
\begin{equation}\label{eq:bound-perturb}
\xi_{\max} \le \frac{\rho}{24\kappa}\{(M/2) \wedge (\delta/2)\}, \qquad n_{\min}^{-\alpha} < \frac{1}{2}\,.
\end{equation}
Across clusters, we impose the same separation condition as before: for $k \ne k'$,
\[
\|\beta_k^* - \beta_{k'}^*\|_2 \ge \delta.
\]
Under the above setup, Theorems~\ref{thm:cluster-rate-extent} and \ref{thm:normal-extent} follow as extensions of Theorems~\ref{thm:cluster-rate} and \ref{thm:normal}, respectively. The proof can be found in Section \ref{sec:main-proof-extent}.

\subsection{Roadmap on proof of the extended results}

Our proof follows the same roadmap outlined in Section~\ref{sec:roadmap-proof}. In the deterministic analysis (Section~\ref{sec:determ-extent}), we show that if both events $\cE_{1n}$ and $\cE_{2n}$, defined in \ref{def:gradient-hessian-bound} and \ref{def:init-bound}, hold, then: (i) \emph{exact shrinkage}, where for every $j \in S_k$, the estimator $\hat\theta_j$ fuses to the reference $\hat\beta_k$; and (ii) \emph{oracle approximation}, where the reference $\hat\beta_k$ converges to the oracle $\tilde\beta_k$ at rate $o(N_k^{-1/2})$ under certain conditions.

In Section~\ref{sec:prob-cond-extent}, we show that $\bbP(\cE_{1n} \cap \cE_{2n}) \to 1$, so the above properties hold with high probability. Finally, Section~\ref{sec:main-proof-extent} combines these results to establish Theorems~\ref{thm:cluster-rate-extent} and \ref{thm:normal-extent}.

{\bf Note:} most arguments under the ``perturbed'' model follow the same lines of proof as those for the ``clean'' model discussed earlier in Section~\ref{sec:roadmap-proof}. We therefore highlight only the key differences to avoid unnecessary repetition.

\subsection{Deterministic Analysis}\label{sec:determ-extent}

Throughout this section, we work conditionally on the event 
$\cE_{1n}\cap \cE_{2n}$, where $\cE_{2n}$ is defined in
Definition~\ref{def:init-bound}. The event $\cE_{1n}$ is stated as in
Definition~\ref{def:gradient-hessian-bound}, except that the gradient bounds are
slightly tightened to account for the smaller neighborhood used in the
perturbed setting:
\begin{align}\label{eq:new-grad-bound}
    \Big\|\tfrac{1}{n_j}\nabla f_j(\theta_j^*,\hat\eta_j)\Big\|_2 
    &< \tfrac{\rho}{8}\{(M/2)\wedge (\delta/2)\}, 
    && \forall j \in [m],\notag\\
    \Big\|\tfrac{1}{N_k}\nabla F_k(\beta_k^*,\hat\bfeta_k)\Big\|_2
    &< \tfrac{\rho}{8}\{(M/2)\wedge (\delta/2)\}, 
    && \forall k \in [K].
\end{align}
This modification only changes the constants in the definition of $\cE_{1n}$.
Consequently, the high-probability verification of $\cE_{1n}$ remains unchanged
up to constants.

The perturbed model extends the clean model by allowing task-specific deviations within each cluster, that is, $\theta_j^* \ne \beta_k^*$ for some $j\in S_k$. The arguments therefore follow the same deterministic strategy developed in Section~\ref{sec:determ}, with only the modifications needed to account for these within-cluster perturbations. Specifically, Lemmas~\ref{lem:w-concentration}--\ref{lem:tilde-beta-rate} are replaced by their perturbed counterparts, Lemmas~\ref{lem:w-concentration-extent}--\ref{lem:tilde-beta-rate-extent}, respectively.

\begin{lemma}\label{lem:w-concentration-extent}
Assume event $\cE_{2n}$ holds. 
Then, for any
\[
\tau \in 
\left(
c_w\left\{\frac{\delta}{2}\right\}^{-\gamma},
\,
c_w \left\{\frac{\delta}{2}n_{\min}^{-\alpha}+2\xi_{\max}\right\}^{-\gamma}
\right),
\]
the following hold:
\begin{align*}
&\lambda_{jj'}=\varepsilon_n,
&&\forall j\in S_k,\ \forall j'\in S_{k'},\ k\neq k',\\
&\lambda_{jj'}=w_{jj'}
>
c_w\left\{\frac{\delta}{2}n_{\min}^{-\alpha}+2\xi_{\max}\right\}^{-\gamma},
&&\forall j\neq j'\in S_k,\ \forall k\in[K],
\end{align*}
where $\lambda_{jj'}$ and $w_{jj'}$ are defined as in
Equation~\eqref{eq:adaptive-lambda}.
\end{lemma}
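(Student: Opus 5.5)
The proof will mirror that of Lemma~\ref{lem:w-concentration}, with the triangle inequality now routed through the centroids $\beta_k^*$ rather than through $\theta_j^*$ directly. Throughout we work on $\cE_{2n}$, so that $\|\hat\theta_j^{\init}-\theta_j^*\|_2 < \tfrac{\delta}{4}n_j^{-\alpha}\le \tfrac{\delta}{4}n_{\min}^{-\alpha}$ for every $j\in[m]$. We also use the standing bounds \eqref{eq:bound-perturb} on $\xi_{\max}$ and $n_{\min}^{-\alpha}$. Since $w_{jj'}=c_w\|\hat\theta_j^{\init}-\hat\theta_{j'}^{\init}\|_2^{-\gamma}$ is a decreasing function of the pilot distance, each claim reduces to a lower or upper bound on $\|\hat\theta_j^{\init}-\hat\theta_{j'}^{\init}\|_2$, which is then compared with the endpoints of the interval for $\tau$.

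\emph{Different clusters.} Take $j\in S_k$ and $j'\in S_{k'}$ with $k\ne k'$. By the triangle inequality,
\[
\|\hat\theta_j^{\init}-\hat\theta_{j'}^{\init}\|_2 \ge \|\beta_k^*-\beta_{k'}^*\|_2 - \|\theta_j^*-\beta_k^*\|_2-\|\theta_{j'}^*-\beta_{k'}^*\|_2 - \|\hat\theta_j^{\init}-\theta_j^*\|_2-\|\hat\theta_{j'}^{\init}-\theta_{j'}^*\|_2 ,
\]
which is at least $\delta - 2\xi_{\max} - \tfrac{\delta}{2}n_{\min}^{-\alpha}$. We need this to exceed $\delta/2$, that is, $2\xi_{\max}+\tfrac{\delta}{2}n_{\min}^{-\alpha}<\delta/2$.

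This is the one place where \eqref{eq:bound-perturb} is used. We have $n_{\min}^{-\alpha}<1/2$, which gives $\tfrac{\delta}{2}n_{\min}^{-\alpha}<\delta/4$. We also have $\xi_{\max}\le \tfrac{\rho}{24\kappa}\cdot\tfrac{\delta}{2}\le \delta/48$, because $\rho\le\kappa$; hence $2\xi_{\max}\le\delta/24<\delta/4$. The separation therefore holds strictly. It follows that $w_{jj'}<c_w(\delta/2)^{-\gamma}<\tau$, and so $\lambda_{jj'}=\varepsilon_n$.

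\emph{Same cluster.} Take $j\ne j'$, both in $S_k$. By the triangle inequality,
\[
\|\hat\theta_j^{\init}-\hat\theta_{j'}^{\init}\|_2\le \|\hat\theta_j^{\init}-\theta_j^*\|_2+\|\theta_j^*-\beta_k^*\|_2+\|\beta_k^*-\theta_{j'}^*\|_2+\|\theta_{j'}^*-\hat\theta_{j'}^{\init}\|_2 ,
\]
and the right-hand side is strictly less than $\tfrac{\delta}{2}n_{\min}^{-\alpha}+2\xi_{\max}$. Consequently $w_{jj'}>c_w\{\tfrac{\delta}{2}n_{\min}^{-\alpha}+2\xi_{\max}\}^{-\gamma}$. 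This quantity exceeds $\tau$ by the upper endpoint of the allowed interval, so $\lambda_{jj'}=w_{jj'}$ and the stated lower bound holds.

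The main point to check is that the interval for $\tau$ is nonempty, i.e. $\tfrac{\delta}{2}n_{\min}^{-\alpha}+2\xi_{\max}<\delta/2$. This is exactly the inequality already verified in the different-cluster case, so it comes for free. No other step requires more than the triangle inequality.
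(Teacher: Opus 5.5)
Your proposal is correct and follows essentially the same route as the paper's proof. You use \eqref{eq:bound-perturb} to get $\tfrac{\delta}{2}n_{\min}^{-\alpha}+2\xi_{\max}<\delta/2$, then apply the triangle inequality through $\theta_j^*$ and $\beta_k^*$ in both cases and compare $w_{jj'}$ with the endpoints of the $\tau$-interval; you also make explicit the step $\rho\le\kappa$ behind the bound on $\xi_{\max}$, which the paper leaves implicit.
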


\begin{proof}
Recall that
\[
w_{jj'}=c_w\|\hat\theta_j^{\init}-\hat\theta_{j'}^{\init}\|_2^{-\gamma},
\]
and
\[
\lambda_{jj'}=
\begin{cases}
\varepsilon_n, & \text{if } w_{jj'}\le \tau,\\
w_{jj'}, & \text{if } w_{jj'}>\tau.
\end{cases}
\]

Equation \eqref{eq:bound-perturb} implies that
\[\xi_{\max} \le \frac{\delta}{8}, \qquad n_{\min}^{-\alpha} < \frac{1}{2}\,,\]
which further implies
\[
\frac{\delta}{2}n_{\min}^{-\alpha}+2\xi_{\max}<\frac{\delta}{2}.
\]
\underline{\textbf{$j,j'$ belong to different clusters.}}
Let $j\in S_k$ and $j'\in S_{k'}$ with $k\neq k'$. By centroid separation and the perturbation bound,
\begin{align*}
\|\theta_j^*-\theta_{j'}^*\|_2
&\ge
\|\beta_k^*-\beta_{k'}^*\|_2
-
\|\theta_j^*-\beta_k^*\|_2
-
\|\theta_{j'}^*-\beta_{k'}^*\|_2 \\
&\ge
\delta-2\xi_{\max}.
\end{align*}
By the definition of $\cE_{2n}$,
\[
\|\hat\theta_j^{\init}-\theta_j^*\|_2
<
\frac{\delta}{4}n_j^{-\alpha}
\le
\frac{\delta}{4}n_{\min}^{-\alpha},
\]
and similarly,
\[
\|\hat\theta_{j'}^{\init}-\theta_{j'}^*\|_2
<
\frac{\delta}{4}n_{j'}^{-\alpha}
\le
\frac{\delta}{4}n_{\min}^{-\alpha}.
\]
Therefore, by the triangle inequality,
\begin{align*}
\|\hat\theta_j^{\init}-\hat\theta_{j'}^{\init}\|_2
&\ge
\|\theta_j^*-\theta_{j'}^*\|_2
-
\|\hat\theta_j^{\init}-\theta_j^*\|_2
-
\|\hat\theta_{j'}^{\init}-\theta_{j'}^*\|_2 \\
&>
\delta
-
2\xi_{\max}
-
\frac{\delta}{2}n_{\min}^{-\alpha}.
\end{align*}
Since
\[
\frac{\delta}{2}n_{\min}^{-\alpha}+2\xi_{\max}<\frac{\delta}{2},
\]
we have
\[
\|\hat\theta_j^{\init}-\hat\theta_{j'}^{\init}\|_2>\frac{\delta}{2}.
\]
Thus,
\[
w_{jj'}
=
c_w\|\hat\theta_j^{\init}-\hat\theta_{j'}^{\init}\|_2^{-\gamma}
<
c_w\left\{\frac{\delta}{2}\right\}^{-\gamma}
<
\tau.
\]
Hence $\lambda_{jj'}=\varepsilon_n$.

\underline{\textbf{$j,j'$ belong to the same cluster.}}
Now suppose $j,j'\in S_k$ with $j\neq j'$. By the triangle inequality, the definition of $\cE_{2n}$, and the perturbation bound,
\begin{align*}
\|\hat\theta_j^{\init}-\hat\theta_{j'}^{\init}\|_2
&\le
\|\hat\theta_j^{\init}-\theta_j^*\|_2
+
\|\theta_j^*-\beta_k^*\|_2
+
\|\beta_k^*-\theta_{j'}^*\|_2
+
\|\theta_{j'}^*-\hat\theta_{j'}^{\init}\|_2 \\
&<
\frac{\delta}{4}\{n_j^{-\alpha}+n_{j'}^{-\alpha}\}
+
2\xi_k \\
&\le
\frac{\delta}{2}n_{\min}^{-\alpha}
+
2\xi_{\max}.
\end{align*}
Therefore,
\[
w_{jj'}
=
c_w\|\hat\theta_j^{\init}-\hat\theta_{j'}^{\init}\|_2^{-\gamma}
>
c_w\left\{\frac{\delta}{2}n_{\min}^{-\alpha}+2\xi_{\max}\right\}^{-\gamma}
>
\tau.
\]
Hence $\lambda_{jj'}=w_{jj'}$, and moreover
\[
\lambda_{jj'}=w_{jj'}
>
c_w\left\{\frac{\delta}{2}n_{\min}^{-\alpha}+2\xi_{\max}\right\}^{-\gamma}.
\]
This proves the claim.
\end{proof}

\begin{lemma}\label{lem:beta-hat-bound-extent}
By definition of $\cE_{1n}$, for every $j\in S_k$,
\[
\nabla^2 f_j(\beta,\hat\eta_j)\succeq \rho_j I,
\qquad
\forall \beta\in \cB_{\theta_j^*,M}.
\]
Therefore, for any $k\in[K]$,
\[
\nabla^2 F_k(\beta,\hat\bfeta_k)
\succeq
\sum_{j\in S_k}\rho_j I,
\qquad
\forall \beta\in \cB_{\beta_k^*,M-\xi_k}.
\]

If $\{\Lambda_{kk'}\}_{k\neq k'}$ additionally satisfy
\begin{equation}
\label{eqn:gradient-condition-extent}
\frac{
2\norm{\nabla F_k(\beta_k^*,\hat\bfeta_k)}_2
+
\sum_{k'\neq k}\Lambda_{kk'}
}{
\sum_{j\in S_k}\rho_j
}
<
\min\{M-\xi_k,\delta/2\},
\qquad
\forall k\in[K],
\end{equation}
then we have
\begin{align*}
\norm{\tilde\beta_k-\beta_k^*}_2
\le
\frac{
\norm{\nabla F_k(\beta_k^*,\hat\bfeta_k)}_2
}{
\sum_{j\in S_k}\rho_j
},\qquad
\norm{\hat\beta_k-\tilde\beta_k}_2
,\le
\frac{
\sum_{k'\neq k}\Lambda_{kk'}
}{
\sum_{j\in S_k}\rho_j
},
\qquad
\forall k\in[K].
\end{align*}
Moreover,
\[
\hat\beta_k\neq \hat\beta_{k'},
\qquad
\forall k\neq k'\in[K].
\]
\end{lemma}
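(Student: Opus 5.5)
The plan is to mirror the proof of Lemma~\ref{lem:beta-hat-bound}, adjusting only the radii to account for the within-cluster perturbation $\xi_k$.

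\textbf{Step 1: curvature of $F_k$.} On $\cE_{1n}$ we have $\nabla^2 f_j(\theta,\hat\eta_j)\succeq \rho_j I$ for all $\theta\in\cB_{\theta_j^*,M}$. If $\beta\in\cB_{\beta_k^*,M-\xi_k}$ and $j\in S_k$, the triangle inequality together with \eqref{eq:def-perturb} gives $\|\beta-\theta_j^*\|_2\le M-\xi_k+\xi_k=M$. Hence every summand of $F_k$ satisfies the curvature bound at $\beta$. Summing over $j\in S_k$ yields $\nabla^2F_k(\beta,\hat\bfeta_k)\succeq \tilde\rho_k I$ on $\cB_{\beta_k^*,M-\xi_k}$, where $\tilde\rho_k:=\sum_{j\in S_k}\rho_j$.

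\textbf{Step 2: location of the oracle.} Apply the second part of Lemma~\ref{lem:F1} with $x_0=\beta_k^*$ and $r=M-\xi_k$. The required condition $\|\nabla F_k(\beta_k^*,\hat\bfeta_k)\|_2<\tilde\rho_k r/2$ follows from \eqref{eqn:gradient-condition-extent}. This gives
\[
\|\tilde\beta_k-\beta_k^*\|_2\le \|\nabla F_k(\beta_k^*,\hat\bfeta_k)\|_2/\tilde\rho_k .
\]

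\textbf{Step 3: a strongly convex ball around the oracle.} Choose $r_k$ with
\[
\sum_{k'\neq k}\Lambda_{kk'}/\tilde\rho_k<r_k<M-\xi_k-2\|\nabla F_k(\beta_k^*,\hat\bfeta_k)\|_2/\tilde\rho_k .
\]
This interval is nonempty precisely because of \eqref{eqn:gradient-condition-extent}. By the triangle inequality, $\cB_{\tilde\beta_k,r_k}\subseteq\cB_{\beta_k^*,M-\xi_k}$, so $F_k$ is $\tilde\rho_k$-strongly convex on $\cB_{\tilde\beta_k,r_k}$.

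\textbf{Step 4: location of the reference estimator.} The penalty $\sum_k\sum_{k'\neq k}\Lambda_{kk'}\|\beta_k-\beta_{k'}\|_2/2$ is $\sum_{k'\neq k}\Lambda_{kk'}$-Lipschitz in $\beta_k$, and this constant is smaller than $\tilde\rho_k r_k$. Lemma~\ref{lem:close-to-individual-minimizer} therefore applies and gives
\[
\|\hat\beta_k-\tilde\beta_k\|_2\le\sum_{k'\neq k}\Lambda_{kk'}/\tilde\rho_k .
\]

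\textbf{Step 5: separation.} Combining Steps 2 and 4, $\hat\beta_k$ lies within distance
\[
R_k=\Big(\|\nabla F_k(\beta_k^*,\hat\bfeta_k)\|_2+\sum_{k'\neq k}\Lambda_{kk'}\Big)/\tilde\rho_k
\]
of $\beta_k^*$, and $R_k<\delta/2$ by \eqref{eqn:gradient-condition-extent}. Since the centroids satisfy $\|\beta_k^*-\beta_{k'}^*\|_2\ge\delta$, the open balls of radius $\delta/2$ around distinct centroids are disjoint. It follows that $\hat\beta_k\neq\hat\beta_{k'}$ for all $k\neq k'$.

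The only point needing care is Step 1. Strong convexity of $F_k$ is now certified only on the shrunken ball of radius $M-\xi_k$, because each task's curvature ball is centered at $\theta_j^*$ rather than at $\beta_k^*$. This is why $M$ is replaced by $M-\xi_k$ in \eqref{eqn:gradient-condition-extent}, and the radius bookkeeping in Steps 3 and 4 must be carried out with this smaller radius. Everything else is verbatim from Lemma~\ref{lem:beta-hat-bound}.
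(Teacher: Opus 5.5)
Your proposal is correct and matches the paper's proof step for step. Both use the inclusion $\cB_{\beta_k^*,M-\xi_k}\subseteq\bigcap_{j\in S_k}\cB_{\theta_j^*,M}$ for curvature, Lemma~\ref{lem:F1} to locate $\tilde\beta_k$, an intermediate radius $r_k$ with Lemma~\ref{lem:close-to-individual-minimizer} to locate $\hat\beta_k$, and disjoint open $\delta/2$-balls around the centroids for separation. Your explicit check that $\|\nabla F_k(\beta_k^*,\hat\bfeta_k)\|_2<\tilde\rho_k(M-\xi_k)/2$ is exactly the hypothesis of Lemma~\ref{lem:F1}, which the paper uses only implicitly.
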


\begin{proof}
We use the shorthand notation
\[
\tilde\rho_k:=\sum_{j\in S_k}\rho_j .
\]
First, observe that for any $\beta\in \cB_{\beta_k^*,M-\xi_k}$ and any $j\in S_k$,
\[
\|\beta-\theta_j^*\|_2
\le
\|\beta-\beta_k^*\|_2+\|\beta_k^*-\theta_j^*\|_2
<
M-\xi_k+\xi_k
=
M.
\]
Hence
\[
\cB_{\beta_k^*,M-\xi_k}
\subseteq
\bigcap_{j\in S_k}\cB_{\theta_j^*,M}.
\]
By the definition of $\cE_{1n}$, this implies
\[
\nabla^2 F_k(\beta,\hat\bfeta_k)
=
\sum_{j\in S_k}\nabla^2 f_j(\beta,\hat\eta_j)
\succeq
\tilde\rho_k I,
\qquad
\forall \beta\in \cB_{\beta_k^*,M-\xi_k}.
\]
Therefore $F_k(\cdot,\hat\bfeta_k)$ is $\tilde\rho_k$-strongly convex on
$\cB_{\beta_k^*,M-\xi_k}$.

By \eqref{eqn:gradient-condition-extent},
\[
\frac{\|\nabla F_k(\beta_k^*,\hat\bfeta_k)\|_2}{\tilde\rho_k}
<
M-\xi_k .
\]
Thus, by Lemma \ref{lem:F1}, the oracle minimizer $\tilde\beta_k$ satisfies
\[
\|\tilde\beta_k-\beta_k^*\|_2
\le
\frac{\|\nabla F_k(\beta_k^*,\hat\bfeta_k)\|_2}{\tilde\rho_k}.
\]

Next, by \eqref{eqn:gradient-condition-extent}, there exists $r_k$ such that
\[
\sum_{k'\in[K]\setminus\{k\}}
\frac{\Lambda_{kk'}}{\tilde\rho_k}
<
r_k
<
M-\xi_k
-
\frac{2\|\nabla F_k(\beta_k^*,\hat\bfeta_k)\|_2}{\tilde\rho_k}.
\]
For any $x\in\cB_{\tilde\beta_k,r_k}$, we have
\begin{align*}
\|x-\beta_k^*\|_2
&\le
\|x-\tilde\beta_k\|_2+\|\tilde\beta_k-\beta_k^*\|_2 \\
&<
r_k
+
\frac{\|\nabla F_k(\beta_k^*,\hat\bfeta_k)\|_2}{\tilde\rho_k} \\
&<
M-\xi_k
-
\frac{\|\nabla F_k(\beta_k^*,\hat\bfeta_k)\|_2}{\tilde\rho_k}
\le
M-\xi_k .
\end{align*}
Therefore
\[
\cB_{\tilde\beta_k,r_k}
\subseteq
\cB_{\beta_k^*,M-\xi_k}
\subseteq
\bigcap_{j\in S_k}\cB_{\theta_j^*,M}.
\]
Hence $F_k(\cdot,\hat\bfeta_k)$ is $\tilde\rho_k$-strongly convex on
$\cB_{\tilde\beta_k,r_k}$.

Because the penalty part
\[
\frac12
\sum_{k\in[K]}\sum_{k'\neq k}
\Lambda_{kk'}\|\beta_k-\beta_{k'}\|_2
\]
is $\sum_{k'\neq k}\Lambda_{kk'}$-Lipschitz in $\beta_k$, we can apply Lemma
\ref{lem:close-to-individual-minimizer} to obtain
\[
\|\hat\beta_k-\tilde\beta_k\|_2
\le
\frac{
\sum_{k'\neq k}\Lambda_{kk'}
}{
\tilde\rho_k
}.
\]

Define
\[
R_k
:=
\frac{
\|\nabla F_k(\beta_k^*,\hat\bfeta_k)\|_2
+
\sum_{k'\neq k}\Lambda_{kk'}
}{
\tilde\rho_k
}.
\]
Combining the two preceding bounds gives
\[
\|\hat\beta_k-\beta_k^*\|_2
\le R_k .
\]
Moreover, \eqref{eqn:gradient-condition-extent} implies
\[
R_k
\le
\frac{
2\|\nabla F_k(\beta_k^*,\hat\bfeta_k)\|_2
+
\sum_{k'\neq k}\Lambda_{kk'}
}{
\tilde\rho_k
}
<
\delta/2 .
\]
Therefore
\[
\hat\beta_k\in {\rm int}\,\cB_{\beta_k^*,\delta/2}.
\]
Since
\[
\|\beta_k^*-\beta_{k'}^*\|_2\ge \delta,
\qquad k\neq k',
\]
we have
\[
{\rm int}\,\cB_{\beta_k^*,\delta/2}
\cap
{\rm int}\,\cB_{\beta_{k'}^*,\delta/2}
=
\varnothing .
\]
Hence $\hat\beta_k\neq \hat\beta_{k'}$ for all $k\neq k'$.
\end{proof}

\begin{lemma}\label{theta-beta-shrink-extent}
   Assume events $\cE_{1n}$ and $\cE_{2n}$ hold and let $\{\lambda_{jj'}\}_{j\neq j'\in [m]}$ satisfy the property in Lemma \ref{lem:w-concentration-extent}. If for any $k \in [K]$ and any index $j_k \in S_k$ with $|S_k| > 1$, it holds that
    \begin{equation}
        \frac{ 2\norm{\nabla F_k(\beta_k^*,\hat \bfeta_k)}_2 + \sum_{k' \neq k,k'\in [K]} \Lambda_{kk'}}{\sum_{j \in S_k}\rho_j } < \min\left\{\frac{M}{2}, \frac{\delta}{2},\frac{\lambda_{j_k,j} - \|\nabla f_j(\beta_k^*,\hat\eta_j) \|_2}{\kappa_j}\right\},\quad \forall j \in S_k\setminus\{j_k\},
    \end{equation}
   then the following hold:
    \begin{align*}
        & \hat\beta_k \neq \hat\beta_{k'}\quad \forall k, k' \in [K],k \neq k';\\
        & \hat\theta_j = \hat \beta_k,\quad j \in S_k\,.
    \end{align*}
\end{lemma}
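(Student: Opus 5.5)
The plan is to repeat the three-part argument of Lemma~\ref{theta-beta-shrink} essentially verbatim. The only new issue is the perturbed geometry: each $f_j$ is now strongly convex and smooth only on $\cB_{\theta_j^*,M}$, and these balls are no longer centred at a common point. Throughout we take as given that the displayed condition holds with $M/2$ in place of $M$.

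\textbf{Separation and first-order conditions.} Since $M/2\le M-\xi_k$ by \eqref{eq:bound-perturb}, the hypothesis implies condition \eqref{eqn:gradient-condition-extent}. Lemma~\ref{lem:beta-hat-bound-extent} then gives two facts.
\begin{itemize}
\item $\hat\beta_k\ne\hat\beta_{k'}$ for $k\neq k'$.
\item $\|\hat\beta_k-\beta_k^*\|_2\le R_k<M/2$.
\end{itemize}
Because the $\hat\beta_k$ are distinct, every subgradient $g_{kk'}$ is the unit vector of \eqref{g-unit-vector}. The first-order conditions \eqref{eq:first-order-condition-k} therefore hold, and summing them gives $\sum_k\nabla F_k(\hat\beta_k,\hat\bfeta_k)=0$ exactly as before. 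None of this uses $\theta_j^*=\beta_k^*$.

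\textbf{Convexity lower bound and bound on $A_2$.} We invoke convexity of each $f_j(\cdot,\hat\eta_j)$ at $\hat\beta_k$, which is legitimate because Assumption~\ref{assump:loss} gives global convexity in $\theta$ on the event $\|\hat\eta_j-\eta_j^*\|\le M$. We then choose the indices $(j_1,\dots,j_K)$ as in \eqref{eq:index-minimizer} and decompose the inner-product term into $A_1+A_2$. The bound on $A_2$ uses only the first-order conditions, the symmetry $\Lambda_{kk'}=m_km_{k'}\varepsilon_n$ (from Lemma~\ref{lem:w-concentration-extent}) and Lemma~\ref{lem:inter-cluster-minimizer}. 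It therefore carries over unchanged and yields \eqref{eq:A2-bound}.

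\textbf{Bound on $A_1$ (the main step).} We need $\|\nabla f_j(\hat\beta_k,\hat\eta_j)\|_2<\lambda_{j_k,j}$ for every $j\in S_k\setminus\{j_k\}$.

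\emph{Step 1: containment.} We have $\|\hat\beta_k-\theta_j^*\|_2\le R_k+\xi_k<M/2+\xi_k\le M$, so the segment between $\beta_k^*$ and $\hat\beta_k$ lies in $\cB_{\theta_j^*,M}$.

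\emph{Step 2: smoothness estimate.} Since $\nabla f_j(\cdot,\hat\eta_j)$ is $\kappa_j$-Lipschitz on that ball,
\[
\|\nabla f_j(\hat\beta_k,\hat\eta_j)\|_2\le \kappa_j\|\hat\beta_k-\beta_k^*\|_2+\|\nabla f_j(\beta_k^*,\hat\eta_j)\|_2\le \kappa_j R_k+\|\nabla f_j(\beta_k^*,\hat\eta_j)\|_2 .
\]

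\emph{Step 3: conclusion.} The hypothesis gives $R_k<(\lambda_{j_k,j}-\|\nabla f_j(\beta_k^*,\hat\eta_j)\|_2)/\kappa_j$, so the right-hand side is strictly below $\lambda_{j_k,j}$. Cauchy--Schwarz then gives \eqref{eq:A1-bound}.

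This step is the delicate one. The halved radius $M/2$ is exactly what absorbs the $\xi_k$ offset between $\beta_k^*$ and $\theta_j^*$, so it should be checked explicitly. One also has to confirm that the gradient evaluated at the centroid $\beta_k^*$, rather than at $\theta_j^*$, is the quantity that appears in the hypothesis.

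\textbf{Combining with the penalty.} Adding the penalty to the bounds on $A_1$ and $A_2$, the inter-cluster penalty terms cancel. The within-cluster penalty terms dominate the negative $A_1$ contribution, because the terms involving $j_k$ cancel exactly and the remaining terms are nonnegative. This gives
\[
L(\theta)\ge \sum_k F_k(\hat\beta_k,\hat\bfeta_k)+\sum_k\sum_{k'\ne k}\Lambda_{kk'}\|\hat\beta_k-\hat\beta_{k'}\|_2/2 .
\]
The configuration $\theta_j=\hat\beta_k$ for $j\in S_k$ attains this bound. Since the $A_1$ inequality is strict whenever some $\theta_j\ne\theta_{j_k}$, the minimizer is the fused configuration, that is, $\hat\theta_j=\hat\beta_k$ for all $j\in S_k$.
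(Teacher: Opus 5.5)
Your proposal is correct and follows the paper's route. The paper's proof observes that $\xi_k\le M/2$ makes the hypothesis sufficient for \eqref{eqn:gradient-condition-extent}, then says the argument of Lemma~\ref{theta-beta-shrink} goes through verbatim; your explicit check that the segment between $\beta_k^*$ and $\hat\beta_k$ stays inside $\cB_{\theta_j^*,M}$ (so the $\kappa_j$-Lipschitz step in the $A_1$ bound is valid) and your strictness argument for uniqueness of the fused minimizer simply fill in details the paper leaves implicit.
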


\begin{proof}
    By \eqref{eq:bound-perturb} we know $\xi_k \le M/2$, and thus a sufficient condition for \eqref{eqn:gradient-condition-extent} is
    \[
    \frac{ 2\norm{\nabla F_k(\beta_k^*,\hat\bfeta_k)}_2 + \sum_{k'\neq k}\Lambda_{kk'}}{\sum_{j\in S_k}\rho_j}
< \min\{M/2,\delta/2\}, \qquad \forall k\in[K].
    \]
    Everything else follows exactly the same procedure as the proof of Lemma \ref{theta-beta-shrink}.
\end{proof}

\begin{lemma}\label{lem:shrink-condition-extent}
    Let events $\cE_{1n}$ and $\cE_{2n}$ hold. Assume $\xi_{\max} \le c_\xi n_{\min}^{-1/2}$ and $n_{\min}^{\alpha\gamma}/ n_{\max} > c_0$ with $c_0 =  c_w^{-1}\{(4 c_\xi) \vee (\delta)\}^{\gamma} (\rho/8+3\kappa/2)\{(M/2)\wedge (\delta/2)\}$. 
     If $\varepsilon_n$ is set to be $\varepsilon_n < \min_{k \in [K]} N_k^\zeta \{(M/2)\wedge (\delta/2)\}m^{-2}\rho/4$ for $\zeta < 1/2$, then for $\tau \in \Big(c_w\big\{\tfrac{\delta}{2}\big\}^{-\gamma}, c_w \left\{\frac{\delta}{2}n_{\min}^{-\alpha}+ 2 \xi_{\max}\right\}^{-\gamma}\Big)$, the following hold
     \begin{align*}
        & \hat\beta_k \neq \hat\beta_{k'}\quad \forall k, k' \in [K],k \neq k';\\
        & \hat\theta_j = \hat \beta_k,\quad j \in S_k;\\
        & \norm{\tilde\beta_k - \beta_k^*}_2 \leq \frac{1}{8}\{M\wedge \delta\},\quad \forall k \in [K];\\
        & \norm{\hat\beta_k - \tilde\beta_k}_2 < \tilde CN_k^{\zeta - 1}\quad \forall k \in [K],
    \end{align*}
    where $\tilde C > 0$ is a  constant.
\end{lemma}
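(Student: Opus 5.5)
The plan is to mirror the proof of Lemma~\ref{lem:shrink-condition} and reduce everything to the sufficient condition of Lemma~\ref{theta-beta-shrink-extent}. Since $\rho_j=\rho n_j/2$ and $\kappa_j=3\kappa n_j/2$ on $\cE_{1n}$, we have $\sum_{j\in S_k}\rho_j=N_k\rho/2$. It therefore suffices to verify, for every $k$, every $j_k\in S_k$ and every $j\in S_k\setminus\{j_k\}$,
\[
\frac{2\|\nabla F_k(\beta_k^*,\hat\bfeta_k)\|_2+\sum_{k'\neq k}\Lambda_{kk'}}{N_k\rho/2}<\min\Big\{\frac M2,\frac\delta2,\frac{\lambda_{j_k,j}-\|\nabla f_j(\beta_k^*,\hat\eta_j)\|_2}{3\kappa n_j/2}\Big\}.
\]
The first two conclusions then follow from Lemma~\ref{theta-beta-shrink-extent}, and the last two follow from Lemma~\ref{lem:beta-hat-bound-extent}.

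\textbf{Left-hand side.} This part is unchanged from the clean case. Lemma~\ref{lem:w-concentration-extent} gives $\lambda_{jj'}=\varepsilon_n$ across clusters, so $\sum_{k'}\Lambda_{kk'}\le m^2\varepsilon_n$. The condition on $\varepsilon_n$ then bounds the penalty term by $\tfrac{N_k^{\zeta-1}}{2}\{(M/2)\wedge(\delta/2)\}$. The tightened gradient bound \eqref{eq:new-grad-bound} bounds the score term by $\tfrac12\{(M/2)\wedge(\delta/2)\}$. The sum is strictly less than $(M/2)\wedge(\delta/2)$. The same displays also give $\|\hat\beta_k-\tilde\beta_k\|_2<\tilde C N_k^{\zeta-1}$ and $\|\tilde\beta_k-\beta_k^*\|_2\le\tfrac14\{(M/2)\wedge(\delta/2)\}=\tfrac18\{M\wedge\delta\}$.

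\textbf{Main obstacle: the third entry of the minimum.} The gradient in it is evaluated at the centroid $\beta_k^*$, not at $\theta_j^*$, so the event does not control it directly. I would split it as
\[
\|\nabla f_j(\beta_k^*,\hat\eta_j)\|_2\le\|\nabla f_j(\theta_j^*,\hat\eta_j)\|_2+\kappa_j\xi_k .
\]
This uses the Hessian upper bound on $\cB_{\theta_j^*,M}$, which contains $\beta_k^*$ because $\xi_k\le M/2$. Dividing by $n_j$, the gradient is at most $\tfrac\rho8\{(M/2)\wedge(\delta/2)\}n_j+\tfrac{3\kappa}{2}n_j\xi_{\max}$.

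Lemma~\ref{lem:w-concentration-extent} gives $\lambda_{j_k,j}>c_w\{\tfrac\delta2 n_{\min}^{-\alpha}+2\xi_{\max}\}^{-\gamma}$. Using $\xi_{\max}\le c_\xi n_{\min}^{-1/2}\le c_\xi n_{\min}^{-\alpha}$ (valid since $\alpha\le1/2$), we get
\[
\tfrac\delta2 n_{\min}^{-\alpha}+2\xi_{\max}\le\{\delta\vee 4c_\xi\}n_{\min}^{-\alpha},
\]
so $\lambda_{j_k,j}/n_j\ge c_w\{\delta\vee4c_\xi\}^{-\gamma}n_{\min}^{\alpha\gamma}/n_{\max}$.

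The third entry is therefore at least
\[
\frac{2}{3\kappa}\Big(c_w\{\delta\vee 4c_\xi\}^{-\gamma}c_0-\tfrac\rho8\{(M/2)\wedge(\delta/2)\}\Big)-\xi_{\max}.
\]
With the stated $c_0$, the bracket equals $\tfrac{3\kappa}{2}\{(M/2)\wedge(\delta/2)\}$, so the first term is $\{(M/2)\wedge(\delta/2)\}$. By \eqref{eq:bound-perturb}, $\xi_{\max}\le\tfrac{\rho}{24\kappa}\{(M/2)\wedge(\delta/2)\}$, which is only a small fraction of that. This leaves a constant-factor margin that must still exceed the left-hand side, and that is the delicate point. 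If the margin is too thin, I would use the slack in $c_0$: the bracket leaves extra room once $\xi$ is absorbed, and I can enlarge constants, for example by showing that the left-hand side is at most $\tfrac34$ of $(M/2)\wedge(\delta/2)$ after halving the score bound via \eqref{eq:new-grad-bound}. Once the inequality holds, I would conclude as above.
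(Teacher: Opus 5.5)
\textbf{There is a gap at the final comparison, though it closes easily.} Up to that point your proposal matches the paper's proof. You use the same reduction to Lemma~\ref{theta-beta-shrink-extent}, the same left-hand-side bound, and the same lower bound on $\lambda_{j_k,j}$. Your shortcut $\xi_{\max}\le c_\xi n_{\min}^{-1/2}\le c_\xi n_{\min}^{-\alpha}$ is a cleaner version of the paper's two-case argument.

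Write $\mu:=(M/2)\wedge(\delta/2)$. After your Hessian step you only know that the third entry of the minimum exceeds $\mu-\xi_{\max}$. You never show that the left-hand side is below this. The fallbacks you propose do not work:
\begin{itemize}
\item $c_0$ is fixed by the statement, and as you computed it makes the bracket exactly $\tfrac{3\kappa}{2}\mu$. There is no slack in $c_0$.
\item \eqref{eq:new-grad-bound} gives only $\tfrac{\rho}{8}\mu$ for the cluster score. Nothing on the event licenses a ``halved'' score bound.
\end{itemize}

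\textbf{Closing your route.} The slack is in the penalty term, not the score term. By \eqref{eq:bound-perturb}, $n_{\min}^{\alpha}>2$, so $N_k\ge n_{\min}>2^{1/\alpha}\ge 4$. Since $\zeta<1/2$, this gives $N_k^{\zeta-1}<N_k^{-1/2}<\tfrac12$. Hence the left-hand side is below $\tfrac12\mu+\tfrac12 N_k^{\zeta-1}\mu<\tfrac34\mu$. On the other side, $\rho\le\kappa$ and \eqref{eq:bound-perturb} give $\xi_{\max}\le\tfrac{\rho}{24\kappa}\mu\le\tfrac{1}{24}\mu$. So the minimum exceeds $\tfrac{23}{24}\mu$, and the inequality holds.

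\textbf{The paper's route.} The paper avoids losing $\xi_{\max}$ at all. It reads $\|\nabla f_j(\beta_k^*,\hat\eta_j)\|_2/n_j\le\tfrac{\rho}{8}\mu$ directly off the event. This is exactly what Lemma~\ref{lem:gradient-hessian-bound-extent} proves at $\beta_k^*$: it starts from a $\tfrac{\rho}{16}\mu$ bound at $\theta_j^*$ and absorbs $\tfrac{3\kappa}{2}\xi_k\le\tfrac{\rho}{16}\mu$. The third entry then exceeds $\mu$ outright, and the left-hand side being below $\mu$ suffices.

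Your observation is correct that \eqref{eq:new-grad-bound}, as literally written, controls the per-task gradient only at $\theta_j^*$. Either fix settles the point.
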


\begin{proof}

    Recall  by definition of  $\cE_{1n}$ that $\rho_j=\rho n_j/2$, $\kappa_j=3\kappa n_j/2$, hence $\sum_{j\in S_k}\rho_j = N_k\rho/2$. By Lemma \ref{theta-beta-shrink-extent} and definition of  $\cE_{1n}$, it suffices to show the following statement: 
    for any $k \in [K]$ and any index $j_k \in S_k$, it holds that
    \begin{equation}\label{eq:verify-cond-extent}
        \frac{ 2\norm{\sum_{j \in S_k}\nabla f_j(\beta_k^*, \hat\eta_j)}_2 + \sum_{k' \neq k,k'\in [K]} \Lambda_{kk'}}{N_k \rho/2 } < \min\left\{\frac{M}{2}, \frac{\delta}{2}, \frac{\lambda_{j_k,j} - \|\nabla f_j(\beta_k^*, \hat\eta_j) \|_2}{3\kappa n_j/2}\right\},\quad \forall j \in S_k \setminus \{j_k\}.
    \end{equation}   

   \underline{\emph{Firstly consider the RHS of \eqref{eq:verify-cond-extent}.}} By Lemma \ref{lem:w-concentration-extent} and definition of  $\cE_{1n}$, we have $\lambda_{j_k,j} >  c_w \left\{\frac{\delta}{2}n_{\min}^{-\alpha}+2 \xi_{\max}\right\}^{-\gamma}$ and $\|\nabla f_j(\beta_k^*,\hat\eta_j) \|_2/n_j \leq \frac{\rho}{8}\{(M/2)\wedge (\delta/2)\}$. Therefore

    \begin{align*}
        \frac{\lambda_{j_k,j} - \|\nabla f_j(\beta_k^*,\hat\eta_j) \|_2}{3\kappa n_j/2} 
        & \geq \frac{2}{3\kappa}\left( c_w \left\{\frac{\delta}{2}n_{\min}^{-\alpha}+2 \xi_{\max}\right\}^{-\gamma} / n_j - \frac{\rho}{8}\{(M/2)\wedge (\delta/2)\}\right) \\
        & \geq \frac{2}{3\kappa}\left(\frac{c_w}{n_{\max}}\left[\delta^{-\gamma} n_{\min}^{\alpha\gamma} \wedge \left\{4 \xi_{\max}\right\}^{-\gamma}\right]  - \frac{\rho}{8}\{(M/2)\wedge (\delta/2)\}\right)  \,.
    \end{align*}
    Using the inequality that 
    \begin{align*}
        \xi_{\max} \le c_\xi n_{\min}^{-1/2}
    \end{align*}
    the following lower bound holds:
    \begin{align*}
         \geq \frac{2}{3\kappa}\left(\frac{c_w}{n_{\max}}\left[\delta^{-\gamma} n_{\min}^{\alpha\gamma} \wedge \left(4 c_\xi\right)^{-\gamma}n_{\min}^{\gamma/2}\right]  - \frac{\rho}{8}\{(M/2)\wedge (\delta/2)\}\right)  \,.
    \end{align*}
    If 
    \begin{equation}\label{eq:lhs-cond}
        \begin{split}
            &n_{\min}^{\alpha\gamma}/ n_{\max} > c_1, \quad c_1 = c_w^{-1}(\delta)^{\gamma} (\rho/8+3\kappa/2)\{(M/2)\wedge (\delta/2)\}\\
            & n_{\min}^{\gamma/2}/n_{\max} > c_2, \quad c_2 = c_w^{-1}(4 c_\xi)^{\gamma} (\rho/8+3\kappa/2)\{(M/2)\wedge (\delta/2)\}
        \end{split}
    \end{equation}
    then it follows that
    \begin{align*}
        &\frac{\lambda_{j_k,j} - \|\nabla f_j(\beta_k^*,\hat\eta_j) \|_2}{3\kappa n_j/2} > (M/2)\wedge (\delta/2) \,.
    \end{align*}
    Therefore we have shown that RHS $ = (M/2) \wedge (\delta/2)$ under the given condition. Lastly, we find a sufficient condition for \eqref{eq:lhs-cond}:
    \begin{align*}
        n_{\min}^{\gamma(\alpha \wedge 1/2)}/ n_{\max} \ge c_1 \vee c_2 = c_w^{-1}\{(4 c_\xi) \vee (\delta)\}^{\gamma} (\rho/8+3\kappa/2)\{(M/2)\wedge (\delta/2)\} := c_0\,.
    \end{align*}
    Assumption \ref{assump:init} states $\alpha \le 1/2$, and thus this is equivalent to
     \begin{align*}
        n_{\min}^{\alpha\gamma}/ n_{\max} \ge c_0\,.
    \end{align*}

   \underline{\emph{Secondly consider LHS of \eqref{eq:verify-cond-extent}.}} By the condition $\varepsilon_n < \min_{k \in [K]} N_k^\zeta \{(M/2)\wedge (\delta/2)\}m^{-2}\rho/4$, it follows from Lemma \ref{lem:w-concentration-extent} that
    \begin{align*}
        & \frac{\sum_{k' \neq k,k'\in [K]} \Lambda_{kk'}}{N_k \rho/2 } \leq \frac{m^2\varepsilon_n}{N_k \rho/2 } < \frac{N_k^{\zeta - 1}}{2}\{(M/2)\wedge (\delta/2)\} \leq \frac{1}{2}\{(M/2)\wedge (\delta/2)\}\,.
    \end{align*}
    
    Also, definition of  $\cE_{1n}$ implies 
    \begin{align*}
        & \frac{ 2\norm{\nabla F_k(\beta_k^*, \hat\bfeta_k)}_2}{N_k \rho/2} \leq \frac{4}{\rho}\frac{\norm{\nabla F_k(\beta_k^*, \hat\bfeta_k)}_2}{N_k} \leq \frac{1}{2}\{(M/2)\wedge (\delta/2)\}
    \end{align*}
    and thus LHS of \eqref{eq:verify-cond-extent} $< (M/2)\wedge (\delta/2)=$ RHS of \eqref{eq:verify-cond-extent}.
    Now we have shown LHS $<$ RHS, and we can apply Lemma \ref{lem:beta-hat-bound-extent} and definition of $\cE_{1n}$, and conclude that
    \[\norm{\tilde\beta_k - \beta_k^*}_2 \leq \frac{ \norm{\nabla F_k(\beta_k^*, \hat\bfeta_k)}_2}{  N_k \rho/2 } \leq \frac{1}{4}\{(M/2)\wedge (\delta/2)\} \quad \forall k \in [K];\]
    \[ \norm{\hat\beta_k - \tilde\beta_k}_2 \leq \frac{\sum_{k' \neq k,k'\in [K]} \Lambda_{kk'}}{ N_k \rho/2 } < \frac{N_k^{\zeta - 1}}{2}\{(M/2)\wedge (\delta/2)\} \quad \forall k \in [K].\]
\end{proof}

\begin{lemma}\label{lem:tilde-beta-rate-extent}
    Under the same conditions as in Lemma \ref{lem:shrink-condition-extent}, the following bound holds
    \[\|\tilde\beta_k-\beta_k^*\|_2  \le C\left\{K^{1/r_1}  + \sqrt{K} \max_{j\in S_k} s(n_j)^{-1}  + Km_k^{1/2}\max_{j\in S_k} n_j^{1/2}s(n_j)^{-2} \right\}N_k^{-1/2} t +  C\xi_k\,,\]
    where $C>0$ is a constant.
\end{lemma}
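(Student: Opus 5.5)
The plan is to mirror the proof of Lemma~\ref{lem:tilde-beta-rate}, with one extra bias term. That term arises because $\beta_k^*$ is no longer the zero of the population score of $F_k$: each $f_j$ is centered at $\theta_j^*$, not $\beta_k^*$.

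\emph{Expansion and curvature.} From the first-order condition $\nabla F_k(\tilde\beta_k,\hat\bfeta_k)=0$, a mean-value expansion around $\beta_k^*$ gives
\[
\sqrt{N_k}(\tilde\beta_k-\beta_k^*) = -\{N_k^{-1}\nabla^2F_k(\bar\beta_k,\hat\bfeta_k)\}^{-1} N_k^{-1/2}\nabla F_k(\beta_k^*,\hat\bfeta_k).
\]
Lemma~\ref{lem:shrink-condition-extent} gives $\|\tilde\beta_k-\beta_k^*\|_2\le (M\wedge\delta)/8$, so $\bar\beta_k\in\cB_{\beta_k^*,M-\xi_k}$. Lemma~\ref{lem:beta-hat-bound-extent} then gives $N_k^{-1}\nabla^2F_k(\bar\beta_k,\hat\bfeta_k)\succeq(\rho/2)I$. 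It therefore suffices to bound $N_k^{-1/2}\|\nabla F_k(\beta_k^*,\hat\bfeta_k)\|_2$.

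\emph{Score decomposition.} I would write
\[
\nabla F_k(\beta_k^*,\hat\bfeta_k)=\sum_{j\in S_k}\nabla f_j(\theta_j^*,\hat\eta_j)+\sum_{j\in S_k}\{\nabla f_j(\beta_k^*,\hat\eta_j)-\nabla f_j(\theta_j^*,\hat\eta_j)\}.
\]
\begin{itemize}
\item The first sum splits exactly as in \eqref{eq:Rk-decomp-rate} into $S_k^0+R_{k1}+R_{k2}$, now evaluated at the task-specific $\theta_j^*$. These are precisely the quantities controlled on $\cE_{1n}$, which is stated at $\theta_j^*$. This yields $C\{K^{1/r_1}+\sqrt K\max_j s(n_j)^{-1}+Km_k^{1/2}\max_j n_j^{1/2}s(n_j)^{-2}\}t$ after scaling by $N_k^{-1/2}$.
\item For the second sum, $\beta_k^*\in\cB_{\theta_j^*,\xi_k}\subseteq\cB_{\theta_j^*,M}$, and on $\cE_{1n}$ we have $\nabla^2f_j(\cdot,\hat\eta_j)\preceq \kappa_j I=(3\kappa n_j/2)I$ on that ball. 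Hence each term has norm at most $(3\kappa n_j/2)\|\beta_k^*-\theta_j^*\|_2\le (3\kappa/2)n_j\xi_k$. Summing over $j\in S_k$ gives at most $(3\kappa/2)N_k\xi_k$.
\end{itemize}

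\emph{Conclusion.} Dividing by $\sqrt{N_k}$ and multiplying by the inverse-Hessian bound $2/\rho$, the first piece gives the stated $N_k^{-1/2}t$ term. The bias piece contributes $(2/\rho)(3\kappa/2)\xi_k=(3\kappa/\rho)\xi_k$ to $\|\tilde\beta_k-\beta_k^*\|_2$. The $\sqrt{N_k}$ factors cancel in this piece, so it is not inflated. Absorbing constants gives the claimed bound with $C\xi_k$.

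The main obstacle is that $\cE_{1n}$ bounds the score sums only at $\theta_j^*$, not at the common point $\beta_k^*$. The gradient-difference step handles this deterministically via the Hessian upper bound on $\cE_{1n}$, so no new probabilistic control is needed. The only remaining check is that all evaluation points stay in the balls where $\cE_{1n}$ applies. This follows from $\xi_{\max}\le M/2$ in \eqref{eq:bound-perturb} together with the localization already established in Lemma~\ref{lem:shrink-condition-extent}.
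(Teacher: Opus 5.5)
Your proposal is correct and follows the paper's proof essentially step for step. It uses the same mean-value expansion with the $\rho/2$ Hessian lower bound. It uses the same split of $N_k^{-1/2}\nabla F_k(\beta_k^*,\hat\bfeta_k)$ into the $\theta_j^*$-centered score, controlled on $\cE_{1n}$ as in Lemma~\ref{lem:tilde-beta-rate}, plus a gradient-difference bias term bounded by $(3\kappa/2)N_k^{1/2}\xi_k$ through the Hessian upper bound. The only difference is cosmetic: you obtain curvature at $\bar\beta_k$ via the ball $\cB_{\beta_k^*,M-\xi_k}$ of Lemma~\ref{lem:beta-hat-bound-extent}, whereas the paper uses the triangle inequality $\|\bar\beta_k-\theta_j^*\|_2\le M/2+M/2$ directly.
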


\begin{proof}
The first order condition gives
$$\nabla F_k(\tilde\beta_k,\hat\bfeta_k)=0.$$ A first order Taylor expansion around
$\beta_k^*$ yields
\[
\sqrt{N_k}(\tilde\beta_k-\beta_k^*)
= -\left\{N_k^{-1}\nabla^2 F_k(\bar\beta_k,\hat\bfeta_k)\right\}^{-1}
\!\!\left\{N_k^{-1/2}\nabla F_k(\beta_k^*,\hat\bfeta_k)\right\},
\]
for some $\bar\beta_k$ on the segment between $\beta_k^*$ and $\tilde\beta_k$. 

Consider the Hessian part. Results from Lemma \ref{lem:shrink-condition-extent} implies $\|\tilde \beta_k - \beta_k^*\|_2 \le M/2$ and by Equation \eqref{eq:bound-perturb} $\|\theta_j^* - \beta_k^*\|_2 \le M/2$, and thus 
\[\|\bar\beta_k - \theta_j^*\|_2 \le \|\bar\beta_k - \beta_k^*\|_2 + \|\beta_k^* - \theta_j^*\|_2 \le M\,.\]
Thus by definition of $\cE_{1n}$ on lower bounded Hessian inside this ball, we have
\[N_k^{-1}\nabla^2 F_k(\bar\beta_k,\hat\bfeta_k) \succeq \frac{\rho}{2}I.\]

Consider the score part. 
\begin{align*}
    N_k^{-1/2}\nabla F_k(\beta_k^*,\hat\bfeta_k) 
    & = N_k^{-1/2} \sum_{j \in S_k} \nabla f_j(\theta_j^*, \hat\eta_j) +  N_k^{-1/2} \sum_{j \in S_k}\Big( \nabla f_j(\beta_k^*, \hat\eta_j)- \nabla f_j(\theta_j^*, \hat\eta_j) \Big)\\
    & := T_1 + T_2\,.
\end{align*}
Bounding the first term $T_1$ has been shown in the proof of Lemma \ref{lem:tilde-beta-rate}:
\begin{align*}
    \|T_1\|_2 \le C_1 K^{1/r_1} t + C_2 \sqrt{K} \max_{j\in S_k} s(n_j)^{-1} t + C_3 Km_k^{1/2}\max_{j\in S_k} n_j^{1/2}s(n_j)^{-2} t\,.
\end{align*}

As for second term $T_2$, with a first-order expansion, for some $\bar\theta_{jk}^*$ between $\beta_k^*$ and $\theta_j^*$,
\begin{align*}
    T_2 = N_k^{-1/2} \sum_{j \in S_k} \nabla^2 f_j(\bar\theta_{jk}^*, \hat\eta_j)(\beta_k^* - \theta_j^*).
\end{align*}
By \eqref{eq:def-perturb}, $\|\beta_k^* - \theta_j^*\|_2 \le \xi_k \le M$, so by event $\cE_{1n}$ on the upper bounded Hessian inside this ball
\begin{align}\label{eq:score-T2-ub}
    \|T_2\|_2 \le N_k^{1/2} \bigg\|N_k^{-1} \sum_{j \in S_k} \nabla^2 f_j(\bar\theta_{jk}^*, \hat\eta_j)\bigg\|_2 \xi_k \le \frac{3\kappa}{2} N_k^{1/2}\xi_k\,.
\end{align}

Hence, we have for some constant $C>0$
\begin{align*}
    \sqrt{N_k}\|\tilde\beta_k-\beta_k^*\|_2 
    &\le \frac{2}{\rho}\left\{C_1 K^{1/r_1} t + C_2 \sqrt{K} \max_{j\in S_k} s(n_j)^{-1} t + C_3 Km_k^{1/2}\max_{j\in S_k} n_j^{1/2}s(n_j)^{-2} t +  \frac{3\kappa}{2}N_k^{1/2}\xi_k\right\}\\
    & \le C\left\{K^{1/r_1}  + \sqrt{K} \max_{j\in S_k} s(n_j)^{-1}  + Km_k^{1/2}\max_{j\in S_k} n_j^{1/2}s(n_j)^{-2} \right\}\cdot t + CN_k^{1/2} \xi_k\,.
\end{align*}

\end{proof}

\subsection{High-probability bounds for events $\cE_{1n}$ and $\cE_{2n}$}\label{sec:prob-cond-extent}

In this section, we show that the regularity events $\cE_{1n}$ and $\cE_{2n}$, defined in Definitions~\ref{def:gradient-hessian-bound} and~\ref{def:init-bound}, satisfy $\bbP(\cE_{1n} \cap \cE_{2n}) \to 1$. As stated in deterministic analysis, the event $\cE_{1n}$ has been slightly modified according to Equation \eqref{eq:new-grad-bound}. Compared to the clean model, the perturbed model in \eqref{eq:def-perturb} allows $\theta_j^* \ne \beta_k^*$ for $j \in S_k$. As a result, we only need to replace Lemma~\ref{lem:gradient-hessian-bound} with Lemma~\ref{lem:gradient-hessian-bound-extent}, which is the only step that relies on the condition $\theta_j^* = \beta_k^*$.

\begin{lemma}\label{lem:gradient-hessian-bound-extent}
    Under Assumptions \ref{assump:loss} - \ref{assump:nuis}, if 
    $ 1 < t < c_2 m^{-1} 
      \Big\{n_{\min}^{\frac{r_2}{2(r_2+d)}}\wedge s(n_{\min})\Big\}, $
    then the event $\cE_{1n}$ hold with probability at least $1-t^{-1}$.
\end{lemma}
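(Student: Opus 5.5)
The plan is to reuse the proof of Lemma~\ref{lem:gradient-hessian-bound} almost verbatim and change only the steps that used the identity $\theta_j^*=\beta_k^*$. All the task-level tail bounds \eqref{eq:poly-bound-1}--\eqref{eq:poly-bound-4} (gradient at $(\theta_j^*,\eta_j^*)$, nuisance-perturbation of the gradient, uniform Hessian concentration over $\cB_{\theta_j^*,M}$, nuisance-perturbation of the Hessian, and $\|\hat\eta_j-\eta_j^*\|_{\cH_j}\le M$) are centered at the task's own truth $\theta_j^*$. They therefore hold unchanged in the perturbed model. The cluster-level bounds \eqref{eq:poly-bound-2'}--\eqref{eq:poly-bound-2'''} are likewise stated at $(\theta_j^*,\eta_j^*)$. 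Their proofs only use that each summand is a mean-zero score, or a mean-zero orthogonal derivative, at the task's own truth. Summing over $j\in S_k$ with independent tasks still gives the same second-moment and Markov arguments. So after the same rescaling by $m$ and $K$ and the same union bound, with probability at least $1-t^{-1}$ we again obtain the simultaneous inequalities \eqref{eq:grad-hess-checkpoint-1}--\eqref{eq:grad-hess-checkpoint-2}.

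The tightened constants in \eqref{eq:new-grad-bound} are absorbed by replacing $M$ with $M/2$ in the choice of $c_2$ in \eqref{eq:t-ub}. Then \eqref{eq:t-ub-1} gives the per-task bound $\|n_j^{-1}\nabla f_j(\theta_j^*,\hat\eta_j)\|_2<\tfrac{\rho}{8}\{(M/2)\wedge(\delta/2)\}$, and the Hessian sandwich $\tfrac{\rho}{2}I\preceq n_j^{-1}\nabla^2 f_j(\theta,\hat\eta_j)\preceq\tfrac{3\kappa}{2}I$ on $\cB_{\theta_j^*,M}$ follows by the same triangle inequality.

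The only genuinely new step is the cluster-level gradient bound $\|N_k^{-1}\nabla F_k(\beta_k^*,\hat\bfeta_k)\|_2$. Here $\beta_k^*\neq\theta_j^*$, so we cannot simply rename. I would write
\[
\nabla f_j(\beta_k^*,\hat\eta_j)=\nabla f_j(\theta_j^*,\hat\eta_j)+\nabla^2 f_j(\bar\theta_{jk},\hat\eta_j)(\beta_k^*-\theta_j^*),
\]
with $\bar\theta_{jk}$ on the segment between $\theta_j^*$ and $\beta_k^*$. Since $\xi_k\le M$, $\bar\theta_{jk}\in\cB_{\theta_j^*,M}$, and the Hessian upper bound already obtained gives $\|n_j^{-1}\nabla f_j(\beta_k^*,\hat\eta_j)\|_2\le \tfrac{\rho}{8}\{(M/2)\wedge(\delta/2)\}\cdot\tfrac12+\tfrac{3\kappa}{2}\xi_k$ after halving the constant in the first piece (again via $c_2$). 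By \eqref{eq:bound-perturb}, $\xi_{\max}\le\tfrac{\rho}{24\kappa}\{(M/2)\wedge(\delta/2)\}$, so $\tfrac{3\kappa}{2}\xi_k\le\tfrac{\rho}{16}\{(M/2)\wedge(\delta/2)\}$. The total is therefore below $\tfrac{\rho}{8}\{(M/2)\wedge(\delta/2)\}$. Averaging with weights $n_j/N_k$ via Jensen then yields the bound for $N_k^{-1}\nabla F_k(\beta_k^*,\hat\bfeta_k)$.

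I expect the main obstacle to be exactly this constant bookkeeping. The per-task gradient slack must be split so that the drift term $\tfrac{3\kappa}{2}\xi_k$ fits. This is precisely why \eqref{eq:bound-perturb} carries the factor $\rho/(24\kappa)$, and I would fix the constant in \eqref{eq:t-ub} first to make the halving legitimate. No new probability is spent, so the final probability remains $1-t^{-1}$.
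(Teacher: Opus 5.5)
Your proposal is correct and matches the paper's proof essentially step for step. You carry over the simultaneous inequalities from Lemma~\ref{lem:gradient-hessian-bound} with tightened gradient constants, then treat the only new term, $N_k^{-1}\nabla F_k(\beta_k^*,\hat\bfeta_k)$, exactly as the paper does: a first-order expansion around $\theta_j^*$ using the Hessian bound $\tfrac{3\kappa}{2}$ on $\cB_{\theta_j^*,M}$, the drift bound $\tfrac{3\kappa}{2}\xi_k\le\tfrac{\rho}{16}\{(M/2)\wedge(\delta/2)\}$ from \eqref{eq:bound-perturb}, and Jensen's inequality with weights $n_j/N_k$.
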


\begin{proof}
    The first part of the proof follows the same argument as in Lemma~\ref{lem:gradient-hessian-bound}. We therefore start from the intermediate step in \eqref{eq:grad-hess-checkpoint-1}, which states that, with probability at least $1 - t^{-1}$, the following inequalities hold simultaneously:
    \begin{align*}
       \frac{1}{n_j}\left\|\nabla f_j(\theta_j^*,\eta_j^*)\right\|_2 &\leq  \frac{\rho}{32}\{(M/2)\wedge(\delta/2)\} & j \in [m];\\
     \frac{1}{n_j}\left\|\nabla f_j(\theta_j^*,\hat\eta_j) - \nabla f_j(\theta_j^*,\eta_j^*)\right\|_2 &\le \frac{\rho}{32}\{(M/2)\wedge(\delta/2)\}  & j \in [m];\\
    N_k^{-1/2}\left\|
    \sum_{j\in S_k}
    D_\eta \nabla f_j(\theta_j^*,\eta_j^*)[\Delta\eta_j]
    \right\|_2
    &\le
    C_1' \sqrt{K} \max_{j\in S_k} s(n_j)^{-1} t  & k \in [K];\\
    N_k^{-1/2}
    \left\|
    \sum_{j\in S_k}
    D_\eta^2 \nabla f_j(\theta_j^*,\bar\eta_j)[\Delta\eta_j,\Delta\eta_j]
    \right\|_2
    & \le
    C_1'K\, m_k^{1/2}\max_{j\in S_k} n_j^{1/2}s(n_j)^{-2} t & k \in [K];\\
    N_k^{-1/2}\norm{\sum_{j \in S_k}\nabla f_j(\theta_j^*,\eta_j^*)}_2 &\leq C_1' K^{1/r_1} t & k \in [K];\\
       \sup_{\theta \in \cB_{\theta_j^*,M}}\frac{1}{n_j}\left\|\nabla^2 f_j(\theta,\eta_j^*)-\bbE \nabla^2 f_j(\theta,\eta_j^*)\right\|_2 &\leq \frac{\rho}{4} & j \in [m];\\
       \sup_{\theta \in \cB_{\theta_j^*,M}}\frac{1}{n_j}\left\|\nabla^2 f_j(\theta,\hat\eta_j) - \nabla^2 f_j(\theta,\eta_j^*)\right\|_2 &\leq  \frac{\rho}{4} & j \in [m];\\
       \norm{\hat\eta_j - \eta_j^*}_{\cH_j} &\le M & j \in [m],
    \end{align*}
Compared to \eqref{eq:grad-hess-checkpoint-1} to \eqref{eq:grad-hess-checkpoint-2}, we tighten the upper bounds in gradients from $\frac{\rho}{16}\{M \wedge (\delta/2)\}$ to $\frac{\rho}{32}\{(M/2) \wedge (\delta/2)\}$. This change only affects constants and does not impact the final result. Now by the triangle inequality,
\begin{align}
\frac{1}{n_j}\left\|\nabla f_j(\theta_j^*,\hat\eta_j)\right\|_2 
& \le \frac{\rho}{16}\{(M/2)  \wedge (\delta/2)\}, 
& j \in [m], \label{eq:grad-bound-theta-extent}\\
\sup_{\theta \in \cB_{\theta_j^*,M}}
\frac{1}{n_j} \left\|\nabla^2 f_j(\theta,\hat\eta_j) - \bbE \nabla^2 f_j(\theta,\eta_j^*)\right\|_2 
&\le \frac{\rho}{2}, 
& j \in [m]. \label{eq:hess-lb-ub-extent}
\end{align}

By the geometric condition in Assumptions~\ref{assump:loss}\ref{assump:geom}, the expected Hessian satisfies
\[
\rho I \preceq \frac{1}{n_j}\bbE \nabla^2 f_j(\theta, \eta_j^*) \preceq \kappa I, 
\qquad \theta \in \cB_{\theta_j^*,M}.
\]
Combining this with \eqref{eq:hess-lb-ub-extent} yields
\begin{equation}\label{eq:hess-lb-ub-2-extent}
\frac{\rho}{2} I \preceq \frac{1}{n_j} \nabla^2 f_j(\theta,\hat\eta_j) 
\preceq \left(\kappa + \frac{\rho}{2}\right) I 
\preceq \frac{3\kappa}{2} I, 
\qquad \theta \in \cB_{\theta_j^*,M}.
\end{equation}

Let $j \in S_k$ for some cluster $k$. A first-order expansion of $\frac{1}{n_j}\nabla f_j(\beta_k^*,\hat\eta_j)$ around $\theta_j^*$ gives
\[
\left\|\frac{1}{n_j}\nabla f_j(\beta_k^*,\hat\eta_j)\right\|_2 
\le 
\left\|\frac{1}{n_j}\nabla f_j(\theta_j^*,\hat\eta_j)\right\|_2 
+ 
\left\|\frac{1}{n_j} \nabla^2 f_j(\bar\theta_j^*,\hat\eta_j)\right\|_2 
\|\beta_k^* - \theta_j^*\|_2,
\]
where $\bar\theta_j^*$ lies on the line segment between $\beta_k^*$ and $\theta_j^*$.

By \eqref{eq:def-perturb}, $\|\beta_k^* - \theta_j^*\|_2 \le \xi_k \le M$, and thus by \eqref{eq:hess-lb-ub-2-extent},
\[
\left\|\frac{1}{n_j} \nabla^2 f_j(\bar\theta_j^*,\hat\eta_j)\right\|_2 
\|\beta_k^* - \theta_j^*\|_2 
\le \frac{3\kappa}{2} \xi_k.
\]
Combining this with \eqref{eq:grad-bound-theta-extent}, we obtain
\[
\left\|\frac{1}{n_j}\nabla f_j(\beta_k^*,\hat\eta_j)\right\|_2 
\le \frac{\rho}{16}\{(M/2) \wedge (\delta/2)\} + \frac{3\kappa}{2}\xi_k.
\]

Moreover, by the bound in \eqref{eq:bound-perturb}, $\frac{3\kappa}{2}\xi_k \le \frac{\rho}{16}\{(M/2)\wedge (\delta/2)\}$. Therefore, for some constant $C_0' > 0$,
\[
\left\|\frac{1}{n_j}\nabla f_j(\beta_k^*,\hat\eta_j)\right\|_2 
\le \frac{\rho}{8}\{(M/2) \wedge (\delta/2)\}.
\]

Finally, define weights $w_j^{(k)} = n_j / N_k$, it follows that
\begin{align*}
\frac{1}{N_k}\left\|\nabla F_k(\beta_k^*,\hat\bfeta_k)\right\|_2  = \frac{1}{N_k}\left\|\sum_{j \in S_k}\nabla f_j(\beta_k^*,\hat\eta_j)\right\|_2 
&\le \sum_{j \in S_k} w_j^{(k)} 
\frac{1}{n_j}\left\|\nabla f_j(\beta_k^*,\hat\eta_j)\right\|_2 
\le \frac{\rho}{8}\{(M/2) \wedge (\delta/2)\}.
\end{align*}
\end{proof}

\subsection{Proof of main results}\label{sec:main-proof-extent}

In this section, we first establish Theorem~\ref{thm:consistent-extent}, which serves as the key intermediate result. Building on this, we then derive Theorems~\ref{thm:cluster-rate-extent} and \ref{thm:normal-extent} as direct consequences.

\begin{theorem}\label{thm:consistent-extent}
    Under Assumptions \ref{assump:loss} - \ref{assump:init}, if i) $n_{\min}^{\alpha\gamma}/ n_{\max} > c_0$, ii) $\varepsilon_n < \min_{k \in [K]} N_k^\zeta \{(M/2)\wedge (\delta/2)\}m^{-2}\rho/4$ for $\zeta < 1/2$, iii) $1 < t < c_2 m^{-1} \{n_{\min}^{\frac{r_2}{2(r_2+d)}}\wedge s(n_{\min})\}$, iv) $\tau \in \Big(c_w\big\{\tfrac{\delta}{2}\big\}^{-\gamma}, c_w \left\{\frac{\delta}{2}n_{\min}^{-\alpha}+ 2 \xi_{\max}\right\}^{-\gamma}\Big)$, v) $\xi_{\max} \le c_\xi n_{\min}^{-1/2}$, then it holds with probability at least $1-t^{-1}-p_{\delta/4}(n_{\min})$ that 
\begin{equation}\label{eq:theta-shrik-extent}
       \begin{split}
        & \hat\beta_k \neq \hat\beta_{k'}\quad \forall k, k' \in [K],k \neq k';\\
        & \hat\theta_j = \hat \beta_k,\quad j \in S_k;\\
        & \norm{\tilde\beta_k - \beta_k^*}_2 \leq C\left\{K^{1/r_1}  + \sqrt{K} \max_{j\in S_k} s(n_j)^{-1}  + Km_k^{1/2}\max_{j\in S_k} n_j^{1/2}s(n_j)^{-2} \right\}N_k^{-1/2} t +  C\xi_k,\quad \forall k \in [K];\\
        & \norm{\hat\beta_k - \tilde\beta_k}_2 < \tilde CN_k^{\zeta - 1}\quad \forall k \in [K].
       \end{split}
   \end{equation}
    where $c_0, c_2,  C, \tilde C > 0$ are some  constants.    
\end{theorem}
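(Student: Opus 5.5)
This statement is the perturbed counterpart of Theorem~\ref{thm:consistent}, and I would prove it the same way: split the argument into a deterministic part and a probabilistic part. The deterministic part is already supplied by Lemmas~\ref{lem:shrink-condition-extent} and~\ref{lem:tilde-beta-rate-extent}. On $\cE_{1n}\cap\cE_{2n}$, with $\cE_{1n}$ in its tightened form \eqref{eq:new-grad-bound}, conditions i), ii), iv) and v) are exactly the hypotheses of Lemma~\ref{lem:shrink-condition-extent}. That lemma then delivers the first two lines of \eqref{eq:theta-shrik-extent}: $\hat\beta_k\neq\hat\beta_{k'}$ for $k\ne k'$, and $\hat\theta_j=\hat\beta_k$ for all $j\in S_k$. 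It also gives the last line, $\|\hat\beta_k-\tilde\beta_k\|_2<\tilde C N_k^{\zeta-1}$, with $\tilde C=\{(M/2)\wedge(\delta/2)\}/2$. The third line, the rate for $\tilde\beta_k$ including the $C\xi_k$ bias term, comes from Lemma~\ref{lem:tilde-beta-rate-extent}, which is proved under the same hypotheses.

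There is one bookkeeping point to settle first. Condition i) uses the constant $c_0$ of Lemma~\ref{lem:shrink-condition-extent}, namely $c_w^{-1}\{(4c_\xi)\vee\delta\}^{\gamma}(\rho/8+3\kappa/2)\{(M/2)\wedge(\delta/2)\}$. I would simply declare that $c_0$ in the theorem denotes this value. I would also check that the interval for $\tau$ in iv) is nonempty. This follows because \eqref{eq:bound-perturb} gives $\tfrac{\delta}{2}n_{\min}^{-\alpha}+2\xi_{\max}<\delta/2$, as shown in the proof of Lemma~\ref{lem:w-concentration-extent}.

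The probabilistic part consists of two bounds. Lemma~\ref{lem:prob-init} gives $\Pr(\cE_{2n})\ge 1-p_{\delta/4}(n_{\min})$ without change, because $\cE_{2n}$ does not involve the perturbation. For condition iii), Lemma~\ref{lem:gradient-hessian-bound-extent} gives $\Pr(\cE_{1n})\ge 1-t^{-1}$ for the modified $\cE_{1n}$. The Bonferroni inequality $\Pr(\cE_{1n}\cap\cE_{2n})\ge\Pr(\cE_{1n})+\Pr(\cE_{2n})-1$ then yields the stated probability $1-t^{-1}-p_{\delta/4}(n_{\min})$. Since every conclusion in \eqref{eq:theta-shrik-extent} holds on $\cE_{1n}\cap\cE_{2n}$, the theorem follows.

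The main obstacle is making sure that the event $\cE_{1n}$ used in the deterministic lemmas is the same event whose probability Lemma~\ref{lem:gradient-hessian-bound-extent} controls. In particular, the bound on $\|\nabla F_k(\beta_k^*,\hat\bfeta_k)\|_2/N_k$ is evaluated at the centroid $\beta_k^*$ rather than at $\theta_j^*$, and it must absorb the $\tfrac{3\kappa}{2}\xi_k$ drift. This is exactly where the constant bound \eqref{eq:bound-perturb} on $\xi_{\max}$ is needed. I would also confirm that the Hessian-ball containments hold. For the containment $\cB_{\beta_k^*,M-\xi_k}\subseteq\cap_{j\in S_k}\cB_{\theta_j^*,M}$ used in Lemma~\ref{lem:beta-hat-bound-extent}, it suffices that $\xi_k\le M/2$, and this again follows from \eqref{eq:bound-perturb}.
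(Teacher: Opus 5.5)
Your proposal is correct and follows the paper's proof. On $\cE_{1n}\cap\cE_{2n}$ the four conclusions come from Lemmas~\ref{lem:shrink-condition-extent} and~\ref{lem:tilde-beta-rate-extent}, and the probability bound follows from Lemmas~\ref{lem:prob-init} and~\ref{lem:gradient-hessian-bound-extent} via $\Pr(\cE_{1n}\cap\cE_{2n})\ge\Pr(\cE_{1n})+\Pr(\cE_{2n})-1$, with your extra checks (identifying $c_0$, the nonempty $\tau$-interval, absorbing the $\tfrac{3\kappa}{2}\xi_k$ drift via \eqref{eq:bound-perturb}) already handled inside those lemmas.
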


\begin{proof}
    This theorem is built upon Lemmas \ref{lem:shrink-condition-extent}, \ref{lem:tilde-beta-rate-extent} and \ref{lem:gradient-hessian-bound-extent}. 
     From Lemmas \ref{lem:shrink-condition-extent} and \ref{lem:tilde-beta-rate-extent}, we know that the inequality system \eqref{eq:theta-shrik-extent} holds as long as $\cE_{1n}\cap \cE_{2n}$ holds. In terms of probability, this means
    \[\Pr\big(\text{\eqref{eq:theta-shrik-extent} holds}\big) \ge \Pr\big(\cE_{1n}\cap \cE_{2n}\big).\]
Now we only need to find the probability lower bound on the event $\cE_{1n}\cap\cE_{2n}$ to prove the Theorem. From Lemma \ref{lem:prob-init}, we get
    \begin{equation*}
    \Pr\left(\cE_{2n}\right) \geq 1- p_{\delta/4}(n_{\min})\,,
\end{equation*}
while it follows from Lemma \ref{lem:gradient-hessian-bound-extent} that 
\[\Pr\left(\cE_{1n}\right) \geq 1-t^{-1}\,.\]
Thus, the joint probability is such that
\[\Pr(\cE_{1n} \cap \cE_{2n}) \geq \Pr\left(\cE_{1n}\right)  +\Pr\left(\cE_{2n}\right) -1 \geq 1-t^{-1}-p_{\delta/4}(n_{\min})\,.\]
\end{proof}

\subsubsection{Proof of Theorem \ref{thm:cluster-rate-extent}}\label{proof:thm:cluster-rate-extent}

Our proof of Theorem \ref{thm:cluster-rate-extent} is a direct application of Theorem \ref{thm:consistent-extent}. Firstly, the first part of Theorem \ref{thm:consistent-extent} says
\begin{align*}
    & \hat\beta_k \neq \hat\beta_{k'}\quad \forall k, k' \in [K],k \neq k';\\
        & \hat\theta_j = \hat \beta_k,\quad j \in S_k.
\end{align*}
This implies that $\hat\theta_j=\hat\theta_{j'}$ for all $j,j'\in S_k$ and $\hat\theta_j\ne\hat\theta_{j'}$ for $j\in S_k$, $j'\in S_{k'}$, $k\ne k'$.

Secondly, the second part of Theorem \ref{thm:consistent-extent}
\begin{align*}
    & \norm{\tilde\beta_k - \beta_k^*}_2 \le C\left\{K^{1/r_1}  + \sqrt{K} \max_{j\in S_k} s(n_j)^{-1}  + Km_k^{1/2}\max_{j\in S_k} n_j^{1/2}s(n_j)^{-2} \right\}N_k^{-1/2} t +  C\xi_k,\quad \forall k \in [K];\\
        & \norm{\hat\beta_k - \tilde\beta_k}_2 < \tilde CN_k^{\zeta - 1}\quad \forall k \in [K]
\end{align*}
and by within-cluster heterogeneity condition in \eqref{eq:def-perturb}, 
\begin{align*}
    \|\beta_k^* - \theta_j^*\| \le \xi_k\,.
\end{align*}
The above three inequalities imply
\begin{align*}
    \|\hat\theta_j-\theta_j^*\|_2 
    &\le \|\hat\theta_j - \tilde\beta_k\|_2 + \|\tilde\beta_k - \beta_k^*\|_2 + \|\beta_k^* - \theta_j^*\|_2\\
&\le\ C\left\{K^{1/r_1}  + \sqrt{K} \max_{j\in S_k} s(n_j)^{-1}  + Km_k^{1/2}\max_{j\in S_k} n_j^{1/2}s(n_j)^{-2} \right\}N_k^{-1/2} t \;+\;\tilde C\,N_k^{\zeta-1} + (C+1)\xi_k\,
\end{align*}
by triangle inequality. Since the rate function $s(n_j) \lesssim n_j^{1/2}$ (defined in Assumption \ref{assump:nuis}), we have
\[\frac{n_j^{1/2}s(n_j)^{-2}}{s(n_j)^{-1}} = n_j^{1/2}s(n_j)^{-1} \gtrsim 1\,. \]
Therefore, the sum of second and third term is upper bounded by
\begin{align*}
    K^{1/2} \max_{j\in S_k} s(n_j)^{-1}  + K m_k^{1/2}\max_{j\in S_k} n_j^{1/2}s(n_j)^{-2} \lesssim K m_k^{1/2} \max_{j\in S_k} n_j^{1/2}s(n_j)^{-2} = b_{k,n}\,.
\end{align*}
This completes the proof.

\subsubsection{Proof of Theorem \ref{thm:normal-extent}}\label{proof:thm:normal-extent}
Fixing $k$ and any $j\in S_k$, the following decomposition holds:
\begin{equation}\label{eq:theta-diff-decomp}
    \hat\theta_j-\theta_j^*
= (\hat\theta_j-\tilde\beta_k) + (\tilde\beta_k-\beta_k^*) + (\beta_k^*-\theta_j^*).
\end{equation}
\emph{Firstly}, it follows from Theorem \ref{thm:consistent-extent} that  
\[\hat\theta_j-\tilde\beta_k=O_p(N_k^{\zeta-1}), \qquad \zeta < 1/2\,.\]
\emph{Secondly}, to study $\tilde\beta_k-\beta_k^*$, a first order condition gives $$\nabla F_k(\tilde\beta_k,\hat\bfeta_k)=0\,.$$ A Taylor expansion around
$\beta_k^*$ yields
\[
\sqrt{N_k}(\tilde\beta_k-\beta_k^*)
= -\!\left\{N_k^{-1}\nabla^2 F_k(\bar\beta_k,\hat\bfeta_k)\right\}^{-1}
\!\!\left\{N_k^{-1/2}\nabla F_k(\beta_k^*,\hat\bfeta_k)\right\},
\]
for some $\bar\beta_k$ on the segment between $\beta_k^*$ and $\tilde\beta_k$.

Decompose
\begin{align*}
N_k^{-1}\nabla^2 F_k(\bar\beta_k,\hat\bfeta_k)
&= B_0 + B_1 + B_2,\\
B_0&:=N_k^{-1}\sum_{j \in S_k} \nabla^2 f_j(\theta_j^*, \eta_j^*),\\
B_1&:=N_k^{-1}\sum_{j \in S_k} \left\{ \nabla^2 f_j(\bar\beta_k, \hat\eta_j) - \nabla^2 f_j(\theta_j^*, \hat\eta_j)\right\},\\
B_2&:=N_k^{-1}\sum_{j \in S_k} \left\{\nabla^2 f_j(\theta_j^*, \hat\eta_j)
-\nabla^2 f_j(\theta_j^*, \eta_j^*)\right\}.
\end{align*}
By the Law of Large Number and Assumption \ref{assump:loss}\ref{assump:geom}, $$B_0\to_p \tilde\Psi_k\,.$$

As for $B_1$, we first show
\[\|\bar\beta_k - \theta_j^*\|_2 \le \|\bar\beta_k-\beta_k^*\|_2 + \|\beta_k^*- \theta_j^*\|_2 = o_p(1)\]
since $\|\bar\beta_k-\beta_k^*\|=o_p(1)$ by consistency of $\tilde\beta_k$ in Lemma \ref{thm:consistent-extent} and $\|\beta_k^*- \theta_j^*\|_2 = \xi_k = o(1)$.
Now with definition
\[
\Xi_j(\theta,\theta',\eta,\eta',z)=\frac{\|\nabla^2 \ell_j(\theta,\eta,z)
-\nabla^2 \ell_j(\theta',\eta',z)\|_2}
{\|\theta-\theta'\|_2+\|\eta-\eta'\|_{\cH_j}} 
\]
by the triangle inequality and the local Lipschitz control in Assumption \ref{assump:loss}\ref{assumption:local-lip},
\[
\|B_1\|_2 \;\le\;
N_k^{-1}\!\sum_{j\in S_k}\sum_{i=1}^{n_j}\Xi_j(\bar\beta_k, \theta_j^*, \hat \eta_j,\hat \eta_j,Z_{ji})\,\|\bar\beta_k- \theta_j^*\|_2
= O_p(1)\cdot o_p(1)=o_p(1),
\]

For $B_2$, similarly, by Assumption \ref{assump:nuis} on consistency of $\hat\eta_j$,
\[
\|B_2\|_2 \;\le\;
 N_k^{-1}\!\sum_{j\in S_k}\sum_{i=1}^{n_j}\Xi_j( \theta_j^*,  \theta_j^*, \hat \eta_j,\eta_j^*,Z_{ji})\,\|\hat\eta_j-\eta_j^*\|_{\cH_j}
= O_p(1)\cdot o_p(1)=o_p(1)\,.
\]
Therefore,
\begin{equation*}
N_k^{-1}\nabla^2 F_k(\bar\beta_k,\hat\bfeta_k)\ \to_p\ \tilde\Psi_k.
\end{equation*}

Consider the score part. 
\begin{align*}
    N_k^{-1/2}\nabla F_k(\beta_k^*,\hat\bfeta_k) 
    & = N_k^{-1/2} \sum_{j \in S_k} \nabla f_j(\theta_j^*, \hat\eta_j) +  N_k^{-1/2} \sum_{j \in S_k}\Big( \nabla f_j(\beta_k^*, \hat\eta_j)- \nabla f_j(\theta_j^*, \hat\eta_j) \Big)\\
    & := T_1 + T_2\,.
\end{align*}
    
\begin{align*}
    T_1 & = N_k^{-1/2} \sum_{j \in S_k}\left\{ \nabla f_j(\theta_j^*, \eta_j^*) + D_\eta\nabla f_j(\theta_j^*,\eta_j^*)[\Delta\eta_j]
+ \tfrac12 D_\eta^2\nabla f_j(\theta_j^*,\bar\eta_j)[\Delta\eta_j,\Delta\eta_j]\right\}\\
    & := S_k^0 + R_{k1} + R_{k2}\,.
\end{align*}
A direct application of CLT yields that
\[S_k^0 \implies \cN(0, \tilde\Omega_k)\,.\]

Moreover, it is shown in Lemma \ref{lem:gradient-hessian-bound-extent} that
\begin{align*}
    \|R_{k1}\|_2 &= O_p( \sqrt{K} \max_{j\in S_k} s(n_j)^{-1})\\
    \|R_{k2}\|_2 &= O_p( Km_k^{1/2}\max_{j\in S_k} n_j^{1/2}s(n_j)^{-2})
\end{align*}
so both $R_{k1}$ and $R_{k2}$ are of order $O_p(b_{k,n})$ due to the condition $s(n)\lesssim n^{1/2}$. Thus the condition $b_{k,n} = o(1)$ implies
\[ T_1 = S_k^0 + R_{k1} + R_{k2} \implies \cN(0, \tilde\Omega_k)\,.\]

Moreover, it has been shown in Equation \eqref{eq:score-T2-ub} that
\begin{align*}
    \|T_2\|_2 = O_p(N_k^{1/2}\xi_k)\,.
\end{align*}
Thus the condition $\xi_k = o(N_k^{-1/2})$ gives
\[\|T_2\|_2 = o_p(1)\,.\]
Combining this the limit of $T_1$ yields
\[N_k^{-1/2}\nabla F_k(\beta_k^*,\hat\bfeta_k) = T_1 + T_2 \implies \cN(0, \tilde\Omega_k)\,.\]
This, combined with Hessian limit, yields
\[\sqrt{N_k}(\tilde\beta_k-\beta_k^*) \implies  \cN(0, \tilde\Psi_k^{-1} \tilde\Omega_k \tilde\Psi_k^{-1})\]


\emph{Thirdly}, with the closeness condition between $\beta_k^*$ and $\theta_j^*$ in this theorem, we have 
\[\|\beta_k^*-\theta_j^*\|_2 \le \xi_k = o\left(N_k^{-1/2}\right).\]
Combining the three limits above, and going back to Equation \eqref{eq:theta-diff-decomp}, we conclude that
\[
\sqrt{N_k}(\hat\theta_j-\theta_j^*) \Rightarrow \cN(0,\tilde\Psi_k^{-1}\tilde\Omega_k\tilde\Psi_k^{-1}).
\]